\def\eqref#1{equation~\ref{#1}}
\def\1{\bm{1}}
\DeclareMathAlphabet{\mathsfit}{\encodingdefault}{\sfdefault}{m}{sl}
\SetMathAlphabet{\mathsfit}{bold}{\encodingdefault}{\sfdefault}{bx}{n}
\DeclareMathOperator{\Tr}{Tr}
\theoremstyle{plain}
\newtheorem{theorem}{Theorem}[section]
\newtheorem{lemma}[theorem]{Lemma}
\theoremstyle{definition}
\newtheorem{definition}[theorem]{Definition}
\newcounter{assumption}
\newenvironment{assumption}[1][]{\refstepcounter{assumption}
   \noindent \textbf{A\theassumption} (\textit{#1}).}{}
\theoremstyle{remark}
\DeclareMathOperator{\SVD}{SVD}
\DeclareMathOperator{\rank}{rank}
\DeclareMathOperator{\RSM}{RSM}
\DeclareMathOperator{\NTK}{NTK}
\newcommand{\bsli}{{\bf s_{\lambda i}}}
\newcommand{\bstli}{{\bf \tilde{s}_{\lambda i}}}
\newcommand{\bb}{{\bf b}}
\newcommand{\bc}{{\bf c}}
\newcommand{\bs}{{\bf s}}
\newcommand{\bsl}{{\bf s_{\lambda}}}
\newcommand{\bst}{{\bf \tilde{s}}}
\newcommand{\bu}{{\bf u}}
\newcommand{\but}{{\bf \tilde{u}}}
\newcommand{\bv}{{\bf v}}
\newcommand{\bvt}{{\bf \tilde{v}}}
\newcommand{\bfx}{{\bf x}}
\newcommand{\bfy}{{\bf y}}
\newcommand{\bfyhat}{{\bf \hat{y}}}
\newcommand{\bfa}{{\bf A}}
\newcommand{\bfb}{{\bf B}}
\newcommand{\bfc}{{\bf C}}
\newcommand{\bff}{{\bf F}}
\newcommand{\bfi}{{\bf I}}
\newcommand{\bfl}{{\bf L}}
\newcommand{\bfm}{{\bf M}}
\newcommand{\bfo}{{\bf O}}
\newcommand{\bfp}{{\bf P}}
\newcommand{\bfq}{{\bf Q}}
\newcommand{\bfr}{{\bf R}}
\newcommand{\bfs}{{\bf S}}
\newcommand{\bfst}{{\bf \tilde{S}}}
\newcommand{\bfslt}{{\bf \tilde{S}_{\lambda}}}
\newcommand{\bfu}{{\bf U}}
\newcommand{\bfut}{{\bf \tilde{U}}}
\newcommand{\bfuh}{{\bf U_\perp}}
\newcommand{\bfuth}{{\bf \tilde{U}_\perp}}
\newcommand{\bfv}{{\bf V}}
\newcommand{\bfvt}{{\bf \tilde{V}}}
\newcommand{\bfvh}{{\bf V_\perp}}
\newcommand{\bfvth}{{\bf \tilde{V}_\perp}}
\newcommand{\bfW}{{\bf W}}
\newcommand{\bfX}{{\bf X}}
\newcommand{\bfY}{{\bf Y}}
\newcommand{\epf}{e^{\bff\frac{t}{\tau}}}
\newcommand{\epl}{e^{\bflmd\frac{t}{\tau}}}
\newcommand{\eptl}{e^{2\bflmd\frac{t}{\tau}}}
\newcommand{\bflmd}{{\bf \Lambda}}
\newcommand{\bfgamma}{{\bf \Gamma}}
\newcommand{\wa}{{\bf W}_1}
\newcommand{\wb}{{\bf W}_2}
\newcommand{\waz}{{\bf W}_1(0)}
\newcommand{\wbz}{{\bf W}_2(0)}
\newcommand{\bfsigt}{{\bf \tilde{\Sigma}}}
\newcommand{\sigyx}{{\bf \tilde{\Sigma}}^{yx}}
\newcommand{\sigyxt}{{\bf \tilde{\Sigma}}^{yxT}}
\newcommand{\sigyy}{{\bf \tilde{\Sigma}}^{yy}}
\newcommand{\qz}{\bfq(0)}
\newcommand{\qqt}{\bfq\bfq^T}
\newcommand{\qqtt}{\qqt(t)}
\newcommand{\etpsl}{e^{2\bst_{\lambda i}\frac{t}{\tau}}}
\newcommand{\etnsl}{e^{-2\bst_{\lambda i}\frac{t}{\tau}}}
\newcommand{\efnsl}{e^{-4\bst_{\lambda i}\frac{t}{\tau}}}
\newcommand{\etns}{e^{-2\bst_i\frac{t}{\tau}}}
\newcommand{\efns}{e^{-4\bst_i\frac{t}{\tau}}}
\definecolor{cbred}{RGB}{214, 92, 0}
\definecolor{cbblue}{RGB}{0, 113, 178}
\definecolor{cbgreen}{RGB}{0, 158, 115}
\title{From Lazy to Rich: Exact Learning Dynamics in Deep Linear Networks}
\author{Clémentine C. J. Dominé$^{1,\dagger}$, 
  Nicolas Anguita$^{2,\dagger}$, 
  Alexandra M. Proca$^{2}$, 
  Lukas Braun$^{3}$,\\
  \textbf{Daniel Kunin$^{4}$, 
  Pedro A. M. Mediano$^{2,5,\ddagger}$, 
  Andrew M. Saxe$^{1,6,7,\ddagger}$} \\
  $^1$ Gatsby Computational Neuroscience Unit, University College London, UK \\
  $^2$ Department of Computing, Imperial College London, UK \\
  $^3$ Department of Experimental Psychology, University of Oxford, UK \\
  $^4$ Institute for Computational and Mathematical Engineering, Stanford University, USA \\
  $^5$ Division of Psychology and Language Sciences, University College London, UK \\
  $^6$ Sainsbury Wellcome Centre, University College London, UK \\
  $^7$ CIFAR Azrieli Global Scholar, CIFAR, Toronto, Canada \\
  $\dagger$ Co-first author
  \\ 
  $\ddagger$ Co-senior author. \\
  \\
  \texttt{clementine.domine.20@ucl.ac.uk, na658@cam.ac.uk}
}
\date{}
\begin{document}
\maketitle
\begin{abstract}
Biological and artificial neural networks develop internal representations that enable them to perform complex tasks. In artificial networks, the effectiveness of these models relies on their ability to build task-specific representations, a process influenced by interactions among datasets, architectures, initialization strategies, and optimization algorithms. Prior studies highlight that different initializations can place networks in either a \emph{lazy} regime, where representations remain static, or a \emph{rich} (or \emph{feature-learning}) regime, where representations evolve dynamically. Here, we examine how initialization influences learning dynamics in deep linear neural networks, deriving exact solutions for \emph{‘$\lambda$-balanced’} initializations—defined by the relative scale of weights across layers. These solutions capture the evolution of representations and the Neural Tangent Kernel across the spectrum from the \emph{rich} to the \emph{lazy} regimes. Our findings deepen the theoretical understanding of the impact of weight initialization on learning regimes, with implications for continual learning, reversal learning, and transfer learning, relevant to both neuroscience and practical applications.
\end{abstract}

\section{Introduction}
\label{sec:introduction}
Biological and artificial neural networks learn internal representations that enable complex tasks such as categorization, reasoning, and decision-making. Both systems often develop similar representations from comparable stimuli, suggesting shared information processing mechanisms~\citep{yamins2014performance}. This similarity, though not fully understood, has drawn interest from neuroscience, AI, and cognitive science~\citep{haxby2001distributed,laakso2000content,morcos2018insights,kornblith2019similarity,moschella2022relative}. The success of neural models relies on their ability to extract relevant features from data to build internal representations, a complex process that in machine learning is defined by two regimes: \emph{lazy} and \emph{rich}~\citep{saxe2013exact,pennington2017resurrecting,chizat2019lazy,bahri2020statistical}.

\textbf{Lazy regime.}
The \emph{lazy} regime follows from a fundamental phenomenon in overparameterized neural networks: during training, these networks frequently remain near their linearized form, undergoing minimal changes in the parameter space \citep{chizat2019lazy}. Consequently, they adopt learning dynamics akin to kernel regression, characterized by the Neural Tangent Kernel (NTK) matrix and exhibiting exponential learning behavior \citep{du2018gradient, jacot2018neural, du2019gradient, allen2019learning, allen2019convergence, zou2020gradient}.
This behavior, known as the \emph{lazy} or kernel regime, typically occurs in infinitely wide architectures and can be triggered by large variance initialization \citep{jacot2018neural, chizat2019lazy}.
While the \emph{lazy} regime offers valuable insights into how networks converge to a global minimum, it does not fully account for the generalization capabilities of neural networks trained with standard initializations. It is, therefore, widely believed that another regime, driven by small or vanishing initializations, underpins some of the successes of neural networks.

\textbf{Rich regime.} In contrast, the \emph{rich} feature-learning regime is characterized by a NTK that evolves throughout training, accompanied by non-convex dynamics that navigate saddle points \citep{baldi1989neural, saxe2013exact, saxe2019mathematical, jacot2021saddle}. This regime features sigmoidal learning curves and simplicity biases, such as low-rankness \citep{li2020towards} or sparsity \citep{woodworth2020kernel}. 
Numerous studies have shown that the \emph{absolute scale}  of initialization drives the \emph{rich} regime, which typically emerges at small initialization scales \citep{chizat2019lazy,geiger_2020_disentangling}.
However, even at small initialization scales, differences in weight magnitudes between layers can induce the \emph{lazy} learning regime \citep{azulay2021implicit, kunin_2024_get} -- 
highlighting the significance of both \emph{absolute scale} (initialization variance) and \emph{relative scale} (difference in weight magnitude between layers) in generating diverse learning dynamics \citep{atanasov2021neural}.
Beyond \emph{absolute scale} and \emph{relative scale}, additional aspects of initialization can profoundly affect feature learning, including the effective rank of the weight matrices \cite{liu2023connectivity}, layer-specific initialization variances \cite{yang2020feature, luo2021phase, yang2022tensor}, and the use of large learning rates \cite{lewkowycz2020large, ba2022high, zhu2023catapults, cui2024asymptotics}. 
These findings illustrate the effect of initialization on inducing complex learning behavior through the resulting dynamics. Here we develop a solvable model which captures these diverse phenomena.


\begin{figure}[t]
\begin{center}
\includegraphics[width=1\textwidth]{./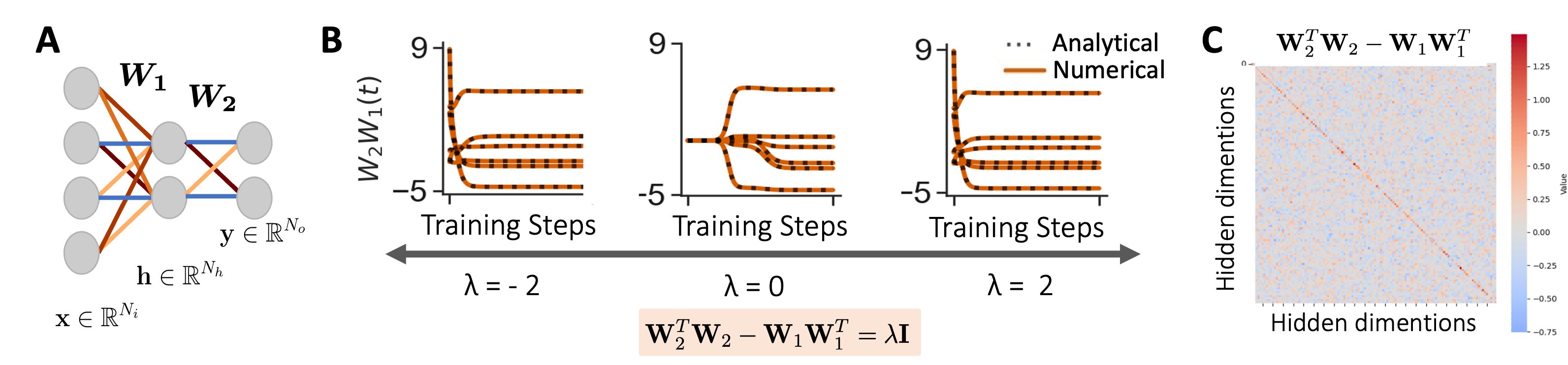}
\end{center}
\vspace{-10pt}
\caption{A minimal model of the \emph{rich} and \emph{lazy} regimes. \textbf{A.} We examine a deep and wide linear network trained using gradient descent starting from an initialization characterized by a relative scale parameter $\lambda$ — which characterizes the difference in the weight covariance between the first and second layers. 
\textbf{B.} Network output for an example task over training time, starting from a range of relative scale values. The dynamics are influenced by the initialization. Solid lines represent simulations, while dotted lines indicate the analytical solutions derived in this work. \textbf{C.} A network with LeCun weight initialization \citep{lecun1998gradient} in the infinite width limit becomes $\lambda$-balanced, as $\wb^T\wb - \wa\wa^T$ approaches the scaled identity matrix.} \vspace{-2ex}\label{fig:intro}
\end{figure}

Despite significant advances, these learning regimes and their characterization are not yet fully understood and would benefit from clearer theoretical predictions, particularly regarding the influence of prior knowledge (initialization) on the learning regime. In this work, we address this gap by deriving exact solutions for the learning dynamics in deep linear networks as a function of network initialization, providing one of the few analytical models of the \emph{rich} and \emph{lazy} regimes in wide and deep neural networks \citep{xu2024does,kunin_2024_get,tu2024mixed}.   
To illustrate the dramatic effect of initialization and the kind of phenomenon we build a theory for, we consider a two-layer linear network parameterized by an encoding layer $\wa$ and a decoding layer $\wb$ (Fig.\ref{fig:intro}A). This network can be initialized with different relative scalings, such that $\wa\wa^T \succ \wb^T\wb$, $\wa\wa^T\ \prec \wb^T\wb$, or $\wa\wa^T = \wb^T\wb$, while maintaining the same absolute scale. As shown in Fig.\ref{fig:intro}B, the choice of relative scaling can result in drastically different learning trajectories and representations and the theory we develop over the course of this paper describes these effects.
Through these solutions, we aim to gain insights into the \emph{rich} and \emph{lazy} regimes, as well as the transition between them during training, by examining the impact of relative scaling. 
As shown in Fig.\ref{fig:intro}C and further proved in Appendix \ref{app:ballanced_limit}, initialization methods used in practice, such as LeCun initialization in wide networks, approximate the relative scaling initialization explored in this paper, making it relevant to machine learning community as further demonstrated by \cite{kunin_2024_get}.
We consider applications relevant to machine learning and neuroscience, including continual learning \citep{kirkpatrick2017overcoming,zenke2017continual,parisi2019continual}, reversal learning \citep{erdeniz2010simulating} and transfer learning \citep{taylor2009transfer, thrun2012learning,lampinen2018analytic,gerace2022probing}. 

\textbf{Our contributions.}
\setlength{\itemsep}{0.1em}
\setlength{\parskip}{0pt}
\begin{itemize}
    \item We derive explicit solutions for the gradient flow, internal representational similarity, and finite-width NTK in unequal-input-output two-layer deep linear networks, under a broad range of $\lambda$-balanced initialization conditions (Section \ref{sec:exact_learning_dynamics}).
    \item We model the full range of learning dynamics from \emph{lazy} to \emph{rich}, showing that this transition is influenced by a complex interaction of architecture, \emph{relative scale}, and \emph{absolute scale}, extending beyond just initialization \emph{absolute scale} (Section \ref{sec:rich_lazy_learning}).
    \item We present applications of these solutions relevant to both the neuroscience and machine learning communities, providing exact solutions for continual learning dynamics, reversal learning dynamics, and transfer learning (Section \ref{sec:application}).
\end{itemize}

\section{Related Work}
\label{sec:related_work}
\textbf{Linear networks.} Our work builds upon a rich body of research on deep linear networks, which, despite their simplicity, have proven to be valuable models for understanding more complex neural networks~\citep{baldi1989neural, fukumizu1998effect, saxe2013exact}. Previous research has extensively analyzed convergence~\citep{arora2018convergence, du2019width}, generalization properties~\citep{lampinen2018analytic, poggio2018theory, huh2020curvature}, and the implicit bias of gradient descent~\citep{arora2019implicit,woodworth2020kernel, chizat2020implicit,kunin2022asymmetric} in linear networks. 
These studies have also revealed that deep linear networks have intricate fixed point structures and nonlinear learning dynamics in parameter and function space, reminiscent of phenomena observed in nonlinear networks~\citep{arora2018optimization, lampinen2018analytic}. Seminal work by \citet{saxe2013exact} laid the groundwork by providing exact solutions to gradient flow dynamics under task-aligned initializations, demonstrating that the largest singular values are learned first during training. This analysis has been extended to deep linear networks  \citep{arora2018optimization, arora2019implicit, ziyin2022exact} with more flexible initialization schemes 
\citep{gidel2019implicit, tarmoun2021understanding, gissin2019implicit}. This work directly builds on the matrix Riccati formulation proposed by \citet{fukumizu1998effect} and \citet{braun2023exact} which extends these solutions to wide networks. We extend and refine these results to obtain the dynamics for a wider class of $\lambda$-balanced networks to more clearly demonstrate the impact of initialization on \emph{rich} and \emph{lazy} learning regimes also developed in \cite{tu2024mixed} for a set of orthogonal initalizations.
Our work extends previous analyses \citep{xu2024does,kunin_2024_get} of these regimes to wide networks. Previous studies leveraged these solutions primarily to characterize convergence rates; however, our work goes beyond this by providing a comprehensive characterization of the complete dynamics of the system \citep{tarmoun2021understanding}.\\

\textbf{Infinite-width networks.} Recent advances in understanding the \emph{rich} regime have largely stemmed from examining how the initialization variance and layer-wise learning rates must scale in the infinite-width limit to maintain consistent behavior in activations, gradients, and outputs. Several studies have employed statistical mechanics tools to derive analytical solutions for the \emph{rich} population dynamics of two-layer nonlinear neural networks initialized using a \emph{mean-field} parameterization \citep{mei2018mean, rotskoff2018parameters, chizat2018global, sirignano2020mean, rotskoff2022trainability, sirignano2020mean}. 
Other methods for analyzing deep network dynamics include the NTK limit, where the network effectively performs kernel regression without feature learning \citep{jacot2018neural,lee2019wide,arora2019exact}.
%
%
Furthermore, these approaches typically require numerical integration or operate within a limited learning regime, and are unable to describe the learning dynamics of hidden representations. Instead, our work provides explicit analytical solutions for the dynamics of the network and its NTK in the finite-width case \citep{jacot2021saddle, xu2024does,kunin_2024_get,chizat2019lazy}.

\section{Preliminaries}
\label{sec:preliminaries}

Consider a supervised learning task where input vectors $\mathbf{x}_n \in \mathbb{R}^{N_i}$, from a set of $P$ training pairs $\{(\mathbf{x}_n, \mathbf{y}_n)\}_{n=1}^P$, need to be mapped to their corresponding target output vectors $\mathbf{y}_n \in \mathbb{R}^{N_o}$.
We learn this task with a two-layer linear network model that produces the output prediction
\begin{equation} \label{eq:network}
    \bfyhat_n = \wb\wa \bfx_n\text{,}
\end{equation}
with weight matrices $\wa \in \mathbb{R}^{N_h\times N_i}$ and $\wb \in \mathbb{R}^{N_o\times N_h}$, where $N_h$ is the number of hidden units. The network's weights are optimized using full batch gradient descent with learning rate $\eta$ (or respectively time constant
$\tau = \frac{1}{\eta}$) on the mean squared error loss $ \mathcal{L}(\bfyhat, \bfy) = \frac{1}{2}\left\langle ||\bfyhat - \bfy||^2\right\rangle$, where $\langle\cdot \rangle$ denotes the average over the dataset.
Our objective is to describe the entire dynamics of the network's output and internal representations based on the input covariance and input-output cross-covariance matrices of the dataset, defined as
\begin{equation} \label{eq:svds}
    \bfsigt^{xx} = \frac{1}{P}\sum_{n=1}^P \bfx_n\bfx_n^T\ \in \mathbb{R}^{N_i\times N_i} \quad\text{and}\quad \bfsigt^{yx} = \frac{1}{P}\sum_{n=1}^P\bfy_n\bfx_n^T\ \in \mathbb{R}^{N_o\times N_i}\text{,}
\end{equation}
and the initialization $\wbz$,$\waz$. 
%
%
We employ an approach first introduced in the foundational work of \citet{fukumizu1998effect} and extended in recent work by \citet{braun2023exact}, which instead of studying the parameters directly, considers the dynamics of a matrix of the important statistics.
In particular, defining $\bfq = \begin{bmatrix} \wa & \wb^T \end{bmatrix}^T \in \mathbb{R}^{(N_i + N_o) \times N_h}$, we consider the $(N_i + N_o) \times (N_i + N_o)$ matrix
\begin{equation}
    \qqt (t) = \begin{bmatrix}
        \textcolor{cbgreen}{\wa^T\wa^{}(t)} &  \textcolor{cbred}{\wa^T\wb^T(t)}\\
        \textcolor{cbred}{\wb^{}\wa^{}(t)} & \textcolor{cbblue}{\wb^{}\wb^T(t)}
    \end{bmatrix},
\end{equation}
which is divided into four quadrants with interpretable meanings, and where $t\in\mathbb{R}$ represents training time.
The approach monitors several key statistics collected in the matrix.
The off-diagonal blocks contain the network function $\hat{\bfY}(t) = \textcolor{cbred}{\wb\wa(t)}\bfX$.
%
%
The on-diagonal blocks capture the correlation structure of the weight matrices, allowing for the calculation of the temporal evolution of the network's internal representations. This includes the representational similarity matrices (RSM) of the neural representations within the hidden layer, as first defined by \cite{braun2023exact},
\begin{equation}
    \RSM_I = \bfX^T\textcolor{cbgreen}{\wa^T\wa^{}(t)}\bfX\text{,}\quad \RSM_O = \bfY^T(\textcolor{cbblue}{\wb^{}\wb^T(t)})^+\bfY,
\end{equation}
where $+$ denotes the pseudoinverse; and the network's finite-width NTK \citep{jacot2018neural, lee2019wide, arora2019exact}
\begin{equation}
    \NTK = \bfi_{N_o} \otimes \bfX^T \textcolor{cbgreen}{\wa^T\wa^{}(t)} \bfX + \textcolor{cbblue}{\wb^{}\wb^T(t)} \otimes \bfX^T\bfX,  
\end{equation}
where $\bfi_{N_o}$ is the $N_o \times N_o$ identity matrix and $\otimes$ is the Kronecker product. Hence, the dynamics of $\qqt$ describes the important aspects of network behaviour.

\section{Exact Learning Dynamics}
\label{sec:exact_learning_dynamics}
We derive an exact solution for $\qqt$ offering insight into the learning dynamics, convergence behavior, and generalization properties of two-layer linear networks with prior knowledge.\\

{ 
\textbf{Assumptions.}
To derive these solutions we make the following assumptions:
\setlength{\itemsep}{0.2em}
\setlength{\parskip}{0pt}
\begin{itemize}
    \item \begin{assumption}[Whitened input]
        \label{ass:whitened}
        The input data is whitened, i.e. $\bfsigt^{xx} = \bfi$.
    \end{assumption}
    \item \begin{assumption}[$\lambda$-Balanced] 
    \label{ass:lambda-balanced}
        The network's weight matrices are $\lambda$-balanced at the beginning of training, i.e. $\wb^T\wbz- \wa\waz^T = \lambda \mathbf{I}$. If this condition holds at initialization, it will persist throughout training \citep{saxe2013exact,arora2018convergence}. For completeness, we prove this in Appendix \ref{app:preliminaries}.
    \end{assumption}
    \item \begin{assumption}[Dimensions] 
    \label{ass:dimension}
        The hidden dimension of the network is defined as \( N_h = \min(N_i, N_o) \), ensuring the network is neither bottlenecked (\( N_h < \min(N_i, N_o) \)) nor overparameterized (\( N_h > \min(N_i, N_o) \)).
    \end{assumption}
\end{itemize}
}

These assumptions are strictly weaker than prior works \citep{fukumizu1998effect,braun2023exact,kunin_2024_get,xu2024does}.
The main distinction between our work and prior works is that both \cite{fukumizu1998effect}  and \cite{braun2023exact}  assumed zero-balanced weights ($\waz\waz^T = \wbz^T\wbz$), while we relax this assumption to $\lambda$-balanced. 
The zero-balanced condition restricts the networks to a \emph{rich} setting. We develop solutions to explore the continuum between the \emph{rich} and the \emph{lazy} regime.  
While some works, such as \cite{tarmoun2021understanding}, have considered removing this constraint, their solutions remain in an unstable and mixed form.
%
%
Other studies, such as \cite{xu2024does} and \cite{kunin_2024_get}, have similarly relaxed the balanced assumption but were limited to single-output neuron settings.
See Appendix \ref{app:disassump} for a further discussion on each of these works' assumptions and their relationship to ours.\\

\textbf{Lemmas and definitions.}
To derive exact solutions we start by presenting the main lemmas which we prove in the appendix.

\begin{lemma}
    \label{lemma:dqqt}
    Under assumptions \ref{ass:whitened} and \ref{ass:lambda-balanced}, the gradient flow dynamics of $\mathbf{Q}\mathbf{Q}^T(t)$, with initalization $\qqt(0) = \qz \qz^T$ can be written as a differential matrix Riccati equation
    \begin{equation}
    \label{eq:dqqt}
    \tau \frac{d}{dt} (\mathbf{Q}\mathbf{Q}^T) = \mathbf{F}\mathbf{Q}\mathbf{Q}^T + \mathbf{Q}\mathbf{Q}^T\mathbf{F} - (\mathbf{Q}\mathbf{Q}^T)^2, \quad \text{where } \boldsymbol{F} = \begin{pmatrix}
-\frac{\lambda}{2} \mathbf{I}_{N_i} & (\tilde{\boldsymbol{\Sigma}}^{yx})^T \\
\tilde{\boldsymbol{\Sigma}}^{yx} & \frac{\lambda}{2} \mathbf{I}_{N_o}
\end{pmatrix}.
    \end{equation}
\end{lemma}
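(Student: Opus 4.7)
The plan is to derive the gradient-flow equations for $\wa$ and $\wb$, stack them into a single equation for the block vector $\bfq$, and then differentiate $\bfq\bfq^T$, using $\lambda$-balance and the symmetry of $\bff$ to collect terms. First I would write out the gradients of the loss. Under Assumption \ref{ass:whitened}, the loss equals $\frac{1}{2}\|\wb\wa - \sigyx\|^2$ (Frobenius) up to an additive constant, giving $\tau\dot\wa = \wb^T\sigyx - \wb^T\wb\wa$ and $\tau\dot\wb = \sigyx\wa^T - \wb\wa\wa^T$. Transposing the first relation and stacking with the second produces an explicit block expression for $\tau\dot\bfq$ whose top block is $\sigyxt\wb - \wa^T\wb^T\wb$ and whose bottom block is $\sigyx\wa^T - \wb\wa\wa^T$.

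The key algebraic step is the identity $\tau\dot\bfq = \bff\bfq - \frac{1}{2}\bfq\bfq^T\bfq$. Expanding the right-hand side, the ``diagonal'' cubic blocks of $\bfq\bfq^T\bfq$ are $\wa^T\wa\wa^T$ and $\wb\wb^T\wb$, while the ``off-diagonal'' cubic blocks are $\wa^T\wb^T\wb$ and $\wb\wa\wa^T$. Invoking Assumption \ref{ass:lambda-balanced} in its two forms $\wa\wa^T = \wb^T\wb - \lambda\bfi$ and $\wb^T\wb = \wa\wa^T + \lambda\bfi$, the diagonal cubic terms can be rewritten as $\wa^T\wa\wa^T = \wa^T\wb^T\wb - \lambda\wa^T$ and $\wb\wb^T\wb = \wb\wa\wa^T + \lambda\wb$. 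The four cubic blocks then collapse into two off-diagonal ones, and the residual linear pieces $\pm\lambda\wa^T$ and $\pm\lambda\wb$ are cancelled exactly by the $\mp\frac{\lambda}{2}\bfi$ diagonal blocks of $\bff$, yielding the claimed identity for $\tau\dot\bfq$.

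Once this is in hand, the Riccati equation is immediate: $\tau\frac{d}{dt}(\bfq\bfq^T) = (\tau\dot\bfq)\bfq^T + \bfq(\tau\dot\bfq)^T = \bff\bfq\bfq^T + \bfq\bfq^T\bff^T - (\bfq\bfq^T)^2$, where I have used $(\bfq\bfq^T\bfq)^T = \bfq^T\bfq\bfq^T$ to combine the two cubic contributions into $(\bfq\bfq^T)^2$. Since $\sigyxt$ is the transpose of $\sigyx$ and the diagonal blocks of $\bff$ are scalar multiples of the identity, $\bff$ is symmetric, so $\bff^T = \bff$ and the stated Riccati form follows.

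The main obstacle is the bookkeeping in the middle step: tracking transposes carefully and verifying that the linear-in-$\bfq$ corrections produced by $\lambda$-balance pair up precisely with the $\pm\frac{\lambda}{2}\bfi$ diagonal blocks of $\bff$. Everything else is routine. The preservation of $\lambda$-balance along the flow (Appendix \ref{app:preliminaries}) enters only implicitly here, since the identity is applied at the current time $t$.
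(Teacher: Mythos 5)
Your proposal is correct, and the underlying ingredients (gradient flow under whitened inputs, the balance condition, symmetry of $\bff$) are the same as the paper's, but you organize the argument differently. The paper proves the lemma by brute force at the level of $\qqt$: it computes the time derivative of each of the four quadrants separately from the gradient-flow equations, invokes $\lambda$-balance inside the first and fourth quadrants, and then expands $\bff\qqt + \qqt\bff - (\qqt)^2$ block by block to verify the match. You instead factor the computation through the intermediate identity $\tau\dot{\bfq} = \bff\bfq - \tfrac{1}{2}\bfq\bfq^T\bfq$, which the paper never states; I checked your block algebra and it is right — substituting $\wa\wa^T = \wb^T\wb - \lambda\bfi$ in the top block and $\wb^T\wb = \wa\wa^T + \lambda\bfi$ in the bottom block makes the $\pm\tfrac{\lambda}{2}$ terms cancel against the diagonal blocks of $\bff$, recovering exactly $\tau\frac{d}{dt}\wa^T = \sigyxt\wb - \wa^T\wb^T\wb$ and $\tau\frac{d}{dt}\wb = \sigyx\wa^T - \wb\wa\wa^T$. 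The Riccati equation then falls out of one application of the product rule together with $\bff^T = \bff$. Your route is more economical (no quadrant-by-quadrant bookkeeping, and the symmetry of $\bff$ does the work of combining the two cubic halves into $(\qqt)^2$) and it yields the extra fact that $\bfq$ itself obeys a self-consistent cubic ODE; the paper's route is more mechanical but exhibits every block explicitly, which it later reuses when interpreting the quadrants. One point to make explicit rather than implicit: your use of the balance condition at time $t>0$ relies on its conservation along the flow (the paper's Theorem \ref{theorem:balanced_persists}), so that result should be cited as an ingredient, not just a remark.
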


As derived in \cite{fukumizu1998effect} and extended in \cite{braun2023exact}, whenever $\boldsymbol{F}$ is symmetric and diagonalizable such that $\boldsymbol{F} = \boldsymbol{P}\boldsymbol{\Lambda}\boldsymbol{P}^{T}$, where $\boldsymbol{P}$ is an orthonormal matrix and $\boldsymbol{\Lambda}$ is a diagonal matrix, then the unique solution to this matrix Riccatti equation is given by
\begin{equation} \label{eq:qqt}
    \qqtt = \epf\qz\left[\bfi + \qz^T\boldsymbol{P}\left(\frac{e^{2\boldsymbol{\Lambda}\frac{t}{\tau}} - \mathbf{I}}{\boldsymbol{2\Lambda}}  \right)\boldsymbol{P}^{T} \qz\right]^{-1} \qz^T\epf.
\end{equation}

In Appendix \ref{thm:qqtdiagonalisable} 
we prove that this equation is the unique solution to the initial value problem derived in Lemma \ref{lemma:dqqt} for any value of $\Lambda$.
However, as discussed in \cite{braun2023exact}, the solution in this form is not very useable or interpretable due to the matrix inverse mixing the blocks of $\qqt$.
Additionally, we need to diagonalize $\boldsymbol{F}$.
To do so we consider the compact singular value decomposition $\label{eq:svd-network-task}  \SVD(\bfsigt^{yx}) = \bfut\bfst\bfvt^T \text{.}$
Here, $\mathbf{\tilde{U}} \in \mathbb{R}^{N_o \times N_h}$ denotes the left singular vectors, $\mathbf{\tilde{S}} \in \mathbb{R}^{N_h \times N_h}$ the square matrix with ordered, non-zero eigenvalues on its diagonal, and $\mathbf{\tilde{V}} \in \mathbb{R}^{N_i \times N_h}$ the corresponding right singular vectors. For unequal input-output dimensions ($N_i \neq N_o$), the right and left singular vectors are not square. Accordingly, for the case $N_i > N_h = N_o$, we define $\mathbf{\tilde{U}}_\perp \in \mathbb{R}^{N_o \times |N_o - N_i|}$ as a matrix containing orthogonal column vectors that complete the basis for $\mathbf{\tilde{U}}$, i.e., make $\big[\mathbf{\tilde{U}} ~ \mathbf{\tilde{U}}_\perp\big]$ orthonormal, and $\mathbf{\tilde{V}}_\perp \in \mathbb{R}^{N_i \times |N_o - N_i|}$ as a matrix of zeros. Conversely, when $N_i = N_h < N_o$, then $\mathbf{\tilde{V}}_\perp$ is a matrix containing orthogonal column vectors that complete the basis for $\tilde{\boldsymbol{V}}$ and $\mathbf{\tilde{U}}_\perp$ is a matrix of zeros.
Using this SVD structure we can now describe the eigendecomposition of $\mathbf{F}$.
\begin{lemma}
Under assumptions \ref{ass:dimension}, the eigendecomposition of $\mathbf{F} = \mathbf{P} \boldsymbol{\Lambda} \mathbf{P}^T$ is
\begin{equation}
\mathbf{P} = \frac{1}{\sqrt{2}} \begin{pmatrix}
\tilde{\boldsymbol{V}} (\tilde{\boldsymbol{G}} - \tilde{\boldsymbol{H}}\tilde{\boldsymbol{G}}) & \tilde{\boldsymbol{V}} (\tilde{\boldsymbol{G}} + \tilde{\boldsymbol{H}}\tilde{\boldsymbol{G}}) & \sqrt{2} \tilde{\boldsymbol{V}}_\perp \\
\tilde{\boldsymbol{U}} (\tilde{\boldsymbol{G}} + \tilde{\boldsymbol{H}} \tilde{\boldsymbol{G}}) & -\tilde{\boldsymbol{U}} (\tilde{\boldsymbol{G}} - \tilde{\boldsymbol{H}}\tilde{\boldsymbol{G}}) & \sqrt{2} \tilde{\boldsymbol{U}}_\perp
\end{pmatrix},\quad
\boldsymbol{\Lambda} = \begin{pmatrix}
\tilde{\boldsymbol{S}_{\lambda}} & 0 & 0 \\
0 & -\tilde{\boldsymbol{S}_{\lambda}} & 0 \\
0 & 0 & \boldsymbol{\lambda}_\perp
\end{pmatrix},
\end{equation}
where the matrices $\tilde{\boldsymbol{S}_{\lambda}}$, $\boldsymbol{\lambda}_{\perp}$, $\tilde{\boldsymbol{H}}$, and $\tilde{\boldsymbol{G}}$ are diagonal matrices defined as:
{\small
\begin{equation}
\tilde{\boldsymbol{S}_{\lambda}}=\sqrt{\tilde{\boldsymbol{S}}^2 + \frac{\lambda^2}{4} \mathbf{I}}, \;\;
\boldsymbol{\lambda}_{\perp} =\mathrm{sgn}(N_o-N_i)\frac{\lambda}{2}\mathbf{I}_{|N_o - N_i|}, \;\;
\tilde{\boldsymbol{H}} = \mathrm{sgn}(\lambda)\sqrt{\frac{\tilde{\boldsymbol{S_{\lambda}}} - \tilde{\boldsymbol{S}}}{\tilde{\boldsymbol{S_{\lambda}}} + \tilde{\boldsymbol{S}}}}, \;\; \tilde{\boldsymbol{G}} = \frac{1}{\sqrt{\mathbf{I} + \tilde{\boldsymbol{H}}^2}}.
\end{equation}}
\end{lemma}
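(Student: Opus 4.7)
The plan is to exploit the SVD of $\tilde{\boldsymbol{\Sigma}}^{yx}$ to block-diagonalize $\mathbf{F}$, then solve a collection of independent $2\times2$ eigenvalue problems and handle the kernel/cokernel directions separately. Concretely, I would first rewrite $\mathbf{F}$ in the orthonormal basis $\{\tilde{\boldsymbol{v}}_1,\dots,\tilde{\boldsymbol{v}}_{N_h},\tilde{\boldsymbol{v}}_{\perp,\cdot}\}\cup\{\tilde{\boldsymbol{u}}_1,\dots,\tilde{\boldsymbol{u}}_{N_h},\tilde{\boldsymbol{u}}_{\perp,\cdot}\}$ of $\mathbb{R}^{N_i}\oplus\mathbb{R}^{N_o}$. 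Because $\tilde{\boldsymbol{\Sigma}}^{yx}\tilde{\boldsymbol{v}}_i=\tilde{s}_i\tilde{\boldsymbol{u}}_i$ and $(\tilde{\boldsymbol{\Sigma}}^{yx})^T\tilde{\boldsymbol{u}}_i=\tilde{s}_i\tilde{\boldsymbol{v}}_i$, each pair $(\tilde{\boldsymbol{v}}_i,\tilde{\boldsymbol{u}}_i)$ spans a two-dimensional $\mathbf{F}$-invariant subspace on which $\mathbf{F}$ reduces to $M_i=\bigl(\begin{smallmatrix}-\lambda/2&\tilde{s}_i\\ \tilde{s}_i&\lambda/2\end{smallmatrix}\bigr)$, while the perp directions are annihilated by the off-diagonal blocks and are acted on by $\pm\lambda/2\cdot\mathbf{I}$ from the diagonal of $\mathbf{F}$.

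Second, I would diagonalize each $M_i$. Its characteristic polynomial gives eigenvalues $\pm\sqrt{\tilde{s}_i^{2}+\lambda^{2}/4}=\pm\tilde{s}_{\lambda i}$. Solving $M_i(\alpha,\beta)^T=\tilde{s}_{\lambda i}(\alpha,\beta)^T$ yields an eigenvector proportional to $(\tilde{s}_i,\tilde{s}_{\lambda i}+\lambda/2)$, and similarly for the negative eigenvalue. I would then check that, after the parametrization $\tilde{h}_i=\mathrm{sgn}(\lambda)\sqrt{(\tilde{s}_{\lambda i}-\tilde{s}_i)/(\tilde{s}_{\lambda i}+\tilde{s}_i)}$ and $\tilde{g}_i=1/\sqrt{1+\tilde{h}_i^{2}}$, the normalized $\pm$ eigenvectors become
\[
\tfrac{1}{\sqrt 2}\bigl(\tilde{g}_i(1-\tilde{h}_i),\,\tilde{g}_i(1+\tilde{h}_i)\bigr)^{T}\quad\text{and}\quad\tfrac{1}{\sqrt 2}\bigl(\tilde{g}_i(1+\tilde{h}_i),\,-\tilde{g}_i(1-\tilde{h}_i)\bigr)^{T},
\]
which, re-expressed in the original basis via the $\tilde{\boldsymbol{v}}_i,\tilde{\boldsymbol{u}}_i$ factors, are precisely the first two column blocks of $\mathbf{P}$. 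The cleanest way to carry out this verification is to set $\cos(2\theta_i)=\tilde{s}_i/\tilde{s}_{\lambda i}$, $\sin(2\theta_i)=(\lambda/2)/\tilde{s}_{\lambda i}$, so that $\tilde{h}_i=\mathrm{sgn}(\lambda)\tan\theta_i$ and $\tilde{g}_i=\cos\theta_i$, reducing every identity to a half-angle manipulation.

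Third, I would deal with the perp block. When $N_i>N_o$ the columns of $\tilde{\boldsymbol{V}}_\perp$ lie in $\ker(\tilde{\boldsymbol{\Sigma}}^{yx})$, so $\mathbf{F}(\tilde{\boldsymbol{v}}_{\perp,j},0)^{T}=(-\lambda/2)(\tilde{\boldsymbol{v}}_{\perp,j},0)^{T}$; when $N_o>N_i$, $(0,\tilde{\boldsymbol{u}}_{\perp,j})^{T}$ is an eigenvector with eigenvalue $+\lambda/2$. In both cases the eigenvalue is $\mathrm{sgn}(N_o-N_i)\lambda/2$, matching the block $\boldsymbol{\lambda}_\perp$. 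Finally, orthonormality of $\mathbf{P}$ follows from three observations: the within-block orthogonality of the two vectors above, the null product $(1-\tilde{h}_i)(1+\tilde{h}_i)-(1+\tilde{h}_i)(1-\tilde{h}_i)=0$; the between-block orthogonality from $\tilde{\boldsymbol{v}}_i^{T}\tilde{\boldsymbol{v}}_j=\tilde{\boldsymbol{u}}_i^{T}\tilde{\boldsymbol{u}}_j=\delta_{ij}$; and the orthogonality of the perp block from the definitions of $\tilde{\boldsymbol{V}}_\perp,\tilde{\boldsymbol{U}}_\perp$.

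\paragraph{Anticipated difficulty.} The main obstacle is the bookkeeping in step two: showing that the $(\tilde{g}_i,\tilde{h}_i)$ parametrization coincides, up to sign, with the raw $2\times2$ eigenvector and that the $\mathrm{sgn}(\lambda)$ factor in $\tilde{\boldsymbol{H}}$ is the correct one in both signs of $\lambda$ so the $\lambda\to0$ balanced limit of \citet{braun2023exact} is recovered. Framing it via the $\theta_i$-angle above turns this into routine trigonometry rather than a tangle of nested square roots, and also makes the continuity across $\lambda=0$ manifest.
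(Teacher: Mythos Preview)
Your proposal is correct and considerably more constructive than the paper's own treatment. The paper's proof of this lemma is essentially a one-liner in Appendix~\ref{app:diagonal_F}: ``We leave for the reader by computing $\boldsymbol{F} = \boldsymbol{P}\boldsymbol{\Lambda}\boldsymbol{P}^T$''---that is, it asks the reader to verify the factorization by direct matrix multiplication, checking that $\mathbf{P}$ is orthogonal and that $\mathbf{P}\boldsymbol{\Lambda}\mathbf{P}^T$ reproduces $\mathbf{F}$, without explaining where $\mathbf{P}$ or $\boldsymbol{\Lambda}$ come from. You instead \emph{derive} the decomposition: the SVD basis block-diagonalizes $\mathbf{F}$ into $N_h$ independent $2\times2$ blocks $M_i$ plus a trivial perp block, you diagonalize each $M_i$ explicitly, and then show that the $(\tilde{g}_i,\tilde{h}_i)$ parametrization reproduces the normalized eigenvectors. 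Your angle substitution $\tilde{h}_i=\mathrm{sgn}(\lambda)\tan\theta_i$, $\tilde{g}_i=\cos\theta_i$ with $\cos(2\theta_i)=\tilde{s}_i/\tilde{s}_{\lambda i}$ is a clean device that turns the sign bookkeeping into half-angle identities and makes the $\lambda\to0$ limit manifest; it also explains \emph{why} the otherwise opaque $\tilde{\boldsymbol{H}}$, $\tilde{\boldsymbol{G}}$ take the form they do. The paper's verification route is more economical but uninformative; your derivation costs a page of algebra but yields the structural picture that the rest of the paper (e.g.\ the identities $\tilde{\boldsymbol{G}}^2(\mathbf{I}-\tilde{\boldsymbol{H}}^2)=\tilde{\boldsymbol{S}}/\tilde{\boldsymbol{S}}_\lambda$ and $4\tilde{\boldsymbol{G}}^2\tilde{\boldsymbol{H}}=\lambda/\tilde{\boldsymbol{S}}_\lambda$ used in Theorem~\ref{thm:singular-values}) implicitly relies on.
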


\textbf{Main theorem.}
Thanks to the eigendecomposition of $\boldsymbol{F}$ we can separate the solution provided in equation \ref{eq:qqt} into four quadrants.
Following an approach used in \cite{braun2023exact}, we will find it useful to define the following variables of the initialization that will allow us to define the product $\boldsymbol{P}^T\boldsymbol{Q}(0)$ more succinctly,
\begin{align}
\label{eq:BCD}
    \bfb &=  \wbz^T \tilde{\boldsymbol{U}} (\tilde{\boldsymbol{G}} + \tilde{\boldsymbol{H}} \tilde{\boldsymbol{G}}) + \waz \tilde{\boldsymbol{V}} (\tilde{\boldsymbol{G}} - \tilde{\boldsymbol{H}} \tilde{\boldsymbol{G}}) \in \mathbb{R}^{N_h \times N_h},\\
    \bfc &= \wbz^T \tilde{\boldsymbol{U}} (\tilde{\boldsymbol{G}} - \tilde{\boldsymbol{H}} \tilde{\boldsymbol{G}}) - \waz \tilde{\boldsymbol{V}} (\tilde{\boldsymbol{G}} + \tilde{\boldsymbol{H}} \tilde{\boldsymbol{G}}) \in \mathbb{R}^{N_h \times N_h},\\
    \boldsymbol{D} &= \wbz^T \bfuth + \waz\bfvth \in \mathbb{R}^{N_h \times |N_o - N_i|}.
\end{align}
Using these variables of the initialization, this brings us to our main theorem:

\begin{theorem} \label{thm:lamdba-ballanced-dynamics}
    Under the assumptions of whitened inputs (\ref{ass:whitened}), $\lambda$-balanced weights (\ref{ass:lambda-balanced}), and no bottleneck (\ref{ass:dimension}), the temporal dynamics of $\qqt$ are 
\begin{align}
 & \qqtt = \notag \begin{pmatrix} \textcolor{cbgreen}{ \boldsymbol{Z_1}(t) \boldsymbol{A}^{-1}(t)  \boldsymbol{Z_1}^T(t) } & \textcolor{cbred}{ \boldsymbol{Z_1}(t) \boldsymbol{A}^{-1}(t)  \boldsymbol{Z_2}^T(t)} \\
 \textcolor{cbred}{  \boldsymbol{Z_2}(t) \boldsymbol{A}^{-1}(t) \boldsymbol{Z_1}^T(t)  }&   \textcolor{cbblue}{\boldsymbol{Z_2}(t) \boldsymbol{A}^{-1}(t)  \boldsymbol{Z_2}^T(t)}  \end{pmatrix}, 
 \end{align}
 with the time-dependent variables $\boldsymbol{Z_1}(t) \in \mathbb{R}^{N_i \times N_h}$, $\boldsymbol{Z_2}(t) \in \mathbb{R}^{N_o \times N_h}$, and $\boldsymbol{A}(t) \in \mathbb{R}^{N_h \times N_h}$:
 \begin{align}
 \boldsymbol{Z_1}(t) &= \frac{1}{2}
\tilde{\boldsymbol{V}} (\tilde{\boldsymbol{G}} - \tilde{\boldsymbol{H}} \tilde{\boldsymbol{G}}) e^{\tilde{\boldsymbol{S}_{\lambda}} \frac{t}{\tau}} \boldsymbol{B}^T - \frac{1}{2}\tilde{\boldsymbol{V}} (\tilde{\boldsymbol{G}} + \tilde{\boldsymbol{H}} \tilde{\boldsymbol{G}}) e^{-\tilde{\boldsymbol{S}_{\lambda}} \frac{t}{\tau}} \boldsymbol{C}^T +  \bfvth e^{\boldsymbol{\lambda_{\perp}}\frac{t}{\tau}}\boldsymbol{D}^T, \\ 
 \boldsymbol{Z_2}(t) &=
\frac{1}{2} \tilde{\boldsymbol{U}} (\tilde{\boldsymbol{G}} + \tilde{\boldsymbol{H}} \tilde{\boldsymbol{G}}) e^{\tilde{\boldsymbol{S}_{\lambda}} \frac{t}{\tau}} \boldsymbol{B}^T + \frac{1}{2}\tilde{\boldsymbol{U}} (\tilde{\boldsymbol{G}} - \tilde{\boldsymbol{H}} \tilde{\boldsymbol{G}}) e^{-\tilde{\boldsymbol{S}_{\lambda}} \frac{t}{\tau}} \boldsymbol{C}^T + \bfuth  e^{\boldsymbol{\lambda_{\perp}} \frac{t}{\tau}}  \boldsymbol{D}^T, \\
\boldsymbol{A}(t) &= \mathbf{I} + \boldsymbol{B}\left(\frac{e^{2\tilde{\boldsymbol{S}_{\lambda}}\frac{t}{\tau}} - \mathbf{I}}{4\tilde{\boldsymbol{S}_{\lambda}}}\right)\boldsymbol{B}^T - \boldsymbol{C} \left(\frac{e^{-2\tilde{\boldsymbol{S}_{\lambda}}\frac{t}{\tau}} - \mathbf{I}}{4\tilde{\boldsymbol{S}_{\lambda}}}\right)\boldsymbol{C}^T
+  \boldsymbol{D} \left(\frac{e^{\boldsymbol{\lambda}_{\perp}\frac{t}{\tau}} - \mathbf{I}}{\boldsymbol{\lambda}_{\perp}}\right) \boldsymbol{D}^T.
\end{align} 
\end{theorem}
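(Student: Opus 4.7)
The plan is to specialize the general matrix-Riccati solution in equation~\ref{eq:qqt} to the explicit eigendecomposition $\boldsymbol{F}=\boldsymbol{P}\boldsymbol{\Lambda}\boldsymbol{P}^T$ provided by the preceding lemma, then read off the four quadrants of $\boldsymbol{Q}\boldsymbol{Q}^T(t)$. I would first rewrite equation~\ref{eq:qqt} compactly as
\begin{equation*}
\boldsymbol{Q}\boldsymbol{Q}^T(t) \;=\; \bigl[e^{\boldsymbol{F}t/\tau}\boldsymbol{Q}(0)\bigr]\,\boldsymbol{A}(t)^{-1}\,\bigl[e^{\boldsymbol{F}t/\tau}\boldsymbol{Q}(0)\bigr]^T,
\end{equation*}
where $\boldsymbol{A}(t)=\boldsymbol{I}+\boldsymbol{Q}(0)^T\boldsymbol{P}\bigl((e^{2\boldsymbol{\Lambda}t/\tau}-\boldsymbol{I})/(2\boldsymbol{\Lambda})\bigr)\boldsymbol{P}^T\boldsymbol{Q}(0)$ and $e^{\boldsymbol{F}t/\tau}=\boldsymbol{P}\,e^{\boldsymbol{\Lambda}t/\tau}\,\boldsymbol{P}^T$. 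It then suffices to identify the top $N_i$ and bottom $N_o$ row blocks of $e^{\boldsymbol{F}t/\tau}\boldsymbol{Q}(0)$ with $\boldsymbol{Z_1}(t)$ and $\boldsymbol{Z_2}(t)$, respectively, and to expand the sandwich defining $\boldsymbol{A}(t)$ into the claimed three-term form.

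The central step is computing the \emph{modal coordinates} $\boldsymbol{P}^T\boldsymbol{Q}(0)$ of the initialization along the three eigenblocks of $\boldsymbol{F}$ (with eigenvalues $+\tilde{\boldsymbol{S}_{\lambda}}$, $-\tilde{\boldsymbol{S}_{\lambda}}$, $\boldsymbol{\lambda}_\perp$). Using the stacking $\boldsymbol{Q}(0)=[\wa(0)^T;\wb(0)]$ together with the explicit form of $\boldsymbol{P}$ from the lemma, the three row blocks of $\boldsymbol{P}^T\boldsymbol{Q}(0)$ collapse, by the very definitions in equation~\ref{eq:BCD}, to $\tfrac{1}{\sqrt{2}}\boldsymbol{B}^T$, $-\tfrac{1}{\sqrt{2}}\boldsymbol{C}^T$, and $\boldsymbol{D}^T$; the $\boldsymbol{B},\boldsymbol{C},\boldsymbol{D}$ notation has been chosen precisely so that these projections are clean. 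Premultiplying by the block-diagonal $e^{\boldsymbol{\Lambda}t/\tau} = \operatorname{diag}(e^{\tilde{\boldsymbol{S}_{\lambda}}t/\tau},e^{-\tilde{\boldsymbol{S}_{\lambda}}t/\tau},e^{\boldsymbol{\lambda}_\perp t/\tau})$ and then by $\boldsymbol{P}$, the top and bottom row blocks simplify termwise to $\boldsymbol{Z_1}(t)$ and $\boldsymbol{Z_2}(t)$; the $-\tfrac{1}{\sqrt{2}}\boldsymbol{C}^T$ sign combines with the alternating signs in the second column block of $\boldsymbol{P}$ to yield exactly the stated pattern, and the $\sqrt{2}$ inside the third column block of $\boldsymbol{P}$ cancels the missing $1/\sqrt{2}$ on the $\boldsymbol{D}^T$ coordinate.

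For $\boldsymbol{A}(t)$, I substitute the same modal coordinates into $\boldsymbol{Q}(0)^T\boldsymbol{P}(\cdot)\boldsymbol{P}^T\boldsymbol{Q}(0)$ and use the block-diagonal form of $(e^{2\boldsymbol{\Lambda}t/\tau}-\boldsymbol{I})/(2\boldsymbol{\Lambda})$. This yields three independent contributions: the $+\tilde{\boldsymbol{S}_{\lambda}}$ block produces the $\boldsymbol{B}(\cdot)\boldsymbol{B}^T$ term; the $-\tilde{\boldsymbol{S}_{\lambda}}$ block produces the $-\boldsymbol{C}(\cdot)\boldsymbol{C}^T$ term, with the overall minus sign arising from the $-2\tilde{\boldsymbol{S}_{\lambda}}$ in the denominator; and the $\perp$ block produces the $\boldsymbol{D}(\cdot)\boldsymbol{D}^T$ term. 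Adding the leading $\boldsymbol{I}$ gives the claimed expression for $\boldsymbol{A}(t)$. Uniqueness of solutions to the Riccati IVP (Appendix~\ref{thm:qqtdiagonalisable}) then identifies the assembled block expression with $\boldsymbol{Q}\boldsymbol{Q}^T(t)$ itself.

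The main obstacle is careful bookkeeping rather than any deeper idea: keeping signs, $\sqrt{2}$ normalisations, and the $\operatorname{sgn}(N_o-N_i)$ conventions consistent across the three eigenblocks, and verifying that the $\perp$ block degenerates correctly in each of the cases $N_i<N_o$ (where $\tilde{\boldsymbol{U}}_\perp$ vanishes), $N_i>N_o$ (where $\tilde{\boldsymbol{V}}_\perp$ vanishes), and $N_i=N_o$ (where the $\perp$ block disappears entirely, $\boldsymbol{D}=0$, and the formula collapses to the zero-balanced result of prior work). No conceptual novelty is required beyond careful application of the eigendecomposition lemma together with the definitions in equation~\ref{eq:BCD}.
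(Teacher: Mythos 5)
Your proposal is correct and takes essentially the same route as the paper's own proof in Appendix~\ref{app:exat_learning_dynamics-unaligned-unequal}: specialize the general Riccati solution of equation~\ref{eq:qqt} to the eigendecomposition of $\boldsymbol{F}$, compute the modal coordinates $\boldsymbol{P}^T\qz=\bigl[\tfrac{1}{\sqrt{2}}\boldsymbol{B}^T;\,-\tfrac{1}{\sqrt{2}}\boldsymbol{C}^T;\,\boldsymbol{D}^T\bigr]$, assemble $\boldsymbol{Z_1}(t)$, $\boldsymbol{Z_2}(t)$ and $\boldsymbol{A}(t)$ blockwise, and close the argument with the uniqueness lemma of Appendix~\ref{thm:qqtdiagonalisable}. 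The one place demanding genuine care is the perpendicular eigenblock: with the $\sqrt{2}$ normalisation of $[\bfvth;\bfuth]$ the general solution contributes $\boldsymbol{D}\bigl((e^{2\boldsymbol{\lambda}_\perp t/\tau}-\mathbf{I})/(2\boldsymbol{\lambda}_\perp)\bigr)\boldsymbol{D}^T$ to $\boldsymbol{A}(t)$, so the constants in that term must be reconciled carefully with the form written in the theorem, which is exactly the bookkeeping the paper's appendix carries out for that block.
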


\begin{wrapfigure}{R}{0.54\textwidth}
  \centering
\vspace{-20pt}
  \includegraphics[width=0.55\textwidth]{./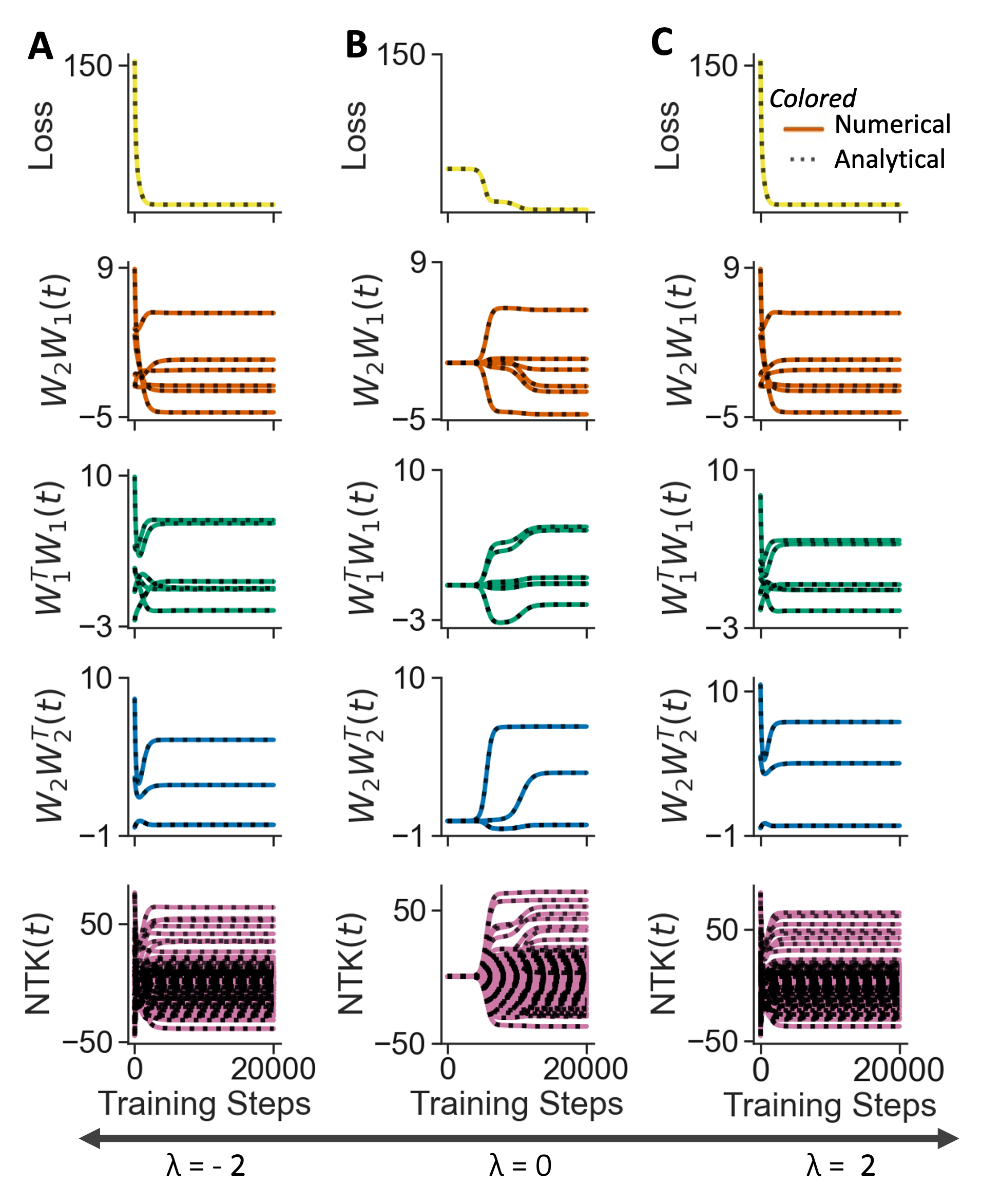}
  \vspace{-15pt}
  \caption{$\textbf{A.}$ The temporal dynamics of the numerical simulation (colored lines) of the loss, network function, correlation of input and output weights, and the NTK (row 1-5 respectively) are exactly matched by the analytical solution (black dotted lines) for $\lambda = -2$. $\textbf{B.}$ {\begin{color}{blue}  $\lambda = 0$  \end{color}} Large initial weight values. $\textbf{C.}$ $\lambda = 2 $ initial weight values initialized as described in \ref{app:simulation-details}.}
   \vspace{-35pt}
  \label{fig:exact-dynamics}
\end{wrapfigure}
The proof of Theorem \ref{thm:lamdba-ballanced-dynamics} is in Appendix \ref{app:exat_learning_dynamics}. 
With this solution we can calculate the exact temporal dynamics of the loss, network function, RSMs and NTK (Fig.~\ref{fig:exact-dynamics}A,~C) over a range of $\lambda$-balanced initializations. 
\vspace{2pt}

\textbf{Implementation and simulation.} \label{sec:simulation-details}  One issue with the expression we derived in Theorem \ref{thm:lamdba-ballanced-dynamics} is that it can be numerically unstable when simulating it for long time $t\gg0$ as it involves taking the inverse of terms that involve exponentials that are diverging with $t$. If we make the additional assumption that $\boldsymbol{B}$ is invertible, then we can rearrange this expression to only use exponentials with negative coefficients, which we derive in Appendix \ref{app:exat_learning_dynamics-unaligned-unequal-inv-b}. 
In the next section we will discuss the significance of $\boldsymbol{B}$ being invertible at initialization on the convergence of the dynamics. Simulation details are in Appendix \ref{app:simulation-details}.

\section{Rich and Lazy Learning}
\label{sec:rich_lazy_learning}
Next, we use these solutions to gain a deeper understanding of the transition between the \emph{rich} and \emph{lazy} regimes by examining the dynamics as a function of $\lambda$ -- the \emph{relative scale} -- as it varies between positive and negative infinity. 
We investigate four key indicators of the learning regimes: the dynamics of singular vectors, the structure and robustness of the representations, and the evolution of the NTK.
\vspace{2pt}

\textbf{Dynamics of the singular values. }Here we examine a \emph{$\lambda$-balanced} linear network initialized with \emph{task-aligned} weights.
Previous research \citep{saxe2019exact} has demonstrated that initial weights that are aligned with the task remain aligned throughout training, restricting the learning dynamics to the singular values of the network. This setting offers a valuable opportunity to build intuition about the impact of $\lambda$ on the dynamics of learning regimes, extending beyond previous solutions \citep{tarmoun2021understanding,varre2024spectral}.

\begin{theorem}
 \label{thm:singular-values}
    Under the assumptions of Theorem \ref{thm:lamdba-ballanced-dynamics} and with a task-aligned initialization, as defined in \cite{saxe_2014_exact}, the network function is given by the expression $\boldsymbol{W}_2\boldsymbol{W}_1(t) = \tilde{\boldsymbol{U}}\boldsymbol{S}(t)\tilde{\boldsymbol{V}}^T$ where $\boldsymbol{S}(t) \in \mathbb{R}^{N_h \times N_h}$ is a diagonal matrix of singular values with elements $s_\alpha(t)$ that evolve according to the equation,
    \begin{equation}
        s_\alpha(t) = s_\alpha(0) +\gamma_\alpha(t;\lambda)\left(\tilde{s}_\alpha - s_\alpha(0)\right),
    \end{equation}
    where $\tilde{s}_\alpha$ is the $\alpha$ singular value of $\tilde{\boldsymbol{S}}$ and $\gamma_\alpha(t;\lambda)$ is a $\lambda$-dependent monotonic transition function for each singular value that increases from $\gamma_\alpha(0;\lambda) = 0$ to $\lim_{t \to \infty} \gamma_\alpha(t;\lambda) = 1$ defined explicitly in Appendix \ref{app:dynamics-singular-values}.
    We find that under different limits of $\lambda$, the transition function converges pointwise to the sigmoidal ($\lambda \to 0$) and exponential ($\lambda \to \pm \infty$) transition functions,
    \begin{equation}
            \lim_{\lambda \to 0}\gamma_\alpha(t;\lambda) \to \frac{e^{2\tilde{s}_\alpha\frac{t}{\tau}} - 1}{e^{2\tilde{s}_\alpha\frac{t}{\tau}} - 1 + \frac{\tilde{s}_\alpha}{s_\alpha(0)}}, \qquad
            \lim_{\lambda \to \pm\infty}\gamma_\alpha(t;\lambda) \to
            1 - e^{-|\lambda| \frac{t}{\tau}}.
    \end{equation}
\end{theorem}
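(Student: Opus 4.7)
The plan is to specialize Theorem~\ref{thm:lamdba-ballanced-dynamics} to the task-aligned setting, which diagonalizes the full matrix dynamics into $N_h$ decoupled scalar trajectories, one per singular mode, and then take the two scalar limits. First I would encode the task-aligned hypothesis as $\wa(0) = \boldsymbol{R}_1\tilde{\boldsymbol{V}}^T$ and $\wb(0) = \tilde{\boldsymbol{U}}\boldsymbol{R}_2$ with diagonal $\boldsymbol{R}_1,\boldsymbol{R}_2 \in \mathbb{R}^{N_h\times N_h}$, so that $s_\alpha(0) = (\boldsymbol{R}_2\boldsymbol{R}_1)_{\alpha\alpha}$; the $\lambda$-balance condition then becomes the per-mode scalar constraint $\boldsymbol{R}_2^T\boldsymbol{R}_2 - \boldsymbol{R}_1\boldsymbol{R}_1^T = \lambda\mathbf{I}$. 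Substituting into the definitions~(\ref{eq:BCD}) of $\boldsymbol{B},\boldsymbol{C},\boldsymbol{D}$ and using $\tilde{\boldsymbol{U}}^T\tilde{\boldsymbol{U}}_\perp = 0$ and $\tilde{\boldsymbol{V}}^T\tilde{\boldsymbol{V}}_\perp = 0$ immediately gives $\boldsymbol{D}=0$ and shows $\boldsymbol{B},\boldsymbol{C}$ are diagonal in the SVD basis of $\tilde{\boldsymbol{\Sigma}}^{yx}$.

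Next I would propagate this diagonal structure through $\boldsymbol{Z}_1(t),\boldsymbol{Z}_2(t),\boldsymbol{A}(t)$. Because every factor between $\tilde{\boldsymbol{V}}$ (or $\tilde{\boldsymbol{U}}$) and the initialization blocks is now diagonal, one obtains $\boldsymbol{Z}_1(t) = \tilde{\boldsymbol{V}}\,\boldsymbol{z}_1(t)$, $\boldsymbol{Z}_2(t) = \tilde{\boldsymbol{U}}\,\boldsymbol{z}_2(t)$, and diagonal $\boldsymbol{A}(t)$, hence $\wb\wa(t) = \boldsymbol{Z}_2(t)\boldsymbol{A}^{-1}(t)\boldsymbol{Z}_1^T(t) = \tilde{\boldsymbol{U}}\,\boldsymbol{S}(t)\,\tilde{\boldsymbol{V}}^T$ with $\boldsymbol{S}(t)$ diagonal. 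The entry $s_\alpha(t)$ is then an explicit scalar function of $\tilde{s}_\alpha, \lambda, s_\alpha(0), t$ built out of $\tilde{s}_{\lambda\alpha}$, $\tilde{h}_\alpha$, $\tilde{g}_\alpha$ and the two exponentials $e^{\pm \tilde{s}_{\lambda\alpha} t/\tau}$. I would then define $\gamma_\alpha(t;\lambda) := (s_\alpha(t) - s_\alpha(0))/(\tilde{s}_\alpha - s_\alpha(0))$, verify $\gamma_\alpha(0;\lambda) = 0$ by direct substitution, and obtain $\lim_{t\to\infty}\gamma_\alpha(t;\lambda) = 1$ from the fact that the $e^{\tilde{s}_{\lambda\alpha}t/\tau}$ term dominates numerator and denominator in the same way, leaving the ratio $\tilde{s}_\alpha$. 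Monotonicity follows because the decoupled scalar gradient-flow ODE for $s_\alpha$ has $\tilde{s}_\alpha$ as the unique attracting fixed point on the connected component containing $s_\alpha(0)$, so $\dot{s}_\alpha$ does not change sign.

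Finally I would take the two stated limits of the explicit scalar expression. For $\lambda \to 0$, $\tilde{\boldsymbol{H}}\to 0$, $\tilde{\boldsymbol{G}}\to\mathbf{I}$, and $\tilde{\boldsymbol{S}}_\lambda \to \tilde{\boldsymbol{S}}$, collapsing the formula onto the balanced Fukumizu--Braun--Saxe sigmoid which rearranges to the stated sigmoidal $\gamma_\alpha$. For $\lambda \to \pm\infty$, one uses $\tilde{s}_{\lambda\alpha} \sim |\lambda|/2$, $\tilde{h}_\alpha \to \mathrm{sgn}(\lambda)$, and $\tilde{g}_\alpha \to 1/\sqrt{2}$; the ratio $\tilde{s}_\alpha/\tilde{s}_{\lambda\alpha}\to 0$ suppresses the off-target contributions of $\boldsymbol{C}$ and a single exponential dominates $\boldsymbol{A}(t)$, yielding exponential relaxation at rate $|\lambda|/\tau$. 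The main obstacle I foresee is this last limit: the $\mathrm{sgn}(\lambda)$ carried by $\tilde{\boldsymbol{H}}$ makes several leading-order $|\lambda|$ contributions that must cancel before a finite limit appears, and the $\lambda\to+\infty$ and $\lambda\to-\infty$ cases must be handled separately to confirm that they collapse to the same $1 - e^{-|\lambda|t/\tau}$ expression regardless of the sign of $\tilde{s}_\alpha - s_\alpha(0)$.
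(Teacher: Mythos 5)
Your plan is essentially the paper's own proof: the appendix likewise plugs the aligned initialization into $\boldsymbol{B},\boldsymbol{C}$ (obtaining diagonal factors up to an orthonormal hidden-layer rotation that cancels), propagates the diagonal structure through $\boldsymbol{Z}_1,\boldsymbol{Z}_2,\boldsymbol{A}$ to get a per-mode scalar $s_\alpha(t)$, reads off $\gamma_\alpha$, and takes the $\lambda\to 0$ and $\lambda\to\pm\infty$ limits using $\tilde{\boldsymbol{S}}_\lambda\to\tilde{\boldsymbol{S}}$ or $\tilde{\boldsymbol{S}}_\lambda\to\tfrac{|\lambda|}{2}\mathbf{I}$ (the paper organizes the cancellations you worry about via a hyperbolic-function form of $\gamma$ and the identity $\sinh x+\cosh x=e^{x}$, which handles both signs of $\lambda$ at once). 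Your extra monotonicity argument via the scalar flow $\tau\dot{s}_\alpha=\sqrt{4s_\alpha^2+\lambda^2}\,(\tilde{s}_\alpha-s_\alpha)$ is sound and is a small addition the paper leaves implicit.
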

The proof for Theorem \ref{thm:singular-values} can be found in  Appendix \ref{app:dynamics-singular-values}.
As shown in Fig.~\ref{fig:align}B, as $\lambda$ approaches zero, the dynamics resemble sigmoidal learning curves that traverse between saddle points, characteristic of the \emph{rich} regime \citep{braun2023exact}. 
In this regime the network learns the most salient features first, which can be beneficial for generalization \citep{lampinen2018analytic}. 
Conversely, as shown in Fig.~\ref{fig:align}A and C, as the magnitude of $\lambda$ increases, the dynamics become exponential, characteristic of the \emph{lazy} regime.
In this regime, all features are treated equally and the network's dynamics resemble that of a shallow network.
Notably, similar effects have been observed previously in the context of large \emph{absolute scales} \citep{saxe2019exact} independently of the \emph{relative scale}. Overall, our results highlight the critical influence 
the \emph{relative scale} $\lambda$ has in shaping the learning dynamics, from sigmoidal to exponential, steering the network between the \emph{rich} and \emph{lazy} regimes. \\
%
\begin{figure}[t]
\begin{center}
\includegraphics[width=0.68\textwidth]{./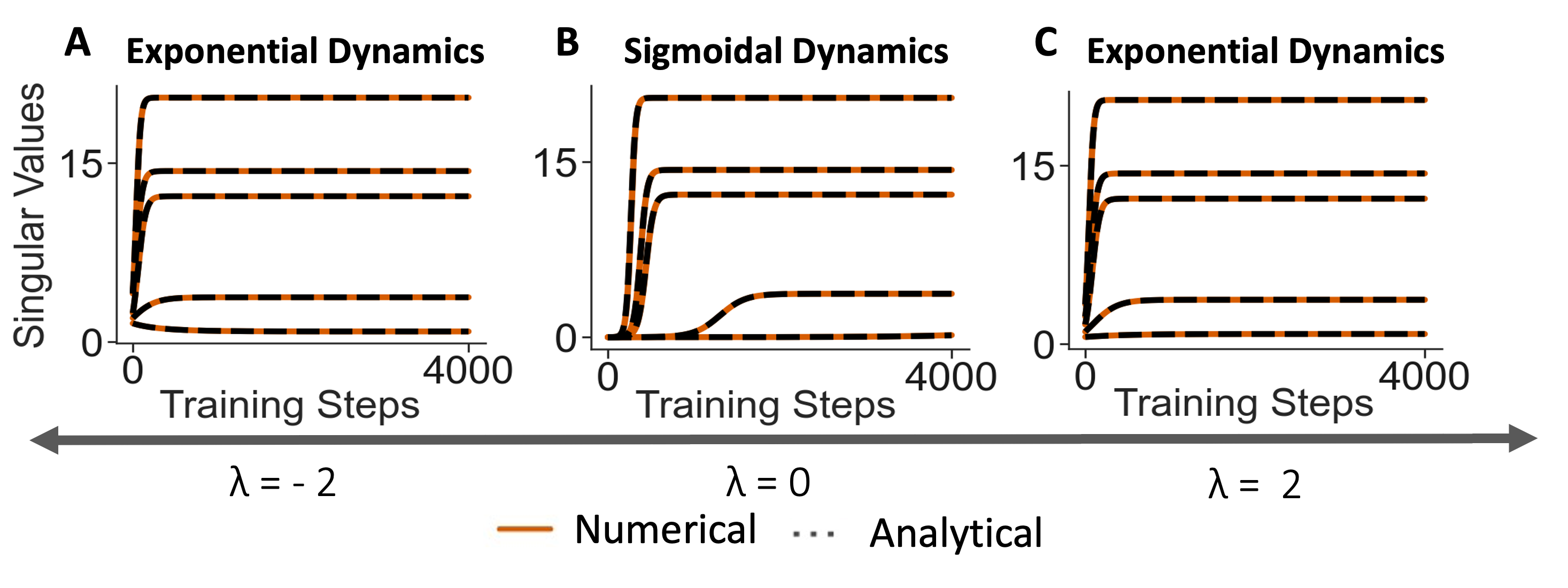}
\end{center}
\vspace{-10pt}
\caption{Simulated and analytical dynamics of the singular values of the network function with \emph{relative scale} of \textbf{A.} $\lambda = -2$, \textbf{B.}  $\lambda = 0$, or \textbf{C.} $\lambda = 2$, initialized as described in Appendix~\ref{app:simulation-details}.}
\label{fig:align}
\vspace{-10pt}
\end{figure}

\textbf{The dynamics of the representations.}\label{sec:representation}
We now consider how the representations of the individual parameters $\boldsymbol{W}_1$ and $\boldsymbol{W}_2$ change through training.
We note that under $\lambda$-balanced initializations there is a simple structure which persists throughout training that allows us to recover the dynamics of the parameters up to a time-dependent orthogonal transformation from the dynamics of $\qqtt$. 
\begin{theorem}\label{theorem:singular_values_lambda} 
Under assumptions \ref{ass:lambda-balanced}, if the network function \( \boldsymbol{W}_2\boldsymbol{W}_1(t) = \boldsymbol{U}(t)\boldsymbol{S}(t)\boldsymbol{V}^T(t) \) is full rank, then we can recover the parameters \( \boldsymbol{W}_2(t) = \boldsymbol{U}(t) \boldsymbol{S}_2(t) \boldsymbol{R}^T(t) \) and \( \boldsymbol{W}_1(t) = \boldsymbol{R}(t) \boldsymbol{S}_1(t) \boldsymbol{V}^T(t) \) up to time-dependent orthogonal transformation \(\boldsymbol{R}(t) \in \mathbb{R}^{N_h\times N_h}\), where $\boldsymbol{S_{\lambda}}(t) = \sqrt{\boldsymbol{S}^2(t) + \tfrac{\lambda^2}{4} \mathbf{I}}$ and
\begin{equation}
\boldsymbol{S}_1(t)  =  \begin{pmatrix}
\left( \boldsymbol{S_{\lambda}}(t)  - \frac{\lambda \mathbf{I}}{2} \right)^{\frac{1}{2}}, 0_{\max(0, N_i - N_o)}
\end{pmatrix}, \;
\boldsymbol{S}_2(t) = \begin{pmatrix}
\left( \boldsymbol{S_{\lambda}}(t)  + \frac{\lambda \mathbf{I}}{2} \right)^{\frac{1}{2}}; 0_{\max(0, N_o - N_i)}
\end{pmatrix}. 
\end{equation}
\end{theorem}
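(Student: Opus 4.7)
The plan is to take separate SVDs of $\boldsymbol{W}_1$ and $\boldsymbol{W}_2$ at a fixed time $t$ and use the $\lambda$-balance condition to align them, then match the resulting product decomposition to the given SVD $\boldsymbol{U}(t)\boldsymbol{S}(t)\boldsymbol{V}^T(t)$ of $\boldsymbol{W}_2\boldsymbol{W}_1$. Since $\lambda$-balance is preserved by gradient flow (Assumption~\ref{ass:lambda-balanced}), the argument applies pointwise in $t$ and the time dependence of $\boldsymbol{R}(t)$ emerges automatically; I therefore suppress the $t$ argument in the sketch below.

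Concretely, I would write full SVDs $\boldsymbol{W}_1 = \boldsymbol{U}_1 \boldsymbol{\Sigma}_1 \boldsymbol{V}_1^T$ with $\boldsymbol{U}_1 \in O(N_h)$, $\boldsymbol{\Sigma}_1 \in \mathbb{R}^{N_h \times N_i}$, $\boldsymbol{V}_1 \in O(N_i)$, and $\boldsymbol{W}_2 = \boldsymbol{U}_2 \boldsymbol{\Sigma}_2 \boldsymbol{V}_2^T$ with $\boldsymbol{U}_2 \in O(N_o)$, $\boldsymbol{\Sigma}_2 \in \mathbb{R}^{N_o \times N_h}$, $\boldsymbol{V}_2 \in O(N_h)$. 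Substituting these into the $\lambda$-balance condition $\boldsymbol{W}_2^T\boldsymbol{W}_2 - \boldsymbol{W}_1\boldsymbol{W}_1^T = \lambda\mathbf{I}$ gives
\begin{equation*}
\boldsymbol{V}_2 (\boldsymbol{\Sigma}_2^T\boldsymbol{\Sigma}_2) \boldsymbol{V}_2^T = \boldsymbol{U}_1 (\boldsymbol{\Sigma}_1\boldsymbol{\Sigma}_1^T) \boldsymbol{U}_1^T + \lambda \mathbf{I}.
\end{equation*}
Because $\boldsymbol{\Sigma}_2^T\boldsymbol{\Sigma}_2$ and $\boldsymbol{\Sigma}_1\boldsymbol{\Sigma}_1^T$ are both $N_h \times N_h$ diagonal matrices, the two sides are eigendecompositions of the same symmetric matrix. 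I can therefore choose SVD bases so that $\boldsymbol{V}_2 = \boldsymbol{U}_1 =: \boldsymbol{R}$ is a common orthogonal matrix, with diagonal entries linked by $\sigma_2^2 = \sigma_1^2 + \lambda$ at each index.

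With the two SVDs aligned, multiplying them and using $\boldsymbol{V}_2^T \boldsymbol{U}_1 = \mathbf{I}$ yields $\boldsymbol{W}_2 \boldsymbol{W}_1 = \boldsymbol{U}_2 (\boldsymbol{\Sigma}_2\boldsymbol{\Sigma}_1) \boldsymbol{V}_1^T$. The full-rank hypothesis ensures the diagonal block of $\boldsymbol{\Sigma}_2\boldsymbol{\Sigma}_1$ has all $N_h$ entries strictly positive, so comparing with $\boldsymbol{U}\boldsymbol{S}\boldsymbol{V}^T$ identifies $\boldsymbol{U}_2 = \boldsymbol{U}$, $\boldsymbol{V}_1 = \boldsymbol{V}$, and $\sigma_1 \sigma_2 = s$ at each matched singular value $s$. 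Combining $\sigma_2^2 - \sigma_1^2 = \lambda$ with $\sigma_1 \sigma_2 = s$ gives the biquadratic $\sigma_1^4 + \lambda \sigma_1^2 - s^2 = 0$, whose nonnegative root is $\sigma_1^2 = \sqrt{s^2 + \lambda^2/4} - \lambda/2$, with $\sigma_2^2 = \sqrt{s^2 + \lambda^2/4} + \lambda/2$. Packing these into rectangular diagonal matrices absorbs the zero padding $0_{\max(0, N_i - N_o)}$ or $0_{\max(0, N_o - N_i)}$ when $N_i \neq N_o$, so $\boldsymbol{\Sigma}_1 = \boldsymbol{S}_1$ and $\boldsymbol{\Sigma}_2 = \boldsymbol{S}_2$, yielding $\boldsymbol{W}_1 = \boldsymbol{R}\boldsymbol{S}_1\boldsymbol{V}^T$ and $\boldsymbol{W}_2 = \boldsymbol{U}\boldsymbol{S}_2\boldsymbol{R}^T$ as claimed.

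The hard part will be handling sign and permutation ambiguities when aligning the two eigendecompositions, especially when $\boldsymbol{S}$ has repeated singular values: $\boldsymbol{R}$ is then only determined up to a block-orthogonal transformation within each eigenspace, but any consistent choice yields valid factorizations of both $\boldsymbol{W}_1$ and $\boldsymbol{W}_2$ with the claimed structure. A secondary subtlety is that the full-rank hypothesis on $\boldsymbol{W}_2\boldsymbol{W}_1$ is essential both to force $\sigma_1,\sigma_2 > 0$ on the $N_h$-dimensional block so the biquadratic is well-posed, and to guarantee that the nonzero singular-vector columns of $\boldsymbol{U}_2$ and $\boldsymbol{V}_1$ coincide with those of $\boldsymbol{U}$ and $\boldsymbol{V}$, so that the aligned $\boldsymbol{R}$ genuinely identifies the two eigendecompositions on the same $N_h$-dimensional subspace.
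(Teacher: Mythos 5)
Your proposal is correct and reaches the paper's formulas by the same final algebra (the pair $\sigma_2^2-\sigma_1^2=\lambda$, $\sigma_1\sigma_2=s$ and its nonnegative root), but the structural part runs in the opposite direction from the paper's proof. The paper starts from the SVD of the product and \emph{posits}, ``without loss of generality,'' a factorization $\boldsymbol{W}_2=\boldsymbol{U}\boldsymbol{S}_2\boldsymbol{G}_2$, $\boldsymbol{W}_1=\boldsymbol{G}_1\boldsymbol{S}_1\boldsymbol{V}^T$ with $\boldsymbol{G}_2=\boldsymbol{G}_1^{-1}$ and $\boldsymbol{S}_1\boldsymbol{S}_2=\boldsymbol{S}$, then uses the symmetry of $\boldsymbol{W}_2^T\boldsymbol{W}_2$ and $\boldsymbol{W}_1\boldsymbol{W}_1^T$ together with the balance condition to conclude that $\boldsymbol{G}_1,\boldsymbol{G}_2$ are orthogonal, hence $\boldsymbol{G}_2=\boldsymbol{G}_1^T=\boldsymbol{R}^T$, before solving the diagonal equations. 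You instead take independent SVDs of the two factors and let the balance condition do the alignment: both sides of $\boldsymbol{V}_2(\boldsymbol{\Sigma}_2^T\boldsymbol{\Sigma}_2)\boldsymbol{V}_2^T=\boldsymbol{U}_1(\boldsymbol{\Sigma}_1\boldsymbol{\Sigma}_1^T)\boldsymbol{U}_1^T+\lambda\mathbf{I}$ are eigendecompositions of the same symmetric matrix, and since the eigenvalue map $x\mapsto x+\lambda$ is monotone the descending orderings correspond, so a common basis $\boldsymbol{R}=\boldsymbol{U}_1=\boldsymbol{V}_2$ can be chosen; multiplying then exhibits an SVD of the product whose outer factors are identified with $\boldsymbol{U},\boldsymbol{V}$ using the full-rank hypothesis. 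What your route buys is a derivation of exactly the step the paper asserts without justification (that the parameters factor through the given $\boldsymbol{U}$, $\boldsymbol{V}$ with a shared invertible middle factor), at the cost of the sign/permutation bookkeeping you flag; that bookkeeping is benign, since within any repeated-singular-value block the residual rotation (or sign matrix $\boldsymbol{D}$) commutes with the diagonal $\boldsymbol{S}_1,\boldsymbol{S}_2$ and can be absorbed into $\boldsymbol{R}$, which is precisely the ``up to time-dependent orthogonal transformation'' slack in the statement.
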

The effective singular values $\boldsymbol{S}_{\lambda}$ of the corresponding weights are either up-weighted or down-weighted depending on the magnitude and sign of $\lambda$, splitting the representation into two parts. This division is reflected in the network's internal representations. With our solution, $\qqtt$, which captures the temporal dynamics of the similarity between hidden layer activations, we can analyze the network's internal representations in relation to the task. This allows us to determine whether the network adopts a \emph{rich} or \emph{lazy} representation, depending on the value of $\lambda$. Assuming convergence to the global minimum, which is guaranteed when the matrix $\bfb$ is non-singular, the internal representation satisfies $\wa^T \wa = \bfvt \bfst_1^2 \bfvt^T$ and $\wb \wb^T = \bfut \bfst_2^2 \bfut^T$ with $\wb\wa = \bfut \bfst \bfvt^T$. Theorem \ref{thm:limititing-behaviour} in the Appendix provides a detailed proof of this limiting behavior. To illustrate this, we consider a hierarchical semantic learning task\footnote[1]{In this setting, the network has equal input and output dimensions}, introduced in \cite{saxe2013exact,braun2023exact}, where living organisms are organized according to their features (Fig.  \ref{fig:generalisation}A).  The representational similarity of the task's inputs ($\bfvt\bfst\bfvt^T$) reflects this hierarchical structure (Fig.\ref{fig:generalisation}A). 
Similarly, the representational similarity of the task's target values ($\bfut\bfst\bfut^T$) highlights the primary groupings of items. 
When training a two-layer network with \emph{relative scale} $\lambda$ equal to zero and task-agnostic initialization \citep{mishkin2015all}, the input and output representational similarity matrices (Fig.\ref{fig:generalisation}B) match the task's structure upon convergence. As derived in Theorem \ref{thm:internal_representations_lambda} the network is guaranteed to find a \emph{rich} solution regardless of the \emph{absolute scale} , meaning $\wa^T\wa = \bfvt\bfst\bfvt^T$ and $\wb\wb^T = \bfut\bfst\bfut^T$, as shown in Fig. \ref{fig:generalisation}C. Hence, the network learns task-specific representations. We also show that as $\lambda$ approaches either positive or negative infinity, the network symmetrically transitions into the \emph{lazy} regime. 
As demonstrated in Theorem \ref{thm:internal_representations_lambda} and illustrated in Fig.~\ref{fig:generalisation}, the representations converge to an identity matrix for both large positive and large negative values of $\lambda$— emerging in the output representations for large positive $\lambda$ and input representations for large negative $\lambda$. This convergence indicates that the network adopts task-agnostic representations. Meanwhile, the other respective RSMs become negligible, with scales proportional to $1/\lambda$. 
Therefore, as shown in Theorem \ref{thm:ntk_lambda}, the NTK becomes static and equivalent to the identity matrix in the limit as $\lambda$ approaches infinity. However, the downscaled representations of the network remain structured and task-specific. {Intuitively, in this setup, the larger weights function as an identity-like projection, while the smaller weights adapt and align to the task. However, due to their relative scale compared to the larger weights, their contribution to the NTK remains negligible. This property could be beneficial if the weights are later rescaled, such as during fine-tuning, potentially enhancing generalization and transfer learning, as we will demonstrate in Section \ref{sec:Generalization and Structured Learning}. 
We compare this to the scenario where both weights are initialized with large Gaussian values, leading to \emph{lazy} learning that maintains a fixed NTK but lacks any structural representation, as illustrated in Fig.~\ref{fig:generalisation}D. We further discuss the relationship between the scale and the relative scale in appendix \ref{app:scale_relative_scale}. Furthermore, in the infinite-width regime, where weights are initialized from a Gaussian distribution with large variance, averaging effects cause both input and output representations to approximate identity matrices. In this scenario, the network learns in the lazy regime with a fixed Neural Tangent Kernel (NTK). This behavior contrasts with the dynamics observed in the current setting since both input and output representations are task agnostic. Consequently, we propose a new \emph{lazy} regime, which we refer to as the \emph{semi-structured} \emph{lazy} regime. We note that these existing regimes preserve only the input or output representation, resulting in a partial loss of structural information. In the nonlinear setting, this behavior is not expected to hold, as an additional factor comes into play in the computation of the NTK: the activation coefficients of the nonlinearity, as demonstrated in \cite{kunin_2024_get}. In that case large relative weight (large positive $\lambda$) leads to a rapid rich regime.}
All together, we find that initialization will determine which layer in the network the task specification features resides in: layers initialized with large values will be task-agnostic, while those initialized with small values will be task-specific.\\

\begin{figure}[t]
\begin{center}
\vspace{-10pt}
\includegraphics[width=1\textwidth]{./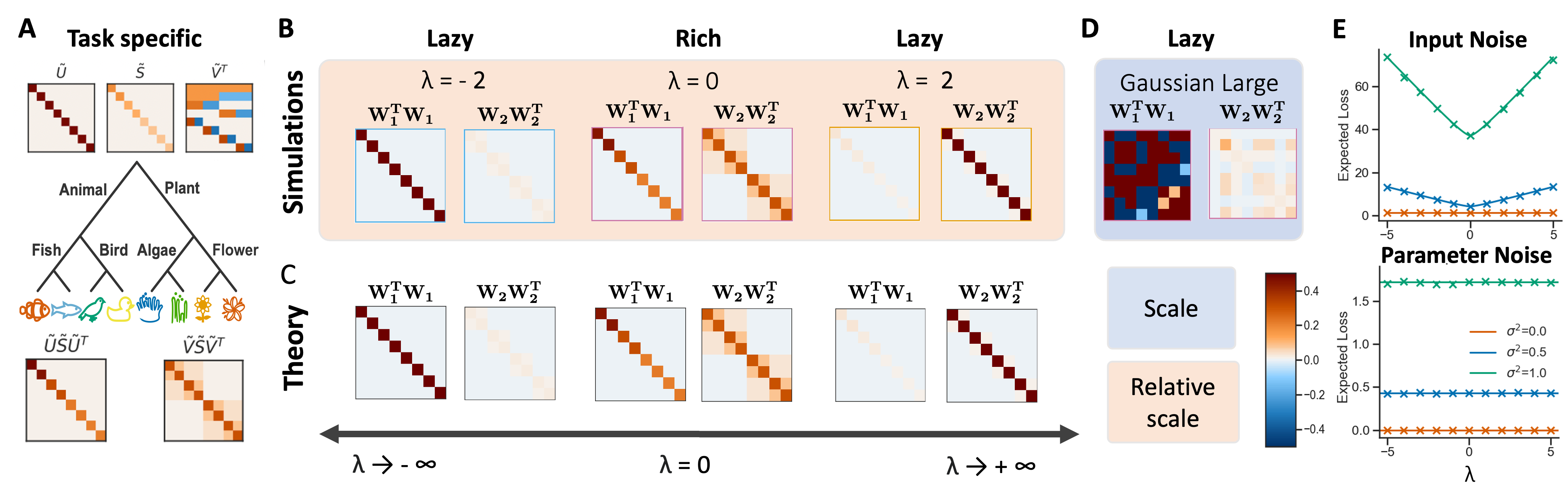}
\end{center}
\vspace{-10pt}
\caption{ \textbf{A.} A semantic learning task with the SVD of the input-output correlation matrix of the task. (top) $U$ and $V$ represent the singular vectors, and $S$ contains the singular values. (bottom) The respective RSMs as $USU^\top$ for the input and $VSV^\top$ for the output task.  {\begin{color}{blue}
\textbf{B.} \end{color}} Simulation results and \textbf{C.} Theoretical input and output representation matrices after training, showing convergence when initialized with values of $\lambda$ equal to $-2$, $0$, and $2$, according to the initialization scheme described in Appendix~\ref{app:simulation-details}. \textbf{D.}  Final RSMs matrices after training converged when initialised from random large weights.
\textbf{E.} After convergence, the network's sensitivity to input noise (top panel) is invariant to $\lambda$, but the sensitivity to parameter noise increases as $\lambda$ becomes smaller (or larger) than zero. }
\vspace{-2ex}
\label{fig:generalisation}
\end{figure}

\textbf{Representation robustness and sensitivity to noise.}
Here we examine the relationship between the learning regime and the robustness of the learned representations to added noise in the inputs and parameters.
The expected post-convergence loss with added noise to the inputs is determined by the norm of the network function \citep{braun2025not}, which in our setting is independent of $\lambda$. Specifically, if we add zero-centered noise $\xi_\bfX$ with variance $\sigma^2_\bfX$ to the inputs, then the expected loss is$ 
    \left\langle\mathcal{L}\right\rangle_{\xi_\bfX} = \sigma^2_\bfX\sum_{i=1}^{N_h}\bfst^2_i + c,$
where $c$ is a constant that depends solely on the statistics of the training data (Figure~\ref{fig:generalisation}E, Appendix~\ref{app:noise-sensitivity}). However, if instead noise is added to the parameters, the expected loss scales quadratically with the norm of the weight matrices \citep{braun2025not}, which in our setting depend on $\lambda$. In particular, zero-centered parameter noise $\xi_{\wa}$ and $\xi_{\wb}$ with variance $\sigma^2_\bfW$ results in an expected loss of$\left\langle\mathcal{L}\right\rangle_{\xi_{\wa}, \xi_{\wb}} = \frac{1}{2}N_i\sigma^2_\bfW||\wb||_F^2 + \frac{1}{2} N_o\sigma^2_\bfW||\wa||_F^2 + \frac{1}{2} N_iN_hN_o\sigma^4 + c,
$ with norms $||\wa||_F^2 = \frac{1}{2}\sum_{i=1}^{N_h}\left(\sqrt{4\bfst_i^2 + \lambda^2} +\lambda\right)\quad \text{and}\quad ||\wb||_F^2 = \frac{1}{2}\sum_{i=1}^{N_h}\left(\sqrt{4\bfst_i^2 + \lambda^2} -\lambda\right).$
This implies that, under the assumption of equal input-output dimensions, networks initialized with weights such that $\lambda=0$, corresponding to the rich regime, converge to solutions that are most robust to parameter noise (Figure~\ref{fig:generalisation}E, Appendix~\ref{app:noise-sensitivity}). In practice, parameter noise could be interpreted as the noise occurring within the neurons of a biological network. Hence, a rich solution may enable a more robust representation in such systems.\\

\textbf{The impact of the architecture.}
Thus far, we have found that the magnitude of the \emph{relative scale} parameter $\lambda$ determines the extent of rich and lazy learning.
Here, we explore how a network's learning regime is also shaped by the interaction of its architecture and the sign of the \emph{relative scale}. We consider three types of network architectures, depicted in Fig.~\ref{fig:not_aligned}A: \emph{funnel networks}, which narrow from input to output ($N_i > N_h = N_o$); \emph{inverted-funnel networks}, which expand from input to output ($N_i = N_h < N_o$); and \emph{square networks}, where input and output dimensions are equal ($N_i = N_h = N_o$). Our solution, $\qqt$, captures the dynamics of the NTK across these different network architectures. To examine the NTK's evolution under varying $\lambda$ initializations, we compute the kernel distance from initialization, as defined in \cite{fort2020deep}. As shown in Fig.~\ref{fig:not_aligned}B, we observe that funnel networks enter the \emph{lazy} regime as $\lambda \rightarrow \infty$, while inverted-funnel networks do so as $\lambda \rightarrow -\infty$. The NTK remains static during the initial phase, rigorously confirming the rank argument first introduced by \cite{kunin_2024_get} for the multi-output setting. In the opposite limits of $\lambda$, these networks transition from a \emph{lazy} regime to a \emph{rich}  regime. During this second alignment phase, the NTK matrix undergoes changes, indicating an initial \emph{lazy} phase followed by a \emph{delayed rich} phase. We further investigate and quantify this \emph{delayed rich} regime, showing the  NTK movement over training in Fig.~\ref{fig:not_aligned}C. This behavior is also quantified in Theorem \ref{thm:rate-delayed-rich}, which describes the rate of learning in this network. {
In Appendix \ref{app:scale_relative_scale}, we further explore the impact of the architecture as a function of the absolute and relative scale. Intuitively, the delayed onset of the rich regime occurs because no least-squares solution exists within the span of the network at initialization. In such cases, the network enters a delayed rich phase, where $\lambda$ tends toward infinity, with the magnitude of $\lambda$ determining the length of the delay. At first, the network exhibits \emph{lazy} dynamics, striving to approximate the solution. However, as constraints necessitate adjustments in its directions, the network gradually transitions into the \emph{rich} phase.} For square networks  this behavior is discussed in Section \ref{sec:representation}. Across all architectures, as $\lambda \to 0$, the networks consistently transition into the \emph{rich} regime.
Altogether, we further characterize the \emph{delayed rich} regime in wide networks. 

\begin{figure}[t]
\begin{center}
\includegraphics[width=1\textwidth]{./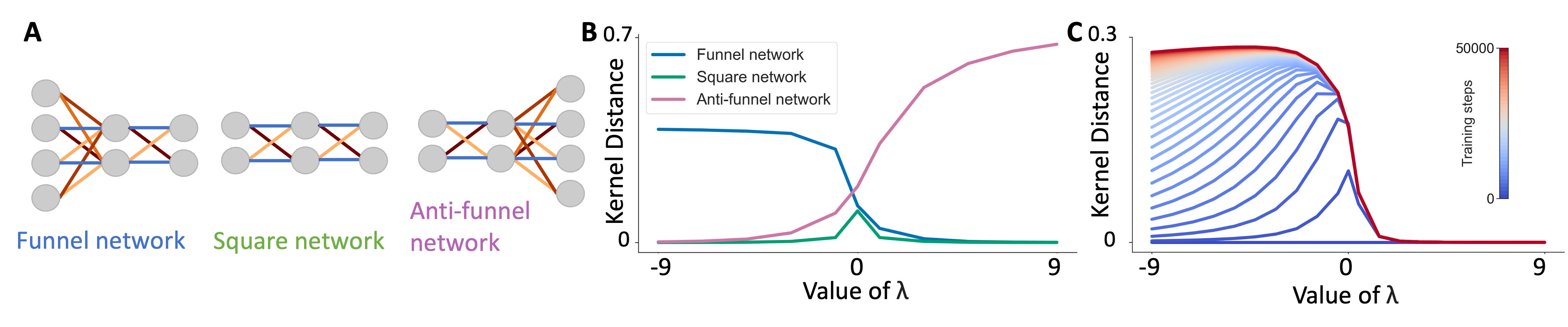}
\end{center}
\vspace{-10pt}
\caption{\textbf{A.} Schematic representations of the network architectures considered, from left to right: funnel network, square network, and inverted-funnel network. \textbf{B.} The plot shows the NTK kernel distance from initialization, as defined in \cite{fort2020deep} across the three architecture depicted schematically. \textbf{C.} The NTK kernel distance away from initialization over training time.
} \label{fig:not_aligned}
\vspace{-4ex}
\end{figure}
{
\section{Applications}

\label{sec:application}
In this section, we apply the exact solutions for the learning dynamics in deep linear networks described in Section \ref{sec:exact_learning_dynamics} to illustrate several phenomena relevant to machine learning and neuroscience.\\

\textbf{Continual learning.}
\label{sec:continual} 
Continual learning, as thoroughly reviewed by \citet{parisi2019continual}, has long posed a significant challenge for neural network models in contrast to biological networks, particularly due to the issue of catastrophic forgetting \citep{mccloskey1989catastrophic, ratcliff1990connectionist, french1999catastrophic}. Similarly to the framework presented by \cite{braun2023exact}, our approach describes the exact solutions of the networks dynamics trained across a sequence of tasks describing the entire continual learning process. As detailed in Appendix~\ref{app:continual_learning}, we demonstrate that, regardless of the chosen value of $\lambda$, training on subsequent tasks can result in the overwriting of previously acquired knowledge, leading to catastrophic forgetting.\\

\textbf{Reversal learning.} During reversal learning, previously acquired knowledge must be relearned, necessitating the overcoming of an earlier established relationship between inputs and outputs. As demonstrated in \citet{braun2023exact}, reversal learning theoretically does not succeed in deep linear networks as the initalization aligns with the separatrix of a saddle point. While simulations show that the learning dynamics can escape the saddle point due to numerical imprecision, the process is catastrophically slowed in its vicinity. However, when $\lambda$ is non-zero, reversal learning dynamics consistently succeed, as they avoid passing through the saddle point due to the initialization scheme. This is both theoretically proven and numerically illustrated in Appendix \ref{app:reversal_learning}. We also present a spectrum of reversal learning behaviors controlled by the \emph{relative scale} $\lambda$, ranging from \emph{rich} to \emph{lazy} learning regimes. This spectrum has the potential to explain the diverse dynamics observed in animal behavior, offering insights into the learning regimes relevant toneuroscience experiments.\\



\textbf{Transfer learning.}
\label{sec:Generalization and Structured Learning}
We consider how different $\lambda$ initializations influence generalization to a new feature after being trained on an initial task. As detailed in Appendix \ref{app:generalisation_and_structured_learning}, we first train each network on the hierarchical semantic learning task described in Fig.~\ref{fig:generalisation}. We then add a new feature to the dataset,
 and train the network specifically on the corresponding item
 %
 %
while keeping the rest of the network parameters unchanged. Afterwards, we 
 evaluate the generalization to the other items. We observe in Appendix Fig.~\ref{fig:transfer} that the hierarchical structure of the data is effectively transferred to the new feature when the representation is task-specific and $\lambda$ is zero. Conversely, when the input feature representation is \emph{lazy} ($\lambda \leq 0 $), meaning the hidden representation lacks adaptation, no hierarchical generalization is observed. Strikingly, when $\lambda$ is positive, the hierarchical structure in the input weights remains small but structured, while the output weights exhibit a \emph{lazy} representation and the network generalizes hierarchically. Specifically Fig. \ref{fig:transfer} shows that the generalization loss on untrained items with the new feature decreases as a function of increasing $\lambda$. Therefore, as $\lambda$ increases, networks more effectively transfer the hierarchical structure of the network to the new feature for untrained items, leading to an increase in generalization performance. This indicates that the \emph{lazy} regime structure (large $\lambda$ values) can be beneficial for transfer learning.\\

 \textbf{Fine-tuning}
It is a common practice to pre-
train neural networks on a large auxiliary task before fine-tuning them on a downstream task with limited samples. Despite the widespread use of this approach, the dynamics and outcomes of this method remain poorly understood. Our study establishes a theoretical basis for the success of fine-tuning, particularly how changes in $\lambda$-balancedness initialisation after pre-training affect performance on new datasets (see Appendix~\ref{app:finetuning}).
We consistently find that finetuning performance improves and converges more quickly as networks are re-balanced to larger values of $\lambda_{FT}$ and, conversely, decreases as $\lambda_{FT}$ approaches 0 as shown in Fig. \ref{fig:finetuning}. Our work examines the fine-tuning dynamics of two-layer linear networks. While simple, these architectures are commonly utilized for fine-tuning large pre-trained language and vision models, notably in Low-Rank Adapters (LoRA)~\citep{hu2022lora} as further discussed in Appendix~\ref{app:finetuning}. While a detailed exploration of fine-tuning performance in practice as a function of initialization lies beyond the scope of this work, it remains an important direction for future research. 
\section{Discussion}
We derive exact solutions to the learning dynamics of deep linear networks. While our findings extend the range of analytically describable two-layer linear network problems, they are still limited by a set of assumptions. In particular, relaxing the assumptions that input covariance must be white and that initialization must be \emph{$\lambda$-balanced}  could bring the analysis closer to practical applications in machine learning and neuroscience. Moving towards the nonlinear setting would also make the findings more applicable in real-world scenarios. Despite these limitations, our solutions provide valuable insights into network behavior. We examine the transition between the \emph{rich} and \emph{lazy} regimes by analyzing the dynamics as a function of $\lambda$—the \emph{relative scale}—across its full range from positive to negative infinity.
Our analysis demonstrates that the \emph{relative scale}, \(\lambda\), is pivotal in managing this transition. Notably, we identify a structured \emph{lazy} regime that promotes transfer learning. Building on previous work \citep{kunin_2024_get} that shows these findings extend to basic nonlinear settings and practical scenarios, our theory suggests that further exploration of unbalanced initialization could optimize efficient feature learning. We leave for future work, the extension of this initialization to deep networks. Future work will focus on extending the application of the solution to the dynamics of fine-tuning and linear autoencoders.

\newpage

\subsubsection*{Acknowledgments}

This research was funded in whole, or in part, by the Wellcome Trust [216386/Z/19/Z]. For the purpose of Open Access, the author has applied a CC BY public copyright licence to any Author Accepted Manuscript version arising from this submission.
C.D. and A.S. were supported by the Gatsby Charitable Foundation (GAT3755). Further, A.S. was supported by the Sainsbury Wellcome Centre Core Grant (219627/Z/19/Z) and A.S. is a CIFAR Azrieli Global Scholar in the Learning in Machines \& Brains program. 
A.M.P. was supported by the Imperial College London President's PhD Scholarship. L.B. was supported by the Woodward Scholarship awarded by Wadham College, Oxford. and the Medical Research Council [MR/N013468/1]. D.K. thanks the Open Philanthropy AI Fellowship for support.
This research was funded in whole, or in part, by the Wellcome Trust [216386/Z/19/Z]. \\



\bibliography{iclr2025_conference}
\bibliographystyle{iclr2025_conference}

\newpage
\appendix

\section{Preliminaries}
\label{app:preliminaries}

\subsection{Appendix: Balanced Condition}
\begin{definition}[Definition of \emph{$\lambda$-balanced}  property (\cite{saxe_2014_exact}, \cite{marcotte_abide})]\label{definition:balancedness}
The weights \(\boldsymbol{W_1}, \boldsymbol{W_2}\) are \emph{$\lambda$-balanced}  if and only if there exists a \textbf{Balanced Coefficient} \(\lambda \in \mathbb{R}\) such that: 
\begin{equation}
B(\boldsymbol{W_1}, \boldsymbol{W_2}) = \boldsymbol{W_2}^T \boldsymbol{W_2} - \boldsymbol{W_1} \boldsymbol{W_1}^T = \lambda \mathbf{I}
\end{equation}
\noindent where \(B\) is called the \textbf{Balanced Computation}.\\
For \(\lambda = 0\) we have \textbf{Zero-Balanced} given as 
\begin{assumption}
    \label{ass:zero-balanced}
    $\waz\waz^T = \wbz^T\wbz$.
\end{assumption}

\end{definition}
\begin{theorem}\label{theorem:balanced_persists}
\textbf{Balanced Condition Persists Through Training}

Suppose at initialization
\begin{equation}
\boldsymbol{W_2(0)}^T \boldsymbol{W_2(0)} - \boldsymbol{W_1(0)} \boldsymbol{W_1(0)}^T = \lambda \mathbf{I}
\end{equation}

\noindent Then for all $t \geq 0$

\begin{equation}
\boldsymbol{W_2(t)}^T \boldsymbol{W_2(t)} - \boldsymbol{W_1(t)} \boldsymbol{W_1(t)}^T = \lambda \mathbf{I}
\end{equation}

\end{theorem}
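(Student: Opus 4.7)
The plan is to show that the quantity $B(\boldsymbol{W_1}, \boldsymbol{W_2}) = \boldsymbol{W_2}^T\boldsymbol{W_2} - \boldsymbol{W_1}\boldsymbol{W_1}^T$ is a conserved quantity of the gradient flow, i.e., its time derivative vanishes identically. Since we are given that $B(\boldsymbol{W_1}(0), \boldsymbol{W_2}(0)) = \lambda \mathbf{I}$, conservation immediately yields $B(\boldsymbol{W_1}(t), \boldsymbol{W_2}(t)) = \lambda \mathbf{I}$ for all $t \geq 0$.

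First I would write down the gradient flow equations for the two-layer linear network on the mean squared error loss. Using Assumption~\ref{ass:whitened} (whitened inputs, so $\tilde{\boldsymbol{\Sigma}}^{xx} = \mathbf{I}$), the gradient flow updates take the form
\begin{align}
\tau \frac{d\boldsymbol{W_1}}{dt} &= \boldsymbol{W_2}^T \bigl(\tilde{\boldsymbol{\Sigma}}^{yx} - \boldsymbol{W_2}\boldsymbol{W_1}\bigr), \\
\tau \frac{d\boldsymbol{W_2}}{dt} &= \bigl(\tilde{\boldsymbol{\Sigma}}^{yx} - \boldsymbol{W_2}\boldsymbol{W_1}\bigr) \boldsymbol{W_1}^T.
\end{align}
These follow by standard chain rule differentiation of $\mathcal{L} = \tfrac{1}{2}\langle \|\boldsymbol{W_2}\boldsymbol{W_1}\boldsymbol{x} - \boldsymbol{y}\|^2\rangle$.

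Next, I would compute the time derivative of $\boldsymbol{W_2}^T\boldsymbol{W_2}$ and of $\boldsymbol{W_1}\boldsymbol{W_1}^T$ using the product rule. Letting $\boldsymbol{E} = \tilde{\boldsymbol{\Sigma}}^{yx} - \boldsymbol{W_2}\boldsymbol{W_1}$ for brevity, I would get
\begin{align}
\tau \frac{d}{dt}\bigl(\boldsymbol{W_2}^T \boldsymbol{W_2}\bigr) &= \left(\frac{d\boldsymbol{W_2}}{dt}\right)^T \!\!\boldsymbol{W_2} + \boldsymbol{W_2}^T \frac{d\boldsymbol{W_2}}{dt} \cdot \tau = \boldsymbol{W_1}\boldsymbol{E}^T \boldsymbol{W_2} + \boldsymbol{W_2}^T \boldsymbol{E}\, \boldsymbol{W_1}^T, \\
\tau \frac{d}{dt}\bigl(\boldsymbol{W_1} \boldsymbol{W_1}^T\bigr) &= \frac{d\boldsymbol{W_1}}{dt} \boldsymbol{W_1}^T \cdot \tau + \boldsymbol{W_1} \left(\frac{d\boldsymbol{W_1}}{dt}\right)^T = \boldsymbol{W_2}^T \boldsymbol{E}\, \boldsymbol{W_1}^T + \boldsymbol{W_1}\boldsymbol{E}^T \boldsymbol{W_2}.
\end{align}
The two right-hand sides are manifestly identical, so subtracting gives $\frac{d}{dt}\bigl(\boldsymbol{W_2}^T\boldsymbol{W_2} - \boldsymbol{W_1}\boldsymbol{W_1}^T\bigr) = 0$. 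Integrating and invoking the initial condition closes the proof.

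There isn't really a hard part here: the argument is a clean exhibition of a conservation law that relies only on the symmetric structure of the two-layer gradient flow. The only mild subtlety worth flagging is making sure the loss and the gradients are expressed in terms of $\tilde{\boldsymbol{\Sigma}}^{xx} = \mathbf{I}$ and $\tilde{\boldsymbol{\Sigma}}^{yx}$ so the calculation doesn't carry an extra $\tilde{\boldsymbol{\Sigma}}^{xx}$ factor; the identity then emerges by inspection. This result is classical (it appears in \citet{saxe2013exact} and \citet{arora2018convergence}) and we reproduce it here only for completeness since it underwrites Assumption~\ref{ass:lambda-balanced} being preserved throughout training.
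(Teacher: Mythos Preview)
Your proposal is correct and follows essentially the same route as the paper: compute $\tau\frac{d}{dt}(\boldsymbol{W_2}^T\boldsymbol{W_2} - \boldsymbol{W_1}\boldsymbol{W_1}^T)$ via the product rule, substitute the gradient flow equations, and observe that the two contributions are identical so the difference vanishes. One small remark: you invoke Assumption~\ref{ass:whitened} to set $\tilde{\boldsymbol{\Sigma}}^{xx}=\mathbf{I}$, but the paper's version keeps $\tilde{\boldsymbol{\Sigma}}^{xx}$ explicit and the cancellation still goes through verbatim (with $\boldsymbol{E}=\tilde{\boldsymbol{\Sigma}}^{yx}-\boldsymbol{W_2}\boldsymbol{W_1}\tilde{\boldsymbol{\Sigma}}^{xx}$), so the whitening assumption is not actually needed for this particular conservation law.
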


\begin{proof}[Proof]

Consider:

\begin{align*}
\tau \frac{d}{dt} \left[ \boldsymbol{W}_2(t) \boldsymbol{W}_2(t)^T - \boldsymbol{W}_1(t) \boldsymbol{W}_1(t)^T \right] 
&= \left( \tau \frac{d}{dt} \boldsymbol{W}_2(t) \right)^T \boldsymbol{W}_2(t) + \boldsymbol{W}_2(t)^T \left( \tau \frac{d}{dt} \boldsymbol{W}_2(t) \right) \\
&\quad - \left( \tau \frac{d}{dt} \boldsymbol{W}_1(t) \right) \boldsymbol{W}_1(t)^T - \boldsymbol{W}_1(t) \left( \tau \frac{d}{dt} \boldsymbol{W}_1(t) \right)^T \\
&= \boldsymbol{W}_1(t) \left( \tilde{\boldsymbol{\Sigma}}^{yx} - \boldsymbol{W}_2(t) \boldsymbol{W}_1(t) \tilde{\boldsymbol{\Sigma}}^{xx} \right)^T \boldsymbol{W}_2(t) \\
&\quad + \boldsymbol{W}_2(t)^T \left( \tilde{\boldsymbol{\Sigma}}^{yx} - \boldsymbol{W}_2(t) \boldsymbol{W}_1(t) \tilde{\boldsymbol{\Sigma}}^{xx} \right) \boldsymbol{W}_1(t) \\
&\quad - \boldsymbol{W}_2(t)^T \left( \tilde{\boldsymbol{\Sigma}}^{yx} - \boldsymbol{W}_2(t) \boldsymbol{W}_1(t) \tilde{\boldsymbol{\Sigma}}^{xx} \right) \boldsymbol{W}_1(t) \\
&\quad - \boldsymbol{W}_1(t) \left( \tilde{\boldsymbol{\Sigma}}^{yx} - \boldsymbol{W}_2(t) \boldsymbol{W}_1(t) \tilde{\boldsymbol{\Sigma}}^{xx} \right) \boldsymbol{W}_2(t) \\
&= \boldsymbol{0}
\end{align*}

Note that $\boldsymbol{W_2(t)}^T \boldsymbol{W_2(t)} - \boldsymbol{W_1(t)} \boldsymbol{W_1(t)}^T$ is conserved for any initial value $\lambda$.
\end{proof}

\subsection{Discussion Assumptions}
\label{app:disassump}

\paragraph{Whittened Inputs.}

Although the whitened input assumption is quite strong, it is commonly used in analytical work to obtain exact solutions, and much of the existing literature relies on these solutions  \cite{fukumizu1998effect,braun2023exact,kunin_2024_get}. 
While relaxing this assumption prevents the exact description of network dynamics, \cite{kunin_2024_get} examine the implicit bias of the training trajectory without relying on whitened inputs. If the interpolating manifold is one-dimensional, the solution can be solved exactly in terms of $\lambda$. Their findings demonstrate a similar quantitative dependence on $\lambda$, governing the implicit bias transition between rich and lazy regimes. Furthermore, recent advancements, such as the "decorrelated backpropagation" technique introduced by \cite{dalm2024efficient}  which whitens inputs during training, showing that optimizing for whitened inputs can actually be done in practice and improve efficiency in real-world applications. Importantly, This study highlights that in certain  real-world  scenarios, whitening can provide a more optimal learning condition.  This approaches emphasize the potential advantages of input whitening for downstream tasks, reinforcing the validity of our assumption.

\paragraph{Dimension.}
Previous works imposed specific dimensionality constraints. For example:
\cite{fukumizu1998effect} assumed equal input and output dimensions ($N_i = N_o$) while allowing a bottleneck in the hidden dimension ($N_h \leq N_i = N_o$).
\cite{braun2023exact} extended these solutions to cases with unequal input and output dimensions ($N_i \neq N_o$) but restricted bottleneck networks ($N_h = min(N_i, N_o)$) and introduced an additional invertibility condition on the $\bfb$. In our work we allow for unequal input and output $N_i \neq N_o$ and do not introduce an additional invertibility assumption. This flexibility expands the applicability of our framework to a wider range of architectures.

%
%
%


\paragraph{Full rank}
Previous work by \cite{braun2023exact}, imposed a full-rank initialization condition, defined as $ \rank(\wbz\waz) = N_i = N_o$. However, this assumption is not necessary in our framework.


\paragraph{Balancedness Assumption}
A significant departure from prior works is the relaxation of the balancedness assumption:
Earlier studies, such as \cite{fukumizu1998effect}  and \cite{braun2023exact} assumed strict zero-balancedness ($\waz\waz^T = \wbz^T\wbz$), which constrained the networks to the \emph{rich} regime.
Our approach generalizes this to $\lambda$-balancedness, enabling exploration of the continuum between the \emph{rich} and \emph{lazy} regimes. While some efforts, such as \cite{tarmoun2021understanding}, have explored removing the zero-balanced constraint, their solutions were limited to unstable or mixed forms. In contrast, our methodology systematically studies different learning regimes by varying initialization properties, particularly through the relative scale parameter. This allows controlled transitions between regimes, advancing understanding of neural network behavior across the spectrum. Other studies, such as \cite{kunin_2024_get} and \cite{xu_2023_neural} have also relaxed the balancedness assumption, though their analysis was restricted to single-output neuron settings.
We emphasize the importance of this balanced quantity by rigorously proving that, in the averaging limit, standard network initializations (e.g., LeCun initialization \cite{lecun1998gradient}, He initialization) lead to $\lambda$-balanced behavior in the infinite-width limit. Specifically, the term $\waz\waz^T = \wbz^T\wbz$ converges to a scaled identity matrix. Furthermore, previous studies have demonstrated that the relative scaling of $\lambda$ significantly impacts the learning regime in practical scenarios, highlighting the crucial role of dynamical studies of networks as a function of this parameter.

\subsection{Random weight initialisations and $\lambda$-Balanced Property}
\label{app:ballanced_limit}

Throughout this work, we assume that initial weights are $\lambda$-Balanced. However, in practice, weights are not initialized with that goal in mind. Usually, a weight matrix $\mathbf{W}$ is initialized with some random distribution centered around 0, with variance inversely proportional to the number of layers on which $\mathbf{W}$ has a direct effect (\cite{glorot2010understanding}, \cite{lecun1998gradient}, \cite{he2015delving}). In this section, we show that many common initialization techniques lead to $\lambda$-Balanced weights in expectation. Furthermore, as the size of a network tends to infinity, these random weights are $\lambda$-Balanced in probability.
\\
\\
We do this by first finding the expectation and variance of the balance computation for two adjacent weight matrices, $\mathbf{W}_{i+1}$ and $\mathbf{W}_i$, initialized under a normal distribution with zero mean. Subsequently, we describe how network structure and size can impact the expectation and variance of the balance computation.

\begin{theorem}
\label{theorem:random_balanced}[\textbf{Random Weight Initialization Leads to Balanced Condition}]
\noindent Consider a fully connected neural network with $L$ layers. Each layer has $N_i$ neurons, and the weights of each layer $\boldsymbol{W_i}$ is a matrix of dimension $(N_i, N_{i+1})$. The matrix $\boldsymbol{W_i} = (w_{N,m}^i)$ where $w_{N,m}^i \sim \mathcal{N}(0, \sigma_i^2)$, where $\sigma_i^2$ is determined based on the initialization technique. Then the following hold for all $i \in [1, L - 1]$:

\begin{enumerate}
    \item $\mathbb{E}\ [\boldsymbol{W_{i+1}^T W_{i+1} - W_i W_i^T}] = (N_{i+2} \sigma_{i+1}^2 - N_i \sigma_i^2) \mathbf{I}$
    \item $\text{Var}\left[\boldsymbol{W_{i+1}^T W_{i+1} - W_i W_i^T}\right] = (N_{i+2} \sigma_{i+1}^4 + N_i \sigma_i^4) \mathbb{B}$, where $\mathbb{B}$ is a square matrix with fours across the diagonal and ones everywhere else.
\end{enumerate}
\end{theorem}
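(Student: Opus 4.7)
The plan is to exploit the independence structure of the initialization: entries within each $\mathbf{W}_i$ are iid $\mathcal{N}(0, \sigma_i^2)$ and $\mathbf{W}_i$ is independent of $\mathbf{W}_{i+1}$, so everything reduces to entry-wise computations with Gaussian moments, and variances add across independent pieces. The argument splits naturally into (1) an expectation computation and (2) a variance computation, each further broken into the $\mathbf{W}_i \mathbf{W}_i^T$ term and the $\mathbf{W}_{i+1}^T \mathbf{W}_{i+1}$ term.

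For the expectation, I would write $(\mathbf{W}_i \mathbf{W}_i^T)_{jk} = \sum_m w^{(i)}_{jm} w^{(i)}_{km}$ and invoke $\mathbb{E}[w^{(i)}_{jm} w^{(i)}_{km}] = \sigma_i^2 \delta_{jk}$. Summing over the inner dimension (which under the convention consistent with the stated formula has length $N_i$) yields $\mathbb{E}[\mathbf{W}_i \mathbf{W}_i^T] = N_i \sigma_i^2 \mathbf{I}$, and the identical calculation gives $\mathbb{E}[\mathbf{W}_{i+1}^T \mathbf{W}_{i+1}] = N_{i+2} \sigma_{i+1}^2 \mathbf{I}$. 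Part (1) then follows by linearity of expectation. For part (2), I treat the diagonal and off-diagonal entries of $\mathbf{W}_i \mathbf{W}_i^T - \mathbf{W}_{i+1}^T \mathbf{W}_{i+1}$ separately. A diagonal entry $(\mathbf{W}_i \mathbf{W}_i^T)_{jj} = \sum_m (w^{(i)}_{jm})^2$ is a sum of iid squared centered Gaussians, so the Gaussian fourth moment $\mathbb{E}[X^4] = 3\sigma^4$ gives $\mathrm{Var}(X^2) = 2\sigma^4$ and hence variance $2 N_i \sigma_i^4$. An off-diagonal entry $(\mathbf{W}_i \mathbf{W}_i^T)_{jk}$ with $j \neq k$ is a sum of iid products of independent $\mathcal{N}(0, \sigma_i^2)$ variables, each with variance $\sigma_i^4$, giving $N_i \sigma_i^4$. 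The analogous computations for $\mathbf{W}_{i+1}^T \mathbf{W}_{i+1}$ yield $2 N_{i+2} \sigma_{i+1}^4$ and $N_{i+2} \sigma_{i+1}^4$ respectively, and since $\mathbf{W}_i \perp \mathbf{W}_{i+1}$, the variances add entrywise, producing the structured matrix $(N_{i+2}\sigma_{i+1}^4 + N_i \sigma_i^4)\,\mathbb{B}$ with the ratio between the diagonal and off-diagonal entries of $\mathbb{B}$ fixed by the Gaussian kurtosis.

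The main obstacle here is not mathematical depth but dimensional bookkeeping: one must pin down the convention for $\mathbf{W}_i$ so that the inner-summation lengths in the two products actually equal $N_i$ and $N_{i+2}$, which is what is needed to recover the stated formulas. Beyond that, the natural follow-up — turning these moment calculations into an approximate $\lambda$-balanced statement for finite-width networks — is not required by the theorem but follows immediately by Chebyshev's inequality: with LeCun/He-type scalings $\sigma_i^2 \propto 1/N_i$, the entry-wise standard deviation scales like $1/\sqrt{N_i}$, so the balanced quantity concentrates on its mean $(N_{i+2}\sigma_{i+1}^2 - N_i \sigma_i^2) \mathbf{I}$ in the wide-network limit, justifying the $\lambda$-balanced assumption used throughout the paper.
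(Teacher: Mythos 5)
Your argument follows essentially the same route as the paper's own proof: write each entry of $\mathbf{W}_{i+1}^T\mathbf{W}_{i+1}$ and $\mathbf{W}_i\mathbf{W}_i^T$ as a sum of products of iid centered Gaussians, use $\mathbb{E}[w^2]=\sigma^2$, $\mathbb{E}[w^4]=3\sigma^4$, independence within and across layers, and add the two independent contributions entrywise. Your caveat about fixing the shape convention so that the two inner sums run over $N_i$ and $N_{i+2}$ is exactly the bookkeeping the appendix leaves muddled; the consistent convention is the main text's ($\mathbf{W}_1\in\mathbb{R}^{N_h\times N_i}$, $\mathbf{W}_2\in\mathbb{R}^{N_o\times N_h}$), under which both products are $N_{i+1}\times N_{i+1}$.

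The one point you should not gloss over is the diagonal of the variance matrix. Your computation gives $\mathrm{Var}\bigl[(\mathbf{W}_i\mathbf{W}_i^T)_{jj}\bigr]=2N_i\sigma_i^4$, since $\mathrm{Var}(w^2)=3\sigma^4-\sigma^4=2\sigma^4$, hence a diagonal-to-off-diagonal ratio of $2$, whereas the statement asserts that $\mathbb{B}$ has fours on the diagonal. This is not an error on your side: the paper's own intermediate expression, $N_{i+2}(3\sigma_{i+1}^4)+(N_{i+2}^2-N_{i+2})\sigma_{i+1}^4-N_{i+2}^2\sigma_{i+1}^4$, equals $2N_{i+2}\sigma_{i+1}^4$, and its final line claiming $4N_{i+2}\sigma_{i+1}^4$ is an arithmetic slip that propagates into the definition of $\mathbb{B}$. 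So rather than writing that the ratio is ``fixed by the Gaussian kurtosis'' and leaving the constant implicit, state explicitly that your calculation yields diagonal variance $2\bigl(N_{i+2}\sigma_{i+1}^4+N_i\sigma_i^4\bigr)$ and off-diagonal variance $N_{i+2}\sigma_{i+1}^4+N_i\sigma_i^4$, i.e.\ $\mathbb{B}$ with twos on the diagonal, which corrects the stated factor of four. This does not affect part (1), nor the qualitative conclusion (also reached by your Chebyshev remark, which the theorem itself does not require) that the balanced computation concentrates on $(N_{i+2}\sigma_{i+1}^2-N_i\sigma_i^2)\mathbf{I}$ in the wide-network limit.
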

\noindent Note that in the case $L = 3$, $N_0 = i, N_1 = h, N_2 = o$ with $i, h, o$ being the input, hidden and output dimensions respectively as defined in the main text. 
\begin{proof}[Proof of Theorem \ref{theorem:random_balanced}]

\begin{equation}
\begin{aligned}
    \text{Let } \boldsymbol{W_{i+1}} &= \begin{pmatrix}
        w_{1,1} & w_{1,2} & \cdots & w_{1,N_{i+2}} \\
        w_{2,1} & w_{2,2} & \cdots & w_{2,N_{i+2}} \\
        \vdots & \vdots & \ddots & \vdots \\
        w_{N_{i+1},1} & w_{N_{i+1},2} & \cdots & w_{N_{i+1},N_{i+2}}
    \end{pmatrix} \\
    &= \begin{pmatrix}
        \overline{w}_1 &&
        \overline{w}_2 &&
        \ldots &&
        \overline{w}_{N_{i+2}}
    \end{pmatrix}
\end{aligned}
\end{equation}

with $\overline{w}_j = (w_{1,j}, w_{2,j}, \ldots, w_{N_{i+1},j})^T$.

Then,
\begin{align*}
\boldsymbol{W_{i+1}^T W_{i+1}} &= \begin{pmatrix}
    \overline{w}_1^T \\
    \overline{w}_2^T \\
    \vdots \\
    \overline{w}_{N_{i+2}}^T
\end{pmatrix}
\begin{pmatrix}
    \overline{w}_1 & \overline{w}_2 & \cdots & \overline{w}_{N_{i+2}}
\end{pmatrix} \\
&= \begin{pmatrix}
    \langle \overline{w}_1, \overline{w}_1 \rangle & \langle \overline{w}_1, \overline{w}_2 \rangle & \cdots & \langle \overline{w}_1, \overline{w}_{N_{i+2}} \rangle \\
    \langle \overline{w}_2, \overline{w}_1 \rangle & \langle \overline{w}_2, \overline{w}_2 \rangle & \cdots & \langle \overline{w}_2, \overline{w}_{N_{i+2}} \rangle \\
    \vdots & \vdots & \ddots & \vdots \\
    \langle \overline{w}_{N_{i+2}}, \overline{w}_1 \rangle & \langle \overline{w}_{N_{i+2}}, \overline{w}_2 \rangle & \cdots & \langle \overline{w}_{N_{i+2}}, \overline{w}_{N_{i+2}} \rangle
\end{pmatrix}
\end{align*}

Now, consider $\langle \overline{w}_i, \overline{w}_j \rangle$ with $i \neq j$,

\begin{align*}
\langle \overline{w}_i, \overline{w}_j \rangle &= \sum_{k=1}^{N_{i+2}} w_{k,i} w_{k,j} \\
\mathbb{E}\left[\langle \overline{w}_i, \overline{w}_j \rangle \right] &= \mathbb{E}\left[ \sum_{k=1}^{N_{i+2}} w_{k,i} w_{k,j} \right] \\
&= \sum_{k=1}^{N_{i+2}} \mathbb{E}[w_{k,i} w_{k,j}] \\
&= \sum_{k=1}^{N_{i+2}} \mathbb{E}[w_{k,i}] \mathbb{E}[w_{k,j}] = 0 \quad \text{(by independence)} \\
\text{Var}\left[\langle \overline{w}_i, \overline{w}_j \rangle \right] &= \mathbb{E}\left[\langle \overline{w}_i, \overline{w}_j \rangle^2\right] - \left[\mathbb{E}\left[\langle \overline{w}_i, \overline{w}_j \rangle\right]\right]^2 \\
&= \mathbb{E}\left[\left( \sum_{k=1}^{N_{i+2}} w_{k,i} w_{k,j} \right)^2 \right] \\
&= \mathbb{E}\left[\sum_{k=1}^{N_{i+2}} w_{k,i}^2 w_{k,j}^2 + 2 \sum_{k=1}^{N_{i+2}} \sum_{l > k} w_{k,i} w_{k,j} w_{l,i} w_{l,j}\right] \\
&= \sum_{k=1}^{N_{i+2}} \mathbb{E}[w_{k,i}^2 w_{k,j}^2] + 2 \sum_{k=1}^{N_{i+2}} \sum_{l > k} \mathbb{E}[w_{k,i}] \mathbb{E}[w_{k,j}] \mathbb{E}[w_{l,i}] \mathbb{E}[w_{l,j}] \\
&= \sum_{k=1}^{N_{i+2}} \mathbb{E}[w_{k,i}^2] \mathbb{E}[w_{k,j}^2] \\
&= (N_{i+2}) \sigma_{i+1}^4
\end{align*}

Similarly, consider \(\langle \overline{w}_i, \overline{w}_i \rangle \):

\begin{align*}
\langle \overline{w}_i, \overline{w}_i \rangle &= \sum_{k=1}^{N_{i+2}} w_{k, i}^2 \\
\mathbb{E} \left[ \langle \overline{w}_i, \overline{w}_i \rangle \right] &= \mathbb{E} \left[ \sum_{k=1}^{N_{i+2}} w_{k, i}^2 \right] = N_{i+2} \mathbb{E} \left[ w_{k, i}^2 \right] = N_{i+2} \sigma_{N_{i+1}}^2 \\
\text{Var} \left[ \langle \overline{w}_i, \overline{w}_i \rangle \right] &= \mathbb{E} \left[ \left( \langle \overline{w}_i, \overline{w}_i \rangle \right)^2 \right] - \mathbb{E} \left[ \langle \overline{w}_i, \overline{w}_i \rangle \right]^2 \\
&= \mathbb{E} \left[ \left( \sum_{k=1}^{N_{i+2}} w_{k, i}^2 \right)^2 \right] - N_{i+2}^2 \sigma_{N_{i+1}}^4 \\
&= \mathbb{E} \left[ \left( \sum_{k=1}^{N_{i+2}} w_{k, i}^2 \right)^2 \right] - N_{i+2}^2 \sigma_{N_{i+1}}^4 \\
&= \mathbb{E} \left[ \sum_{k=1}^{N_{i+2}} w_{k, i}^4 + 2 \sum_{k=1}^{N_{i+2}} \sum_{l=k+1}^{N_{i+2}} w_{k, i}^2 w_{l, i}^2 \right] - N_{i+2}^2 \sigma_{N_{i+1}}^4 \\
&= \sum_{k=1}^{N_{i+2}} \mathbb{E} \left[ w_{k, i}^4 \right] + 2 \sum_{k=1}^{N_{i+2}} \sum_{l=k+1}^{N_{i+2}} \mathbb{E} \left[ w_{k, i}^2 \right] \mathbb{E} \left[ w_{l, i}^2 \right] - N_{i+2}^2 \sigma_{N_{i+1}}^4 \\
&= N_{i+2} (3 \sigma_{N_{i+1}}^4) + (N_{i+2}^2 - N_{i+2}) \sigma_{N_{i+1}}^4 - N_{i+2}^2 \sigma_{N_{i+1}}^4 \\
&= 4 N_{i+2} \sigma_{N_{i+1}}^4
\end{align*}

Hence
\[
\mathbb{E} \left[ \boldsymbol{W_{i+1}^T W_{i+1}} \right] = \left( N_{i+2} \sigma_{i+1}^2 \right) \mathbf{I}
\]

\[
\text{Var} \left[ \boldsymbol{W_{i+1}^T W_{i+1}} \right] = 4 \left( N_{i+2} \right) \sigma_{i+1}^4 \mathbb{B}
\]
For the case for $\boldsymbol{W_i}$, notice we can express $\boldsymbol{W_i W_i^T}$ as $(\boldsymbol{W_i^T})^T (\boldsymbol{W_i^T})$. Hence we can use the proof above, with $\boldsymbol{W_{i+1}'} = \boldsymbol{W_i^T}$. In this case the matrix $\boldsymbol{W_{i+1}'}$ has shape $(N_i, N_{i+1})$, and each element of the matrix has variance $\sigma_i^2$. We have:

\[
\mathbb{E} \left[ \boldsymbol{W_i W_i^T} \right] = N_{i} \sigma_{i}^2 \mathbf{I}
\]

\[
\text{Var} \left[ \boldsymbol{W_i W_i^T} \right] = N_{i}  \sigma_{i}^4 \mathbb{B}
\]

\noindent By assumption, $\boldsymbol{W_i}, \boldsymbol{W_{i+1}}$ are independent. Hence $Cov(\boldsymbol{W_i}, \boldsymbol{W_{i+1}}) = 0$. We can use this property together with linearity of expectation:

\[
\mathbb{E} \left[ \boldsymbol{W_{i+1}^T W_{i+1} - W_i^T W_i} \right] = \left( N_{i+2} \sigma_{i+1}^2 - N_{i} \sigma_{i}^2 \right) \mathbf{I}
\]

\[
\text{Var} \left[ \boldsymbol{W_{i+1}^T W_{i+1} - W_i^T W_i} \right] = \left( N_{i+2} \sigma_{i+1}^4 + N_{i} \sigma_{i}^4 \right) \mathbb{B}
\]

\noindent This completes the proof.
\end{proof}

\noindent In neural network training, proper weight initialization is crucial for ensuring stable gradients during backpropagation, which helps to avoid issues such as vanishing and exploding gradients. The goal of weight scaling is to maintain appropriate variance across layers, enabling efficient and effective learning (\cite{glorot2010understanding}). The weights are typically initialized to be random and centered around 0 to break symmetry and ensure that different neurons learn different features.
\\
\\
Some of the most commonly used initialization methods are detailed below:

\begin{itemize}
    \item \textbf{LeCun Initialization (\cite{lecun1998gradient})}: Weights are initialized using a normal distribution with a mean of 0 and a variance of \( \frac{1}{N_i} \), where \( N_i \) is the number of input units in the layer. Mathematically, the weights \( w \) are drawn from \( \mathcal{N}(0, \frac{1}{N_i}) \).

    \item \textbf{Glorot Initialization (\cite{glorot2010understanding})}: Weights are initialized using a normal distribution with a mean of 0 and a variance of \( \frac{2}{N_i + N_{i+1}} \), where \( N_i \) is the number of input units and \( N_{i+1} \) is the number of output units. This method balances the variance between layers with different widths. Mathematically, the weights \( w \) are drawn from \( \mathcal{N}(0, \frac{2}{N_i + N_{i+1}}) \).

    \item \textbf{He Initialization (\cite{he2015delving})}: Weights are initialized using a normal distribution with a mean of 0 and a variance of \( \frac{2}{N_i} \), where \( N_i \) is the number of input units in the layer. This method is particularly suited for layers with ReLU activation functions. Mathematically, the weights \( w \) are drawn from \( \mathcal{N}(0, \frac{2}{N_i}) \).

    \item \textbf{Scaled Initialization (\cite{rahnamayan2009toward})}: Weights are initialized using a normal distribution with a mean of 0 and a variance of \( \frac{\alpha_i}{N_i} \), where \( N_i \) is the number of input units in the layer and \(\alpha_i\) is a parameter specific to each layer. Mathematically, the weights \( w \) are drawn from \( \mathcal{N}(0, \frac{\alpha_i}{N_i}) \).
\end{itemize}
These initialization methods help ensure that the network starts with weights that facilitate stable and efficient learning, avoiding the common pitfalls of poorly initialized neural networks.
Using (\ref{theorem:random_balanced}), we can calculate the respective Expectations and Variances of the Balanced Computation under the different initialisations:

\begin{table}[H]
\renewcommand{\arraystretch}{2}
\centering
\resizebox{\textwidth}{!}{ 
\begin{tabular}{|c|c|c|c|c|}
\hline
Initialization & \( \text{Var}(w_{N,m}^{i+1}) \) & \( \text{Var}(w_{N,m}^i) \) & \( \mathbb{E}[\mathbf{W}_{i+1}^T \mathbf{W}_{i+1} - \mathbf{W}_{i} \mathbf{W}_i^T] \) & \(\text{Var}[\mathbf{W}_{i+1}^T \mathbf{W}_{i+1} - \mathbf{W}_{i} \mathbf{W}_i^T] \) \\
\hline

LeCun & \( \frac{1}{N_{i+1}} \) & \( \frac{1}{N_{i}} \) & \( \left(\frac{N_{i+2}}{N_{i+1}} - 1\right) \mathbf{I} \) & 
\( \left(\frac{N_{i+2}}{N_{i+1}^2} + \frac{1}{N_i}\right) \mathbb{B}\)  \\

\hline
Glorot & \( \frac{2}{N_{i+1} + N_{i+2}} \) & \( \frac{2}{N_{i+1} + N_{i}} \) &
\( 2 \left(\frac{N_{i+2}}{N_{i+1} + N_{i+2}} - \frac{N_{i}}{N_{i+1} + N_{i}}\right) \mathbf{I}\) &
\( (N_{i+2} \left(\frac{2}{N_{i+1} + N_{i+2}}\right)^2 + N_i \left(\frac{2}{N_{i} + N_{i+1}}\right)^2) \mathbb{B} \)
\\

\hline
He & \( \frac{2}{N_{i+1}} \) & \( \frac{2}{N_{i}} \) & \( 2 \left(\frac{N_{i+2}}{N_{i+1}} - 1\right) \mathbf{I} \) & 
\( 4 \left(\frac{N_{i+2}}{N_{i+1}^2} + \frac{1}{N_i}\right) \mathbb{B}\) \\
\hline
Scaled & \( \frac{\alpha_{i+1}^2}{N_{i+1}} \) & \( \frac{\alpha_i^2}{N_{i}} \) & \( \left(\frac{N_{i+2}}{N_{i+1}} \alpha_{i+1}^2 - \alpha_{i}^2\right) \mathbf{I} \) & 
\( (N_{i+2} \left(\frac{\alpha_{i+1}^2}{N_{i+1}}\right)^2 +  N_{i} \left(\frac{\alpha_{i}^2}{N_{i}}\right)^2 )\mathbb{B} \) \\
\hline
\end{tabular}
} 
\caption{Comparison of Variance and Expectation of Balanced Computation for Different Weight Initializations}
\label{tab:initializations}
\end{table}
{
Fig.~\ref{fig:random_init_example} shows a numerical example of how the Balanced computation would look like for initialising weights with LeCun, He, Scaled and Glorot initialisations using $N_i = 160, N_{i+1} = 80, N_{i+2} = 120$. With these numbers of nodes in each layer one can appreciate how the Balanced Computation on the weights is visually similar to a scaled identity matrix. In Fig. ~ \ref{fig:random_init_example_2} We show how the sign of the scaled identity changes with the dimentions $N_i$, $N_{i+1}$ $N_{i+2}$.

\begin{figure}[H]
    \centering
    \includegraphics[scale=0.35]{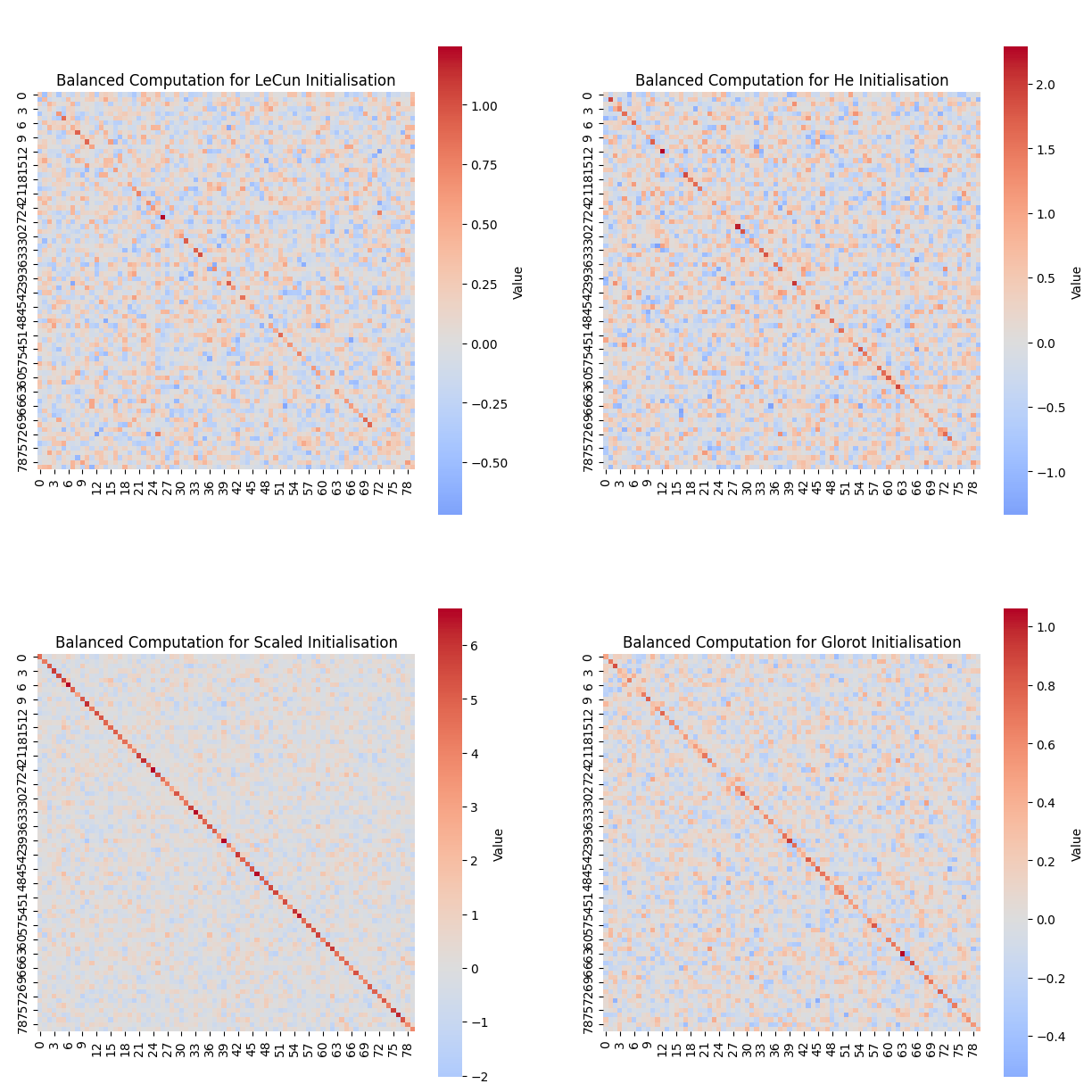} 
    \caption{ Balanced computation for different weight initializationsusing $N_i = 160, N_{i+1} = 80, N_{i+2} = 120$, $\alpha_1 =1$ ,$\alpha_2=2$. The figure compares LeCun, He, Scaled, and Glorot initializations, showing how the balanced computation on the weights visually resembles a scaled identity matrix.}
    \label{fig:random_init_example}
\end{figure}

\begin{figure}[H]
    \centering
    \includegraphics[scale=0.35]{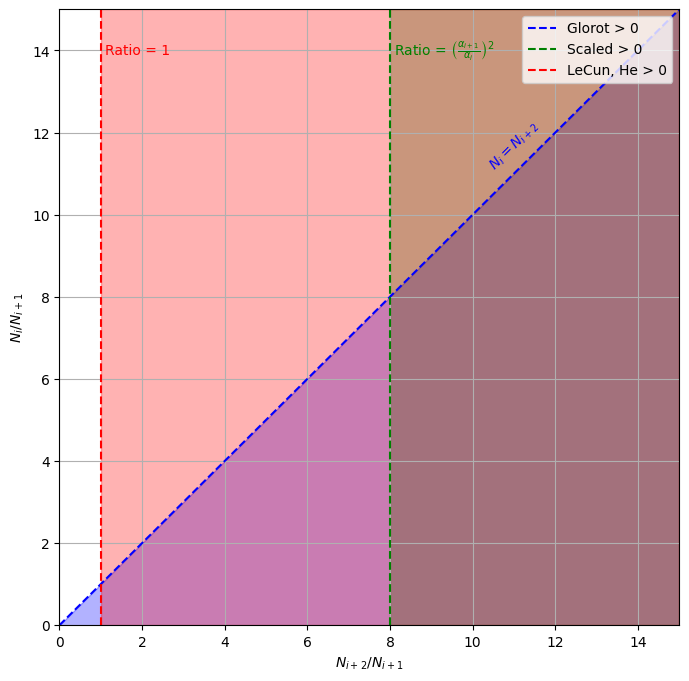} 
    \caption{  Impact of Network Structure on sign of balanced coefficient for different initialisations. Lecun, He and Scaled initialisations depend solely on the ratio of the sizes of the output and hidden layers. Scaled initialisation's dependence is affected by the parameters $\alpha_{i+1}, \alpha_{i}$. Glorot initialisation's sign depends on both ratios.}
    \label{fig:random_init_example_2}
\end{figure}

}
\noindent State-of-the-art models such as (\cite{krizhevsky_2012_imagenet}, \cite{liu_2023_summary}) use more than 10,000 nodes in each hidden layer. This implies that if we wished to perform some mathematical analysis on these models, assuming the initial random weights are $\lambda$-Balanced would be a very close depiction to reality. In addition, these models often have a different number of nodes per layer, so understanding the effect of the relationship between $N_i, N_{i+1}$, and $N_{i+2}$ is crucial. 
\\
\\
Specifically, we aim to understand how changes in the relative width of layers $i, i+1, i+2$ affect the Balanced Computation: suppose $N_{i+1} = k N_i$ for some $k \in \mathbb{R}^+$ and $N_{i+2} = r N_i$ for some $r \in \mathbb{R}^+$. Then we can express the table above in terms of $k$, $r$, and $N_i$ only:

\begin{table}[H]
\renewcommand{\arraystretch}{2}
\centering
\resizebox{\textwidth}{!}{ 
\begin{tabular}{|c|c|c|c|c|}
\hline
Initialization & \( \text{Var}(w_{N,m}^{i+1}) \) & \( \text{Var}(w_{N,m}^i) \) & \( \mathbb{E}[\boldsymbol{W}_{i+1}^T \boldsymbol{W}_{i+1} - \boldsymbol{W}_{i} \boldsymbol{W}_i^T] \) & \(\text{Var}[\boldsymbol{W}_{i+1}^T \boldsymbol{W}_{i+1} - \boldsymbol{W}_{i} \boldsymbol{W}_i^T] \) \\
\hline

LeCun & \( \frac{1}{k N_{i}} \) & \( \frac{1}{N_i} \) & \( \left(\frac{r}{k} - 1\right) \mathbf{I} \) & 
\(\frac{1}{N_{i}} \left(\frac{r}{k^2} + 1\right) \mathbb{B}\) \\

\hline
Glorot & \( \frac{2}{N_{i} (k+r)} \) & \( \frac{2}{N_{i} (k+1)} \) &
\( 2 \left(\frac{r}{k+r} - \frac{1}{k+1}\right) \mathbf{I} \)  &
\( \frac{4}{N_{i}} \left(\frac{r}{(r+k)^2} + \frac{1}{(k+1)^2}\right) \mathbb{B} \) \\

\hline
He & \( \frac{2}{k N_i} \) & \( \frac{2}{N_i} \) & \( 2 \left(\frac{r}{k} - 1\right) \mathbf{I} \) & 
\( \frac{4}{N_{i}} \left(\frac{r}{k^2} + 1\right) \mathbb{B}\) \\
\hline
Scaled & \( \frac{\alpha_{i+1}^2}{k N_{i}} \) & \( \frac{\alpha_i^2}{N_{i}} \) & \( \left(\frac{r}{k} \alpha_{i+1}^2 - \alpha_{i}^2\right) \mathbf{I} \) & 
\( \frac{1}{N_{i}} \left(\frac{r \alpha_{i+1}^2}{k^2} + \alpha_{i}^2\right) \mathbb{B}\) \\
\hline
\end{tabular}
} 
\caption{Comparison of Variance and Expectation for Different Initializations}
\label{tab:initializations_simple}
\end{table}

\noindent From Table~\ref{tab:initializations_simple}, we can observe that as the number of nodes in each layer tends to infinity, while the ratios between the number of nodes in each layer ($r, k$) are maintained, the variance of the Balanced Computation tends to zero. Hence the Balanced Computation converges in probability to $\lambda$-Balanced weights.
\\
\\
Further, Table~\ref{tab:initializations_simple} shows that a larger value of $r$ will lead to a higher value of $\lambda$ in every one of the initializations displayed. Moreover, a larger $k$ has the opposite effect. In addition, we can observe that in some initializations there are limits as to what value $\lambda$ can take (such as in LeCun $\lambda \geq 0$). 
\\
\\
Some special cases of $r$ and $k$ are interesting to consider to gain an intuition of how changing these values influences the Balancedness of the Weights. In the table below, we consider the cases:

\begin{enumerate}
\item $r = k$: the three layers of the network have the same number of nodes.
\item $r \to 0$, $k$ fixed: $N_i >> N_{i+2}, N_{i+1}$, the first of the three layers is much larger than the other two layers. 
\item $r \to \infty$, $k$ fixed: $N_{i+2} >> N_i, N_{i+1}$, the last of the three layers is much larger than the other two layers ($k$ is fixed).
\item $k \to 0$, $r$ fixed: the middle layer is exceedingly small, $N_{i+1} << N_i, N_{i+2}$
\item $k \to \infty$, $r$ fixed: the middle layer is much bigger than the other two layers, $N_{i+1} >> N_i, N_{i+2}$
\item $r = \frac{\alpha_i^2}{\alpha_{i+1}^2}$: the inner and outer layers have a ratio proportional to the scale factors of each weight layer. This case is important for Scaled initialization. 
\end{enumerate}

\begin{table}[H]
\renewcommand{\arraystretch}{2}
\centering
\begin{tabular}{|c|c|c|c|c|c|c|}
\hline
Initialization & \( r = k \) & \( r \to 0 \) & \( r \to \infty \) & \( k \to \infty \) & \( k \to 0 \) & \( r = \frac{\alpha_i^2}{\alpha_{i+1}^2} k \) \\
\hline

LeCun & \( 0 \) & \( -\mathbf{I} \) & \( \frac{r}{k} \mathbf{I} \) & 
\( -\mathbf{I} \) & \( \frac{r}{k} \mathbf{I} \) & - \\

\hline
Glorot & \( 0 \) & \( 2 \mathbf{I} \) & \( 2 \mathbf{I} \) & 
\( 0 \) & \( -\frac{2}{k+1} \mathbf{I} \) & - \\

\hline
He & \( 0 \) & \( -2 \mathbf{I} \) & \( 2 \left( \frac{r}{k} \right) \mathbf{I} \) & 
\( -2 \mathbf{I} \) &  \( 2 \left( \frac{r}{k} \right) \mathbf{I} \) & - \\
\hline
Scaled & \( (\alpha_{i+1}^2 - \alpha_i^2) \mathbf{I} \) & \(-\alpha_i^2 \mathbf{I} \) & \( \frac{r}{k} \alpha_{i+1}^2 \mathbf{I} \) & \( -\alpha_i^2 \mathbf{I} \) & \( \frac{r}{k} \alpha_{i+1}^2 \mathbf{I} \) & 0 \\
\hline
\end{tabular}
\caption{Comparison of Variance and Expectation under Different Conditions}
\label{tab:initializations_conditions}
\end{table}

\noindent From  Table~\ref{tab:initializations_conditions} above one can appreciate the impact network structure can have on the Balanced Computation of the weights of each layer. One can also see that there are many cases when the Balanced computation does not equal 0, both in the limit of $r, k$ and not in the limit. 
\\
\\
We have showed that although the Balanced property is only preserved in Linear Networks, it occurs at initialisation for large networks which utilise some of the most common weight initialisation techniques. 
\\
\\
These findings provide motivation to better understand the relation between the Balanced Computation of a Network, its structure and the regime it will learn in. If we are able to understand the relation between $\lambda$-Balanced weights and Rich and Lazy Learning in Linear Networks, one might be able to approximate these results to the nonlinear case. 
\\
\\
A possible future application might be the ability to heavily influence a network's learning regime by altering the relative width of its layers, its activation functions or weight initialisation techniques used for each layer. 
\\
\\
In order to better understand the effects of $\lambda$ on the learning dynamics and learning regime of the network, we first study aligned $\lambda$-Balanced networks.

{
\subsection{Scale vs relative scale}
\label{app:scale_relative_scale} 

A straightforward intuition for the scale and the relative scale can be gained by considering the scalar case where \( N_i = N_h = N_o = 1 \). In this scenario, it is easy to ensure that \( w_1^2 = w_2^2 \) satisfies \( \lambda = 0 \) while allowing for different absolute scales. For instance, \( w_1 = w_2 = 0.001 \) or \( w_1 = w_2 = 5 \). In such cases, the absolute scale is clearly decoupled from the relative scale. However, in more complex settings, the relative scale and absolute scale interact in non-trivial ways. While our study primarily focuses on the effects of relative scale, the influence of absolute scale is inherently embedded within our framework through the definitions of \( \bfb \), \( \bfc \), and \( {\bf D} \) (see Equations \ref{eq:BCD}). However, this influence is not immediately apparent from the main theorem. 
A clearer distinction between the roles of relative and absolute scale can be observed in Theorem \ref{thm:singular-values}. We investigate first how \(\lambda\), the relative scale parameter, governs the transition between sigmoidal and exponential dynamical regimes. A similar argument applies to absolute scale, which appears explicitly as \(s_\alpha(0)\) in these equations. Consider the case when \(\lambda = 0\), the dynamics of \(s_{\alpha}\) reduce to the classical solution of the Bernoulli differential equation. In the limiting case where \(s_\alpha(0) \to 0\), the system exhibits classic sigmoidal dynamics (characteristic of the rich regime), whereas the limit \(s_\alpha(0) \to \infty\) yields exponential dynamics (characteristic of the lazy regime).

We performed an experiment to further examine the interplay between relative weight scale, absolute weight scale, and the network's learning regime in the general setting. We define the absolute scale of the weights as the norm of ${\bf W_2 W_1}$. We generate random initial weights of a given relative and absolute scale and train the network on a random input-output task. We compute the logarithmic kernel distance of the NTK from initialization and the logarithmic loss throughout training. We plot these values in a heat map for $\lambda$ in $[-9, 9]$ and relative scale in $(0, 20]$. We repeat this procedure for three architectures: a square network $(N_i = 2, N_h = 2, N_o= 2)$, a funnel network $(N_i = 4, N_h = 2, N_o = 2)$, and an anti-funnel network $(N_i = 2, N_h = 2, N_o = 4)$. These are the same architectures as in Figure \ref{fig:not_aligned} in the main text.

In all three different types of networks, at the start of training, the model loss depends entirely on an absolute scale, not the relative scale. \textbf{Throughout training} across networks, the learning dynamics are intricately influenced by both the absolute and relative scales and fully captured by our theoretical solution. In the square network, the loss increases with absolute scale but decreases with relative scale, as shown in Fig.\ref{fig:square}. Similarly, the kernel distance phase plot exhibits an intricate relationship with the relative and absolute scales, as illustrated in Fig.\ref{fig:not_aligned}. Strikingly, for large imbalanced $\lambda$, even at small scales, the network transitions into a lazy regime. 
The funnel and anti-funnel network architectures demonstrate antisymmetric behavior as shown in Fig.~\ref{fig:anti-funnel.png} and Fig.~\ref{fig:funnel.png}. Here, we focus on the anti-funnel network for brevity. The evolution of the loss reveals that negatively $\lambda$ initializations first converge, whereas positively $\lambda$ initializations retain larger loss values. Additionally, the kernel distance attains its maximum for positive $\lambda$, aligning with the results outlined in section \ref{sec:rich_lazy_learning}. \textbf{At convergence}, the loss across all networks stabilizes uniformly, irrespective of initial conditions, confirming consistent convergence. This outcome aligns with the theoretical expectation for linear networks under gradient flow, which predictably converge to the same solution. Furthermore, in square networks, the kernel distance depends exclusively on the relative scale $\lambda$ and peaks at $\lambda = 0$ (results corresponding to the green curve in Fig.~\ref{fig:not_aligned}B.) This observation illustrates that the regime at $\lambda = 0$ is consistently rich, independent of the absolute scale as predicted by our theoretical results in \ref{thm:limititing-behaviour}. For funnel and anti-funnel networks, the kernel distance exhibits an antisymmetric pattern. In the anti-funnel network, the kernel distance depends mostly on $\lambda$, achieving high values for positive $\lambda$ and approaching zero for negative $\lambda$ (matching the results in Fig.~\ref{fig:not_aligned}B ( pink line)). Conversely, in the funnel network, the kernel distance is high for negative $\lambda$ and approaches zero for positive $\lambda$, corroborating the results in Fig.~\ref{fig:not_aligned}B. (blue line). These results emphasize the interplay between relative and absolute scales, highlighting their critical roles in determining the system's behavior. Altogether, the absolute scale and relative scale of the weights play a critical role in describing the phase portrait of the learning regime, as first demonstrated in the \cite{kunin_2024_get} paper on ReLU networks.

}

\begin{figure}[H]
    \centering
    \includegraphics[scale=0.3]{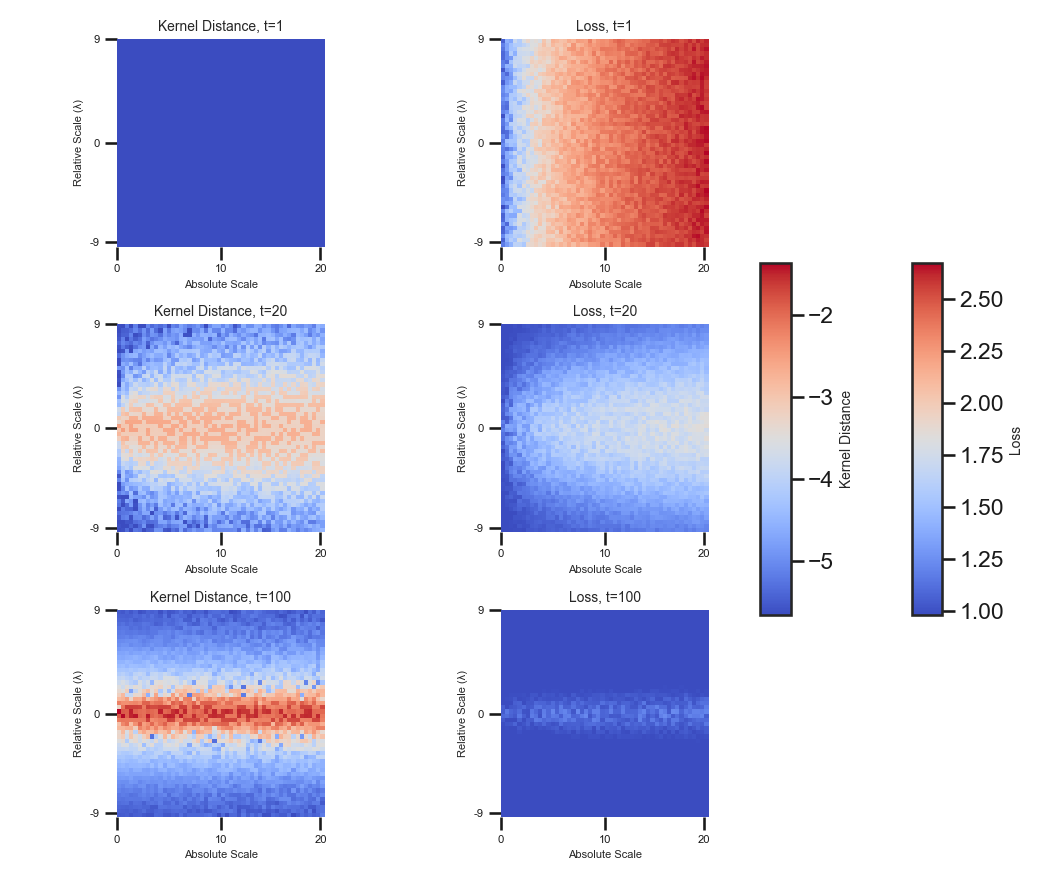} 
    \caption{ Square network:
    Phase plots showing (left) the logarithmic kernel distance of the NTK from initialization and (right) the corresponding logarithmic loss as functions of the relative scale \(\lambda\) and the absolute scale. (Top to bottom) Different time steps during training $t=1$, $t=20$, $t=100$. 
}
    \label{fig:square}
\end{figure}

\begin{figure}[H]
    \centering
    \includegraphics[scale=0.3]{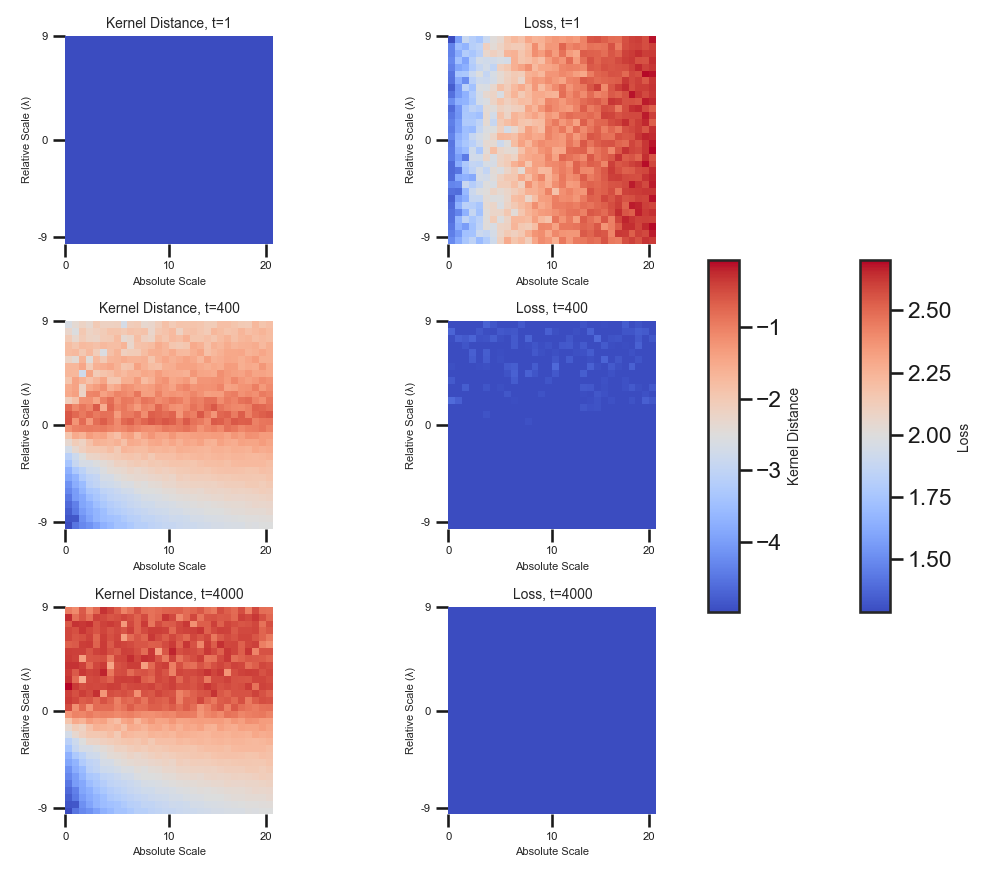} 
    \caption{ Anti-funnel network:
    Phase plots showing (left) the logarithmic kernel distance of the NTK from initialization and (right) the corresponding logarithmic loss as functions of the relative scale \(\lambda\) and the absolute scale. (Top to bottom) Different time steps during training $t=1$, $t=400$, $t=4000$. 
}
    \label{fig:anti-funnel.png}
\end{figure}

\begin{figure}[H]
    \centering
    \includegraphics[scale=0.3]{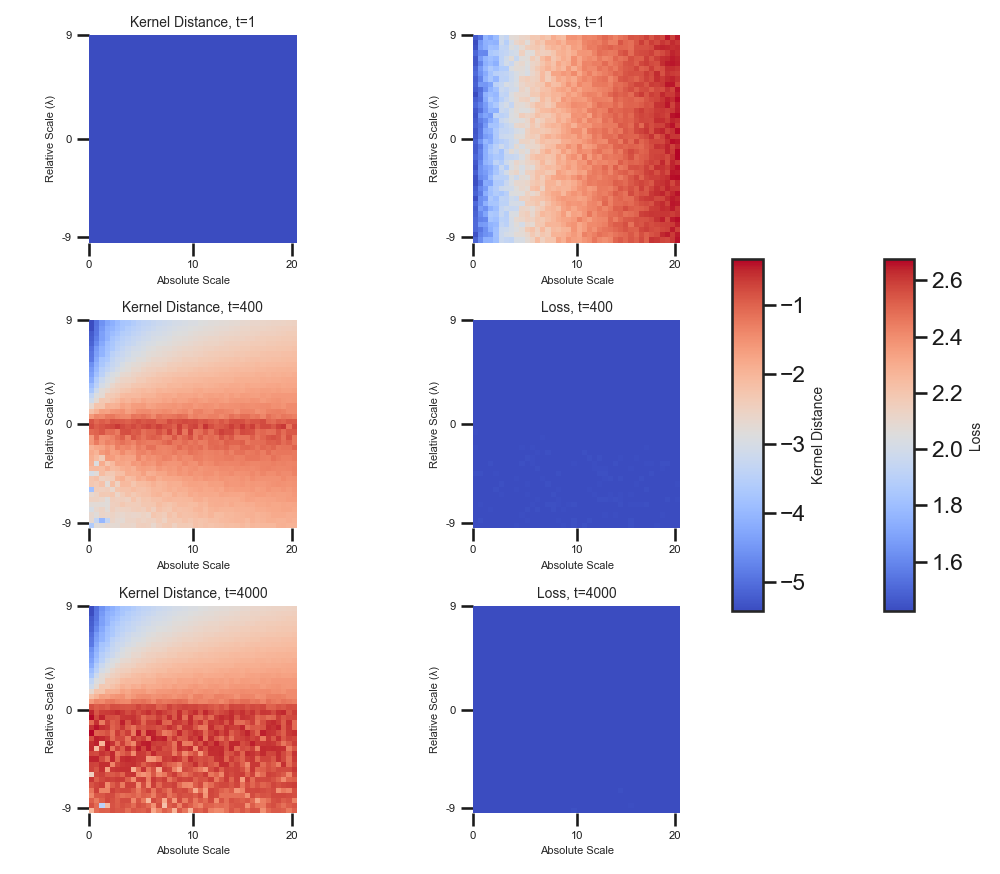} 
    \caption{ Funnel network:
    Phase plots showing (left) the logarithmic kernel distance of the NTK from initialization and (right) the corresponding logarithmic loss as functions of the relative scale \(\lambda\) and the absolute scale. (Top to bottom) Different time steps during training $t=1$, $t=400$, $t=4000$. 
}
     \label{fig:funnel.png}
\end{figure}

\section{Appendix: Exact learning dynamics with prior knowledge}
\label{app:exat_learning_dynamics}

\subsection{Appendix: Fukumizu Approach}
\label{app:fukumizu-approach}
\begin{lemma}
We introduce the variables
\begin{equation}
\mathbf{Q} = \begin{bmatrix}
    \mathbf{W}_1^T \\
    \mathbf{W}_2
\end{bmatrix}
\quad \text{and} \quad
\mathbf{Q} \mathbf{Q}^T = \begin{bmatrix}
    \mathbf{W}_1^T \mathbf{W}_1 & \mathbf{W}_1^T \mathbf{W}_2^T \\
    \mathbf{W}_2 \mathbf{W}_1 & \mathbf{W}_2 \mathbf{W}_2^T
\end{bmatrix}.
\end{equation}
Defining
\begin{equation}
\mathbf{F} = \begin{bmatrix}
- \frac{\lambda}{2} I & (\tilde{\Sigma}^{yx})^T \\
\tilde{\Sigma}^{yx} & \frac{\lambda}{2} I
\end{bmatrix},
\end{equation}
the gradient flow dynamics of $\mathbf{Q}\mathbf{Q}^T(t)$ can be written as a differential matrix Riccati equation
\begin{equation}
\tau \frac{d}{dt} (\mathbf{Q}\mathbf{Q}^T) = \mathbf{F}\mathbf{Q}\mathbf{Q}^T + \mathbf{Q}\mathbf{Q}^T\mathbf{F} - (\mathbf{Q}\mathbf{Q}^T)^2. 
\end{equation}

\begin{proof}[Proof] 

    We introduce the variables
\begin{equation}
\mathbf{Q} = \begin{bmatrix}
    \mathbf{W}_1^T \\
    \mathbf{W}_2
\end{bmatrix}
\quad \text{and} \quad
\mathbf{Q} \mathbf{Q}^T = \begin{bmatrix}
    \mathbf{W}_1^T \mathbf{W}_1 & \mathbf{W}_1^T \mathbf{W}_2^T \\
    \mathbf{W}_2 \mathbf{W}_1 & \mathbf{W}_2 \mathbf{W}_2^T
\end{bmatrix}.
\end{equation}

We compute the time derivative
\begin{equation}
\tau \frac{d}{dt} (\mathbf{Q}\mathbf{Q}^T) = \tau
\begin{bmatrix}
\frac{d\mathbf{W}_1^T}{dt} \mathbf{W}_1 + \mathbf{W}_1^T \frac{d\mathbf{W}_1}{dt} & \frac{d\mathbf{W}_1^T}{dt} \mathbf{W}_2 + \mathbf{W}_1^T \frac{d\mathbf{W}_2}{dt} \\
\frac{d\mathbf{W}_2}{dt} \mathbf{W}_1 + \mathbf{W}_2 \frac{d\mathbf{W}_1}{dt} & \frac{d\mathbf{W}_2^T}{dt} \mathbf{W}_2 + \mathbf{W}_2^T \frac{d\mathbf{W}_2}{dt}
\end{bmatrix}.
\label{qqtequation}
\end{equation}

Using equations 18 and 19, we compute the four quadrants separately giving
\begin{equation}
\tau \left( \frac{d \mathbf{W}_1^T}{dt} \mathbf{W}_1 + \mathbf{W}_1^T \frac{d \mathbf{W}_1}{dt} \right) =
\end{equation}
\begin{equation}
= (\Sigma^{yx} - \mathbf{W}_2 \mathbf{W}_1)^T \mathbf{W}_2 \mathbf{W}_1 + \mathbf{W}_1^T \mathbf{W}_2^T (\Sigma^{yx} - \mathbf{W}_2 \mathbf{W}_1)
\end{equation}
\begin{equation}
= (\Sigma^{yx})^T \mathbf{W}_2 \mathbf{W}_1 + \mathbf{W}_1^T \mathbf{W}_2^T \Sigma^{yx} - \mathbf{W}_1^T \mathbf{W}_2^T \mathbf{W}_2 \mathbf{W}_1 - (\mathbf{W}_2 \mathbf{W}_1)^T \mathbf{W}_2 \mathbf{W}_1
\end{equation}
\begin{equation}
= (\Sigma^{yx})^T \mathbf{W}_2 \mathbf{W}_1 + \mathbf{W}_1^T \mathbf{W}_2^T \Sigma^{yx} - \mathbf{W}_1^T \mathbf{W}_2^T \mathbf{W}_2 \mathbf{W}_1 - \mathbf{W}_1^T \mathbf{W}_1 \mathbf{W}_1^T \mathbf{W}_1 - \lambda \mathbf{W}_1^T \mathbf{W}_1, \label{1stquad}
\end{equation}

\begin{equation}
\tau \left( \frac{d \mathbf{W}_1^T}{dt} \mathbf{W}_2^T + \mathbf{W}_1^T \frac{d \mathbf{W}_2^T}{dt} \right) =
\end{equation}
\begin{equation}
= (\Sigma^{yx} - \mathbf{W}_2 \mathbf{W}_1)^T \mathbf{W}_2 \mathbf{W}_2^T + \mathbf{W}_1^T \mathbf{W}_1 (\Sigma^{yx} - \mathbf{W}_2 \mathbf{W}_1)^T
\end{equation}
\begin{equation}
= (\Sigma^{yx})^T \mathbf{W}_2 \mathbf{W}_2^T + \mathbf{W}_1^T \mathbf{W}_1 (\Sigma^{yx})^T - \mathbf{W}_1^T \mathbf{W}_1 \mathbf{W}_1^T \mathbf{W}_2^T  - \mathbf{W}_1^T \mathbf{W}_2^T \mathbf{W}_2 \mathbf{W}_2^T, \label{2ndquad}
\end{equation}

\begin{equation}
\tau \left( \frac{d \mathbf{W}_2}{dt} \mathbf{W}_1 + \mathbf{W}_2 \frac{d \mathbf{W}_1}{dt} \right) =
\end{equation}
\begin{equation}
= (\Sigma^{yx} - \mathbf{W}_2 \mathbf{W}_1) \mathbf{W}_1^T \mathbf{W}_1 + \mathbf{W}_2 \mathbf{W}_2^T (\Sigma^{yx} - \mathbf{W}_2 \mathbf{W}_1)
\end{equation}
\begin{equation}
= \Sigma^{yx} \mathbf{W}_1^T \mathbf{W}_1 + \mathbf{W}_2 \mathbf{W}_2^T \Sigma^{yx} - \mathbf{W}_2 \mathbf{W}_2^T \mathbf{W}_2  \mathbf{W}_1 - \mathbf{W}_2 \mathbf{W}_1 \mathbf{W}_1^T \mathbf{W}_1, \label{3rdquad}
\end{equation}

\begin{equation}
\tau \left( \frac{d\mathbf{W}_2}{dt} \mathbf{W}_2^T + \mathbf{W}_2 \frac{d\mathbf{W}_2^T}{dt} \right) =
\end{equation}
\begin{equation}
(\tilde{\Sigma}^{yx} - \mathbf{W}_2 \mathbf{W}_1) \mathbf{W}_1^T \mathbf{W}_2^T + \mathbf{W}_2 \mathbf{W}_1 (\tilde{\Sigma}^{yx} - \mathbf{W}_2 \mathbf{W}_1)^T 
\end{equation}
\begin{equation}
= \tilde{\Sigma}^{yx} \mathbf{W}_1^T \mathbf{W}_2^T + \mathbf{W}_2 \mathbf{W}_1 (\tilde{\Sigma}^{yx})^T - \mathbf{W}_2 \mathbf{W}_1 \mathbf{W}_1^T \mathbf{W}_2^T - \mathbf{W}_2 \mathbf{W}_1 (\mathbf{W}_2 \mathbf{W}_1)^T 
\end{equation}
\begin{equation}
= \tilde{\Sigma}^{yx} \mathbf{W}_1^T \mathbf{W}_2^T + \mathbf{W}_2 \mathbf{W}_1 (\tilde{\Sigma}^{yx})^T - \mathbf{W}_2 \mathbf{W}_1 \mathbf{W}_1^T \mathbf{W}_2^T - \mathbf{W}_2 \mathbf{W}_1 \mathbf{W}_1^T \mathbf{W}_2^T 
\end{equation}
\begin{equation}
= \tilde{\Sigma}^{yx} \mathbf{W}_1^T \mathbf{W}_2^T + \mathbf{W}_2 \mathbf{W}_1 (\tilde{\Sigma}^{yx})^T - \mathbf{W}_2 \mathbf{W}_1 \mathbf{W}_1^T \mathbf{W}_2^T - \mathbf{W}_2 \mathbf{W}_2^T \mathbf{W}_2 \mathbf{W}_2^T + \lambda \mathbf{W}_2 \mathbf{W}_2^T. \label{4thquad}
\end{equation}

Defining
\begin{equation}
\mathbf{F} = \begin{bmatrix}
- \frac{\lambda}{2} I & (\tilde{\Sigma}^{yx})^T \\
\tilde{\Sigma}^{yx} & \frac{\lambda}{2} I
\end{bmatrix},
\end{equation}
the gradient flow dynamics of $\mathbf{Q}\mathbf{Q}^T(t)$ can be written as a differential matrix Riccati equation
\begin{equation}
\tau \frac{d}{dt} (\mathbf{Q}\mathbf{Q}^T) = \mathbf{F}\mathbf{Q}\mathbf{Q}^T + \mathbf{Q}\mathbf{Q}^T\mathbf{F} - (\mathbf{Q}\mathbf{Q}^T)^2. 
\end{equation}
We write $\tau \frac{d}{dt} (\mathbf{Q}\mathbf{Q}^T)$ for completeness

\begin{equation}
\begin{aligned}
\tau \frac{d}{dt} (\mathbf{Q}\mathbf{Q}^T) = & \left[
\begin{matrix}
-\frac{\lambda}{2} & (\tilde{\Sigma}^{yx})^T \\
\tilde{\Sigma}^{yx} & \frac{\lambda}{2}
\end{matrix}
\right] \left[
\begin{matrix}
\mathbf{W}_1^T\mathbf{W}_1 & \mathbf{W}_1^T\mathbf{W}_2 \\
\mathbf{W}_2\mathbf{W}_1 & \mathbf{W}_2\mathbf{W}_2^T
\end{matrix}
\right] + \left[
\begin{matrix}
\mathbf{W}_1^T\mathbf{W}_1 & \mathbf{W}_1^T\mathbf{W}_2 \\
\mathbf{W}_2\mathbf{W}_1 & \mathbf{W}_2\mathbf{W}_2^T
\end{matrix}
\right]^T \left[
\begin{matrix}
-\frac{\lambda}{2} & (\tilde{\Sigma}^{yx})^T \\
\tilde{\Sigma}^{yx} & \frac{\lambda}{2}
\end{matrix}
\right] \\
& - \left[
\begin{matrix}
\mathbf{W}_1^T\mathbf{W}_1 & \mathbf{W}_1^T\mathbf{W}_2 \\
\mathbf{W}_2\mathbf{W}_1 & \mathbf{W}_2\mathbf{W}_2^T
\end{matrix}
\right]^2
\end{aligned}
\end{equation}

\begin{equation}
\begin{aligned}
= & \left[
\begin{matrix}
-\frac{\lambda}{2} & (\tilde{\Sigma}^{yx})^T \\
\tilde{\Sigma}^{yx} & \frac{\lambda}{2}
\end{matrix}
\right] \left[
\begin{matrix}
\mathbf{W}_1^T\mathbf{W}_1 & \mathbf{W}_1^T\mathbf{W}_2 \\
\mathbf{W}_2\mathbf{W}_1 & \mathbf{W}_2\mathbf{W}_2^T
\end{matrix}
\right] + \left[
\begin{matrix}
\mathbf{W}_1^T\mathbf{W}_1 & \mathbf{W}_1^T\mathbf{W}_2 \\
\mathbf{W}_2\mathbf{W}_1 & \mathbf{W}_2\mathbf{W}_2^T
\end{matrix}
\right]^T \left[
\begin{matrix}
-\frac{\lambda}{2} & (\tilde{\Sigma}^{yx})^T \\
\tilde{\Sigma}^{yx} & \frac{\lambda}{2}
\end{matrix}
\right] \\
& - \left[
\begin{matrix}
\mathbf{W}_1^T\mathbf{W}_1 & \mathbf{W}_1^T\mathbf{W}_2 \\
\mathbf{W}_2\mathbf{W}_1 & \mathbf{W}_2\mathbf{W}_2^T
\end{matrix}
\right] \left[
\begin{matrix}
\mathbf{W}_1^T\mathbf{W}_1 & \mathbf{W}_1^T\mathbf{W}_2 \\
\mathbf{W}_2\mathbf{W}_1 & \mathbf{W}_2\mathbf{W}_2^T
\end{matrix}
\right]
\end{aligned}
\end{equation}

\begin{equation}
\begin{aligned}
&= \left[
\begin{matrix}
-\frac{\lambda}{2} \mathbf{W}_1^T \mathbf{W}_1 + (\tilde{\Sigma}^{yx})^T \mathbf{W}_2 \mathbf{W}_1 & 
-\frac{\lambda}{2} \mathbf{W}_1^T \mathbf{W}_2 +(\tilde{\Sigma}^{yx})^T \mathbf{W}_2 \mathbf{W}_2^T \\
\tilde{\Sigma}^{yx} \mathbf{W}_1^T \mathbf{W}_1 + \frac{\lambda}{2} \mathbf{W}_2 \mathbf{W}_1 & \tilde{\Sigma}^{yx} \mathbf{W}_1^T \mathbf{W}_2^T + \frac{\lambda}{2} \mathbf{W}_2 \mathbf{W}_2^T
\end{matrix}
\right] \\
&+ \left[
\begin{matrix}
-\frac{\lambda}{2} \mathbf{W}_1^T \mathbf{W}_1 + \mathbf{W}_1^T \mathbf{W}_1 (\tilde{\Sigma}^{yx})^T & \frac{\lambda}{2} \mathbf{W}_1^T \mathbf{W}_2 + \mathbf{W}_1^T \mathbf{W}_2 (\tilde{\Sigma}^{yx})^T \\
-\frac{\lambda}{2} \mathbf{W}_2^T \mathbf{W}_1 + \mathbf{W}_2 \mathbf{W}_1 (\tilde{\Sigma}^{yx})^T & \frac{\lambda}{2} \mathbf{W}_2 \mathbf{W}_2^T + \mathbf{W}_2 \mathbf{W}_2^T (\tilde{\Sigma}^{yx})^T
\end{matrix}
\right] \\
&- \left[
\begin{matrix}
\mathbf{W}_1^T \mathbf{W}_1 & \mathbf{W}_1^T \mathbf{W}_2 \\
\mathbf{W}_2 \mathbf{W}_1 & \mathbf{W}_2 \mathbf{W}_2^T
\end{matrix}
\right] \left[
\begin{matrix}
\mathbf{W}_1^T \mathbf{W}_1 & \mathbf{W}_1^T \mathbf{W}_2 \\
\mathbf{W}_2 \mathbf{W}_1 & \mathbf{W}_2 \mathbf{W}_2^T
\end{matrix}
\right]
\end{aligned}
\end{equation}

\begin{equation}
\begin{aligned}
&= \left[
\begin{matrix}
-\frac{\lambda}{2} \mathbf{W}_1^T \mathbf{W}_1 + (\tilde{\Sigma}^{yx})^T \mathbf{W}_2 \mathbf{W}_1 & 
-\frac{\lambda}{2} \mathbf{W}_1^T \mathbf{W}_2 +(\tilde{\Sigma}^{yx})^T \mathbf{W}_2 \mathbf{W}_2^T \\
\tilde{\Sigma}^{yx} \mathbf{W}_1^T \mathbf{W}_1 + \frac{\lambda}{2} \mathbf{W}_2 \mathbf{W}_1 & \tilde{\Sigma}^{yx} \mathbf{W}_1^T \mathbf{W}_2^T + \frac{\lambda}{2} \mathbf{W}_2 \mathbf{W}_2^T
\end{matrix}
\right] \\
&+ \left[
\begin{matrix}
-\frac{\lambda}{2} \mathbf{W}_1^T \mathbf{W}_1 + \mathbf{W}_1^T \mathbf{W}_1 (\tilde{\Sigma}^{yx})^T & \frac{\lambda}{2} \mathbf{W}_1^T \mathbf{W}_2 + \mathbf{W}_1^T \mathbf{W}_2 (\tilde{\Sigma}^{yx})^T \\
-\frac{\lambda}{2} \mathbf{W}_2^T \mathbf{W}_1 + \mathbf{W}_2 \mathbf{W}_1 (\tilde{\Sigma}^{yx})^T & \frac{\lambda}{2} \mathbf{W}_2 \mathbf{W}_2^T + \mathbf{W}_2 \mathbf{W}_2^T (\tilde{\Sigma}^{yx})^T
\end{matrix}
\right] \\
&- \left[
\begin{matrix}
\mathbf{W}_1^T \mathbf{W}_1 \mathbf{W}_1^T \mathbf{W}_1 + \mathbf{W}_1^T \mathbf{W}_2 \mathbf{W}_2^T \mathbf{W}_1 & \mathbf{W}_1^T \mathbf{W}_1 \mathbf{W}_1^T \mathbf{W}_2 + \mathbf{W}_1^T \mathbf{W}_2 \mathbf{W}_2^T \mathbf{W}_2 \\
\mathbf{W}_2 \mathbf{W}_1 \mathbf{W}_1^T \mathbf{W}_1 + \mathbf{W}_2 \mathbf{W}_2^T \mathbf{W}_2 \mathbf{W}_1 & \mathbf{W}_2 \mathbf{W}_1 \mathbf{W}_1^T \mathbf{W}_2 + \mathbf{W}_2 \mathbf{W}_2^T \mathbf{W}_2 \mathbf{W}_2^T
\end{matrix}
\right]
\end{aligned}
\tag{45}
\end{equation}
\end{proof}
\end{lemma}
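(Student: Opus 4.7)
The plan is a direct block-by-block verification: differentiate $\mathbf{Q}\mathbf{Q}^T$ using the chain rule and the underlying gradient flow for $\mathbf{W}_1,\mathbf{W}_2$, expand the candidate right-hand side $\mathbf{F}\mathbf{Q}\mathbf{Q}^T+\mathbf{Q}\mathbf{Q}^T\mathbf{F}-(\mathbf{Q}\mathbf{Q}^T)^2$, and check that the two expressions coincide in each of the four blocks. The $\lambda$-balanced assumption will enter precisely where the two sides appear to differ by $\lambda$-dependent terms.

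First I would record the parameter-level gradient flow for the MSE loss $\mathcal{L}=\tfrac12\langle\|\mathbf{W}_2\mathbf{W}_1\mathbf{x}-\mathbf{y}\|^2\rangle$, which gives
\begin{equation*}
\tau\dot{\mathbf{W}}_1=\mathbf{W}_2^T\bigl(\tilde{\boldsymbol{\Sigma}}^{yx}-\mathbf{W}_2\mathbf{W}_1\tilde{\boldsymbol{\Sigma}}^{xx}\bigr),\qquad \tau\dot{\mathbf{W}}_2=\bigl(\tilde{\boldsymbol{\Sigma}}^{yx}-\mathbf{W}_2\mathbf{W}_1\tilde{\boldsymbol{\Sigma}}^{xx}\bigr)\mathbf{W}_1^T.
\end{equation*}
Invoking Assumption \ref{ass:whitened} drops $\tilde{\boldsymbol{\Sigma}}^{xx}$ to the identity, yielding a clean pair of ODEs. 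Using the product rule, I would then differentiate each of the four blocks of $\mathbf{Q}\mathbf{Q}^T$ (the on-diagonal blocks $\mathbf{W}_1^T\mathbf{W}_1$ and $\mathbf{W}_2\mathbf{W}_2^T$, and the off-diagonal blocks $\mathbf{W}_1^T\mathbf{W}_2^T$ and $\mathbf{W}_2\mathbf{W}_1$), producing expressions linear in $\tilde{\boldsymbol{\Sigma}}^{yx}$ and quartic in the weights.

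In parallel, I would expand the candidate Riccati right-hand side using the block structure of $\mathbf{F}$ and of $\mathbf{Q}\mathbf{Q}^T$. Each of the four resulting blocks splits naturally into three pieces: a $\pm\tfrac{\lambda}{2}$ piece from the diagonal of $\mathbf{F}$, a piece involving $\tilde{\boldsymbol{\Sigma}}^{yx}$ or its transpose from the off-diagonal of $\mathbf{F}$, and a quartic piece from $(\mathbf{Q}\mathbf{Q}^T)^2$ that produces two summands per block (one from the on-diagonal sub-product and one from the off-diagonal cross-product). Comparing block-by-block with the differentiated expression from the previous step, the $\tilde{\boldsymbol{\Sigma}}^{yx}$ terms match term-for-term, while the difference between the quartic pieces reduces — in every block — to an expression of the form $\pm[\mathbf{W}_2^T\mathbf{W}_2-\mathbf{W}_1\mathbf{W}_1^T]$ multiplied on the appropriate side by one of the weight blocks. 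Assumption \ref{ass:lambda-balanced} (whose persistence in time is the content of Theorem \ref{theorem:balanced_persists}, already available to me) collapses that bracket to $\lambda\mathbf{I}$, which precisely cancels the residual $\pm\tfrac{\lambda}{2}$ contributions coming from the diagonal of $\mathbf{F}$.

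The main obstacle is purely bookkeeping: keeping the block indices straight in the off-diagonal quadrants, since $(\mathbf{Q}\mathbf{Q}^T)^2$ mixes on- and off-diagonal blocks of $\mathbf{Q}\mathbf{Q}^T$ in asymmetric ways, and one must be careful that the residual $\lambda$-terms appear on the correct side (left vs.\ right multiplication) so the balanced identity can be applied directly without transposing. A clean way to organise this is to verify that $\tau\frac{d}{dt}(\mathbf{W}_1^T\mathbf{W}_1)$ and $\tau\frac{d}{dt}(\mathbf{W}_2\mathbf{W}_2^T)$ each match their Riccati counterparts (these are symmetric and essentially identical computations up to replacing one factor via the balanced condition), then dispatch the off-diagonal block $\tau\frac{d}{dt}(\mathbf{W}_2\mathbf{W}_1)$ separately (its transpose handles the remaining block automatically). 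No further ingredients beyond the two assumptions are required.
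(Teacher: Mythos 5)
Your proposal is correct and follows essentially the same route as the paper's proof: write the gradient-flow ODEs for $\mathbf{W}_1,\mathbf{W}_2$ with $\tilde{\boldsymbol{\Sigma}}^{xx}=\mathbf{I}$, differentiate each block of $\mathbf{Q}\mathbf{Q}^T$, expand $\mathbf{F}\mathbf{Q}\mathbf{Q}^T+\mathbf{Q}\mathbf{Q}^T\mathbf{F}-(\mathbf{Q}\mathbf{Q}^T)^2$ blockwise, and use the conserved $\lambda$-balance to convert $\mathbf{W}_1^T\mathbf{W}_2^T\mathbf{W}_2\mathbf{W}_1$ (resp.\ $\mathbf{W}_2\mathbf{W}_1\mathbf{W}_1^T\mathbf{W}_2^T$) so that the residual $\pm\tfrac{\lambda}{2}$ terms cancel. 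One small refinement: in the off-diagonal blocks the quartic terms already agree and the $\pm\tfrac{\lambda}{2}$ contributions cancel between $\mathbf{F}\mathbf{Q}\mathbf{Q}^T$ and $\mathbf{Q}\mathbf{Q}^T\mathbf{F}$ without invoking balancedness, so the balanced identity is only needed on the two diagonal blocks — this only simplifies your check and does not affect correctness.
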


The four quadrants of \ref{qqtequation} are equivalent to equations \ref{1stquad}, \ref{2ndquad}, \ref{3rdquad}, and \ref{4thquad} respectively.

\subsection{\texorpdfstring{$\qqt$ } dDiagonalisation }
\label{thm:qqtdiagonalisable}

\begin{lemma}
    If $\boldsymbol{F} = \boldsymbol{P}\boldsymbol{\Lambda}\boldsymbol{P}^{T}$ is symmetric and diagonalizable, then the matrix Riccati differential equation $\frac{d}{dt} (\mathbf{Q}\mathbf{Q}^T) = \mathbf{F}\mathbf{Q}\mathbf{Q}^T + \mathbf{Q}\mathbf{Q}^T\mathbf{F} - (\mathbf{Q}\mathbf{Q}^T)^2$ with initialization $\qqt(0) = \qz \qz^T$ has a unique solution for all $t \ge 0$, and the solution is given by
    \begin{equation}
        \qqtt = \epf\qz\left[\bfi + \qz^T\boldsymbol{P}\left(\frac{e^{2\boldsymbol{\Lambda}\frac{t}{\tau}} - \mathbf{I}}{\boldsymbol{2\Lambda}}  \right)\boldsymbol{P}^{T} \qz\right]^{-1} \qz^T\epf.
         \label{eqq:qqt_diagonalisable}
    \end{equation}
    This is true even when there exists $\boldsymbol{\Lambda}_i = 0$. 
\end{lemma}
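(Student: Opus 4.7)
The overall strategy is direct verification: (i) reinterpret the formula so it is well-defined even when some $\lambda_i = 0$, (ii) check the initial condition, (iii) differentiate and check the ODE holds, (iv) invoke Picard--Lindel\"of for uniqueness. First I would observe that the scalar map $\lambda \mapsto (e^{2\lambda t/\tau} - 1)/(2\lambda)$ has a removable singularity at $\lambda = 0$ with limiting value $t/\tau$, so the diagonal object $(e^{2\boldsymbol{\Lambda}t/\tau} - \mathbf{I})/(2\boldsymbol{\Lambda})$ is interpreted entrywise via this continuous extension. Equivalently, define
\begin{equation}
\mathbf{K}(t) := \boldsymbol{P}\!\left(\tfrac{e^{2\boldsymbol{\Lambda}t/\tau} - \mathbf{I}}{2\boldsymbol{\Lambda}}\right)\!\boldsymbol{P}^T \;=\; \int_0^{t/\tau} e^{2\mathbf{F}s}\,ds,
\end{equation}
which is manifestly well-defined as a matrix integral regardless of vanishing eigenvalues. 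Since $\mathbf{F}$ is symmetric, each $e^{2\mathbf{F}s}$ is symmetric positive definite, so $\mathbf{K}(t) \succeq 0$ for all $t \ge 0$, and consequently $\mathbf{N}(t) := \mathbf{I} + \mathbf{Q}(0)^T\mathbf{K}(t)\mathbf{Q}(0) \succeq \mathbf{I}$ is symmetric positive definite and invertible with $\|\mathbf{N}(t)^{-1}\| \le 1$ for all $t \ge 0$. This single observation handles both the $\lambda_i = 0$ degeneracy and the global well-definedness of the right-hand side of \eqref{eqq:qqt_diagonalisable}.

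Next, set $\mathbf{M}(t) := e^{\mathbf{F}t/\tau}\mathbf{Q}(0)\mathbf{N}(t)^{-1}\mathbf{Q}(0)^T e^{\mathbf{F}t/\tau}$. The initial condition $\mathbf{M}(0) = \mathbf{Q}(0)\mathbf{Q}(0)^T$ is immediate from $\mathbf{K}(0) = 0$, $\mathbf{N}(0) = \mathbf{I}$. To verify the ODE, I would apply the product rule using $\tau\,\tfrac{d}{dt}e^{\mathbf{F}t/\tau} = \mathbf{F}\,e^{\mathbf{F}t/\tau} = e^{\mathbf{F}t/\tau}\mathbf{F}$, together with $\tau \dot{\mathbf{N}} = \mathbf{Q}(0)^T e^{2\mathbf{F}t/\tau}\mathbf{Q}(0)$ and the standard identity $\tau\,\tfrac{d}{dt}\mathbf{N}^{-1} = -\mathbf{N}^{-1}(\tau\dot{\mathbf{N}})\mathbf{N}^{-1}$. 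Differentiating the two outer exponentials yields $\mathbf{F}\mathbf{M} + \mathbf{M}\mathbf{F}$, and the middle term becomes
\begin{equation}
-\,e^{\mathbf{F}t/\tau}\mathbf{Q}(0)\mathbf{N}^{-1}\mathbf{Q}(0)^T\,e^{2\mathbf{F}t/\tau}\,\mathbf{Q}(0)\mathbf{N}^{-1}\mathbf{Q}(0)^T e^{\mathbf{F}t/\tau}.
\end{equation}
The key algebraic observation is that splitting $e^{2\mathbf{F}t/\tau} = e^{\mathbf{F}t/\tau}\cdot e^{\mathbf{F}t/\tau}$ factors this product cleanly as $-\mathbf{M}(t)^2$, giving $\tau\dot{\mathbf{M}} = \mathbf{F}\mathbf{M} + \mathbf{M}\mathbf{F} - \mathbf{M}^2$ as required.

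Finally, for uniqueness I would appeal to Picard--Lindel\"of: the right-hand side of the Riccati equation is polynomial in the dependent variable, hence locally Lipschitz, so local existence and uniqueness hold from any initial data. Since the candidate $\mathbf{M}(t)$ constructed above is smooth and globally defined on $[0,\infty)$ (thanks to the invertibility of $\mathbf{N}(t)$ shown earlier), uniqueness upgrades local agreement to global, and $\mathbf{M}(t)$ is the unique solution for all $t \ge 0$.

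The main obstacle is the bookkeeping in the verification step: recognizing that the four-factor product produced by differentiating $\mathbf{N}^{-1}$ telescopes exactly into $\mathbf{M}^2$, which depends on the particular symmetric sandwich structure of $\mathbf{M}(t)$ and on $\mathbf{F}$ commuting with its exponential. The $\boldsymbol{\Lambda}_i = 0$ issue, which might appear to be the delicate point, is actually disposed of cleanly by the integral representation of $\mathbf{K}(t)$, which simultaneously supplies the continuous extension of the formula and the positivity needed to guarantee invertibility of $\mathbf{N}(t)$.
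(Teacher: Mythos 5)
Your proposal is correct and follows essentially the same route as the paper: verify by direct differentiation that the candidate $e^{\mathbf{F}t/\tau}\mathbf{Q}(0)[\cdot]^{-1}\mathbf{Q}(0)^Te^{\mathbf{F}t/\tau}$ satisfies the Riccati equation and the initial condition, handle $\lambda_i=0$ as a removable singularity, and invoke Picard--Lindel\"of for uniqueness. Your integral representation $\mathbf{K}(t)=\int_0^{t/\tau}e^{2\mathbf{F}s}\,ds$ is a nice refinement, since it additionally yields $\mathbf{K}(t)\succeq 0$ and hence the invertibility of $\mathbf{I}+\mathbf{Q}(0)^T\mathbf{K}(t)\mathbf{Q}(0)$ for all $t\ge 0$, a point the paper's proof leaves implicit.
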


\begin{proof}
    First we show that there exists a unique solution to the initial value problem stated. This is true by Picard-Lindelöf theorem. 
    %
    Now we show that the provided solution satisfies the ODE.
    Let $\boldsymbol{L} = \epf\qz$ and $\boldsymbol{C} = \bfi + \qz^T\boldsymbol{P}\left(\frac{e^{2\boldsymbol{\Lambda}\frac{t}{\tau}} - \mathbf{I}}{2\boldsymbol{\Lambda}}  \right)\boldsymbol{P}^{T} \qz$ such that solution $\qqtt = \boldsymbol{L}\boldsymbol{C}^{-1}\boldsymbol{L}^T$.
    The time derivative of $\qqt$ is then given by
    \begin{equation}
        \frac{d}{dt} (\qqt) = \frac{d}{dt} (\boldsymbol{L})\boldsymbol{C}^{-1}\boldsymbol{L}^T + \boldsymbol{L}\frac{d}{dt} (\boldsymbol{C}^{-1})\boldsymbol{L}^T + \boldsymbol{L}\boldsymbol{C}^{-1}\frac{d}{dt}(\boldsymbol{L}^T)
    \end{equation}
    Solving for these derivatives individually, we find
    \begin{align}
        \frac{d}{dt} (\boldsymbol{L}) &= \frac{d}{dt}\epf\qz = \boldsymbol{F}\epf\qz = \boldsymbol{F}\boldsymbol{L}\\
        \frac{d}{dt} (\boldsymbol{C}^{-1}) &= -\boldsymbol{C}^{-1} \frac{d}{dt} (\boldsymbol{C})\boldsymbol{C}^{-1} = -\boldsymbol{C}^{-1} \qz^T\boldsymbol{P}\frac{d}{dt}\left(\frac{e^{2\boldsymbol{\Lambda}\frac{t}{\tau}} - \mathbf{I}}{2\boldsymbol{\Lambda}}  \right)\boldsymbol{P}^{T} \qz\boldsymbol{C}^{-1}
    \end{align}
    We consider the derivative of the fraction serpately,
    \begin{equation}
        \frac{d}{dt}\left(\frac{e^{2\boldsymbol{\Lambda}\frac{t}{\tau}} - \mathbf{I}}{2\boldsymbol{\Lambda}}  \right) = e^{2\boldsymbol{\Lambda}\frac{t}{\tau}}
    \end{equation}
    this is true even in the limit as $\lambda_i \to 0$. Plugging these derivatives back in we see that the solution satisfies the ODE.
    Lastly, let $t = 0$, we see that the the solution satisfies the initial conditions.
\end{proof}

\subsection{ \texorpdfstring{$\boldsymbol{F} \quad$  } dDiagonalization }
\label{app:diagonal_F}

\begin{lemma}
The eigendecomposition of $\mathbf{F} = \mathbf{P} \boldsymbol{\Lambda} \mathbf{P}^T$ where
\begin{equation}
\mathbf{P} = \frac{1}{\sqrt{2}} \begin{pmatrix}
\tilde{\boldsymbol{V}} (\tilde{\boldsymbol{G}} - \tilde{\boldsymbol{H}}\tilde{\boldsymbol{G}}) & \tilde{\boldsymbol{V}} (\tilde{\boldsymbol{G}} + \tilde{\boldsymbol{H}}\tilde{\boldsymbol{G}}) & \sqrt{2} \tilde{\boldsymbol{V}}_\perp \\
\tilde{\boldsymbol{U}} (\tilde{\boldsymbol{G}} + \tilde{\boldsymbol{H}} \tilde{\boldsymbol{G}}) & -\tilde{\boldsymbol{U}} (\tilde{\boldsymbol{G}} - \tilde{\boldsymbol{H}}\tilde{\boldsymbol{G}}) & \sqrt{2} \tilde{\boldsymbol{U}}_\perp
\end{pmatrix},\quad
\boldsymbol{\Lambda} = \begin{pmatrix}
\tilde{\boldsymbol{S}_{\lambda}} & 0 & 0 \\
0 & -\tilde{\boldsymbol{S}_{\lambda}} & 0 \\
0 & 0 & \boldsymbol{\lambda}_\perp
\end{pmatrix}
\end{equation}
and the matrices $\tilde{\boldsymbol{S}_{\lambda}}$, $\boldsymbol{\lambda}_{\perp}$, $\tilde{\boldsymbol{H}}$, and $\tilde{\boldsymbol{G}}$ are the diagonal matrices defined as:
\begin{equation}
\tilde{\boldsymbol{S}_{\lambda}}=\sqrt{\tilde{\boldsymbol{S}}^2 + \frac{\lambda^2}{4} \mathbf{I}}, \;\;
\boldsymbol{\lambda}_{\perp} =\mathrm{sgn}(N_o-N_i)\frac{\lambda}{2}\mathbf{I}, \;\;
\tilde{\boldsymbol{H}} = \mathrm{sgn}(\lambda)\sqrt{\frac{\tilde{\boldsymbol{S_{\lambda}}} - \tilde{\boldsymbol{S}}}{\tilde{\boldsymbol{S_{\lambda}}} + \tilde{\boldsymbol{S}}}}, \;\; \tilde{\boldsymbol{G}} = \frac{1}{\sqrt{\mathbf{I} + \tilde{\boldsymbol{H}}^2}}.
\end{equation}
\end{lemma}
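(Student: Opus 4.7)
The plan is to verify directly that the proposed columns of $\mathbf{P}$ are orthonormal eigenvectors of $\mathbf{F}$ with the claimed eigenvalues, exploiting the block structure of $\mathbf{F}$ together with the SVD $\tilde{\boldsymbol{\Sigma}}^{yx} = \tilde{\boldsymbol{U}}\tilde{\boldsymbol{S}}\tilde{\boldsymbol{V}}^T$. The key observation is that any ansatz of the form $\mathbf{p}(\mathbf{a},\mathbf{b}) = \bigl(\tilde{\boldsymbol{V}}\mathbf{a};\,\tilde{\boldsymbol{U}}\mathbf{b}\bigr)$ with $\mathbf{a},\mathbf{b}$ diagonal satisfies
\begin{equation}
\mathbf{F}\,\mathbf{p}(\mathbf{a},\mathbf{b}) = \begin{pmatrix} \tilde{\boldsymbol{V}}\bigl(-\tfrac{\lambda}{2}\mathbf{a} + \tilde{\boldsymbol{S}}\mathbf{b}\bigr) \\ \tilde{\boldsymbol{U}}\bigl(\tilde{\boldsymbol{S}}\mathbf{a} + \tfrac{\lambda}{2}\mathbf{b}\bigr) \end{pmatrix},
\end{equation}
which uses only $(\tilde{\boldsymbol{\Sigma}}^{yx})^T \tilde{\boldsymbol{U}} = \tilde{\boldsymbol{V}}\tilde{\boldsymbol{S}}$ and $\tilde{\boldsymbol{\Sigma}}^{yx}\tilde{\boldsymbol{V}} = \tilde{\boldsymbol{U}}\tilde{\boldsymbol{S}}$. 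Because $\tilde{\boldsymbol{S}}$, $\mathbf{a}$, $\mathbf{b}$ all commute (they are simultaneously diagonal), the eigenvalue equation $\mathbf{F}\mathbf{p} = \mu\mathbf{p}$ decouples into independent $2\times 2$ problems, one per singular value $\tilde s_\alpha$, namely $\bigl(\begin{smallmatrix}-\lambda/2 & \tilde s_\alpha \\ \tilde s_\alpha & \lambda/2\end{smallmatrix}\bigr)(a_\alpha,b_\alpha)^T = \mu_\alpha (a_\alpha,b_\alpha)^T$.

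Next I would diagonalize this $2\times 2$ block. Its characteristic polynomial is $\mu^2 - (\tilde s_\alpha^2 + \lambda^2/4) = 0$, giving $\mu_\alpha = \pm \tilde s_{\lambda,\alpha}$ with $\tilde s_{\lambda,\alpha} = \sqrt{\tilde s_\alpha^2 + \lambda^2/4}$, which reproduces the diagonal entries of $\tilde{\boldsymbol{S}_\lambda}$. For the $+\tilde s_{\lambda,\alpha}$ branch one checks that $(a,b) \propto (\tilde s_{\lambda,\alpha} - \lambda/2,\;\tilde s_\alpha)$ solves the system; using the half-angle-like identity $(\tilde s_{\lambda,\alpha} - \lambda/2)/\tilde s_\alpha = \tilde s_\alpha/(\tilde s_{\lambda,\alpha} + \lambda/2)$ together with the definitions of $\tilde H_\alpha, \tilde G_\alpha$ — in particular $\tilde H_\alpha^2 = (\tilde s_{\lambda,\alpha} - \tilde s_\alpha)/(\tilde s_{\lambda,\alpha} + \tilde s_\alpha)$ and $\tilde G_\alpha = (1+\tilde H_\alpha^2)^{-1/2}$ — this rewrites as $(a,b) \propto (\tilde G_\alpha - \tilde H_\alpha \tilde G_\alpha,\; \tilde G_\alpha + \tilde H_\alpha \tilde G_\alpha)$, matching the first block column of $\mathbf{P}$. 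The $-\tilde s_{\lambda,\alpha}$ branch is obtained by the orthogonal swap $(a,b)\mapsto(b,-a)$, giving the second block column. The $\mathrm{sgn}(\lambda)$ inside $\tilde{\boldsymbol{H}}$ is what ensures the correct matching of signs when $\lambda$ is negative; I would verify this by tracking the sign through the cross-term $\tilde s_\alpha b = (\mu + \lambda/2) a$.

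For the third block I use that $\tilde{\boldsymbol{\Sigma}}^{yx}\tilde{\boldsymbol{V}}_\perp = 0$ and $(\tilde{\boldsymbol{\Sigma}}^{yx})^T\tilde{\boldsymbol{U}}_\perp = 0$, so
\begin{equation}
\mathbf{F}\begin{pmatrix}\tilde{\boldsymbol{V}}_\perp \\ \tilde{\boldsymbol{U}}_\perp\end{pmatrix} = \begin{pmatrix}-\tfrac{\lambda}{2}\tilde{\boldsymbol{V}}_\perp \\ \tfrac{\lambda}{2}\tilde{\boldsymbol{U}}_\perp\end{pmatrix}.
\end{equation}
Under assumption \ref{ass:dimension}, exactly one of $\tilde{\boldsymbol{V}}_\perp$, $\tilde{\boldsymbol{U}}_\perp$ is zero: if $N_i > N_o$ only $\tilde{\boldsymbol{V}}_\perp$ survives with eigenvalue $-\lambda/2$, and if $N_o > N_i$ only $\tilde{\boldsymbol{U}}_\perp$ survives with eigenvalue $+\lambda/2$, which is precisely $\mathrm{sgn}(N_o - N_i)\tfrac{\lambda}{2}\mathbf{I}$. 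If $N_i = N_o$ this block is absent. Finally, orthonormality of $\mathbf{P}$ reduces to checking three identities on diagonals: $\tfrac12\bigl[(\tilde G - \tilde H\tilde G)^2 + (\tilde G + \tilde H\tilde G)^2\bigr] = \tilde G^2(1+\tilde H^2) = \mathbf{I}$ for the first/second columns; cross-orthogonality from $\tfrac12\bigl[(\tilde G-\tilde H\tilde G)(\tilde G+\tilde H\tilde G) - (\tilde G+\tilde H\tilde G)(\tilde G-\tilde H\tilde G)\bigr] = 0$; and the third block is orthogonal to the first two because $\tilde{\boldsymbol{V}}^T\tilde{\boldsymbol{V}}_\perp = 0$ and $\tilde{\boldsymbol{U}}^T\tilde{\boldsymbol{U}}_\perp = 0$.

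The main obstacle I foresee is bookkeeping rather than conceptual: handling the sign of $\lambda$ correctly through the square roots in $\tilde{\boldsymbol{H}}$, and simultaneously the case split on $\mathrm{sgn}(N_o - N_i)$ that determines which of $\tilde{\boldsymbol{V}}_\perp, \tilde{\boldsymbol{U}}_\perp$ vanishes. A minor subtlety arises when some $\tilde s_\alpha = 0$ and $\lambda = 0$ simultaneously, in which case $\tilde{\boldsymbol{H}}$ is $0/0$ and the two $\pm\tilde s_{\lambda,\alpha}$ branches collapse; this degenerate case can be handled by taking a limit or by noting that any orthonormal basis of the corresponding invariant subspace works.
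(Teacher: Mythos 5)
Your proposal is correct and follows essentially the same route as the paper: the paper's own proof simply states that the result can be checked by computing $\mathbf{F} = \mathbf{P}\boldsymbol{\Lambda}\mathbf{P}^T$ and leaves that computation to the reader, and your block-wise verification (reduction to $2\times 2$ eigenproblems per singular value, the $\tilde{\boldsymbol{V}}_\perp,\tilde{\boldsymbol{U}}_\perp$ block, and the orthonormality identity $\tilde{\boldsymbol{G}}^2(\mathbf{I}+\tilde{\boldsymbol{H}}^2)=\mathbf{I}$) is exactly that omitted computation made explicit, including the correct handling of $\mathrm{sgn}(\lambda)$. Note only that your convention—$\tilde{\boldsymbol{V}}_\perp$ nonzero with eigenvalue $-\tfrac{\lambda}{2}$ when $N_i>N_o$, and $\tilde{\boldsymbol{U}}_\perp$ nonzero with eigenvalue $+\tfrac{\lambda}{2}$ when $N_o>N_i$—is the one consistent with $\boldsymbol{\lambda}_\perp=\mathrm{sgn}(N_o-N_i)\tfrac{\lambda}{2}\mathbf{I}$, even though the paper's prose describing which basis gets completed in which case reads as swapped.
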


Beyond the invertibility of $F$, notice from the equation (Fukumizu solution) we need to understand the relationship between $F$ and $Q(0)$.
To do this the following lemma relates the structure between the SVD of the model with the SVD structure of the individual parameters.

\begin{proof}
We leave for the reader by computing
\begin{equation}
\boldsymbol{F} = \boldsymbol{P} \boldsymbol{\Lambda} \boldsymbol{P}^T
\end{equation}
\end{proof}

\subsection{Solution Unequal-Input-Output}
\label{app:exat_learning_dynamics-unaligned-unequal}

\begin{theorem} 
    Under the assumptions of whitened inputs, \ref{ass:whitened}, lambda-balanced weights \ref{ass:lambda-balanced}, no bottleneck \ref{ass:dimension}, the temporal dynamics of $\qqt$ are 
\begin{align}
 & \qqtt = \notag \begin{pmatrix} \textcolor{cbgreen}{ \boldsymbol{Z_1} \boldsymbol{A}^{-1}  \boldsymbol{Z_1}^T } & \textcolor{cbred}{ \boldsymbol{Z_1} \boldsymbol{A}^{-1}  \boldsymbol{Z_2}^T} \\
 \textcolor{cbred}{  \boldsymbol{Z_2} \boldsymbol{A}^{-1} \boldsymbol{Z_1}^T  }&   \textcolor{cbblue}{\boldsymbol{Z_2} \boldsymbol{A}^{-1}  \boldsymbol{Z_2}^T}  \end{pmatrix}, 
 \end{align}
 where the variables $\boldsymbol{Z_1} \in \mathbb{R}^{N_i \times N_h}$, $\boldsymbol{Z_2} \in \mathbb{R}^{N_o \times N_h}$, and $\boldsymbol{A} \in \mathbb{R}^{N_h \times N_h}$ are defined as
 {\small
 \begin{align}
 \label{eq:qqexact}
 \boldsymbol{Z_1}(t) =& \frac{1}{2}
\tilde{\boldsymbol{V}} (\tilde{\boldsymbol{G}} - \tilde{\boldsymbol{H}} \tilde{\boldsymbol{G}}) e^{\tilde{\boldsymbol{S}_{\lambda}} \frac{t}{\tau}} \boldsymbol{B}^T - \frac{1}{2}\tilde{\boldsymbol{V}} (\tilde{\boldsymbol{G}} + \tilde{\boldsymbol{H}} \tilde{\boldsymbol{G}}) e^{-\tilde{\boldsymbol{S}_{\lambda}} \frac{t}{\tau}} \boldsymbol{C}^T +  \bfvth e^{\boldsymbol{\lambda_{\perp}}\frac{t}{\tau}} \bfvth^T \waz^T \\ 
 \boldsymbol{Z_2}(t) =&
\frac{1}{2} \tilde{\boldsymbol{U}} (\tilde{\boldsymbol{G}} + \tilde{\boldsymbol{H}} \tilde{\boldsymbol{G}}) e^{\tilde{\boldsymbol{S}_{\lambda}} \frac{t}{\tau}} \boldsymbol{B}^T + \frac{1}{2}\tilde{\boldsymbol{U}} (\tilde{\boldsymbol{G}} - \tilde{\boldsymbol{H}} \tilde{\boldsymbol{G}}) e^{-\tilde{\boldsymbol{S}_{\lambda}} \frac{t}{\tau}} \boldsymbol{C}^T + \bfuth  e^{\boldsymbol{\lambda_{\perp}} \frac{t}{\tau}}  \bfuth^T \wbz \\
\boldsymbol{A}(t) = & \mathbf{I} +  \boldsymbol{B} \left( \frac{ e^{2\tilde{\boldsymbol{S}_{\lambda}} \frac{t}{\tau}} -\mathbf{I}  }{4\tilde{\boldsymbol{S}_{\lambda}} } \right) \boldsymbol{B}^T- \boldsymbol{C} \left( \frac{ e^{-2\tilde{\boldsymbol{S}_{\lambda}} \frac{t}{\tau}} -\mathbf{I} } {4\tilde{\boldsymbol{S}_{\lambda}} }\right)\boldsymbol{C}^T
+  \wbz^T \bfuth  \left(\frac{e^{\boldsymbol{\lambda}_{\perp}  \frac{t}{\tau}}  -  \mathbf{I}   }{\boldsymbol{\lambda}_{\perp} } \right) \bfuth^T \wbz  \notag\\
&+ \waz\bfvth \left( \frac{e^{\boldsymbol{\lambda}_{\perp} \frac{t}{\tau}}  -  \mathbf{I} }{\boldsymbol{\lambda}_{\perp}} \right)    \bfvth^T \waz^T  
\end{align} 
}
\end{theorem}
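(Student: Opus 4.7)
The plan is to reduce directly from the closed-form solution in Equation \ref{eq:qqt},
\begin{equation*}
\qqtt = e^{\boldsymbol{F}t/\tau}\,\qz\,\boldsymbol{A}^{-1}(t)\,\qz^T e^{\boldsymbol{F}t/\tau},
\end{equation*}
with $\boldsymbol{A}(t) = \boldsymbol{I} + \qz^T\boldsymbol{P}\bigl(\tfrac{e^{2\boldsymbol{\Lambda}t/\tau}-\boldsymbol{I}}{2\boldsymbol{\Lambda}}\bigr)\boldsymbol{P}^T\qz$, by substituting the eigendecomposition $\boldsymbol{F} = \boldsymbol{P}\boldsymbol{\Lambda}\boldsymbol{P}^T$ from the preceding lemma (so that $e^{\boldsymbol{F}t/\tau} = \boldsymbol{P}e^{\boldsymbol{\Lambda}t/\tau}\boldsymbol{P}^T$) and then carrying out the block multiplications until the expression lines up with $\boldsymbol{Z_1}$, $\boldsymbol{Z_2}$, and $\boldsymbol{A}$ as written in the theorem.

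The first step is to compute $\boldsymbol{P}^T\qz$, where $\qz = [\waz^T,\,\wbz]^T \in \mathbb{R}^{(N_i+N_o) \times N_h}$. Multiplying out the three row-blocks of $\boldsymbol{P}^T$ against the two column-blocks of $\qz$ and comparing with the definitions in Equation \ref{eq:BCD}, the product collapses to
\begin{equation*}
\boldsymbol{P}^T\qz = \begin{pmatrix}\tfrac{1}{\sqrt{2}}\boldsymbol{B}^T \\ \tfrac{1}{\sqrt{2}}\boldsymbol{C}^T \\ \bfvth^T\waz^T + \bfuth^T\wbz\end{pmatrix},
\end{equation*}
which is essentially how $\boldsymbol{B}$, $\boldsymbol{C}$, and $\boldsymbol{D}$ were defined in the first place. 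The second step is to premultiply by $\boldsymbol{P}e^{\boldsymbol{\Lambda}t/\tau}$. Because $\boldsymbol{\Lambda}$ is block-diagonal with blocks $\tilde{\boldsymbol{S}}_\lambda$, $-\tilde{\boldsymbol{S}}_\lambda$, and $\boldsymbol{\lambda}_\perp$, this acts separately on the three row-blocks of $\boldsymbol{P}^T\qz$ and is then recombined by the three column-blocks of $\boldsymbol{P}$. Splitting the resulting $(N_i+N_o) \times N_h$ matrix into its top $N_i$ and bottom $N_o$ rows yields precisely $\boldsymbol{Z_1}(t)$ and $\boldsymbol{Z_2}(t)$, with the minus sign on the middle term of $\boldsymbol{Z_2}$ coming from the $-\tilde{\boldsymbol{U}}(\tilde{\boldsymbol{G}} - \tilde{\boldsymbol{H}}\tilde{\boldsymbol{G}})$ entry of the second column-block of $\boldsymbol{P}$.

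For $\boldsymbol{A}(t)$, an identical expansion gives $\boldsymbol{A}(t) = \boldsymbol{I} + (\boldsymbol{P}^T\qz)^T\,\tfrac{e^{2\boldsymbol{\Lambda}t/\tau}-\boldsymbol{I}}{2\boldsymbol{\Lambda}}\,(\boldsymbol{P}^T\qz)$. Plugging in the three row-blocks produces three summands: the $+\tilde{\boldsymbol{S}}_\lambda$ block yields $\boldsymbol{B}\bigl(\tfrac{e^{2\tilde{\boldsymbol{S}}_\lambda t/\tau}-\boldsymbol{I}}{4\tilde{\boldsymbol{S}}_\lambda}\bigr)\boldsymbol{B}^T$, the $-\tilde{\boldsymbol{S}}_\lambda$ block yields $-\boldsymbol{C}\bigl(\tfrac{e^{-2\tilde{\boldsymbol{S}}_\lambda t/\tau}-\boldsymbol{I}}{4\tilde{\boldsymbol{S}}_\lambda}\bigr)\boldsymbol{C}^T$ (the extra minus sign coming from the negative block of $1/(2\boldsymbol{\Lambda})$), and the $\boldsymbol{\lambda}_\perp$ block, after noting that under Assumption \ref{ass:dimension} exactly one of $\bfvth$ and $\bfuth$ is a zero matrix so the cross term drops, splits into the two summands in $\wbz$ and $\waz$ shown in the theorem. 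Assembling the final outer product $(\boldsymbol{P}e^{\boldsymbol{\Lambda}t/\tau}\boldsymbol{P}^T\qz)\,\boldsymbol{A}^{-1}(t)\,(\boldsymbol{P}e^{\boldsymbol{\Lambda}t/\tau}\boldsymbol{P}^T\qz)^T$ as a $2\times 2$ block matrix then yields the four quadrants $\boldsymbol{Z_i}\boldsymbol{A}^{-1}\boldsymbol{Z_j}^T$ of $\qqtt$.

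The main obstacle is bookkeeping rather than any conceptual subtlety: one must carefully track the $\tilde{\boldsymbol{G}}$ and $\tilde{\boldsymbol{H}}$ diagonal factors through the block multiplications, and handle the degenerate case $N_o = N_i$ (where $\boldsymbol{\lambda}_\perp = 0$ and the $\bfuth$, $\bfvth$ columns vanish) by interpreting $(e^{\boldsymbol{\lambda}_\perp t/\tau} - \boldsymbol{I})/\boldsymbol{\lambda}_\perp$ as a limit equal to $(t/\tau)\boldsymbol{I}$. The uniqueness argument in Appendix \ref{thm:qqtdiagonalisable} is stated precisely to cover zero eigenvalues of $\boldsymbol{\Lambda}$, so Equation \ref{eq:qqt} remains a legitimate starting point in every subcase.
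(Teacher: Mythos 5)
Your proposal follows essentially the same route as the paper's own proof in Appendix \ref{app:exat_learning_dynamics-unaligned-unequal}: substitute the eigendecomposition $\boldsymbol{F}=\boldsymbol{P}\boldsymbol{\Lambda}\boldsymbol{P}^T$ into the closed-form Riccati solution, compute $\boldsymbol{P}^T\qz$ in terms of $\boldsymbol{B}$, $\boldsymbol{C}$, $\boldsymbol{D}$, and expand the blocks to read off $\boldsymbol{Z_1}$, $\boldsymbol{Z_2}$, and $\boldsymbol{A}$. One bookkeeping correction: the middle block of $\boldsymbol{P}^T\qz$ is $-\tfrac{1}{\sqrt{2}}\boldsymbol{C}^T$, not $+\tfrac{1}{\sqrt{2}}\boldsymbol{C}^T$ (this follows from the definition of $\boldsymbol{C}$ together with the $-\tilde{\boldsymbol{U}}(\tilde{\boldsymbol{G}}-\tilde{\boldsymbol{H}}\tilde{\boldsymbol{G}})$ entry of $\boldsymbol{P}$); carried through, that sign is what places the minus on the $e^{-\tilde{\boldsymbol{S}_{\lambda}}\frac{t}{\tau}}\boldsymbol{C}^T$ term of $\boldsymbol{Z_1}$ and the plus on the corresponding term of $\boldsymbol{Z_2}$ — the opposite of what you state — while $\boldsymbol{A}(t)$ is unaffected since $\boldsymbol{C}$ enters it quadratically. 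With that sign fixed, your expansion, including the handling of the perpendicular block (one of $\bfvth$, $\bfuth$ being zero so the cross term drops) and the zero-eigenvalue limit covered by Appendix \ref{thm:qqtdiagonalisable}, matches the paper's derivation.
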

 \begin{proof}

We start and use the diagonalization of $\bff$ to rewrite the matrix exponential of $\boldsymbol{F}$ and $\boldsymbol{F}$.  
Note that $\bfp^T\bfp = \bfp\bfp^T = \bfi$ and therefore $\bfp^T = \bfp^{-1}$. 
{\tiny
\begin{align}
    \epf &= \bfp e^\bfgamma \bfp^T \notag\\
    &= \frac{1}{\sqrt{2}}
    \begin{bmatrix}
        \bfvt(\tilde{\boldsymbol{G}} - \tilde{\boldsymbol{H}} \tilde{\boldsymbol{G}}) & \bfvt(\tilde{\boldsymbol{G}} + \tilde{\boldsymbol{H}} \tilde{\boldsymbol{G}}) & \sqrt{2}\bfvh \\
        \bfut(\tilde{\boldsymbol{G}} + \tilde{\boldsymbol{H}} \tilde{\boldsymbol{G}}) & -\bfut(\tilde{\boldsymbol{G}} - \tilde{\boldsymbol{H}} \tilde{\boldsymbol{G}}) & \sqrt{2}\bfuh
    \end{bmatrix}
    \begin{bmatrix}
        e^{ \tilde{\boldsymbol{S}_{\lambda}}\frac{t}{\tau}} & 0 & 0\\
        0 & e^{- \tilde{\boldsymbol{S}_{\lambda}}\frac{t}{\tau}} & 0\\
        0 & 0 & e^{\boldsymbol{\lambda}_{\perp} \frac{t}{\tau}}\\
    \end{bmatrix}
    \frac{1}{\sqrt{2}} \begin{bmatrix}
        \bfvt(\tilde{\boldsymbol{G}} - \tilde{\boldsymbol{H}} \tilde{\boldsymbol{G}}) & \bfvt (\tilde{\boldsymbol{G}} + \tilde{\boldsymbol{H}} \tilde{\boldsymbol{G}})& \sqrt{2}\bfvh \\
        \bfut(\tilde{\boldsymbol{G}} + \tilde{\boldsymbol{H}} \tilde{\boldsymbol{G}}) & -\bfut(\tilde{\boldsymbol{G}} - \tilde{\boldsymbol{H}} \tilde{\boldsymbol{G}}) & \sqrt{2}\bfuh
    \end{bmatrix}^T \notag \\
    &= \frac{1}{\sqrt{2}}
    \begin{bmatrix}
        \bfvt(\tilde{\boldsymbol{G}} - \tilde{\boldsymbol{H}} \tilde{\boldsymbol{G}}) & \bfvt (\tilde{\boldsymbol{G}} + \tilde{\boldsymbol{H}} \tilde{\boldsymbol{G}})\\
        \bfut(\tilde{\boldsymbol{G}} - \tilde{\boldsymbol{H}} \tilde{\boldsymbol{G}}) & -\bfut(\tilde{\boldsymbol{G}} + \tilde{\boldsymbol{H}} \tilde{\boldsymbol{G}})
    \end{bmatrix}
    \begin{bmatrix}
        e^{ \tilde{\boldsymbol{S}_{\lambda}}\frac{t}{\tau}} & 0\\
        0 & e^{- \tilde{\boldsymbol{S}_{\lambda}}\frac{t}{\tau}}
    \end{bmatrix}\frac{1}{\sqrt{2}}
    \begin{bmatrix}
        \bfvt(\tilde{\boldsymbol{G}} - \tilde{\boldsymbol{H}} \tilde{\boldsymbol{G}}) & \bfvt (\tilde{\boldsymbol{G}} + \tilde{\boldsymbol{H}} \tilde{\boldsymbol{G}}) \\
        \bfut(\tilde{\boldsymbol{G}} + \tilde{\boldsymbol{H}} \tilde{\boldsymbol{G}}) & -\bfut(\tilde{\boldsymbol{G}} - \tilde{\boldsymbol{H}} \tilde{\boldsymbol{G}})
    \end{bmatrix}^T + 
    2\frac{1}{\sqrt{2}}
    \begin{bmatrix}
        \bfvth\\
        \bfuth
    \end{bmatrix}
   e^{\boldsymbol{\lambda}_{\perp} \frac{t}{\tau}}  \frac{1}{\sqrt{2}}
    \begin{bmatrix}
        \bfvth\\
        \bfuth
    \end{bmatrix}^T \notag\\
    &= \bfo \epl \bfo + 2 \bfm e^{\boldsymbol{\lambda}_{\perp} \frac{t}{\tau}}\bfm^T\text{.}
\end{align}}
\begin{align}
    \epf\bff^{-1}\epf - \bff^{-1} &= \bfo\epl\bfo^T\bfo\bflmd^{-1}\bfo^T\bfo\epl\bfo^T - \bfo\bflmd^{-1}\bfo^T + \bfm ( e^{\boldsymbol{\lambda}_{\perp}\frac{t}{\tau}} - \mathbf{I})     (\boldsymbol{\lambda}_{\perp} )^{-1}\bfm^T. \\
    {\bff}  & =  \bfo \Lambda  \bfo^T  + 2 \bfm\boldsymbol{\lambda}_{\perp}\bfm^T  
\end{align}

Where $\boldsymbol{M} = \frac{1}{\sqrt{2}}
    \begin{bmatrix}
        \bfvth\\
        \bfuth
    \end{bmatrix}^T$. Placing these expressions into equation \ref{eqq:qqt_diagonalisable} gives

\begin{align} 
    \qqtt = &\left[\bfo\epl\bfo^T + 2\bfm e^{\boldsymbol{\lambda}_{\perp}\frac{t}{\tau}}\bfm^T\right] \qz \nonumber \\
    &\left[\bfi + \frac{1}{2}\qz^T \left(\bfo \left(\eptl - \bfi\right) \bflmd^{-1} \bfo^T + \bfm (e^{\boldsymbol{\lambda}_{\perp}\frac{t}{\tau}} - \bfi)     \boldsymbol{\lambda}_{\perp}^{-1} \bfm^T\right) \qz\right]^{-1}\\
    &\ \qz^T \left[\bfo\epl\bfo^T + 2\bfm e^{\boldsymbol{\lambda}_{\perp}\frac{t}{\tau}}   
    \bfm^T\right]^T \nonumber
\end{align}

\begin{equation}
\boldsymbol{O}^T \boldsymbol{Q}(0) = \frac{1}{\sqrt{2}} \begin{pmatrix}
\tilde{\boldsymbol{V}} (\tilde{\boldsymbol{G}} - \tilde{\boldsymbol{H}} \tilde{\boldsymbol{G}}) & \tilde{\boldsymbol{V}} (\tilde{\boldsymbol{G}} + \tilde{\boldsymbol{H}} \tilde{\boldsymbol{G}}) \\
\tilde{\boldsymbol{U}} (\tilde{\boldsymbol{G}} + \tilde{\boldsymbol{H}} \tilde{\boldsymbol{G}}) & -\tilde{\boldsymbol{U}} (\tilde{\boldsymbol{G}} - \tilde{\boldsymbol{H}} \tilde{\boldsymbol{G}})
\end{pmatrix}^T \begin{pmatrix}
\boldsymbol{W}_1^T(0) \\
\boldsymbol{W}_2 (0) 
\end{pmatrix}\notag 
\end{equation}

\begin{equation}
= \frac{1}{\sqrt{2}} \begin{pmatrix}
(\tilde{\boldsymbol{G}} - \tilde{\boldsymbol{H}} \tilde{\boldsymbol{G}}) \tilde{\boldsymbol{V}}^T \boldsymbol{W}_1^T(0) + (\tilde{\boldsymbol{G}} + \tilde{\boldsymbol{H}} \tilde{\boldsymbol{G}}) \tilde{\boldsymbol{U}}^T \boldsymbol{W}_2 (0)  \\
(\tilde{\boldsymbol{G}} + \tilde{\boldsymbol{H}} \tilde{\boldsymbol{G}}) \tilde{\boldsymbol{V}}^T\boldsymbol{W}_1^T(0) - (\tilde{\boldsymbol{G}} - \tilde{\boldsymbol{H}} \tilde{\boldsymbol{G}}) \tilde{\boldsymbol{U}}^T \boldsymbol{W}_2 (0) 
\notag
\end{pmatrix}
\end{equation}

\begin{equation}
 = \frac{1}{\sqrt{2}} \begin{pmatrix}
\boldsymbol{B}^T \\
-\boldsymbol{C}^T
\end{pmatrix} 
\end{equation}

where 

\begin{align}
    \bfb &=  \wbz^T \tilde{\boldsymbol{U}} (\tilde{\boldsymbol{G}} + \tilde{\boldsymbol{H}} \tilde{\boldsymbol{G}}) + \waz \tilde{\boldsymbol{V}} (\tilde{\boldsymbol{G}} - \tilde{\boldsymbol{H}} \tilde{\boldsymbol{G}}) \in \mathbb{R}^{N_h \times N_h}\\
    \bfc &= \wbz^T \tilde{\boldsymbol{U}} (\tilde{\boldsymbol{G}} - \tilde{\boldsymbol{H}} \tilde{\boldsymbol{G}}) - \waz \tilde{\boldsymbol{V}} (\tilde{\boldsymbol{G}} + \tilde{\boldsymbol{H}} \tilde{\boldsymbol{G}}) \in \mathbb{R}^{N_h \times N_h}
\end{align}

\begin{equation}
\boldsymbol{O} e^{\boldsymbol{\Lambda} t / \tau} = \frac{1}{\sqrt{2}} \begin{pmatrix}
\tilde{\boldsymbol{V}} (\tilde{\boldsymbol{G}} - \tilde{\boldsymbol{H}} \tilde{\boldsymbol{G}}) & \tilde{\boldsymbol{V}} (\tilde{\boldsymbol{G}} + \tilde{\boldsymbol{H}} \tilde{\boldsymbol{G}}) \\
\tilde{\boldsymbol{U}} (\tilde{\boldsymbol{G}} + \tilde{\boldsymbol{H}} \tilde{\boldsymbol{G}}) & -\tilde{\boldsymbol{U}} (\tilde{\boldsymbol{G}} - \tilde{\boldsymbol{H}} \tilde{\boldsymbol{G}})
\end{pmatrix} \begin{pmatrix}
e^{ \tilde{\boldsymbol{S}_{\lambda}} \frac{t}{\tau}} & 0 \\
0 & e^{- \tilde{\boldsymbol{S}_{\lambda}} \frac{t}{\tau}}
\end{pmatrix}\notag
\end{equation}

\begin{equation}
= \frac{1}{\sqrt{2}} \begin{pmatrix}
\tilde{\boldsymbol{V}} (\tilde{\boldsymbol{G}} - \tilde{\boldsymbol{H}} \tilde{\boldsymbol{G}}) e^{ \tilde{\boldsymbol{S}_{\lambda}}\frac{t}{\tau}} & \tilde{\boldsymbol{V}} (\tilde{\boldsymbol{G}} + \tilde{\boldsymbol{H}} \tilde{\boldsymbol{G}}) e^{- \tilde{\boldsymbol{S}_{\lambda}} \frac{t}{\tau}} \\
\tilde{\boldsymbol{U}} (\tilde{\boldsymbol{G}} + \tilde{\boldsymbol{H}} \tilde{\boldsymbol{G}}) e^{ \tilde{\boldsymbol{S}_{\lambda}}\frac{t}{\tau}} & -\tilde{\boldsymbol{U}} (\tilde{\boldsymbol{G}} - \tilde{\boldsymbol{H}} \tilde{\boldsymbol{G}}) e^{- \tilde{\boldsymbol{S}_{\lambda}}\frac{t}{\tau}}
\end{pmatrix} 
\end{equation}

\begin{equation}
\boldsymbol{O} e^{\boldsymbol{\Lambda} t / \tau} \boldsymbol{O}^T \boldsymbol{Q}(0) = \frac{1}{2} \begin{pmatrix}
\tilde{\boldsymbol{V}} (\tilde{\boldsymbol{G}} - \tilde{\boldsymbol{H}} \tilde{\boldsymbol{G}}) e^{\tilde{\boldsymbol{S}_{\lambda}} \frac{t}{\tau}} & \tilde{\boldsymbol{V}} (\tilde{\boldsymbol{G}} + \tilde{\boldsymbol{H}} \tilde{\boldsymbol{G}}) e^{-\tilde{\boldsymbol{S}_{\lambda}} \frac{t}{\tau}} \\
\tilde{\boldsymbol{U}} (\tilde{\boldsymbol{G}} + \tilde{\boldsymbol{H}} \tilde{\boldsymbol{G}}) e^{\tilde{\boldsymbol{S}_{\lambda}} \frac{t}{\tau}} & -\tilde{\boldsymbol{U}} (\tilde{\boldsymbol{G}} - \tilde{\boldsymbol{H}} \tilde{\boldsymbol{G}}) e^{-\tilde{\boldsymbol{S}_{\lambda}} \frac{t}{\tau}}
\end{pmatrix} \begin{pmatrix}
\boldsymbol{B}^T \\
-\boldsymbol{C}^T
\end{pmatrix}  \notag
\end{equation}

\begin{equation}
= \frac{1}{2} \begin{pmatrix}
\tilde{\boldsymbol{V}} (\tilde{\boldsymbol{G}} - \tilde{\boldsymbol{H}} \tilde{\boldsymbol{G}}) e^{\tilde{\boldsymbol{S}_{\lambda}} \frac{t}{\tau}} \boldsymbol{B}^T - \tilde{\boldsymbol{V}} (\tilde{\boldsymbol{G}} + \tilde{\boldsymbol{H}} \tilde{\boldsymbol{G}}) e^{-\tilde{\boldsymbol{S}_{\lambda}} \frac{t}{\tau}} \boldsymbol{C}^T \\
\tilde{\boldsymbol{U}} (\tilde{\boldsymbol{G}} + \tilde{\boldsymbol{H}} \tilde{\boldsymbol{G}}) e^{\tilde{\boldsymbol{S}_{\lambda}} \frac{t}{\tau}} \boldsymbol{B}^T + \tilde{\boldsymbol{U}} (\tilde{\boldsymbol{G}} - \tilde{\boldsymbol{H}} \tilde{\boldsymbol{G}}) e^{-\tilde{\boldsymbol{S}_{\lambda}} \frac{t}{\tau}} \boldsymbol{C}^T
\end{pmatrix}
\end{equation}

\begin{align}
    2\bfm e^{\boldsymbol{\lambda}_{\perp}\frac{t}{\tau}} \bfm^T \qz &= 2\frac{1}{\sqrt{2}}\begin{bmatrix}
        \bfvth\\
        \bfuth\\
    \end{bmatrix}\begin{bmatrix}
        e^{\boldsymbol{\lambda}_{\perp}\frac{t}{\tau}} & 0\\
       0 & e^{\boldsymbol{\lambda}_{\perp}\frac{t}{\tau}}\\
    \end{bmatrix}
    \frac{1}{\sqrt{2}}\begin{bmatrix}
        \bfvth\\
        \bfuth\\
    \end{bmatrix}^T
    \begin{bmatrix}
        \waz^T\\\
        \wbz\\
    \end{bmatrix} \notag \\
    &= \begin{bmatrix}
       \bfvth e^{\boldsymbol{\lambda}_{\perp}\frac{t}{\tau}} \bfvth^T & 0 \\
        0 & \bfuth e^{\boldsymbol{\lambda}_{\perp}\frac{t}{\tau}} \bfuth^T \\
    \end{bmatrix}
    \begin{bmatrix}
        \waz^T\\\
        \wbz\\
    \end{bmatrix}   \notag \\
    &= \begin{bmatrix}
        \bfvth e^{\boldsymbol{\lambda}_{\perp}\frac{t}{\tau}} \bfvth^T \waz^T \notag\\
      \bfuth  e^{\boldsymbol{\lambda}_{\perp}\frac{t}{\tau}}  \bfuth^T \wbz \\
    \end{bmatrix}\\
\end{align}

Putting it together we get the expressions for   $\boldsymbol{Z_1}(t) $ and  $\boldsymbol{Z_2}(t) $

\begin{align}
& \left[\bfo\epl\bfo^T + 2\bfm e^{\boldsymbol{\lambda}_{\perp}\frac{t}{\tau}}\bfm^T\right] \qz =\notag \\
&  
= \frac{1}{2} \begin{pmatrix}
\tilde{\boldsymbol{V}} (\tilde{\boldsymbol{G}} - \tilde{\boldsymbol{H}} \tilde{\boldsymbol{G}}) e^{\tilde{\boldsymbol{S}_{\lambda}} \frac{t}{\tau}} \boldsymbol{B}^T - \tilde{\boldsymbol{V}} (\tilde{\boldsymbol{G}} + \tilde{\boldsymbol{H}} \tilde{\boldsymbol{G}}) e^{-\tilde{\boldsymbol{S}_{\lambda}} \frac{t}{\tau}} \boldsymbol{C}^T \\
\tilde{\boldsymbol{U}} (\tilde{\boldsymbol{G}} + \tilde{\boldsymbol{H}} \tilde{\boldsymbol{G}}) e^{\tilde{\boldsymbol{S}_{\lambda}} \frac{t}{\tau}} \boldsymbol{B}^T + \tilde{\boldsymbol{U}} (\tilde{\boldsymbol{G}} - \tilde{\boldsymbol{H}} \tilde{\boldsymbol{G}}) e^{-\tilde{\boldsymbol{S}_{\lambda}} \frac{t}{\tau}} \boldsymbol{C}^T
\end{pmatrix} 
+    \begin{bmatrix}
        \bfvth e^{\boldsymbol{\lambda}_{\perp}\frac{t}{\tau}} \bfvth^T \waz^T\\
      \bfuth  e^{\boldsymbol{\lambda}_{\perp}\frac{t}{\tau}}  \bfuth^T \wbz \\
    \end{bmatrix}
\end{align}

\begin{align} 
 \boldsymbol{Z_1}(t) =& \frac{1}{2}
\tilde{\boldsymbol{V}} (\tilde{\boldsymbol{G}} - \tilde{\boldsymbol{H}} \tilde{\boldsymbol{G}}) e^{\tilde{\boldsymbol{S}_{\lambda}} \frac{t}{\tau}} \boldsymbol{B}^T - \frac{1}{2}\tilde{\boldsymbol{V}} (\tilde{\boldsymbol{G}} + \tilde{\boldsymbol{H}} \tilde{\boldsymbol{G}}) e^{-\tilde{\boldsymbol{S}_{\lambda}} \frac{t}{\tau}} \boldsymbol{C}^T +  \bfvth e^{\boldsymbol{\lambda_{\perp}}\frac{t}{\tau}} \bfvth^T \waz^T \\ 
 \boldsymbol{Z_2}(t) =&
\frac{1}{2} \tilde{\boldsymbol{U}} (\tilde{\boldsymbol{G}} + \tilde{\boldsymbol{H}} \tilde{\boldsymbol{G}}) e^{\tilde{\boldsymbol{S}_{\lambda}} \frac{t}{\tau}} \boldsymbol{B}^T + \frac{1}{2}\tilde{\boldsymbol{U}} (\tilde{\boldsymbol{G}} - \tilde{\boldsymbol{H}} \tilde{\boldsymbol{G}}) e^{-\tilde{\boldsymbol{S}_{\lambda}} \frac{t}{\tau}} \boldsymbol{C}^T + \bfuth  e^{\boldsymbol{\lambda_{\perp}} \frac{t}{\tau}}  \bfuth^T \wbz 
\end{align}

We now compute the terms inside the inverse
\begin{align}
&\qz^T \bfm ( e^{\boldsymbol{\lambda}_{\perp}\frac{t}{\tau}})     \boldsymbol{\lambda}_{\perp}^{-1} \bfm^T \qz \notag  \\ 
& = 
\begin{bmatrix}
       \waz  & \wbz ^T 
    \end{bmatrix} \frac{1}{\sqrt{2}}\begin{bmatrix}
        \bfvth\\
        \bfuth\\
    \end{bmatrix}\begin{bmatrix}
        e^{\boldsymbol{\lambda}_{\perp}\frac{t}{\tau}} & 0\\
       0 & e^{\boldsymbol{\lambda}_{\perp}\frac{t}{\tau}}\\
    \end{bmatrix}\begin{bmatrix}
       \boldsymbol{\lambda}_{\perp}   & 0\\
       0 &  {\lambda_{\perp}} \\
    \end{bmatrix}^{-1}
    \frac{1}{\sqrt{2}}\begin{bmatrix}
        \bfvth\\
        \bfuth\\
    \end{bmatrix}^T
    \begin{bmatrix}
         \waz^T\\
        \wbz\\
    \end{bmatrix} \notag\\
&= \begin{bmatrix}
       \waz  & \wbz ^T 
    \end{bmatrix} 
    \begin{bmatrix}
      e^{\boldsymbol{\lambda}_{\perp}\frac{t}{\tau}}  \boldsymbol{\lambda}_{\perp}^{-1}   \bfvth \bfvth^T \waz^T\\
      e^{\boldsymbol{\lambda}_{\perp}\frac{t}{\tau}}  \boldsymbol{\lambda}_{\perp}^{-1}  
      \bfuth \bfuth^T \wbz \\
    \end{bmatrix} \notag\\
    &=  \begin{bmatrix}
       \left( \waz \bfvth  e^{\boldsymbol{\lambda}_{\perp}\frac{t}{\tau}} \boldsymbol{\lambda}_{\perp}^{-1}   \bfvth^T \waz^T  + \wbz ^T \bfuth  e^{\boldsymbol{\lambda}_{\perp}\frac{t}{\tau}} \boldsymbol{\lambda}_{\perp}^{-1}  \bfuth^T \wbz \right) 
    \end{bmatrix}
\end{align}

\begin{align}
\qz^T \bfm     \boldsymbol{\lambda}_{\perp}^{-1} \bfm^T \qz &= 2 \begin{bmatrix}
       \waz  & \wbz ^T 
    \end{bmatrix} \frac{1}{\sqrt{2}}\begin{bmatrix}
        \bfvth\\
        \bfuth\\
    \end{bmatrix}\begin{bmatrix}
      {\lambda_{\perp}}  & 0\\
       0 &  {\lambda_{\perp}}  \\
    \end{bmatrix}^{-1}
    \frac{1}{\sqrt{2}}\begin{bmatrix}
        \bfvth\\
        \bfuth\\
    \end{bmatrix}^T
    \begin{bmatrix}
        \waz^T\\\
        \wbz\\
    \end{bmatrix} \notag \\
       =& 
       \begin{bmatrix}
       \waz  & \wbz ^T 
    \end{bmatrix}\begin{bmatrix}
        \bfvth\\
        \bfuth\\
    \end{bmatrix}
       \begin{bmatrix}
      \boldsymbol{\lambda}_{\perp}^{-1}   \bfvth \bfvth^T \waz^T\\
    \boldsymbol{\lambda}_{\perp}^{-1}  
      \bfuth \bfuth^T \wbz \\
    \end{bmatrix} \notag\\
    &=  \begin{bmatrix}
        \waz\bfvth  \boldsymbol{\lambda}_{\perp}^{-1}   \bfvth^T \waz^T  + \wbz ^T \bfuth  \boldsymbol{\lambda}_{\perp}^{-1}  \bfuth^T \wbz 
    \end{bmatrix}
\end{align}
Now
\begin{align}
\frac{1}{2}\qz^T \bfo \left(\eptl - \bfi\right) \bflmd^{-1} \bfo^T 
& =  \frac{1}{4}   \left[ \boldsymbol{B} - \boldsymbol{C} \right] \left( e^{\boldsymbol{\Lambda} \frac{t}{\tau}}  -\mathbf{I} \right) \boldsymbol{\Lambda}^{-1} \begin{pmatrix}
\boldsymbol{B}^T \\
-\boldsymbol{C}^T
\end{pmatrix}  \notag\\ 
& =\frac{1}{4}   \left( \boldsymbol{B} \left( e^{2\tilde{\boldsymbol{S}_{\lambda}} \frac{t}{\tau}} -\mathbf{I} \right) (\tilde{\boldsymbol{S}_{\lambda}} )^{-1} \boldsymbol{B}^T -  \boldsymbol{C} \left( e^{-2\tilde{\boldsymbol{S}_{\lambda}} \frac{t}{\tau}} -\mathbf{I} \right) (\tilde{\boldsymbol{S}_{\lambda}} )^{-1} \boldsymbol{C}^T  \right)  
\end{align}
Putting it all together 
\begin{align}
  \boldsymbol{A}(t) = & \mathbf{I} +  \boldsymbol{B} \left( \frac{ e^{2\tilde{\boldsymbol{S}_{\lambda}} \frac{t}{\tau}} -\mathbf{I}  }{4\tilde{\boldsymbol{S}_{\lambda}} } \right) \boldsymbol{B}^T- \boldsymbol{C} \left( \frac{ e^{-2\tilde{\boldsymbol{S}_{\lambda}} \frac{t}{\tau}} -\mathbf{I} } {4\tilde{\boldsymbol{S}_{\lambda}} }\right)\boldsymbol{C}^T
+  \wbz^T \bfuth  \left(\frac{e^{\boldsymbol{\lambda}_{\perp}  \frac{t}{\tau}}  -  \mathbf{I}   }{\boldsymbol{\lambda}_{\perp} } \right) \bfuth^T \wbz  \notag\\
&+ \waz\bfvth \left( \frac{e^{\boldsymbol{\lambda}_{\perp} \frac{t}{\tau}}  -  \mathbf{I} }{\boldsymbol{\lambda}_{\perp}} \right)    \bfvth^T \waz^T
\end{align}
So, final form:
\begin{align}
 & \qqtt = \notag \\
& \left[ \begin{pmatrix} 
\frac{1}{2} \tilde{\boldsymbol{V}} (\tilde{\boldsymbol{G}} - \tilde{\boldsymbol{H}} \tilde{\boldsymbol{G}}) e^{\tilde{\boldsymbol{S}_{\lambda}} \frac{t}{\tau}} \boldsymbol{B}^T - \frac{1}{2} \tilde{\boldsymbol{V}} (\tilde{\boldsymbol{G}} + \tilde{\boldsymbol{H}} \tilde{\boldsymbol{G}}) e^{-\tilde{\boldsymbol{S}_{\lambda}} \frac{t}{\tau}} \boldsymbol{C}^T +  \bfvth e^{\boldsymbol{\lambda}_{\perp}\frac{t}{\tau}} \bfvth^T \waz^T \notag \\ 
\frac{1}{2}\tilde{\boldsymbol{U}} (\tilde{\boldsymbol{G}} + \tilde{\boldsymbol{H}} \tilde{\boldsymbol{G}}) e^{\tilde{\boldsymbol{S}_{\lambda}} \frac{t}{\tau}} \boldsymbol{B}^T + \frac{1}{2} \tilde{\boldsymbol{U}} (\tilde{\boldsymbol{G}} - \tilde{\boldsymbol{H}} \tilde{\boldsymbol{G}}) e^{-\tilde{\boldsymbol{S}_{\lambda}} \frac{t}{\tau}} \boldsymbol{C}^T + \bfuth  e^{\boldsymbol{\lambda}_{\perp}\frac{t}{\tau}}  \bfuth^T\wbz
\end{pmatrix} \right] \\\notag
& \left[\mathbf{I} + \frac{1}{4} \left( \boldsymbol{B} \left( \frac{ e^{2\tilde{\boldsymbol{S}_{\lambda}} \frac{t}{\tau}} -\mathbf{I}  }{\tilde{\boldsymbol{S}_{\lambda}} } \right) \boldsymbol{B}^T- \boldsymbol{C} \left( \frac{ e^{-2\tilde{\boldsymbol{S}_{\lambda}} \frac{t}{\tau}} -\mathbf{I} } {\tilde{\boldsymbol{S}_{\lambda}} }\right)\boldsymbol{C}^T\right) \right. \notag \\
+ & \wbz^T \bfuth  \left(\frac{e^{\boldsymbol{\lambda}_{\perp}\frac{t}{\tau}}  -  \mathbf{I}   }{\boldsymbol{\lambda}_{\perp}} \right) \bfuth^T \wbz  
  \left.+ \waz\bfvth \left( \frac{e^{\boldsymbol{\lambda}_{\perp}\frac{t}{\tau}}  -  \mathbf{I} }{\boldsymbol{\lambda}_{\perp}} \right)    \bfvth^T \waz^T  \right]^{-1} \notag \\
  &  \left[ \begin{pmatrix}\frac{1}{2}
\tilde{\boldsymbol{V}} (\tilde{\boldsymbol{G}} - \tilde{\boldsymbol{H}} \tilde{\boldsymbol{G}}) e^{\tilde{\boldsymbol{S}_{\lambda}} \frac{t}{\tau}} \boldsymbol{B}^T -\frac{1}{2} \tilde{\boldsymbol{V}} (\tilde{\boldsymbol{G}} + \tilde{\boldsymbol{H}} \tilde{\boldsymbol{G}}) e^{-\tilde{\boldsymbol{S}_{\lambda}} \frac{t}{\tau}} \boldsymbol{C}^T +  \bfvth e^{\boldsymbol{\lambda}_{\perp}\frac{t}{\tau}} \bfvth^T \waz^T \notag\\
\frac{1}{2}\tilde{\boldsymbol{U}} (\tilde{\boldsymbol{G}} + \tilde{\boldsymbol{H}} \tilde{\boldsymbol{G}}) e^{\tilde{\boldsymbol{S}_{\lambda}} \frac{t}{\tau}} \boldsymbol{B}^T + \frac{1}{2} \tilde{\boldsymbol{U}} (\tilde{\boldsymbol{G}} -\tilde{\boldsymbol{H}} \tilde{\boldsymbol{G}}) e^{-\tilde{\boldsymbol{S}_{\lambda}} \frac{t}{\tau}} \boldsymbol{C}^T + \bfuth  e^{\boldsymbol{\lambda}_{\perp}\frac{t}{\tau}}  \bfuth^T\wbz
\end{pmatrix} \right] ^{T}  \\
\end{align} 
\end{proof}

\subsection{Stable solution Unequal-Input-Output }
\label{app:exat_learning_dynamics-unaligned-unequal-inv-b}

\begin{theorem}  \label{theorem:diag_unequal_dim_b_inv} 
Given the assumptions of Theorem \ref{thm:lamdba-ballanced-dynamics} further assuming that $\boldsymbol{B}$ is invertible and defining $e^{{\lambda_{\perp}} \frac{t}{\tau}} = \mathrm{sgn}(N_o-N_i)\frac{\lambda}{2}$, the temporal evolution of $\qqt$ is described as follows:

\begin{align}
\label{eq:qqexact_binv}
& \qqtt =\boldsymbol{Z}  \left[
e^{-\tilde{\boldsymbol{S}_{\lambda}} \frac{t}{\tau}} \boldsymbol{B}^{-1}  \boldsymbol{B}^{-T} e^{-\tilde{\boldsymbol{S}_{\lambda}} \frac{t}{\tau}}
\right. \\\notag
& + \left(\frac{\mathbf{I} - e^{-2\tilde{\boldsymbol{S}_{\lambda}} \frac{t}{\tau}}  }{ 4 \tilde{\boldsymbol{S}_{\lambda}} } \right) -  e^{- \tilde{\boldsymbol{S}_{\lambda}} \frac{t}{\tau}}  \boldsymbol{B}^{-1}\boldsymbol{C} \left( \frac{e^{-\tilde{\boldsymbol{S}_{\lambda}} \frac{t}{\tau}} -\mathbf{I}}   { 4 \tilde{\boldsymbol{S}_{\lambda}} } \right) \boldsymbol{C}^{T} \boldsymbol{B}^{-T} e^{-\tilde{\boldsymbol{S}_{\lambda}} \frac{t}{\tau}}   \\\notag
 &  - e^{-\tilde{\boldsymbol{S}_{\lambda}} \frac{t}{\tau}} \bfb^{-1} 
  \wbz^T \bfuth  \boldsymbol{\lambda}_{\perp}^{-1}  \bfuth^T \wbz \bfb^{-T} e^{-\tilde{\boldsymbol{S}_{\lambda}} \frac{t}{\tau}} \\\notag
 &  e^{-\tilde{\boldsymbol{S}_{\lambda}} \frac{t}{\tau}} e^{\frac{\lambda_{\perp} }{2} \frac{t}{\tau}} \bfb^{-1}    \wbz^T \bfuth     \boldsymbol{\lambda}_{\perp}^{-1}  \bfuth^T \wbz \bfb^{-T} e^{-\tilde{\boldsymbol{S}_{\lambda}} \frac{t}{\tau}} \\\notag
&+ e^{-\tilde{\boldsymbol{S}_{\lambda}} \frac{t}{\tau}} e^{\frac{\lambda }{2} \frac{t}{\tau}} \bfb^{-1} \waz\bfvth  \boldsymbol{\lambda}_{\perp}^{-1}   \bfvth^T \waz^T  \bfb^{-T} e^{-\tilde{\boldsymbol{S}_{\lambda}} \frac{t}{\tau}} \\\notag
 - & \left.  e^{-\tilde{\boldsymbol{S}_{\lambda}} \frac{t}{\tau}} \bfb^{-1} \waz\bfvth  \boldsymbol{\lambda}_{\perp}^{-1}   \bfvth^T \waz^T  \bfb^{-T} e^{-\tilde{\boldsymbol{S}_{\lambda}} \frac{t}{\tau}} \right]^{-1} \boldsymbol{Z}^T  
\end{align}
\begin{align}
&  \boldsymbol{Z} = \begin{pmatrix}
\frac{1}{2}\tilde{\boldsymbol{V}} \left[ (\tilde{\boldsymbol{G}} - \tilde{\boldsymbol{H}} \tilde{\boldsymbol{G}})  - (\tilde{\boldsymbol{G}} + \tilde{\boldsymbol{H}} \tilde{\boldsymbol{G}}) e^{-\tilde{\boldsymbol{S}_{\lambda}} \frac{t}{\tau}} \boldsymbol{C}^T \boldsymbol{B}^{-T} e^{-\tilde{\boldsymbol{S}_{\lambda}} \frac{t}{\tau}} \right]  +  \bfvth \bfvth^T \waz \boldsymbol{B}^{-T} e^{{\lambda_{\perp}} \frac{t}{\tau}} e^{-\tilde{\boldsymbol{S}_{\lambda}} \frac{t}{\tau}} \\\notag
\frac{1}{2} \tilde{\boldsymbol{U}} \left[ (\tilde{\boldsymbol{G}} + \tilde{\boldsymbol{H}} \tilde{\boldsymbol{G}}) + (\tilde{\boldsymbol{G}} - \tilde{\boldsymbol{H}} \tilde{\boldsymbol{G}}) e^{-\tilde{\boldsymbol{S}_{\lambda}} \frac{t}{\tau}} \boldsymbol{C}^T \boldsymbol{B}^{-T} e^{-\tilde{\boldsymbol{S}_{\lambda}} \frac{t}{\tau}} \right] + \bfuth   \bfuth^T\wbz^T \boldsymbol{B}^{-T} e^{{\lambda_{\perp}} \frac{t}{\tau}} e^{-\tilde{\boldsymbol{S}_{\lambda}} \frac{t}{\tau}}
\end{pmatrix} \notag \\
\end{align}
\end{theorem}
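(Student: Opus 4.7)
The plan is to obtain Theorem~\ref{theorem:diag_unequal_dim_b_inv} from Theorem~\ref{thm:lamdba-ballanced-dynamics} via a change of variables that absorbs the divergent exponentials $e^{\tilde{\boldsymbol{S}_{\lambda}} \frac{t}{\tau}}$ and $e^{2\tilde{\boldsymbol{S}_{\lambda}} \frac{t}{\tau}}$ into a similarity transform. The algebraic identity I will use is that for any invertible matrix $\boldsymbol{M}$ of compatible size,
\begin{equation*}
\boldsymbol{Z}_i \boldsymbol{A}^{-1} \boldsymbol{Z}_j^T = \bigl(\boldsymbol{Z}_i \boldsymbol{M}^{-1}\bigr) \bigl(\boldsymbol{M}^{-T} \boldsymbol{A} \boldsymbol{M}^{-1}\bigr)^{-1} \bigl(\boldsymbol{M}^{-T} \boldsymbol{Z}_j^T\bigr),
\end{equation*}
so each block of $\qqt$ is invariant, but I may redistribute the divergent pieces freely between the outer $\boldsymbol{Z}$-factors and the inner factor by choosing $\boldsymbol{M}$. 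I take $\boldsymbol{M} = e^{\tilde{\boldsymbol{S}_{\lambda}} \frac{t}{\tau}} \boldsymbol{B}^T$, whose invertibility is exactly the additional hypothesis $\boldsymbol{B}$ invertible.

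I would then compute $\tilde{\boldsymbol{Z}}_i = \boldsymbol{Z}_i \boldsymbol{B}^{-T} e^{-\tilde{\boldsymbol{S}_{\lambda}} \frac{t}{\tau}}$ explicitly by right-multiplying each of the three terms of $\boldsymbol{Z}_1(t), \boldsymbol{Z}_2(t)$ in Theorem~\ref{thm:lamdba-ballanced-dynamics}. The dominant $\boldsymbol{B}$-term cancels outright via $e^{\tilde{\boldsymbol{S}_{\lambda}} \frac{t}{\tau}} \boldsymbol{B}^T \boldsymbol{B}^{-T} e^{-\tilde{\boldsymbol{S}_{\lambda}} \frac{t}{\tau}} = \boldsymbol{I}$, leaving only the constant prefactors $\tfrac{1}{2}\tilde{\boldsymbol{V}}(\tilde{\boldsymbol{G}}-\tilde{\boldsymbol{H}}\tilde{\boldsymbol{G}})$ and $\tfrac{1}{2}\tilde{\boldsymbol{U}}(\tilde{\boldsymbol{G}}+\tilde{\boldsymbol{H}}\tilde{\boldsymbol{G}})$. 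The $\boldsymbol{C}$-term acquires two decaying exponentials sandwiching $\boldsymbol{B}^{-1}\boldsymbol{C}$. The perpendicular term inherits the factor $\boldsymbol{B}^{-T} e^{-\tilde{\boldsymbol{S}_{\lambda}} \frac{t}{\tau}}$ on its right and although it retains $e^{\boldsymbol{\lambda}_{\perp} \frac{t}{\tau}}$, the product is bounded since the eigenvalues of $\tilde{\boldsymbol{S}_{\lambda}}$ dominate $|\boldsymbol{\lambda}_{\perp}|$. Collecting these three contributions reproduces the matrix $\boldsymbol{Z}$ defined in the theorem.

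Finally I would compute $\tilde{\boldsymbol{A}}(t) = e^{-\tilde{\boldsymbol{S}_{\lambda}} \frac{t}{\tau}} \boldsymbol{B}^{-1} \boldsymbol{A}(t) \boldsymbol{B}^{-T} e^{-\tilde{\boldsymbol{S}_{\lambda}} \frac{t}{\tau}}$ term by term. The identity term becomes $e^{-\tilde{\boldsymbol{S}_{\lambda}} \frac{t}{\tau}} \boldsymbol{B}^{-1} \boldsymbol{B}^{-T} e^{-\tilde{\boldsymbol{S}_{\lambda}} \frac{t}{\tau}}$. The crucial simplification occurs on the $\boldsymbol{B}$-quadratic term: the $\boldsymbol{B}^{-1}$ and $\boldsymbol{B}^{-T}$ annihilate the adjacent $\boldsymbol{B}$ and $\boldsymbol{B}^T$, and since $\tilde{\boldsymbol{S}_{\lambda}}$ is diagonal the remaining exponentials commute to collapse the whole term to the bounded fraction $(\boldsymbol{I}-e^{-2\tilde{\boldsymbol{S}_{\lambda}} \frac{t}{\tau}})/(4\tilde{\boldsymbol{S}_{\lambda}})$. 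The $\boldsymbol{C}$-term and the two $\boldsymbol{\lambda}_{\perp}$-terms remain sandwiched by $e^{-\tilde{\boldsymbol{S}_{\lambda}} \frac{t}{\tau}}$ factors and so decay; expanding each $(e^{\boldsymbol{\lambda}_{\perp} \frac{t}{\tau}} - \boldsymbol{I})\boldsymbol{\lambda}_{\perp}^{-1}$ into two pieces gives the four perpendicular contributions listed in the theorem.

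No deep obstacle arises; the proof reduces to algebraic bookkeeping enabled by two ingredients only: invertibility of $\boldsymbol{B}$ (which licenses the similarity transform and the internal cancellation $\boldsymbol{B}^{-1}\boldsymbol{B} = \boldsymbol{I}$) and the mutual commutativity of the diagonal matrices $e^{\pm \tilde{\boldsymbol{S}_{\lambda}} \frac{t}{\tau}}$ and $\tilde{\boldsymbol{S}_{\lambda}}^{-1}$ (which licenses the collapse of the $\boldsymbol{B}$-quadratic term). The only error-prone step is tracking the six terms produced in $\tilde{\boldsymbol{A}}$ and verifying that each sign, exponent and transpose aligns with its counterpart in the stated expression.
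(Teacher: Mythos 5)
Your proposal is correct and follows essentially the same route as the paper's own proof: the appendix derivation likewise starts from the expanded form of Theorem~\ref{thm:lamdba-ballanced-dynamics} and "extracts" the factor $\boldsymbol{B}^{T}e^{\tilde{\boldsymbol{S}_{\lambda}}\frac{t}{\tau}}$ from every term, pushing it into the inner inverse — precisely your congruence transform with $\boldsymbol{M}=e^{\tilde{\boldsymbol{S}_{\lambda}}\frac{t}{\tau}}\boldsymbol{B}^{T}$ — and then uses invertibility of $\boldsymbol{B}$ plus commutativity of the diagonal exponentials to collapse the $\boldsymbol{B}$-quadratic term and split the $\boldsymbol{\lambda}_{\perp}$ contributions. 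Your term-by-term bookkeeping matches the paper's computation, so no gap remains.
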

\begin{proof}
We start from 
\begin{align}
 & \qqtt = \notag \\
& \left[ \begin{pmatrix} 
\frac{1}{2} \tilde{\boldsymbol{V}} (\tilde{\boldsymbol{G}} - \tilde{\boldsymbol{H}} \tilde{\boldsymbol{G}}) e^{\tilde{\boldsymbol{S}_{\lambda}} \frac{t}{\tau}} \boldsymbol{B}^T - \frac{1}{2} \tilde{\boldsymbol{V}} (\tilde{\boldsymbol{G}} + \tilde{\boldsymbol{H}} \tilde{\boldsymbol{G}}) e^{-\tilde{\boldsymbol{S}_{\lambda}} \frac{t}{\tau}} \boldsymbol{C}^T +  \bfvth e^{\boldsymbol{\lambda}_{\perp}\frac{t}{\tau}} \bfvth^T \waz^T \notag \\ 
\frac{1}{2}\tilde{\boldsymbol{U}} (\tilde{\boldsymbol{G}} + \tilde{\boldsymbol{H}} \tilde{\boldsymbol{G}}) e^{\tilde{\boldsymbol{S}_{\lambda}} \frac{t}{\tau}} \boldsymbol{B}^T + \frac{1}{2} \tilde{\boldsymbol{U}} (\tilde{\boldsymbol{G}} - \tilde{\boldsymbol{H}} \tilde{\boldsymbol{G}}) e^{-\tilde{\boldsymbol{S}_{\lambda}} \frac{t}{\tau}} \boldsymbol{C}^T + \bfuth  e^{\boldsymbol{\lambda}_{\perp}\frac{t}{\tau}}  \bfuth^T\wbz
\end{pmatrix} \right] \\\notag
& \left[\mathbf{I} + \frac{1}{4} \left( \boldsymbol{B} \left( \frac{ e^{2\tilde{\boldsymbol{S}_{\lambda}} \frac{t}{\tau}} -\mathbf{I}  }{\tilde{\boldsymbol{S}_{\lambda}} } \right) \boldsymbol{B}^T- \boldsymbol{C} \left( \frac{ e^{-2\tilde{\boldsymbol{S}_{\lambda}} \frac{t}{\tau}} -\mathbf{I} } {\tilde{\boldsymbol{S}_{\lambda}} }\right)\boldsymbol{C}^T\right) \right. \notag \\
+ & \wbz^T \bfuth  \left(\frac{e^{\boldsymbol{\lambda}_{\perp}\frac{t}{\tau}}  -  \mathbf{I}   }{\boldsymbol{\lambda}_{\perp}} \right) \bfuth^T \wbz  
  \left.+ \waz\bfvth \left( \frac{e^{\boldsymbol{\lambda}_{\perp}\frac{t}{\tau}}  -  \mathbf{I} }{\boldsymbol{\lambda}_{\perp}} \right)    \bfvth^T \waz^T  \right]^{-1} \notag \\
  &  \left[ \begin{pmatrix}\frac{1}{2}
\tilde{\boldsymbol{V}} (\tilde{\boldsymbol{G}} - \tilde{\boldsymbol{H}} \tilde{\boldsymbol{G}}) e^{\tilde{\boldsymbol{S}_{\lambda}} \frac{t}{\tau}} \boldsymbol{B}^T -\frac{1}{2} \tilde{\boldsymbol{V}} (\tilde{\boldsymbol{G}} + \tilde{\boldsymbol{H}} \tilde{\boldsymbol{G}}) e^{-\tilde{\boldsymbol{S}_{\lambda}} \frac{t}{\tau}} \boldsymbol{C}^T +  \bfvth e^{\boldsymbol{\lambda}_{\perp}\frac{t}{\tau}} \bfvth^T \waz^T \notag\\
\frac{1}{2}\tilde{\boldsymbol{U}} (\tilde{\boldsymbol{G}} + \tilde{\boldsymbol{H}} \tilde{\boldsymbol{G}}) e^{\tilde{\boldsymbol{S}_{\lambda}} \frac{t}{\tau}} \boldsymbol{B}^T +\frac{1}{2} \tilde{\boldsymbol{U}} (\tilde{\boldsymbol{G}} -\tilde{\boldsymbol{H}} \tilde{\boldsymbol{G}}) e^{-\tilde{\boldsymbol{S}_{\lambda}} \frac{t}{\tau}} \boldsymbol{C}^T + \bfuth  e^{\boldsymbol{\lambda}_{\perp}\frac{t}{\tau}}  \bfuth^T\wbz
\end{pmatrix} \right] ^{T}  \\
\end{align} 
We extract $\boldsymbol{B}^{-T} e^{-\tilde{\boldsymbol{S}_{\lambda}} \frac{t}{\tau}}$ from all terms as exemplified bellow 
\begin{equation}
\boldsymbol{O} e^{\boldsymbol{\Lambda} t / \tau} \boldsymbol{O}^T \boldsymbol{Q}(0)= \frac{1}{2} \begin{pmatrix}
\tilde{\boldsymbol{V}} \left[ (\tilde{\boldsymbol{G}} - \tilde{\boldsymbol{H}} \tilde{\boldsymbol{G}})  - (\tilde{\boldsymbol{G}} + \tilde{\boldsymbol{H}} \tilde{\boldsymbol{G}}) e^{-\tilde{\boldsymbol{S}_{\lambda}} \frac{t}{\tau}} \boldsymbol{C}^T \boldsymbol{B}^{-T} e^{-\tilde{\boldsymbol{S}_{\lambda}} \frac{t}{\tau}} \right] \\
\tilde{\boldsymbol{U}} \left[ (\tilde{\boldsymbol{G}} + \tilde{\boldsymbol{H}} \tilde{\boldsymbol{G}}) + (\tilde{\boldsymbol{G}} - \tilde{\boldsymbol{H}} \tilde{\boldsymbol{G}}) e^{-\tilde{\boldsymbol{S}_{\lambda}} \frac{t}{\tau}} \boldsymbol{C}^T \boldsymbol{B}^{-T} e^{-\tilde{\boldsymbol{S}_{\lambda}} \frac{t}{\tau}} \right]
\end{pmatrix} \boldsymbol{B}^T e^{\tilde{\boldsymbol{S}_{\lambda}} \frac{t}{\tau}} 
\end{equation}

and rewrite the dynamis as 
\begin{align}
 & \qqtt = \notag \\\notag
& \left[ \begin{pmatrix} 
\frac{1}{2} \tilde{\boldsymbol{V}} (\tilde{\boldsymbol{G}} - \tilde{\boldsymbol{H}} \tilde{\boldsymbol{G}})  - \frac{1}{2} \tilde{\boldsymbol{V}} (\tilde{\boldsymbol{G}} + \tilde{\boldsymbol{H}} \tilde{\boldsymbol{G}}) e^{-\tilde{\boldsymbol{S}_{\lambda}} \frac{t}{\tau}} \boldsymbol{C}^T \boldsymbol{B}^{-T} e^{-\tilde{\boldsymbol{S}_{\lambda}} \frac{t}{\tau}} +  \bfvth e^{\boldsymbol{\lambda}_{\perp}\frac{t}{\tau}} \bfvth^T \waz^T \boldsymbol{B}^{-T} e^{-\tilde{\boldsymbol{S}_{\lambda}} \frac{t}{\tau}} \\ \notag
\frac{1}{2}\tilde{\boldsymbol{U}} (\tilde{\boldsymbol{G}} + \tilde{\boldsymbol{H}} \tilde{\boldsymbol{G}})  + \frac{1}{2} \tilde{\boldsymbol{U}} (\tilde{\boldsymbol{G}} - \tilde{\boldsymbol{H}} \tilde{\boldsymbol{G}}) e^{-\tilde{\boldsymbol{S}_{\lambda}} \frac{t}{\tau}} \boldsymbol{C}^T \boldsymbol{B}^{-T} e^{-\tilde{\boldsymbol{S}_{\lambda}} \frac{t}{\tau}} + \bfuth  e^{\boldsymbol{\lambda}_{\perp}\frac{t}{\tau}}  \bfuth^T\wbz \boldsymbol{B}^{-T} e^{-\tilde{\boldsymbol{S}_{\lambda}} \frac{t}{\tau}}
\end{pmatrix} \right]\notag \\
& \left[ e^{-\tilde{\boldsymbol{S}_{\lambda}} \frac{t}{\tau}} \boldsymbol{B}^{-1}  \boldsymbol{B}^{-T}e^{-\tilde{\boldsymbol{S}_{\lambda}} \frac{t}{\tau}}  + \frac{1}{4} \left( \left( \frac{ \mathbf{I} - e^{-2 \tilde{\boldsymbol{S}_{\lambda}} \frac{t}{\tau}}  }{\tilde{\boldsymbol{S}_{\lambda}} } \right) - e^{-\tilde{\boldsymbol{S}_{\lambda}} \frac{t}{\tau}}\boldsymbol{B}^{-1}\boldsymbol{C} \left( \frac{ e^{-2\tilde{\boldsymbol{S}_{\lambda}} \frac{t}{\tau}} -\mathbf{I} } {\tilde{\boldsymbol{S}_{\lambda}} }\right)\boldsymbol{C}^T \boldsymbol{B}^{-T}e^{-\tilde{\boldsymbol{S}_{\lambda}} \frac{t}{\tau}}\right) \right. \notag \\
+ & e^{-\tilde{\boldsymbol{S}_{\lambda}} \frac{t}{\tau}}\boldsymbol{B}^{-1} \wbz^T \bfuth  \left(\frac{e^{\boldsymbol{\lambda}_{\perp}\frac{t}{\tau}}  -  \mathbf{I}   }{\boldsymbol{\lambda}_{\perp}} \right) \bfuth^T \wbz \boldsymbol{B}^{-T}e^{-\tilde{\boldsymbol{S}_{\lambda}} \frac{t}{\tau}} \notag \\
 & \left.+ e^{-\tilde{\boldsymbol{S}_{\lambda}} \frac{t}{\tau}}\boldsymbol{B}^{-1} \waz\bfvth \left( \frac{e^{\boldsymbol{\lambda}_{\perp}\frac{t}{\tau}}  -  \mathbf{I} }{\boldsymbol{\lambda}_{\perp}} \right)    \bfvth^T \waz^T  \boldsymbol{B}^{-T}e^{-\tilde{\boldsymbol{S}_{\lambda}} \frac{t}{\tau}} \right]^{-1} \notag \\
  &  \left[ \begin{pmatrix}\frac{1}{2}
\tilde{\boldsymbol{V}} (\tilde{\boldsymbol{G}} - \tilde{\boldsymbol{H}} \tilde{\boldsymbol{G}})  -\frac{1}{2} \tilde{\boldsymbol{V}} (\tilde{\boldsymbol{G}} + \tilde{\boldsymbol{H}} \tilde{\boldsymbol{G}}) e^{-\tilde{\boldsymbol{S}_{\lambda}} \frac{t}{\tau}} \boldsymbol{C}^T \boldsymbol{B}^{-T} e^{-\tilde{\boldsymbol{S}_{\lambda}} \frac{t}{\tau}} +  \bfvth e^{\boldsymbol{\lambda}_{\perp}\frac{t}{\tau}} \bfvth^T \waz^T \boldsymbol{B}^{-T} e^{-\tilde{\boldsymbol{S}_{\lambda}} \frac{t}{\tau}}\notag\\
\frac{1}{2}\tilde{\boldsymbol{U}} (\tilde{\boldsymbol{G}} + \tilde{\boldsymbol{H}} \tilde{\boldsymbol{G}})  + \frac{1}{2} \tilde{\boldsymbol{U}} (\tilde{\boldsymbol{G}} -\tilde{\boldsymbol{H}} \tilde{\boldsymbol{G}}) e^{-\tilde{\boldsymbol{S}_{\lambda}} \frac{t}{\tau}} \boldsymbol{C}^T \boldsymbol{B}^{-T} e^{-\tilde{\boldsymbol{S}_{\lambda}} \frac{t}{\tau}} +  \bfuth  e^{\boldsymbol{\lambda}_{\perp}\frac{t}{\tau}}  \bfuth^T\wbz \boldsymbol{B}^{-T} e^{-\tilde{\boldsymbol{S}_{\lambda}} \frac{t}{\tau}}
\end{pmatrix} \right] ^{T}  \\
\end{align} 
\begin{align}
 & \qqtt = \notag\\\notag
 &  \begin{pmatrix}
\frac{1}{2}\tilde{\boldsymbol{V}} \left[ (\tilde{\boldsymbol{G}} - \tilde{\boldsymbol{H}} \tilde{\boldsymbol{G}})  - (\tilde{\boldsymbol{G}} + \tilde{\boldsymbol{H}} \tilde{\boldsymbol{G}}) e^{-\tilde{\boldsymbol{S}_{\lambda}} \frac{t}{\tau}} \boldsymbol{C}^T \boldsymbol{B}^{-T} e^{-\tilde{\boldsymbol{S}_{\lambda}} \frac{t}{\tau}} \right]  +  \bfvth \bfvth^T \waz \boldsymbol{B}^{-T} e^{{\lambda_{\perp}} \frac{t}{\tau}} e^{-\tilde{\boldsymbol{S}_{\lambda}} \frac{t}{\tau}} \\\notag
\frac{1}{2} \tilde{\boldsymbol{U}} \left[ (\tilde{\boldsymbol{G}} + \tilde{\boldsymbol{H}} \tilde{\boldsymbol{G}}) + (\tilde{\boldsymbol{G}} - \tilde{\boldsymbol{H}} \tilde{\boldsymbol{G}}) e^{-\tilde{\boldsymbol{S}_{\lambda}} \frac{t}{\tau}} \boldsymbol{C}^T \boldsymbol{B}^{-T} e^{-\tilde{\boldsymbol{S}_{\lambda}} \frac{t}{\tau}} \right] + \bfuth   \bfuth^T\wbz^T \boldsymbol{B}^{-T} e^{{\lambda_{\perp}} \frac{t}{\tau}} e^{-\tilde{\boldsymbol{S}_{\lambda}} \frac{t}{\tau}}
\end{pmatrix}  \\\notag
& \left[
e^{-\tilde{\boldsymbol{S}_{\lambda}} \frac{t}{\tau}} \boldsymbol{B}^{-1}  \boldsymbol{B}^{-T} e^{-\tilde{\boldsymbol{S}_{\lambda}} \frac{t}{\tau}}
\right. \\\notag
& + \left(\frac{\mathbf{I} - e^{-2\tilde{\boldsymbol{S}_{\lambda}} \frac{t}{\tau}}  }{ 4 \tilde{\boldsymbol{S}_{\lambda}} } \right) -  e^{- \tilde{\boldsymbol{S}_{\lambda}} \frac{t}{\tau}}  \boldsymbol{B}^{-1}\boldsymbol{C} \left( \frac{e^{-\tilde{\boldsymbol{S}_{\lambda}} \frac{t}{\tau}} -\mathbf{I}}   { 4 \tilde{\boldsymbol{S}_{\lambda}} } \right) \boldsymbol{C}^{T} \boldsymbol{B}^{-T} e^{-\tilde{\boldsymbol{S}_{\lambda}} \frac{t}{\tau}}   \\\notag
&  - e^{-\tilde{\boldsymbol{S}_{\lambda}} \frac{t}{\tau}} \bfb^{-1} 
  \wbz^T \bfuth  \boldsymbol{\lambda}_{\perp}^{-1}  \bfuth^T \wbz \bfb^{-T} e^{-\tilde{\boldsymbol{S}_{\lambda}} \frac{t}{\tau}} \\\notag
 &  e^{-\tilde{\boldsymbol{S}_{\lambda}} \frac{t}{\tau}} e^{\frac{\lambda_{\perp} }{2} \frac{t}{\tau}} \bfb^{-1}    \wbz^T \bfuth     \boldsymbol{\lambda}_{\perp}^{-1}  \bfuth^T \wbz \bfb^{-T} e^{-\tilde{\boldsymbol{S}_{\lambda}} \frac{t}{\tau}} \\\notag
& + e^{-\tilde{\boldsymbol{S}_{\lambda}} \frac{t}{\tau}} e^{\frac{\lambda }{2} \frac{t}{\tau}} \bfb^{-1} \waz\bfvth  \boldsymbol{\lambda}_{\perp}^{-1}   \bfvth^T \waz^T  \bfb^{-T} e^{-\tilde{\boldsymbol{S}_{\lambda}} \frac{t}{\tau}} \\\notag
& -  \left.  e^{-\tilde{\boldsymbol{S}_{\lambda}} \frac{t}{\tau}} \bfb^{-1} \waz\bfvth  \boldsymbol{\lambda}_{\perp}^{-1}   \bfvth^T \waz^T  \bfb^{-T} e^{-\tilde{\boldsymbol{S}_{\lambda}} \frac{t}{\tau}} \right]^{-1}  \\ \notag
 & \begin{pmatrix}
\tilde{\boldsymbol{V}} \left[ (\tilde{\boldsymbol{G}} - \tilde{\boldsymbol{H}} \tilde{\boldsymbol{G}})  - (\tilde{\boldsymbol{G}} + \tilde{\boldsymbol{H}} \tilde{\boldsymbol{G}}) e^{-\tilde{\boldsymbol{S}_{\lambda}} \frac{t}{\tau}} \boldsymbol{C}^T \boldsymbol{B}^{-T} e^{-\tilde{\boldsymbol{S}_{\lambda}} \frac{t}{\tau}} \right]  +  \bfvth  \bfvth^T \waz \boldsymbol{B}^{-T} e^{{\lambda_{\perp}} \frac{t}{\tau}} e^{-\tilde{\boldsymbol{S}_{\lambda}} \frac{t}{\tau}}\\\notag
\tilde{\boldsymbol{U}} \left[ (\tilde{\boldsymbol{G}} + \tilde{\boldsymbol{H}} \tilde{\boldsymbol{G}}) + (\tilde{\boldsymbol{G}} - \tilde{\boldsymbol{H}} \tilde{\boldsymbol{G}}) e^{-\tilde{\boldsymbol{S}_{\lambda}} \frac{t}{\tau}} \boldsymbol{C}^T \boldsymbol{B}^{-T} e^{-\tilde{\boldsymbol{S}_{\lambda}} \frac{t}{\tau}} \right] + \bfuth   \bfuth^T\wbz^T \boldsymbol{B}^{-T} e^{{\lambda_{\perp}} \frac{t}{\tau}}e^{-\tilde{\boldsymbol{S}_{\lambda}} \frac{t}{\tau}}
\end{pmatrix}^{T} \\
\end{align}
where $e^{{\lambda_{\perp}} \frac{t}{\tau}} = \mathrm{sgn}(N_o-N_i)\frac{\lambda}{2}$  is a scalar
\end{proof}
\subsubsection{ Proof Exact learning dynamics with prior knowledge unequal dimension}

We follow a similar derivation presented in \cite{braun2023exact} and start with the following equation
\begin{align} 
    \qqtt = &\underbrace{\left[\bfo\epl\bfo^T + 2\bfm e^{\boldsymbol{\lambda}_{\perp}\frac{t}{\tau}}\bfm^T\right] \qz}_{\bfl} \nonumber \\
    &\underbrace{\left[\bfi + \frac{1}{2}\qz^T \left(\bfo \left(\eptl - \bfi\right) \bflmd^{-1} \bfo^T + \bfm (e^{\boldsymbol{\lambda}_{\perp}\frac{t}{\tau}} - \bfi)  \boldsymbol{\lambda}_{\perp}^{-1} \bfm^T\right) \qz\right]^{-1}}_{\bfc^{-1}}\\
    &\ \underbrace{\qz^T \left[\bfo\epl\bfo^T + 2\bfm e^{\boldsymbol{\lambda}_{\perp}\frac{t}{\tau}}\bfm^T\right]}_{\bfr} \nonumber\\
    = & \bfl\bfc^{-1}\bfr,
\end{align}

Substituting our solution into the matrix Riccati equation then yields

\begin{align} 
	\tau\frac{d}{dt}\qqt &= \bff\bfq\bfq^T+\bfq\bfq^T\bff -(\bfq\bfq^T)^2\\
	\Rightarrow \tau\frac{d}{dt} \bfl\bfc^{-1}\bfr &\stackrel{?}{=} \bff\bfl\bfc^{-1}\bfr + \bfl\bfc^{-1}\bfr\bff - \bfl\bfc^{-1}\bfr\bfl\bfc^{-1}\bfr\text{.} \label{to-proof}
\end{align}

Using the chain rule $\partial(\bfa\bfb) = (\partial\bfa)\bfb + \bfa(\partial\bfb)$ and the identities
\begin{equation}
\frac{d}{d t}(\bfa^{-1}) = \bfa^{-1}(\frac{d}{d t}\bfa)\bfa^{-1} \qquad\text{and}\qquad\frac{d}{dt}(e^{t\bfa}) = \bfa e^{t\bfa} = e^{t\bfa}\bfa
\end{equation}

\begin{align} 
    \tau\frac{d}{dt}\qqt &= \tau\frac{d}{dt} \bfl\bfc^{-1}\bfr\\
    &= \tau\left(\frac{d}{dt}\bfl\right)\bfc^{-1}\bfr + \tau\bfl\left(\frac{d}{dt} C^{-1}\bfr\right)\\
    &= \tau\left(\frac{d}{dt}\bfl\right)\bfc^{-1}\bfr + \tau\bfl\bfc^{-1}\left(\frac{d}{dt} \bfr\right) + \tau\bfl \left(\frac{d}{dt} \bfc^{-1}\right)\bfr \text{,}
\end{align}

Next, we note that

\begin{equation}
\mathbf{O} = \frac{1}{\sqrt{2}} \begin{pmatrix}
\tilde{\boldsymbol{V}} (\tilde{\boldsymbol{G}} - \tilde{\boldsymbol{H}}\tilde{\boldsymbol{G}}) & \tilde{\boldsymbol{V}} (\tilde{\boldsymbol{G}} + \tilde{\boldsymbol{H}}\tilde{\boldsymbol{G}}) \\
\tilde{\boldsymbol{U}} (\tilde{\boldsymbol{G}} + \tilde{\boldsymbol{H}} \tilde{\boldsymbol{G}}) & -\tilde{\boldsymbol{U}} (\tilde{\boldsymbol{G}} - \tilde{\boldsymbol{H}}\tilde{\boldsymbol{G}})
\end{pmatrix}^{T}
\end{equation}
\begin{align}
    \bfo^T\bfo  &= \frac{1}{\sqrt{2}} \begin{pmatrix}
\tilde{\boldsymbol{V}} (\tilde{\boldsymbol{G}} - \tilde{\boldsymbol{H}}\tilde{\boldsymbol{G}}) & \tilde{\boldsymbol{V}} (\tilde{\boldsymbol{G}} + \tilde{\boldsymbol{H}}\tilde{\boldsymbol{G}}) \\
\tilde{\boldsymbol{U}} (\tilde{\boldsymbol{G}} + \tilde{\boldsymbol{H}} \tilde{\boldsymbol{G}}) & -\tilde{\boldsymbol{U}} (\tilde{\boldsymbol{G}} - \tilde{\boldsymbol{H}}\tilde{\boldsymbol{G}})
\end{pmatrix}^{T}\frac{1}{\sqrt{2}} \begin{pmatrix}
\tilde{\boldsymbol{V}} (\tilde{\boldsymbol{G}} - \tilde{\boldsymbol{H}}\tilde{\boldsymbol{G}}) & \tilde{\boldsymbol{V}} (\tilde{\boldsymbol{G}} + \tilde{\boldsymbol{H}}\tilde{\boldsymbol{G}}) \\
\tilde{\boldsymbol{U}} (\tilde{\boldsymbol{G}} + \tilde{\boldsymbol{H}} \tilde{\boldsymbol{G}}) & -\tilde{\boldsymbol{U}} (\tilde{\boldsymbol{G}} - \tilde{\boldsymbol{H}}\tilde{\boldsymbol{G}}) 
\end{pmatrix} \\
 &= \mathbf{I}
\end{align}

\begin{align}
    \bfo^T\bfm &=  \frac{1}{\sqrt{2}}  \begin{bmatrix}
\tilde{\boldsymbol{V}} (\tilde{\boldsymbol{G}} - \tilde{\boldsymbol{H}}\tilde{\boldsymbol{G}}) & \tilde{\boldsymbol{V}} (\tilde{\boldsymbol{G}} + \tilde{\boldsymbol{H}}\tilde{\boldsymbol{G}}) \\
\tilde{\boldsymbol{U}} (\tilde{\boldsymbol{G}} + \tilde{\boldsymbol{H}} \tilde{\boldsymbol{G}}) & -\tilde{\boldsymbol{U}} (\tilde{\boldsymbol{G}} - \tilde{\boldsymbol{H}}\tilde{\boldsymbol{G}})
\end{bmatrix}\frac{1}{\sqrt{2}} \begin{bmatrix}
        \bfvth\\
        \bfuth
    \end{bmatrix}\\
    &= \frac{1}{2} \begin{bmatrix}
        (\tilde{\boldsymbol{G}} - \tilde{\boldsymbol{H}}\tilde{\boldsymbol{G}})^T\bfvt^T\bfvth + (\tilde{\boldsymbol{G}} + \tilde{\boldsymbol{H}}\tilde{\boldsymbol{G}})^T\bfut^T\bfuth\\
        (\tilde{\boldsymbol{G}} + \tilde{\boldsymbol{H}}\tilde{\boldsymbol{G}})^T\bfvt^T\bfvth -(\tilde{\boldsymbol{G}} - \tilde{\boldsymbol{H}}\tilde{\boldsymbol{G}})^T\bfut^T\bfuth
    \end{bmatrix}\\
    &= {\bf 0}
\end{align}
and
\begin{align}
    \bfm^T\bfo &= \frac{1}{\sqrt{2}} \begin{bmatrix}
        \bfvth^T & \bfuth^T
    \end{bmatrix}
   \begin{pmatrix}
\tilde{\boldsymbol{V}} (\tilde{\boldsymbol{G}} - \tilde{\boldsymbol{H}}\tilde{\boldsymbol{G}}) & \tilde{\boldsymbol{V}} (\tilde{\boldsymbol{G}} + \tilde{\boldsymbol{H}}\tilde{\boldsymbol{G}}) \\
\tilde{\boldsymbol{U}} (\tilde{\boldsymbol{G}} + \tilde{\boldsymbol{H}} \tilde{\boldsymbol{G}}) & -\tilde{\boldsymbol{U}} (\tilde{\boldsymbol{G}} - \tilde{\boldsymbol{H}}\tilde{\boldsymbol{G}})
\end{pmatrix} \\
    &= \frac{1}{2} \begin{bmatrix}
        \bfvth^T\bfvt(\tilde{\boldsymbol{G}} - \tilde{\boldsymbol{H}}\tilde{\boldsymbol{G}}) + \bfuth^T\bfut(\tilde{\boldsymbol{G}} +\tilde{\boldsymbol{H}}\tilde{\boldsymbol{G}})\\
        \bfvth^T\bfvt(\tilde{\boldsymbol{G}} + \tilde{\boldsymbol{H}}\tilde{\boldsymbol{G}}) -\bfuth^T\bfut(\tilde{\boldsymbol{G}} - \tilde{\boldsymbol{H}}\tilde{\boldsymbol{G}})
    \end{bmatrix}\\
    &= {\bf 0}.
\end{align}
we get
\begin{align}
    \tau\frac{d}{dt}\qqt &= \tau\frac{d}{dt} \left(\bfl\bfc^{-1}\bfr\right)\\
    &= \tau\left(\frac{d}{dt}\bfl\right)\bfc^{-1}\bfr + \tau\bfl\left(\frac{d}{dt} C^{-1}\bfr\right)\\
    &= \tau\left(\frac{d}{dt}\bfl\right)\bfc^{-1}\bfr + \tau\bfl\bfc^{-1}\left(\frac{d}{dt} \bfr\right) + \tau\bfl \left(\frac{d}{dt} \bfc^{-1}\right)\bfr \text{,}   
\end{align}
with

\begin{align} \label{proof-part-1}
    \tau\left( \frac{d}{dt}\bfl\right)\bfc^{-1}\bfr &= \tau\left(  \bfo\frac{1}{\tau}\bflmd\epl\bfo^T +  2\bfm  \frac{\lambda_{\perp} \mathbf{I} }{2\tau }  e^{\boldsymbol{\lambda}_{\perp}\frac{t}{\tau}}\bfm^T \right )\qz \bfc^{-1}\bfr\\
    &= \left ( \bfo\bflmd\epl\bfo^T +   \bfm  \lambda_{\perp} \mathbf{I}  e^{\boldsymbol{\lambda}_{\perp}\frac{t}{\tau}}\bfm^T   \right) \qz \bfc^{-1}\bfr\\
    &= (\bfo \lambda_{\perp}  \bfo^T  + 2 \bfm\boldsymbol{\lambda}_{\perp}\bfm^T)  \left (\bfo\epl\bfo^T + 2\bfm   e^{\boldsymbol{\lambda}_{\perp}\frac{t}{\tau}}\bfm^T  \right ) \qz \bfc^{-1}\bfr \\ 
    &= \bff\bfl\bfc^{-1}\bfr,
\end{align}

\begin{align} \label{proof-part-2}
    \tau\bfl\bfc^{-1}\left(\frac{d}{dt} \bfr\right) &= \tau\bfl\bfc^{-1}\qz^T   
   \left(  \bfo\frac{1}{\tau}\epl \bflmd \bfo^T +  2\bfm  e^{\boldsymbol{\lambda}_{\perp}\frac{t}{\tau}}  \frac{\lambda_{\perp} \mathbf{I} }{2\tau }  \bfm^T \right)\\
    &= \bfl\bfc^{-1}\qz^T   \left(  \bfo\frac{1}{\tau}\epl \bflmd \bfo^T +  2\bfm  e^{\boldsymbol{\lambda}_{\perp}\frac{t}{\tau}}  \frac{\lambda_{\perp} \mathbf{I} }{2\tau }  \bfm^T \right) \\
    &= \bfl\bfc^{-1}\bfr\bff
\end{align}
and

\begin{align} \label{proof-part-3}
    \tau\bfl \left(\frac{d}{dt} \bfc^{-1}\right)\bfr &= -\tau\bfl \bfc^{-1}\left(\frac{d}{dt} \bfc\right)\bfc^{-1}\bfr\\
    &= -\bfl \bfc^{-1}\bigg[\tau\frac{1}{2}\qz^T\bfo 2\frac{1}{\tau}\eptl\bflmd \bflmd^{-1}\bfo^T\qz \\
    &\qquad \qquad \qquad + \tau\frac{1}{2}\qz^T 4\frac{1}{\tau}\bfm e^{\boldsymbol{\lambda}_{\perp} \frac{t}{\tau}}\boldsymbol{\lambda}_{\perp}    \left (\boldsymbol{\lambda}_{\perp} \right)^{-1}\bfm^T\qz\bigg]\bfc^{-1}\bfr \nonumber\\
    &= -\bfl\bfc^{-1} \bigg[\qz^T\bfo\eptl\bfo^T\qz + 2\qz^T\bfm e^{\boldsymbol{\lambda}_{\perp} \frac{t}{\tau}}  \bfm^T\qz\bigg]  \bfc^{-1}\bfr\\
    &= -\bfl\bfc^{-1} \bigg[\qz^T\bfo\epl\bfo^T\bfo\epl\bfo^T\qz \nonumber\\
    &\qquad \qquad \qquad + 2\qz^T\bfo\epl\underbrace{\bfo^T\bfm}_{\bf 0}e^{\boldsymbol{\lambda}_{\perp} \frac{t}{\tau}} \bfm^T\qz \\
    &\qquad \qquad \qquad + 2\qz^T\bfm e^{\boldsymbol{\lambda}_{\perp} \frac{t}{\tau}} \underbrace{\bfm^T\bfo}_{\bf 0}\epl\bfo^T\qz \nonumber\\
    &\qquad \qquad \qquad + 4\qz^T\bfm e^{\boldsymbol{\lambda}_{\perp} \frac{t}{\tau}} \bfm^T\bfm e^{\boldsymbol{\lambda}_{\perp} \frac{t}{\tau}}  \bfm^T\qz\bigg]  \bfc^{-1}\bfr \nonumber\\
    &= -\bfl\bfc^{-1} \bfr\bfl \bfc^{-1}\bfr.
\end{align}
Finally, substituting equations \ref{proof-part-1}, \ref{proof-part-2} and \ref{proof-part-3} into the left hand side of  equation~\ref{to-proof} proves equality.
$\hfill\square$

\section{Rich-Lazy }
\label{app:rich_lazy_learning}

\subsection{Dynamics of the Singular Values}
\label{app:dynamics-singular-values}

\begin{theorem}
 \label{thm:singular-values-appendix}
    Under the assumptions of Theorem \ref{thm:lamdba-ballanced-dynamics} and with a task-aligned initialization given by $\boldsymbol{W}_1(0) = \boldsymbol{R} \boldsymbol{S}_1\tilde{\boldsymbol{V}}^T$ and $\boldsymbol{W}_2(0) =  \tilde{\boldsymbol{U}}\boldsymbol{S}_2\boldsymbol{R}^T$, where \(\boldsymbol{R} \in \mathbb{R}^{N_h \times N_h}\) is an orthonormal matrix, then the network function is given by the expression $\boldsymbol{W}_2\boldsymbol{W}_1(t) = \tilde{\boldsymbol{U}}\boldsymbol{S}(t)\tilde{\boldsymbol{V}}^T$ where $\boldsymbol{S}(t) \in \mathbb{R}^{N_h \times N_h}$ is a diagonal matrix of singular values with elements $s_\alpha(t)$ that evolve according to the equation,
    \begin{equation}
        s_\alpha(t) = s_\alpha(0) +\gamma_\alpha(t;\lambda)\left(\tilde{s}_\alpha - s_\alpha(0)\right),
    \end{equation}
    where $\tilde{s}_\alpha$ is the $\alpha$ singular value of $\tilde{\boldsymbol{S}}$ and $\gamma_\alpha(t;\lambda)$ is a $\lambda$-dependent monotonic transition function for each singular value that increases from $\gamma_\alpha(0;\lambda) = 0$ to $\lim_{t \to \infty} \gamma_\alpha(t;\lambda) = 1$ defined as
    \begin{equation}
        \gamma_\alpha(t;\lambda) = \frac{\tilde{s}_{\lambda,\alpha}s_{\lambda,\alpha}\sinh\left(2\tilde{s}_{\lambda,\alpha} \frac{t}{\tau}\right) + \left(\tilde{s}_\alpha s_\alpha + \frac{\lambda^2}{4}\right)\cosh\left(2\tilde{s}_{\lambda,\alpha} \frac{t}{\tau}\right) - \left(\tilde{s}_\alpha s_\alpha + \frac{\lambda^2}{4}\right)}{\tilde{s}_{\lambda,\alpha}s_{\lambda,\alpha}\sinh\left(2\tilde{s}_{\lambda,\alpha} \frac{t}{\tau}\right) + \left(\tilde{s}_\alpha s_\alpha + \frac{\lambda^2}{4}\right)\cosh\left(2\tilde{s}_{\lambda,\alpha} \frac{t}{\tau}\right) + \tilde{s}_\alpha\left(\tilde{s}_\alpha - s_\alpha\right)},
    \end{equation}
    where $\tilde{s}_{\lambda,\alpha} = \sqrt{\tilde{s}_{\alpha}^2 + \frac{\lambda^2}{4}}$, $s_{\lambda,\alpha} = \sqrt{s_{\alpha}(0)^2 + \frac{\lambda^2}{4}}$, and $s_{\alpha} = s_{\alpha}(0)$.
    We find that under different limits of $\lambda$, the transition function converges pointwise to the sigmoidal ($\lambda \to 0$) and exponential ($\lambda \to \pm \infty$) transition functions,
    \begin{equation}
            \gamma_\alpha(t;\lambda) \to \begin{cases}
            \frac{e^{2\tilde{s}_\alpha\frac{t}{\tau}} - 1}{e^{2\tilde{s}_\alpha\frac{t}{\tau}} - 1 + \frac{\tilde{s}_\alpha}{s_\alpha(0)}} &\text{as }\lambda \to 0,\\
            1 - e^{-|\lambda| \frac{t}{\tau}} & \text{as }\lambda \to \pm\infty\\
        \end{cases}.
    \end{equation}
\end{theorem}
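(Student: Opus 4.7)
The plan is to specialize the exact solution of Theorem \ref{thm:lamdba-ballanced-dynamics} to the task-aligned initialization and exploit the diagonal structure this induces, reducing the matrix dynamics to decoupled scalar equations, one per singular direction.

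First, I would substitute $\boldsymbol{W}_1(0) = \boldsymbol{R}\boldsymbol{S}_1\tilde{\boldsymbol{V}}^T$ and $\boldsymbol{W}_2(0) = \tilde{\boldsymbol{U}}\boldsymbol{S}_2\boldsymbol{R}^T$ into the definitions of $\boldsymbol{B}$, $\boldsymbol{C}$, $\boldsymbol{D}$ in equation \ref{eq:BCD}. Using $\tilde{\boldsymbol{U}}^T\tilde{\boldsymbol{U}} = \tilde{\boldsymbol{V}}^T\tilde{\boldsymbol{V}} = \mathbf{I}$ and $\tilde{\boldsymbol{U}}^T\tilde{\boldsymbol{U}}_\perp = 0$, $\tilde{\boldsymbol{V}}^T\tilde{\boldsymbol{V}}_\perp = 0$, it follows that $\boldsymbol{D} = 0$ while $\boldsymbol{B} = [\boldsymbol{S}_2(\tilde{\boldsymbol{G}} + \tilde{\boldsymbol{H}}\tilde{\boldsymbol{G}}) + \boldsymbol{S}_1(\tilde{\boldsymbol{G}} - \tilde{\boldsymbol{H}}\tilde{\boldsymbol{G}})]\boldsymbol{R}^T$ and $\boldsymbol{C}$ a similar diagonal-times-$\boldsymbol{R}^T$ expression. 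The $\lambda$-balance constraint fixes $\boldsymbol{S}_2^2 - \boldsymbol{S}_1^2 = \lambda\mathbf{I}$, so $\boldsymbol{S}_1$ and $\boldsymbol{S}_2$ are determined by the initial singular values $s_\alpha(0) = s_{1,\alpha}s_{2,\alpha}$ via Theorem \ref{theorem:singular_values_lambda}.

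Second, I would compute $\boldsymbol{W}_2\boldsymbol{W}_1(t) = \boldsymbol{Z}_2(t)\boldsymbol{A}^{-1}(t)\boldsymbol{Z}_1^T(t)$ from Theorem \ref{thm:lamdba-ballanced-dynamics}. Because $\boldsymbol{B}$, $\boldsymbol{C}$, $\tilde{\boldsymbol{S}}$, $\tilde{\boldsymbol{S}_\lambda}$, $\tilde{\boldsymbol{H}}$, $\tilde{\boldsymbol{G}}$ are all simultaneously diagonalised (up to the fixed orthogonal $\boldsymbol{R}$) and $\boldsymbol{D}=0$, the factor $\boldsymbol{A}(t)$ is also diagonal (modulo $\boldsymbol{R}$), hence $\boldsymbol{W}_2\boldsymbol{W}_1(t) = \tilde{\boldsymbol{U}}\boldsymbol{S}(t)\tilde{\boldsymbol{V}}^T$ with $\boldsymbol{S}(t)$ diagonal. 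Reading off the $\alpha$-th diagonal entry reduces the problem to a scalar identity. The core manipulation is to combine $e^{\pm\tilde{s}_{\lambda,\alpha} t/\tau}$ factors using $\cosh$ and $\sinh$, and apply the identities $\tilde{G}^2(1+\tilde{H}^2) = 1$, $\tilde{G}^2(1-\tilde{H}^2) = \tilde{s}_\alpha/\tilde{s}_{\lambda,\alpha}$ and $2\mathrm{sgn}(\lambda)\tilde{G}^2\tilde{H} = (\lambda/2)/\tilde{s}_{\lambda,\alpha}$ to rewrite all products of $\tilde{G},\tilde{H},\boldsymbol{S}_1,\boldsymbol{S}_2$ in terms of $\tilde{s}_\alpha$, $s_\alpha(0)$, $\tilde{s}_{\lambda,\alpha}$, $s_{\lambda,\alpha}$ and $\lambda$. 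After simplification, $s_\alpha(t)$ takes the claimed form $s_\alpha(0) + \gamma_\alpha(t;\lambda)(\tilde{s}_\alpha - s_\alpha(0))$. That $\gamma_\alpha(0;\lambda) = 0$ follows immediately from $\sinh(0)=0$, $\cosh(0)=1$; that $\gamma_\alpha(\infty;\lambda) = 1$ follows from dividing numerator and denominator by $e^{2\tilde{s}_{\lambda,\alpha} t/\tau}$ and noting that the leading coefficients $\tfrac{1}{2}(\tilde{s}_{\lambda,\alpha}s_{\lambda,\alpha} + \tilde{s}_\alpha s_\alpha + \lambda^2/4)$ agree in numerator and denominator.

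Third, I would verify the two limits. For $\lambda \to 0$, $\tilde{s}_{\lambda,\alpha}\to\tilde{s}_\alpha$, $s_{\lambda,\alpha}\to s_\alpha(0)$ and $\lambda^2/4 \to 0$; expanding the $\sinh$ and $\cosh$ back into exponentials and cancelling a common factor recovers the classical sigmoidal law of \citet{saxe_2014_exact}. For $\lambda \to \pm\infty$, I would Taylor-expand $\tilde{s}_{\lambda,\alpha} = |\lambda|/2 + \tilde{s}_\alpha^2/|\lambda| + O(|\lambda|^{-3})$ and similarly for $s_{\lambda,\alpha}$, observe that the $\lambda^2/4$ terms dominate both numerator and denominator, and reduce $\gamma_\alpha(t;\lambda)$ to $(\cosh(|\lambda|t/\tau)-1)/\cosh(|\lambda|t/\tau) \cdot (1 + O(\lambda^{-2})) = 1 - e^{-|\lambda|t/\tau} + o(1)$, recovering the exponential limit.

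The main obstacle will be the algebraic bookkeeping in the second step: collecting the many terms produced by $\boldsymbol{Z}_2\boldsymbol{A}^{-1}\boldsymbol{Z}_1^T$ and showing that the square-roots hidden in $\tilde{G}$, $\tilde{H}$, $\boldsymbol{S}_1$, $\boldsymbol{S}_2$ conspire to produce the compact $\tilde{s}_{\lambda,\alpha}s_{\lambda,\alpha}\sinh + (\tilde{s}_\alpha s_\alpha + \lambda^2/4)\cosh$ combination. Once this simplification is in place, both the closed-form transition function and its limiting regimes drop out directly.
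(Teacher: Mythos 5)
Your plan is essentially the paper's own proof: substitute the aligned initialization into $\boldsymbol{B}$, $\boldsymbol{C}$ (with the perpendicular term vanishing), observe that everything is diagonal up to the orthogonal factor $\boldsymbol{R}$ which cancels in $\boldsymbol{Z}_2\boldsymbol{A}^{-1}\boldsymbol{Z}_1^T$, reduce to scalar dynamics, use the identities $\tilde{\boldsymbol{G}}^2(\mathbf{I}-\tilde{\boldsymbol{H}}^2)=\tilde{\boldsymbol{S}}/\tilde{\boldsymbol{S}}_{\lambda}$ and $4\tilde{\boldsymbol{G}}^2\tilde{\boldsymbol{H}}=\lambda/\tilde{\boldsymbol{S}}_{\lambda}$ to obtain the $\sinh/\cosh$ form of $\gamma_\alpha$, and then take the $\lambda\to 0$ and $\lambda\to\pm\infty$ limits exactly as the paper does. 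Only minor slips: the aligned initialization gives $\boldsymbol{B}=\boldsymbol{R}\boldsymbol{D}_B$ (orthogonal factor on the left, not $\boldsymbol{D}_B\boldsymbol{R}^T$), and in the $\lambda\to\pm\infty$ limit the dominant-coefficient reduction yields $\bigl(\sinh+\cosh-1\bigr)/\bigl(\sinh+\cosh\bigr)=\bigl(e^{|\lambda|t/\tau}-1\bigr)/e^{|\lambda|t/\tau}$ rather than $(\cosh-1)/\cosh$, neither of which affects the argument or the stated conclusions.
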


\begin{proof}
    According to Theorem \ref{thm:lamdba-ballanced-dynamics}, the network function is given by the equation
    \begin{equation}
        \boldsymbol{W}_2\boldsymbol{W}_1(t) = \boldsymbol{Z}_2(t) \boldsymbol{A}^{-1}(t)\boldsymbol{Z}_1^T(t),
    \end{equation}
    which depends on the variables of the initialization $\bfb$ and $\bfc$.
    Plugging the expressions for a task-aligned initialization $\boldsymbol{W}_1(0)$ and $\boldsymbol{W}_2(0)$ into these variables we get the following simplified expressions,
    \begin{align}
        \bfb &=  \boldsymbol{R}\underbrace{\left(\boldsymbol{S}_2(\tilde{\boldsymbol{G}} + \tilde{\boldsymbol{H}} \tilde{\boldsymbol{G}}) + \boldsymbol{S}_1(\tilde{\boldsymbol{G}} - \tilde{\boldsymbol{H}} \tilde{\boldsymbol{G}})\right)}_{\boldsymbol{D}_B},\\
        \bfc &= \boldsymbol{R}\underbrace{\left(\boldsymbol{S}_2 (\tilde{\boldsymbol{G}} - \tilde{\boldsymbol{H}} \tilde{\boldsymbol{G}}) - \boldsymbol{S}_1(\tilde{\boldsymbol{G}} + \tilde{\boldsymbol{H}} \tilde{\boldsymbol{G}})\right)}_{\boldsymbol{D}_C},
    \end{align}
    where we define the diagonal matrices $\boldsymbol{D}_B$ and $\boldsymbol{D}_C$ for ease of notation.
    Using these expressions, we now get the following time-dependent expressions for $\boldsymbol{Z}_2(t)$, $\boldsymbol{A}^{-1}(t)$, and $\boldsymbol{Z}_1(t)$,
    \begin{align}
        \boldsymbol{Z}_1(t) &= \frac{1}{2}\tilde{\boldsymbol{V}}\left((\tilde{\boldsymbol{G}} - \tilde{\boldsymbol{H}} \tilde{\boldsymbol{G}}) e^{\tilde{\boldsymbol{S}_{\lambda}} \frac{t}{\tau}} \boldsymbol{D}_B - (\tilde{\boldsymbol{G}} + \tilde{\boldsymbol{H}} \tilde{\boldsymbol{G}}) e^{-\tilde{\boldsymbol{S}_{\lambda}} \frac{t}{\tau}} \boldsymbol{D}_C\right)\boldsymbol{R}^T\\
        \boldsymbol{Z}_2(t) &= \frac{1}{2}\tilde{\boldsymbol{U}}\left((\tilde{\boldsymbol{G}} + \tilde{\boldsymbol{H}} \tilde{\boldsymbol{G}}) e^{\tilde{\boldsymbol{S}_{\lambda}} \frac{t}{\tau}} \boldsymbol{D}_B + (\tilde{\boldsymbol{G}} - \tilde{\boldsymbol{H}} \tilde{\boldsymbol{G}}) e^{-\tilde{\boldsymbol{S}_{\lambda}} \frac{t}{\tau}} \boldsymbol{D}_C\right)\boldsymbol{R}^T\\
        \boldsymbol{A}(t) &= \boldsymbol{R} \left(\mathbf{I} +  \left( \frac{ e^{2\tilde{\boldsymbol{S}_{\lambda}} \frac{t}{\tau}} -\mathbf{I}  }{4\tilde{\boldsymbol{S}_{\lambda}} } \right) \boldsymbol{D}_B^2 - \left( \frac{ e^{-2\tilde{\boldsymbol{S}_{\lambda}} \frac{t}{\tau}} -\mathbf{I} } {4\tilde{\boldsymbol{S}_{\lambda}} }\right)\boldsymbol{D}_C^2\right)\boldsymbol{R}^T
    \end{align}
    Plugging these expressions into the expression for the network function, notice that the $\boldsymbol{R}$ terms cancel each other resulting in following equation
 {\small   \begin{equation}
        \boldsymbol{W}_2\boldsymbol{W}_1(t) = \tilde{\boldsymbol{U}}\underbrace{\left(\frac{\left((\tilde{\boldsymbol{G}} - \tilde{\boldsymbol{H}} \tilde{\boldsymbol{G}}) e^{\tilde{\boldsymbol{S}_{\lambda}} \frac{t}{\tau}} \boldsymbol{D}_B - (\tilde{\boldsymbol{G}} + \tilde{\boldsymbol{H}} \tilde{\boldsymbol{G}}) e^{-\tilde{\boldsymbol{S}_{\lambda}} \frac{t}{\tau}} \boldsymbol{D}_C\right) \left((\tilde{\boldsymbol{G}} + \tilde{\boldsymbol{H}} \tilde{\boldsymbol{G}}) e^{\tilde{\boldsymbol{S}_{\lambda}} \frac{t}{\tau}} \boldsymbol{D}_B + (\tilde{\boldsymbol{G}} - \tilde{\boldsymbol{H}} \tilde{\boldsymbol{G}}) e^{-\tilde{\boldsymbol{S}_{\lambda}} \frac{t}{\tau}} \boldsymbol{D}_C\right)}{4\mathbf{I} +  \left( \frac{ e^{2\tilde{\boldsymbol{S}_{\lambda}} \frac{t}{\tau}} -\mathbf{I}  }{\tilde{\boldsymbol{S}_{\lambda}} } \right) \boldsymbol{D}_B^2 - \left( \frac{ e^{-2\tilde{\boldsymbol{S}_{\lambda}} \frac{t}{\tau}} -\mathbf{I} } {\tilde{\boldsymbol{S}_{\lambda}} }\right)\boldsymbol{D}_C^2}\right)}_{\boldsymbol{S}(t)}\tilde{\boldsymbol{V}}^T,
    \end{equation}}
    Notice that the middle term is simply a product of diagonal matrices.
    We can factor the numerator of this expressions as,
    \begin{equation}
        (\tilde{\boldsymbol{G}}^2 - \tilde{\boldsymbol{H}}^2 \tilde{\boldsymbol{G}}^2) e^{2\tilde{\boldsymbol{S}_{\lambda}} \frac{t}{\tau}} \boldsymbol{D}_B^2 + \left((\tilde{\boldsymbol{G}} - \tilde{\boldsymbol{H}} \tilde{\boldsymbol{G}})^2- (\tilde{\boldsymbol{G}} + \tilde{\boldsymbol{H}} \tilde{\boldsymbol{G}})^2\right)\boldsymbol{D}_B\boldsymbol{D}_C - (\tilde{\boldsymbol{G}}^2 - \tilde{\boldsymbol{H}}^2 \tilde{\boldsymbol{G}}^2) e^{-2\tilde{\boldsymbol{S}_{\lambda}} \frac{t}{\tau}} \boldsymbol{D}_C^2
    \end{equation}
    We can further factor this expression as,
    \begin{equation}
        \tilde{\boldsymbol{G}}^2(\mathbf{I} - \tilde{\boldsymbol{H}}^2)\left( e^{2\tilde{\boldsymbol{S}_{\lambda}} \frac{t}{\tau}} \boldsymbol{D}_B^2 -  e^{-2\tilde{\boldsymbol{S}_{\lambda}} \frac{t}{\tau}} \boldsymbol{D}_C^2\right)- 4\tilde{\boldsymbol{G}}^2\tilde{\boldsymbol{H}} \boldsymbol{D}_B\boldsymbol{D}_C.
    \end{equation}
    Putting it all together we find that $\boldsymbol{S}(t)$ can be expressed as,
    \begin{equation}
        \boldsymbol{S}(t) = \frac{\tilde{\boldsymbol{G}}^2(\mathbf{I} - \tilde{\boldsymbol{H}}^2)\left( e^{2\tilde{\boldsymbol{S}_{\lambda}} \frac{t}{\tau}} \boldsymbol{D}_B^2 -  e^{-2\tilde{\boldsymbol{S}_{\lambda}} \frac{t}{\tau}} \boldsymbol{D}_C^2\right)- 4\tilde{\boldsymbol{G}}^2\tilde{\boldsymbol{H}} \boldsymbol{D}_B\boldsymbol{D}_C}{4\mathbf{I} +  \left( \frac{ e^{2\tilde{\boldsymbol{S}_{\lambda}} \frac{t}{\tau}} -\mathbf{I}  }{\tilde{\boldsymbol{S}_{\lambda}} } \right) \boldsymbol{D}_B^2 - \left( \frac{ e^{-2\tilde{\boldsymbol{S}_{\lambda}} \frac{t}{\tau}} -\mathbf{I} } {\tilde{\boldsymbol{S}_{\lambda}} }\right)\boldsymbol{D}_C^2}.
    \end{equation}
    Now using the relationship between $\tilde{\boldsymbol{H}}$ and $\tilde{\boldsymbol{G}}$ we use the following two identities:
    \begin{equation}
        \tilde{\boldsymbol{G}}^2(\mathbf{I} - \tilde{\boldsymbol{H}}^2) = \frac{\tilde{\boldsymbol{S}}}{\tilde{\boldsymbol{S}_{\lambda}}}, \qquad
        4\tilde{\boldsymbol{G}}^2\tilde{\boldsymbol{H}} = \frac{\lambda}{\tilde{\boldsymbol{S}_{\lambda}}}
    \end{equation}
    Plugging these identities into the previous expression and multiplying the numerator and denominator by $\tilde{\boldsymbol{S}_{\lambda}}$ gives,
    \begin{equation}
        \boldsymbol{S}(t) = \frac{\tilde{\boldsymbol{S}}\left( e^{2\tilde{\boldsymbol{S}_{\lambda}} \frac{t}{\tau}} \boldsymbol{D}_B^2 -  e^{-2\tilde{\boldsymbol{S}_{\lambda}} \frac{t}{\tau}} \boldsymbol{D}_C^2\right)- \lambda \boldsymbol{D}_B\boldsymbol{D}_C}{4\tilde{\boldsymbol{S}_{\lambda}} +  e^{2\tilde{\boldsymbol{S}_{\lambda}} \frac{t}{\tau}}\boldsymbol{D}_B^2 - e^{-2\tilde{\boldsymbol{S}_{\lambda}} \frac{t}{\tau}}\boldsymbol{D}_C^2 + \boldsymbol{D}_C^2 - \boldsymbol{D}_B^2}.
    \end{equation}
    Add and subtract $\tilde{\boldsymbol{S}}\left(4\tilde{\boldsymbol{S}_{\lambda}} + \boldsymbol{D}_C^2 - \boldsymbol{D}_B^2\right)$ from the numerator such that
    \begin{equation}
        \boldsymbol{S}(t) = \tilde{\boldsymbol{S}} - \frac{\tilde{\boldsymbol{S}}\left(4\tilde{\boldsymbol{S}_{\lambda}} + \boldsymbol{D}_C^2 - \boldsymbol{D}_B^2\right) + \lambda \boldsymbol{D}_B\boldsymbol{D}_C}{4\tilde{\boldsymbol{S}_{\lambda}} +  e^{2\tilde{\boldsymbol{S}_{\lambda}} \frac{t}{\tau}}\boldsymbol{D}_B^2 - e^{-2\tilde{\boldsymbol{S}_{\lambda}} \frac{t}{\tau}}\boldsymbol{D}_C^2 + \boldsymbol{D}_C^2 - \boldsymbol{D}_B^2}.
    \end{equation}
    Using the form of $\boldsymbol{D}_B$ and $\boldsymbol{D}_C$ notice the following two identities:
    \begin{equation}
        \boldsymbol{D}_B\boldsymbol{D}_C = \frac{\lambda}{\tilde{\boldsymbol{S}_{\lambda}}}\left(\tilde{\boldsymbol{S}} - \boldsymbol{S}_2\boldsymbol{S}_1\right),\qquad
        \boldsymbol{D}_C^2 - \boldsymbol{D}_B^2 = -\frac{4}{\tilde{\boldsymbol{S}_{\lambda}}}\left(\tilde{\boldsymbol{S}}\boldsymbol{S}_2\boldsymbol{S}_1 + \frac{\lambda^2}{4}\mathbf{I}\right)
    \end{equation}
    From the second identity we can derive a third identity,
    \begin{equation}
        4\tilde{\boldsymbol{S}_{\lambda}} + \boldsymbol{D}_C^2 - \boldsymbol{D}_B^2 = 4\frac{\tilde{\boldsymbol{S}}}{\tilde{\boldsymbol{S}_\lambda}} \left(\tilde{\boldsymbol{S}} - \boldsymbol{S}_2\boldsymbol{S}_1\right)
    \end{equation}
    
    Plugging the first and third identities into the numerator for the previous expression gives,
    \begin{equation}
        \boldsymbol{S}(t) = \tilde{\boldsymbol{S}} - \frac{\frac{\left(4\tilde{\boldsymbol{S}}^2 + \lambda^2\mathbf{I}\right)}{\tilde{\boldsymbol{S}_{\lambda}}}\left(\tilde{\boldsymbol{S}} - \boldsymbol{S}_2\boldsymbol{S}_1\right)}{4\tilde{\boldsymbol{S}_{\lambda}} +  e^{2\tilde{\boldsymbol{S}_{\lambda}} \frac{t}{\tau}}\boldsymbol{D}_B^2 - e^{-2\tilde{\boldsymbol{S}_{\lambda}} \frac{t}{\tau}}\boldsymbol{D}_C^2 + \boldsymbol{D}_C^2 - \boldsymbol{D}_B^2}.
    \end{equation}
    Multiply numerator and denominator by $\frac{\tilde{\boldsymbol{S}_{\lambda}}}{4}$ and simplify terms gives the expression,
    \begin{equation}
        \boldsymbol{S}(t) = \tilde{\boldsymbol{S}} - \frac{\tilde{\boldsymbol{S}_{\lambda}}^2}{\tilde{\boldsymbol{S}_{\lambda}}^2 +  \frac{\tilde{\boldsymbol{S}_{\lambda}}}{4}\left(e^{2\tilde{\boldsymbol{S}_{\lambda}} \frac{t}{\tau}}\boldsymbol{D}_B^2 - e^{-2\tilde{\boldsymbol{S}_{\lambda}} \frac{t}{\tau}}\boldsymbol{D}_C^2\right) - \frac{\tilde{\boldsymbol{S}_{\lambda}}}{4}\left(\boldsymbol{D}_B^2 - \boldsymbol{D}_C^2\right)}\left(\tilde{\boldsymbol{S}} - \boldsymbol{S}_2\boldsymbol{S}_1\right).
    \end{equation}
    Thus we have found the transition function, 
    \begin{equation}
        \gamma(t;\lambda) = \frac{\frac{\tilde{\boldsymbol{S}_{\lambda}}}{4}\left(e^{2\tilde{\boldsymbol{S}_{\lambda}} \frac{t}{\tau}}\boldsymbol{D}_B^2 - e^{-2\tilde{\boldsymbol{S}_{\lambda}} \frac{t}{\tau}}\boldsymbol{D}_C^2\right) + \frac{\tilde{\boldsymbol{S}_{\lambda}}}{4}\left(\boldsymbol{D}_C^2 - \boldsymbol{D}_B^2\right)}{\frac{\tilde{\boldsymbol{S}_{\lambda}}}{4}\left(e^{2\tilde{\boldsymbol{S}_{\lambda}} \frac{t}{\tau}}\boldsymbol{D}_B^2 - e^{-2\tilde{\boldsymbol{S}_{\lambda}} \frac{t}{\tau}}\boldsymbol{D}_C^2\right) + \frac{\tilde{\boldsymbol{S}_{\lambda}}}{4}\left(4\tilde{\boldsymbol{S}_{\lambda}} + \boldsymbol{D}_C^2 - \boldsymbol{D}_B^2\right)}.
    \end{equation}
    We will use our previous identities and the definitions of $\boldsymbol{D}_B^2$ and $\boldsymbol{D}_C^2$ to simplify this expression.
    Notice the following identity,
    \begin{equation}
        \frac{\tilde{\boldsymbol{S}_{\lambda}}}{4}\left(e^{2\tilde{\boldsymbol{S}_{\lambda}} \frac{t}{\tau}}\boldsymbol{D}_B^2 - e^{-2\tilde{\boldsymbol{S}_{\lambda}} \frac{t}{\tau}}\boldsymbol{D}_C^2\right) = \tilde{\boldsymbol{S}_{\lambda}}\boldsymbol{S}_{\lambda}\sinh\left(2\tilde{\boldsymbol{S}_{\lambda}} \frac{t}{\tau}\right) + \left(\tilde{\boldsymbol{S}}\boldsymbol{S}(0) + \frac{\lambda^2}{4}\mathbf{I}\right)\cosh\left(2\tilde{\boldsymbol{S}_{\lambda}} \frac{t}{\tau}\right)
    \end{equation}
    Putting it all together we get
    \begin{equation}
        \gamma(t;\lambda) = \frac{\tilde{\boldsymbol{S}_{\lambda}}\boldsymbol{S}_{\lambda}\sinh\left(2\tilde{\boldsymbol{S}_{\lambda}} \frac{t}{\tau}\right) + \left(\tilde{\boldsymbol{S}}\boldsymbol{S}(0) + \frac{\lambda^2}{4}\mathbf{I}\right)\cosh\left(2\tilde{\boldsymbol{S}_{\lambda}} \frac{t}{\tau}\right) - \left(\tilde{\boldsymbol{S}}\boldsymbol{S}(0) + \frac{\lambda^2}{4}\mathbf{I}\right)}{\tilde{\boldsymbol{S}_{\lambda}}\boldsymbol{S}_{\lambda}\sinh\left(2\tilde{\boldsymbol{S}_{\lambda}} \frac{t}{\tau}\right) + \left(\tilde{\boldsymbol{S}}\boldsymbol{S}(0) + \frac{\lambda^2}{4}\mathbf{I}\right)\cosh\left(2\tilde{\boldsymbol{S}_{\lambda}} \frac{t}{\tau}\right) + \tilde{\boldsymbol{S}}\left(\tilde{\boldsymbol{S}} - \boldsymbol{S}(0)\right)}
    \end{equation}
    We will now show why under certain limits of $\lambda$ this expression simplifies to the sigmoidal and exponential dynamics discussed in the previous section.
    
    \textbf{Sigmoidal dynamics.} When $\lambda = 0$, then $\tilde{\boldsymbol{S}_{\lambda}} = \tilde{\boldsymbol{S}}$ and $\boldsymbol{S}_{\lambda} = \boldsymbol{S}(0)$. 
    Notice, that the coefficients for the hyperbolic functions all simplify to $\tilde{\boldsymbol{S}}\boldsymbol{S}(0)$.
    Using the hyperbolic identity $\sinh(x) + \cosh(x) = e^x$, we can simplify the expression for the transition function to
    \begin{equation}
        \gamma(t;\lambda) = \frac{\tilde{\boldsymbol{S}}\boldsymbol{S}(0)e^{2\tilde{\boldsymbol{S}}\frac{t}{\tau}} - \tilde{\boldsymbol{S}}\boldsymbol{S}(0)}{\tilde{\boldsymbol{S}}\boldsymbol{S}(0)e^{2\tilde{\boldsymbol{S}}\frac{t}{\tau}} - \tilde{\boldsymbol{S}}\boldsymbol{S}(0) + \tilde{\boldsymbol{S}}^2}.
    \end{equation}
    Dividing the numerator and denominator by $\tilde{\boldsymbol{S}}\boldsymbol{S}(0)$ gives the final expression.

    \textbf{Exponential dynamics.}  In the limit as $\lambda \to \pm\infty$ the expressions $\tilde{\boldsymbol{S}_{\lambda}} \to \frac{|\lambda|}{2}$ and $\boldsymbol{S}_{\lambda} \to \frac{|\lambda|}{2}$.
    Additionally, in these limits because $\frac{\lambda^2}{4}\mathbf{I} \gg \tilde{\boldsymbol{S}}\boldsymbol{S}(0)$ then $\left(\tilde{\boldsymbol{S}}\boldsymbol{S}(0) + \frac{\lambda^2}{4}\mathbf{I}\right) \to \frac{\lambda^2}{4}\mathbf{I}$.
    As a result of these simplifications the coefficients for the hyperbolic functions all simplify to $\frac{\lambda^2}{4}\mathbf{I}$.
    As a result we can again use the hyperbolic identity $\sinh(x) + \cosh(x) = e^x$ to simplify the expression as
    \begin{equation}
        \gamma(t;\lambda) = \frac{\frac{\lambda^2}{4}e^{|\lambda|\frac{t}{\tau}} - \frac{\lambda^2}{4}\mathbf{I}}{\frac{\lambda^2}{4}e^{|\lambda|\frac{t}{\tau}} + \tilde{\boldsymbol{S}}\left(\tilde{\boldsymbol{S}} - \boldsymbol{S}(0)\right)}.
    \end{equation}
    Dividing the numerator and denominator by $\frac{\lambda^2}{4}$ results in all terms without a coefficient proportional to $\lambda^2$ vanishing, which simplifying further gives the final expression.
\end{proof}



\subsection{ Dynamics of the representation from the Lazy to the Rich Regime}
\label{RL:NTK}

The \emph{lazy} and \emph{rich} regimes are defined by the dynamics of the NTK of the network. \emph{Lazy} learning occurs when the NTK is constant, \emph{rich} learning occurs when it is not. (\cite{farrell_2023_from}) 
\\
The NTK intuitively measures the movement of the network representations through training. As shown in  (\cite{braun2023exact}), in specific experimental setup, we can calculate the NTK of the network in terms of the internal representations in a straightforward way:
\begin{equation}
\mathrm{NTK}=\mathbf{I}_{N_o} \otimes \mathbf{X}^T \mathbf{W}_1^T \mathbf{W}_1(t) \mathbf{X}+\mathbf{W}_2 \mathbf{W}_2^T(t) \otimes \mathbf{X}^T \mathbf{X}
\end{equation}
In order to better understand the effect of $\lambda$ on NTK dynamics, we first prove some theorems involving the Singular Values of the \emph{$\lambda$-balanced}  weights, and the representations of a \emph{$\lambda$-balanced}  network. 
\subsubsection{ Lambda-balanced singular value}

\begin{theorem}\label{ass:theorem:singular_values_lambda} 
Under a \( \lambda \)-Balanced initialization \ref{ass:lambda-balanced}, if the network function \( \boldsymbol{W}_2\boldsymbol{W}_1(t) = \boldsymbol{U}(t)\boldsymbol{S}(t)\boldsymbol{V}^T(t) \) is full rank and we define $\textstyle \boldsymbol{S_{\lambda}}(t) = \sqrt{\boldsymbol{S}^2(t) + \frac{\lambda^2}{4} \mathbf{I}}$. , then we can recover the parameters \( \boldsymbol{W}_2(t) = \boldsymbol{U}(t) \boldsymbol{S}_2(t) \boldsymbol{R}^T(t) \), \( \boldsymbol{W}_1(t) = \boldsymbol{R}(t) \boldsymbol{S}_1(t) \boldsymbol{V}^T(t) \) up to time-dependent orthogonal transformation \(\boldsymbol{R}\)(t) of size $N_h\times N_h$, where
\begin{equation}
\boldsymbol{S}_1(t)  =  \begin{pmatrix}
\left( \boldsymbol{S_{\lambda}}(t)  - \frac{\lambda \mathbf{I}}{2} \right)^{\frac{1}{2}}  & 0_{\max(0, N_i - N_o)}
\end{pmatrix} \qquad
\boldsymbol{S}_2(t) = \ \begin{pmatrix}
\left( \boldsymbol{S_{\lambda}}(t)  + \frac{\lambda \mathbf{I}}{2} \right)^{\frac{1}{2}}   
0_{\max(0, N_o - N_i)}
\end{pmatrix} 
\end{equation}
\end{theorem}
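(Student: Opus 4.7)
The plan is to exploit the $\lambda$-balanced condition $\boldsymbol{W}_2^T\boldsymbol{W}_2 - \boldsymbol{W}_1\boldsymbol{W}_1^T = \lambda \mathbf{I}_{N_h}$, which by Theorem~\ref{theorem:balanced_persists} holds for all $t$, to force the right singular vectors of $\boldsymbol{W}_2(t)$ and the left singular vectors of $\boldsymbol{W}_1(t)$ to coincide. Writing (compact) SVDs $\boldsymbol{W}_1(t) = \boldsymbol{P}_1\boldsymbol{\Xi}_1\boldsymbol{Q}_1^T$ and $\boldsymbol{W}_2(t) = \boldsymbol{P}_2\boldsymbol{\Xi}_2\boldsymbol{Q}_2^T$, the balance condition reads
\begin{equation}
\boldsymbol{Q}_2\boldsymbol{\Xi}_2^T\boldsymbol{\Xi}_2\boldsymbol{Q}_2^T \;=\; \boldsymbol{P}_1\!\left(\boldsymbol{\Xi}_1\boldsymbol{\Xi}_1^T + \lambda\mathbf{I}\right)\!\boldsymbol{P}_1^T.
\end{equation}
Since the two sides are symmetric $N_h\times N_h$ matrices with identical eigendecompositions, I can choose (after suitable re-ordering and absorbing signs) $\boldsymbol{Q}_2=\boldsymbol{P}_1=:\boldsymbol{R}(t)$ and extract the scalar relation $\sigma_{2,i}^2 = \sigma_{1,i}^2 + \lambda$ on every corresponding pair of singular values. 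I will call the common basis $\boldsymbol{R}(t)$ and define $\boldsymbol{S}_1(t) := \boldsymbol{\Xi}_1$ and $\boldsymbol{S}_2(t):=\boldsymbol{\Xi}_2$, padded with the zero block that the statement indicates so that $\boldsymbol{W}_1(t) = \boldsymbol{R}(t)\boldsymbol{S}_1(t)\boldsymbol{V}^T(t)$ and $\boldsymbol{W}_2(t) = \boldsymbol{U}(t)\boldsymbol{S}_2(t)\boldsymbol{R}^T(t)$ have the correct shapes; the identifications $\boldsymbol{P}_2=\boldsymbol{U}(t)$ and $\boldsymbol{Q}_1=\boldsymbol{V}(t)$ (up to the unused $N_i-N_o$ or $N_o-N_i$ orthogonal complement, which multiplies the zero block and therefore does not affect $\boldsymbol{W}_1$ or $\boldsymbol{W}_2$) follow.

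Next, I will substitute the resulting factorisation into the product and use $\boldsymbol{R}^T\boldsymbol{R}=\mathbf{I}$ to obtain
\begin{equation}
\boldsymbol{W}_2\boldsymbol{W}_1(t) = \boldsymbol{U}(t)\,\boldsymbol{S}_2(t)\boldsymbol{S}_1(t)\,\boldsymbol{V}^T(t),
\end{equation}
which, compared to $\boldsymbol{U}(t)\boldsymbol{S}(t)\boldsymbol{V}^T(t)$ and using uniqueness of singular values, yields $\boldsymbol{S}_2(t)\boldsymbol{S}_1(t)=\boldsymbol{S}(t)$ on the $N_h\times N_h$ diagonal block. Combined with the balance constraint $\boldsymbol{S}_2^2 - \boldsymbol{S}_1^2 = \lambda\mathbf{I}$, I obtain the scalar system $\sigma_{1,i}\sigma_{2,i}=s_i$, $\sigma_{2,i}^2-\sigma_{1,i}^2=\lambda$. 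Eliminating $\sigma_{2,i}$ gives the biquadratic $\sigma_{1,i}^4+\lambda\sigma_{1,i}^2-s_i^2=0$, whose unique non-negative root is $\sigma_{1,i}^2 = \tfrac{1}{2}(-\lambda+\sqrt{\lambda^2+4s_i^2}) = s_{\lambda,i}-\lambda/2$, and symmetrically $\sigma_{2,i}^2 = s_{\lambda,i}+\lambda/2$, with $s_{\lambda,i} = \sqrt{s_i^2+\lambda^2/4}$. Non-negativity is automatic since $s_{\lambda,i}\ge |\lambda|/2$, giving exactly the stated formulas for $\boldsymbol{S}_1(t)$ and $\boldsymbol{S}_2(t)$.

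The main obstacle is the non-uniqueness of the SVDs that gives rise to the free orthogonal factor $\boldsymbol{R}(t)$: whenever the spectrum of $\boldsymbol{S}(t)$ (equivalently, of $\boldsymbol{W}_1\boldsymbol{W}_1^T$) has repeated values, the matching of eigenbases of $\boldsymbol{W}_2^T\boldsymbol{W}_2$ and $\boldsymbol{W}_1\boldsymbol{W}_1^T$ is determined only up to an orthogonal rotation within each eigenspace, and the pairing of $\boldsymbol{U}$ with a specific $\boldsymbol{P}_2$ is likewise only determined up to the block-diagonal rotation that acts compatibly on both sides. This is harmless for the statement, which is made up to the single time-dependent orthogonal $\boldsymbol{R}(t)$, but the argument must explicitly exhibit a choice of $\boldsymbol{R}(t)$ compatible with all three SVDs simultaneously. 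The other subtlety is bookkeeping in the unequal-dimension case: one must verify that the extra $\max(0,N_i-N_o)$ columns (resp.\ $\max(0,N_o-N_i)$ rows) of zeros in $\boldsymbol{S}_1$ (resp.\ $\boldsymbol{S}_2$) are consistent with the compact SVD of the product and with the full-rank assumption, which ensures that $\boldsymbol{S}(t)$ and hence both $\boldsymbol{S}_\lambda\pm\lambda/2\,\mathbf{I}$ are strictly positive so the square roots in the statement are well defined.
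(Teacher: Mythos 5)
Your proposal is correct and follows essentially the same route as the paper's proof: the conserved balance condition forces a common orthogonal inner factor $\boldsymbol{R}(t)$ for the two layers, and combining $\boldsymbol{S}_2\boldsymbol{S}_1=\boldsymbol{S}$ with $\boldsymbol{S}_2^2-\boldsymbol{S}_1^2=\lambda\mathbf{I}$ and solving the resulting quadratic yields $\boldsymbol{S}_1=\left(\boldsymbol{S_{\lambda}}-\tfrac{\lambda}{2}\mathbf{I}\right)^{1/2}$ and $\boldsymbol{S}_2=\left(\boldsymbol{S_{\lambda}}+\tfrac{\lambda}{2}\mathbf{I}\right)^{1/2}$ with the appropriate zero padding, exactly as in the paper. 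The only difference is presentational: the paper posits $\boldsymbol{W}_2=\boldsymbol{U}\boldsymbol{S}_2\boldsymbol{G}_2$, $\boldsymbol{W}_1=\boldsymbol{G}_1\boldsymbol{S}_1\boldsymbol{V}^T$ "without loss of generality" and deduces orthogonality of $\boldsymbol{G}_1,\boldsymbol{G}_2$ from the symmetry of $\boldsymbol{W}_1\boldsymbol{W}_1^T$ and $\boldsymbol{W}_2^T\boldsymbol{W}_2$, whereas you start from the layer SVDs and match eigenbases via the balance equation, explicitly flagging the repeated-singular-value alignment issue (resolvable since equal entries of $\boldsymbol{S}$ force equal entries of $\boldsymbol{S}_1$ and $\boldsymbol{S}_2$, so block rotations can be absorbed into $\boldsymbol{R}$) that the paper glosses over.
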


\begin{proof}[Proof]

We prove the case \( N_i \leq N_o \) and $N_h = min(N_i,N_o)$. The proof for \( N_o \leq N_i \) follows the same structure. Let \( \boldsymbol{U}\boldsymbol{S}\boldsymbol{V}^T = \boldsymbol{W}_2(t)\boldsymbol{W}_1(t) \) be the Singular Value Decomposition of the product of the weights at training step \( t \). We will use \( \boldsymbol{W}_2 = \boldsymbol{W}_2(t), \boldsymbol{W}_1 = \boldsymbol{W}_1(t) \) as a shorthand.
\\
\\

We write the initialisation for our setting $\wb \wa =\boldsymbol{USV}^T$.  We therefore can write without loss of generality the weight matrices  $\wb=\boldsymbol{U}\boldsymbol{S}_2\boldsymbol{G}_2$ and  $\wa=\boldsymbol{G}_1\boldsymbol{S}_1\boldsymbol{V}^T$. In this case we requiere that $\boldsymbol{G}_2 =  \boldsymbol{G}_1^{-1}$  and  $\boldsymbol{S}_1 \boldsymbol{S}_2 = \boldsymbol{S}$. \\

We assume the balanced property such that \( \boldsymbol{W}_2^T \boldsymbol{W}_2 - \boldsymbol{W}_1 \boldsymbol{W}_1^T = \lambda \boldsymbol{I} \). We know this holds for any \( t \) since this is a conserved quantity in linear networks. 
The matrices \(  \wa \wa^T \) and \( \wb^T \wb\) are symmetric, which consequently implies that their singular vectors are orthogonal. Consequently, in our specific scenario, we require that \( \boldsymbol{G}_1 \) and \( \boldsymbol{G}_2 \) are orthogonal matrices and it follows that 

\begin{align}
        \boldsymbol{G}_2  =  \boldsymbol{G}_1^{-1} = \boldsymbol{G}_1^T\\
        \boldsymbol{G}_2  =  \boldsymbol{G}_1^T = \boldsymbol{R}^T 
\end{align}

We can write \( \boldsymbol{W}_2 = \boldsymbol{U} \boldsymbol{S}_2 \boldsymbol{R}^T, \boldsymbol{W}_1 = \boldsymbol{R} \boldsymbol{S}_1 \boldsymbol{V}^T \), where \( \boldsymbol{R} \) is an orthonormal matrix and \( \boldsymbol{S}_2, \boldsymbol{S}_1 \) are diagonal (possibly rectangular) matrices. 
\\
\\

Hence 

\begin{equation}
\boldsymbol{R} \boldsymbol{S}_2^T \boldsymbol{S}_2 \boldsymbol{R}^T - \boldsymbol{R} \boldsymbol{S}_1 \boldsymbol{S}_1 \boldsymbol{R}^T = \lambda \mathbf{I}
\end{equation}

\begin{equation}
\boldsymbol{S}_2^T \boldsymbol{S}_2 - \boldsymbol{S}_1 \boldsymbol{S}_1 = \lambda \mathbf{I}
\end{equation}

The matrices \( \boldsymbol{S}_1 , \boldsymbol{S}_2, \) have shapes \( (N_h, N_i) \), \( (N_o, N_h) \) respectively. We introduce the diagonal matrices \( \boldsymbol{\hat{S}}_1 \) of shape \( (N_h, N_i) \), \( \boldsymbol{\hat{S}}_2 \) of shape \( (N_i, N_h) \) such that the zero matrix has size  \( (N_o-N_i , N_h) \) :

\begin{equation}
\boldsymbol{S}_1 = \begin{pmatrix} \boldsymbol{\hat{S}}_1 \end{pmatrix}, \quad \boldsymbol{S}_2 = \begin{pmatrix} \boldsymbol{\hat{S}}_2  \\ 0  \end{pmatrix}
\end{equation}

\noindent Hence

\begin{equation}
\boldsymbol{S}_2^T \boldsymbol{S}_2 - \boldsymbol{S}_1 \boldsymbol{S}_1 = \lambda \mathbf{I}
\end{equation}

From the equation above and the fact that \( \boldsymbol{\hat{S}}_1 \boldsymbol{\hat{S}}_2 = \boldsymbol{S} \) we derive that:

\begin{equation}
\boldsymbol{\hat{S}}_2 = \left( \frac{\sqrt{\lambda^2 \mathbf{I} + 4 \boldsymbol{S}^2} + \lambda \mathbf{I}}{2} \right)^{\frac{1}{2}}, \quad \boldsymbol{\hat{S}}_1 = \left( \frac{\sqrt{\lambda^2 \mathbf{I} + 4 \boldsymbol{S}^2} - \lambda \mathbf{I}}{2} \right)^{\frac{1}{2}}, 
\end{equation}

Hence 

\begin{equation}
\boldsymbol{W}_2 = \boldsymbol{U} \begin{pmatrix}
\left( \frac{\sqrt{\lambda^2 \mathbf{I} + 4\boldsymbol{S}^2} + \lambda \mathbf{I}}{2} \right)^{\frac{1}{2}}  \\
0_{\max(0, N_o - N_i)}
\end{pmatrix}, \boldsymbol{R}^T, \quad \boldsymbol{W}_1 = \boldsymbol{R}  \begin{pmatrix}
\left( \frac{\sqrt{\lambda^2 \mathbf{I} + 4\boldsymbol{S}^2} - \lambda \mathbf{I}}{2} \right)^{\frac{1}{2}} & 0 _{\max(0, N_i - N_o)} \\
\end{pmatrix} \boldsymbol{V}^T
\end{equation}
\end{proof}

\subsubsection{Convergence proof}

With our solution, $\qqtt$, which captures the temporal dynamics of the similarity between hidden layer activations, we can analyze the network's internal representations in relation to the task. This allows us to determine whether the network adopts a \emph{rich} or \emph{lazy} representation, depending on the value of $\lambda$. Consider a \(\lambda\)-Balanced network training on data \(\boldsymbol{\Sigma}^{yx} = \tilde{\boldsymbol{U}} \tilde{\boldsymbol{S}} \tilde{\boldsymbol{V}}^T\).   We assume that the convergence is toward global minima and B is invertible

\begin{theorem}  
\label{thm:limititing-behaviour}
     Under the assumptions of Theorem \ref{theorem:diag_unequal_dim_b_inv}, the network function converges to $\bfut\bfst\bfvt^T$ and acquires the internal representation, that is $\wa^T\wa = \bfvt\bfst_1^2\bfvt^T$ and $\wb\wb^T = \bfut\bfst_2^2\bfut^T$ 
    \end{theorem}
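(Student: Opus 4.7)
The plan is to take the $t\to\infty$ limit of the stable form of $\qqt(t)$ given in Theorem~\ref{theorem:diag_unequal_dim_b_inv}, identify the surviving terms, and then apply algebraic identities on $\tilde{\boldsymbol{G}},\tilde{\boldsymbol{H}}$ to read off the four blocks.

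First I would argue that all time-dependent exponentials in the stable expression decay. The formula only contains factors of the form $e^{-\tilde{\boldsymbol{S}_\lambda}t/\tau}$ (which decays because $\tilde{\boldsymbol{S}_\lambda}=\sqrt{\tilde{\boldsymbol{S}}^2+\tfrac{\lambda^2}{4}\mathbf{I}}\succ 0$) and the mixed combination $e^{\lambda_\perp t/\tau}e^{-\tilde{\boldsymbol{S}_\lambda}t/\tau}$. For the latter, note that $|\lambda_\perp|=|\lambda|/2$ while each eigenvalue of $\tilde{\boldsymbol{S}_\lambda}$ strictly exceeds $|\lambda|/2$ whenever the corresponding $\tilde s_i>0$, so on the non-trivial singular directions the product decays exponentially. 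Hence as $t\to\infty$ the matrix $\boldsymbol{Z}$ in Theorem~\ref{theorem:diag_unequal_dim_b_inv} collapses to
\begin{equation}
\boldsymbol{Z}_\infty=\tfrac{1}{2}\begin{pmatrix}\tilde{\boldsymbol{V}}(\tilde{\boldsymbol{G}}-\tilde{\boldsymbol{H}}\tilde{\boldsymbol{G}})\\ \tilde{\boldsymbol{U}}(\tilde{\boldsymbol{G}}+\tilde{\boldsymbol{H}}\tilde{\boldsymbol{G}})\end{pmatrix},
\end{equation}
and the bracket whose inverse appears in the same expression reduces to $(4\tilde{\boldsymbol{S}_\lambda})^{-1}$, so its inverse is $4\tilde{\boldsymbol{S}_\lambda}$. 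Substituting gives the clean limit
\begin{equation}
\qqt_\infty=\boldsymbol{Z}_\infty\,(4\tilde{\boldsymbol{S}_\lambda})\,\boldsymbol{Z}_\infty^{\,T}.
\end{equation}

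Next I would reduce the four blocks using three identities derived from the definitions $\tilde{\boldsymbol{G}}^2=(\mathbf{I}+\tilde{\boldsymbol{H}}^2)^{-1}$ and $\tilde{\boldsymbol{H}}^2=(\tilde{\boldsymbol{S}_\lambda}-\tilde{\boldsymbol{S}})/(\tilde{\boldsymbol{S}_\lambda}+\tilde{\boldsymbol{S}})$. The pivotal intermediate observation is
\begin{equation}
\tilde{\boldsymbol{H}}(\tilde{\boldsymbol{S}_\lambda}+\tilde{\boldsymbol{S}})=\mathrm{sgn}(\lambda)\sqrt{\tilde{\boldsymbol{S}_\lambda}^2-\tilde{\boldsymbol{S}}^2}=\tfrac{\lambda}{2}\mathbf{I},
\end{equation}
which together with $\tilde{\boldsymbol{G}}^2=(\tilde{\boldsymbol{S}_\lambda}+\tilde{\boldsymbol{S}})/(2\tilde{\boldsymbol{S}_\lambda})$ yields
\begin{equation}
\tilde{\boldsymbol{G}}^2(\mathbf{I}\pm\tilde{\boldsymbol{H}})^{2}\,\tilde{\boldsymbol{S}_\lambda}=\tilde{\boldsymbol{S}_\lambda}\pm\tfrac{\lambda}{2}\mathbf{I},\qquad \tilde{\boldsymbol{G}}^{2}(\mathbf{I}-\tilde{\boldsymbol{H}}^{2})\,\tilde{\boldsymbol{S}_\lambda}=\tilde{\boldsymbol{S}}.
\end{equation}
Recognising the right-hand sides as $\tilde{\boldsymbol{S}}_1^{2}$, $\tilde{\boldsymbol{S}}_2^{2}$, and $\tilde{\boldsymbol{S}}$ respectively (using the definitions in Theorem~\ref{theorem:singular_values_lambda}), the top-left and bottom-right blocks give $\bfvt\tilde{\boldsymbol{S}}_1^{\,2}\bfvt^{T}$ and $\bfut\tilde{\boldsymbol{S}}_2^{\,2}\bfut^{T}$, while both off-diagonal blocks give $\bfut\tilde{\boldsymbol{S}}\bfvt^{T}$ (or its transpose). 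Comparing with the block interpretation $\qqt=[\wa^{T}\wa,\wa^{T}\wb^{T};\wb\wa,\wb\wb^{T}]$ immediately yields the three claimed equalities $\wa^{T}\wa=\bfvt\tilde{\boldsymbol{S}}_1^{\,2}\bfvt^{T}$, $\wb\wb^{T}=\bfut\tilde{\boldsymbol{S}}_2^{\,2}\bfut^{T}$, and $\wb\wa=\bfut\tilde{\boldsymbol{S}}\bfvt^{T}$.

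Finally I would note the role of the assumption that $\boldsymbol{B}$ is invertible: it is precisely what guarantees the denominator in the stable form stays bounded away from singularity (cf.\ the discussion in Section~\ref{sec:simulation-details}), hence ensuring convergence to the global minimum rather than to a saddle where a growing $e^{\tilde{\boldsymbol{S}_\lambda}t/\tau}$ mode would remain active. The main obstacle I anticipate is bookkeeping rather than conceptual: carefully verifying that every mixed $\lambda_\perp$ exponential decays under Assumption~\ref{ass:dimension} (since when $N_i=N_o$ the $\lambda_\perp$ block is absent, while otherwise $|\lambda_\perp|<\min_i\tilde s_{\lambda,i}$ on the task-relevant subspace), and executing the $(\mathbf{I}\pm\tilde{\boldsymbol{H}})^{2}\tilde{\boldsymbol{G}}^{2}\tilde{\boldsymbol{S}_\lambda}$ reductions without sign errors.
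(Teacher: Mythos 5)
Your proposal is correct and follows essentially the same route as the paper's own proof: take $t\to\infty$ in the stable form of Theorem~\ref{theorem:diag_unequal_dim_b_inv}, note that every exponential (including the mixed $e^{\boldsymbol{\lambda}_\perp t/\tau}e^{-\tilde{\boldsymbol{S}_\lambda}t/\tau}$ factors, since $|\lambda|/2<\tilde{s}_{\lambda,i}$) decays so the bracket tends to $(4\tilde{\boldsymbol{S}_\lambda})^{-1}$, and then apply the same diagonal identities $\tilde{\boldsymbol{G}}^2(\mathbf{I}\pm\tilde{\boldsymbol{H}})^2\tilde{\boldsymbol{S}_\lambda}=\tilde{\boldsymbol{S}_\lambda}\pm\tfrac{\lambda}{2}\mathbf{I}$ and $\tilde{\boldsymbol{G}}^2(\mathbf{I}-\tilde{\boldsymbol{H}}^2)\tilde{\boldsymbol{S}_\lambda}=\tilde{\boldsymbol{S}}$ to identify the four blocks. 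Your handling of the $\boldsymbol{\lambda}_\perp$ terms is in fact slightly more explicit than the paper's, which simply asserts those limits vanish.
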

\begin{proof}

    

As training time increases, all terms including a matrix exponential with negative exponent in Equation~\ref{eq:qqexact_binv} vanish to zero, as $\boldsymbol{S_{\lambda}}=\tilde{\boldsymbol{S}_{\lambda}}$ is a diagonal matrix with entries larger zero

As training time increases, all terms in the equations vanish to zero. 
Terms in Equation~\ref{eq:qqexact_binv} decay as 
\begin{equation}
    \lim_{t\to\infty} e^{-\sqrt{\tilde{\boldsymbol{S}}^2 + \frac{\lambda^2\mathbf{I}}{4}} \frac{t}{\tau} }= \textbf{0}\text{.}
\end{equation}
and 
\begin{equation}
\lim_{t\to\infty}  e^{{\lambda_{\perp}} \frac{t}{\tau}}e^{-\sqrt{\tilde{\boldsymbol{S}}^2 + \frac{\lambda^2}{4} \mathbf{I}} \frac{t}{\tau}} = \textbf{0}\text{.}
\end{equation}

where 
$\tilde{\boldsymbol{S_{\lambda}}}=\tilde{\boldsymbol{S}_{\lambda}}$ is a diagonal matrix with entries larger zero

Therefore, in the temporal limit, eq.~\ref{eq:qqexact_binv} reduces to
\begin{align}
    \lim_{t\to\infty}\qqtt &= \lim_{t\to\infty} \begin{bmatrix}
        \wa^T \wa(t) & \wa^T\wb^T(t) \\
        \wb^{} \wa(t) & \wb^T \wb(t)
    \end{bmatrix}\\
    &= \begin{bmatrix}
        \bfvt(\tilde{\boldsymbol{G}} - \tilde{\boldsymbol{H}} \tilde{\boldsymbol{G}}) \\
        \bfut (\tilde{\boldsymbol{H}} \tilde{\boldsymbol{G}} + \tilde{\boldsymbol{G}})
    \end{bmatrix}
    \begin{bmatrix}
        \tilde{\boldsymbol{S}_{\lambda}} ^{-1}
    \end{bmatrix}^{-1}
    \begin{bmatrix}
       ( \bfvt (\tilde{\boldsymbol{G}} - \tilde{\boldsymbol{H}} \tilde{\boldsymbol{G}}) )^T & (\bfut (\tilde{\boldsymbol{H}} \tilde{\boldsymbol{G}} + \tilde{\boldsymbol{G}}) )^T
    \end{bmatrix}\\
    &= \begin{bmatrix}
        \bfvt   (\tilde{\boldsymbol{G}} - \tilde{\boldsymbol{H}} \tilde{\boldsymbol{G}})\tilde{\boldsymbol{S}_{\lambda}} (\tilde{\boldsymbol{G}} - \tilde{\boldsymbol{H}} \tilde{\boldsymbol{G}})^T\bfvt^T & \bfvt  (\tilde{\boldsymbol{G}} - \tilde{\boldsymbol{H}} \tilde{\boldsymbol{G}})\tilde{\boldsymbol{S}_{\lambda}} (\tilde{\boldsymbol{H}} \tilde{\boldsymbol{G}} + \tilde{\boldsymbol{G}})^T\bfut^T \\
        \bfut (\tilde{\boldsymbol{H}} \tilde{\boldsymbol{G}} + \tilde{\boldsymbol{G}}) \tilde{\boldsymbol{S}_{\lambda}} (\tilde{\boldsymbol{G}} - \tilde{\boldsymbol{H}} \tilde{\boldsymbol{G}})^T\bfvt^T & \bfut (\tilde{\boldsymbol{H}} \tilde{\boldsymbol{G}} + \tilde{\boldsymbol{G}})\tilde{\boldsymbol{S}_{\lambda}} (\tilde{\boldsymbol{H}} \tilde{\boldsymbol{G}} + \tilde{\boldsymbol{G}})^T \bfut^T
    \end{bmatrix}.
\end{align}

\begin{align}
(\tilde{\boldsymbol{G}}-\tilde{\boldsymbol{H}}\tilde{\boldsymbol{G}})\tilde{\boldsymbol{S_{\lambda}}}(\tilde{\boldsymbol{G}}+\tilde{\boldsymbol{H}}\tilde{\boldsymbol{G}}) &= \frac{\boldsymbol{S_{\lambda}}(1-\tilde{\boldsymbol{H}}^2)}{1+\tilde{\boldsymbol{H}}^2} = \tilde{\boldsymbol{S}}
\end{align}

\begin{align}
\tilde{\boldsymbol{S}_{\lambda}} (\tilde{\boldsymbol{G}}-\tilde{\boldsymbol{H}}\tilde{\boldsymbol{G}})^2 &= \frac{\tilde{\boldsymbol{S}_{\lambda}} (1+\tilde{\boldsymbol{H}}^2)}{1+\tilde{\boldsymbol{H}}^2} -  \frac{\tilde{\boldsymbol{S}_{\lambda}} (2\tilde{\boldsymbol{H}})}{1+\tilde{\boldsymbol{H}}^2} =  \frac{\sqrt{4\tilde{\boldsymbol{S}}^2+\lambda^2\mathbf{I}} - \lambda\mathbf{I}}{2} 
\end{align}

\begin{align}
\tilde{\boldsymbol{S}_{\lambda}} (\tilde{\boldsymbol{G}}+\tilde{\boldsymbol{H}}\tilde{\boldsymbol{G}})^2 &= \frac{\tilde{\boldsymbol{S}_{\lambda}} (1+\tilde{\boldsymbol{H}}^2)}{1+\tilde{\boldsymbol{H}}^2} +  \frac{\tilde{\boldsymbol{S}_{\lambda}} (2\tilde{\boldsymbol{H}})}{1+\tilde{\boldsymbol{H}}^2} =  \frac{\sqrt{4\tilde{\boldsymbol{S}}^2+\lambda^2\mathbf{I}} + \lambda \mathbf{I} }{2} 
\end{align}

\begin{align}
    \lim_{t\to\infty}\qqtt &= \lim_{t\to\infty} \begin{bmatrix}
        \wa^T \wa(t) & \wa^T\wb^T(t) \\
        \wb^{} \wa(t) & \wb^T \wb(t)
    \end{bmatrix}\\
    &= \begin{bmatrix}
        \bfvt  \boldsymbol{S}_1^2 \bfvt^T & \bfvt  \tilde{\boldsymbol{S}}  \bfut^T \\
        \bfut \tilde{\boldsymbol{S}}   \bfvt^T & \bfut \boldsymbol{S}_2^2  \bfut^T
    \end{bmatrix}.
\end{align}

\end{proof}




\subsubsection{Representation in the limit }


\begin{theorem} \label{thm:internal_representations_lambda}

\noindent   Under the assumptions of Theorem \ref{theorem:diag_unequal_dim_b_inv},  training on data \(\boldsymbol{\Sigma}^{yx} = \tilde{\boldsymbol{U}} \tilde{\boldsymbol{S}} \tilde{\boldsymbol{V}}^T\), as \( \lambda \to \infty \) the representation tends to 
    \[
    \boldsymbol{W}_2 \boldsymbol{W}_2^T = \tilde{\boldsymbol{U}}  
     \begin{pmatrix} \lambda \mathbf{I} & 0_{\max(0, N_o - N_i)} \\ 0_{\max(0, N_o - N_i)} &  0
\end{pmatrix} \tilde{\boldsymbol{U}}^T
    \quad
    \boldsymbol{W}_1^T \boldsymbol{W}_1 = \frac{1}{\lambda} \tilde{\boldsymbol{V}} \begin{pmatrix}
\tilde{\boldsymbol{S}}^2  & 0_{max(0,N_i-N_o)} \\
0_{\max(0, N_i - N_o)} & 0
\end{pmatrix}  \tilde{\boldsymbol{V}}^T
    \]
    As \( \lambda \to -\infty \)
    \[
    \boldsymbol{W}_2 \boldsymbol{W}_2^T = - \frac{1}{\lambda} \tilde{\boldsymbol{U}}    \begin{pmatrix} \tilde{\boldsymbol{S}}^2  & 0_{\max(0, N_o - N_i)} \\ 0_{\max(0, N_o - N_i)} &  0
\end{pmatrix}  \tilde{\boldsymbol{U}}^T, 
    \quad 
    \boldsymbol{W}_1^T \boldsymbol{W}_1 =  
    \tilde{\boldsymbol{V}}\begin{pmatrix}   -\lambda \mathbf{I}  & 0_{\max(0, N_i - N_o)} \\ 0_{\max(0, N_i - N_o)} &  0
\end{pmatrix} \tilde{\boldsymbol{V}^T}
    \]
     As \( \lambda \to -\infty \)
    \[
    \boldsymbol{W}_2 \boldsymbol{W}_2^T = - \frac{1}{\lambda} \tilde{\boldsymbol{U}}    \begin{pmatrix} \tilde{\boldsymbol{S}}^2  & 0_{\max(0, N_o - N_i)} \\ 0_{\max(0, N_o - N_i)} &  0
\end{pmatrix}  \tilde{\boldsymbol{U}}^T, 
    \quad 
    \boldsymbol{W}_1^T \boldsymbol{W}_1 =  
    \tilde{\boldsymbol{V}}\begin{pmatrix}   -\lambda \mathbf{I}  & 0_{\max(0, N_i - N_o)} \\ 0_{\max(0, N_i - N_o)} &  0
\end{pmatrix} \tilde{\boldsymbol{V}^T}
    \]
\end{theorem}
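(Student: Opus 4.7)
The plan is to substitute the explicit expressions for $\boldsymbol{S}_1^2$ and $\boldsymbol{S}_2^2$ derived in Theorem~\ref{theorem:singular_values_lambda} into the limiting representations from Theorem~\ref{thm:limititing-behaviour}, and then take careful Taylor expansions as $\lambda \to \pm\infty$. This reduces the problem to a scalar calculation on the diagonal matrix $\sqrt{4\tilde{\boldsymbol{S}}^2+\lambda^2\mathbf{I}}$, together with bookkeeping of the padding zero-blocks that arise from the rank $N_h=\min(N_i,N_o)$ SVD structure.

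First, I would invoke Theorem~\ref{thm:limititing-behaviour} to obtain the convergent on-diagonal blocks $\boldsymbol{W}_1^T\boldsymbol{W}_1 = \tilde{\boldsymbol{V}}\boldsymbol{S}_1^2\tilde{\boldsymbol{V}}^T$ and $\boldsymbol{W}_2\boldsymbol{W}_2^T = \tilde{\boldsymbol{U}}\boldsymbol{S}_2^2\tilde{\boldsymbol{U}}^T$, with
\begin{equation}
\boldsymbol{S}_1^2 = \tfrac{1}{2}\bigl(\sqrt{4\tilde{\boldsymbol{S}}^2 + \lambda^2\mathbf{I}} - \lambda\mathbf{I}\bigr), \qquad \boldsymbol{S}_2^2 = \tfrac{1}{2}\bigl(\sqrt{4\tilde{\boldsymbol{S}}^2 + \lambda^2\mathbf{I}} + \lambda\mathbf{I}\bigr).
\end{equation}
Then I would rewrite $\sqrt{4\tilde{\boldsymbol{S}}^2 + \lambda^2\mathbf{I}} = |\lambda|\sqrt{\mathbf{I} + 4\tilde{\boldsymbol{S}}^2/\lambda^2}$ and Taylor expand the square root (which is valid entry-wise since $\tilde{\boldsymbol{S}}$ is diagonal) to get $|\lambda|\mathbf{I} + 2\tilde{\boldsymbol{S}}^2/|\lambda| + O(|\lambda|^{-3})$.

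The asymptotics then split by sign. For $\lambda\to+\infty$ we have $|\lambda|=\lambda$, so the dominant term in $\boldsymbol{S}_2^2$ is $\lambda\mathbf{I}$ while the leading terms in $\boldsymbol{S}_1^2$ cancel, leaving $\boldsymbol{S}_1^2\to \tilde{\boldsymbol{S}}^2/\lambda$. For $\lambda\to-\infty$ we have $|\lambda|=-\lambda$, so the cancellation happens in $\boldsymbol{S}_2^2$, giving $\boldsymbol{S}_2^2\to -\tilde{\boldsymbol{S}}^2/\lambda$, while $\boldsymbol{S}_1^2\to -\lambda\mathbf{I}$. Substituting these into $\tilde{\boldsymbol{U}}\boldsymbol{S}_2^2\tilde{\boldsymbol{U}}^T$ and $\tilde{\boldsymbol{V}}\boldsymbol{S}_1^2\tilde{\boldsymbol{V}}^T$ yields exactly the claimed expressions, once one writes the rank-$N_h$ diagonal block embedded in an $N_o\times N_o$ or $N_i\times N_i$ matrix using the partitioned form $\bigl[\tilde{\boldsymbol{U}}\;\tilde{\boldsymbol{U}}_\perp\bigr]$ (and analogously for $\tilde{\boldsymbol{V}}$); the $\max(0,N_o-N_i)$ and $\max(0,N_i-N_o)$ zero blocks correspond to the columns coming from $\tilde{\boldsymbol{U}}_\perp$, $\tilde{\boldsymbol{V}}_\perp$ respectively.

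The main obstacle, and really the only subtle point, is the dimensional bookkeeping: when $N_i \neq N_o$, the matrices $\boldsymbol{S}_1$ and $\boldsymbol{S}_2$ in Theorem~\ref{theorem:singular_values_lambda} are rectangular with an explicit zero pad, and one must check that the zero pads in the theorem statement combine correctly with the diagonal blocks $\lambda\mathbf{I}$ or $\tilde{\boldsymbol{S}}^2/\lambda$ when $\tilde{\boldsymbol{U}}$ or $\tilde{\boldsymbol{V}}$ is not square. Everything else is a routine scalar expansion. A small additional remark I would add is that the cross blocks $\boldsymbol{W}_2\boldsymbol{W}_1\to \tilde{\boldsymbol{U}}\tilde{\boldsymbol{S}}\tilde{\boldsymbol{V}}^T$ are independent of $\lambda$, which makes the $1/|\lambda|$ scaling of the small-weight representation transparent and justifies the ``semi-structured lazy'' interpretation discussed in the main text.
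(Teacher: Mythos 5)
Your proposal is correct and follows essentially the same route as the paper's proof: start from the converged representations $\boldsymbol{W}_1^T\boldsymbol{W}_1 = \tilde{\boldsymbol{V}}\boldsymbol{S}_1^2\tilde{\boldsymbol{V}}^T$, $\boldsymbol{W}_2\boldsymbol{W}_2^T = \tilde{\boldsymbol{U}}\boldsymbol{S}_2^2\tilde{\boldsymbol{U}}^T$ given by Theorem~\ref{thm:limititing-behaviour}, Taylor-expand $\sqrt{\lambda^2\mathbf{I}+4\tilde{\boldsymbol{S}}^2}=|\lambda|\sqrt{\mathbf{I}+4\tilde{\boldsymbol{S}}^2/\lambda^2}$, and split the asymptotics by the sign of $\lambda$, exactly as the paper does. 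Your extra attention to the rectangular zero-padding when $N_i\neq N_o$ is consistent with (and slightly more explicit than) the paper's treatment, but it is not a different method.
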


\begin{proof}
We start from the representation derived in \ref{thm:limititing-behaviour} and using the Taylor expansion of \( f(x) = \sqrt{1+x^2} \), we compute

\begin{equation}
\frac{\sqrt{\lambda^2 \mathbf{I} + 4 \tilde{\boldsymbol{S}}^2} + \lambda \mathbf{I}}{2} = \frac{ |\lambda| \sqrt{1 + \left(\frac{2 \tilde{\boldsymbol{S}}}{\lambda}\right)^2} + \lambda \mathbf{I}}{2}
\end{equation}

\begin{equation}
\frac{|\lambda| \left(1 + \left(\frac{2 \tilde{\boldsymbol{S}}}{\lambda}\right)^2 + O(\lambda^{-4})\right) + \lambda \mathbf{I}}{2} 
= \frac{|\lambda| + \lambda}{2} + \frac{\tilde{\boldsymbol{S}}^2}{|\lambda|} + O(\lambda^{-3})
\end{equation}

Hence 

\begin{equation}
\lim_{\lambda \to \infty} \frac{\sqrt{\lambda^2 \mathbf{I} + 4 \tilde{\boldsymbol{S}}^2} + \lambda \mathbf{I}}{2}  = \lambda \mathbf{I}, \quad \lim_{\lambda \to -\infty} \frac{\sqrt{\lambda^2 \mathbf{I} + 4 \tilde{\boldsymbol{S}}^2} + \lambda \mathbf{I}}{2}  = \frac{\tilde{\boldsymbol{S}}^2}{|\lambda|} = -\frac{\tilde{\boldsymbol{S}}^2}{\lambda}
\end{equation}

Similarly, 

\begin{equation}
\frac{\sqrt{\lambda^2 \mathbf{I} + 4 \tilde{\boldsymbol{S}}^2} - \lambda \mathbf{I}}{2} = \frac{|\lambda|-\lambda}{2} + \frac{\tilde{\boldsymbol{S}}^2}{|\lambda|} + O(\lambda^{-3})
\end{equation}

\begin{equation}
\lim_{\lambda \to \infty} \frac{\sqrt{\lambda^2 \mathbf{I} + 4 \tilde{\boldsymbol{S}}^2} - \lambda \mathbf{I}}{2}  = \frac{\tilde{\boldsymbol{S}}^2}{\lambda}, \quad \lim_{\lambda \to -\infty} \frac{\sqrt{\lambda^2 \mathbf{I} + 4 \tilde{\boldsymbol{S}}^2} - \lambda \mathbf{I}}{2}  = \frac{\tilde{\boldsymbol{S}}^2}{|\lambda|} = -\lambda \mathbf{I}
\end{equation}

Since \(\tilde{\boldsymbol{U}}, \tilde{\boldsymbol{V}}\) are independent of \(\lambda\):

\begin{equation}
\lim_{\lambda \to \pm \infty} \boldsymbol{W}_2 \boldsymbol{W}_2^T = \tilde{\boldsymbol{U}} \left( \lim_{\lambda \to \pm \infty} \boldsymbol{S}_2 \right) \tilde{\boldsymbol{U}}^T
\end{equation}

\begin{equation}
\lim_{\lambda \to \pm \infty} \boldsymbol{W}_1^T \boldsymbol{W}_1 = \tilde{\boldsymbol{V}} \left( \lim_{\lambda \to \pm \infty} \boldsymbol{S}_1 \right) \tilde{\boldsymbol{V}}^T
\end{equation}
\end{proof}

As $|\lambda| \to \infty$, one of the network representations approaches a scaled identity matrix, while the other tends toward zero. Intuitively, this suggests that the representations shift less and less as $|\lambda|$ increases. Next, we demonstrate that the NTK becomes progressively less variable as $|\lambda|$ grows and ultimately converges to zero.

\subsubsection{NTK movement }

Relationship between $\lambda$ and the NTK of the network
\begin{theorem} \label{thm:ntk_lambda}

\noindent Under the assumptions of Theorem \ref{theorem:diag_unequal_dim_b_inv}, consider a linear network training on data \(\boldsymbol{\Sigma}^{yx} = \tilde{\boldsymbol{U}} \tilde{\boldsymbol{S}} \tilde{\boldsymbol{V}}^T\). At any arbitrary training time \(t \geq 0\), let \(\boldsymbol{W}_2(t)\boldsymbol{W}_1(t) = \boldsymbol{U}^* \boldsymbol{S}^* \boldsymbol{V}^{*T}\). Then,

\begin{enumerate}
\item For any \(\lambda \in \mathbf{R}\):

\begin{equation}
\begin{aligned}
\text{NTK}(0) &= \mathbf{I}_{N_o} \otimes \boldsymbol{X}^T \boldsymbol{V} 
\begin{pmatrix}
\frac{\sqrt{\lambda^2 \mathbf{I} + 4\boldsymbol{S}^{*2}} - \lambda \mathbf{I}}{2} & 0 \\
0 & 0
\end{pmatrix} 
\boldsymbol{V}^T \boldsymbol{X} \\
&\quad + \boldsymbol{U} 
\begin{pmatrix}
\frac{\sqrt{\lambda^2 \mathbf{I} + 4\boldsymbol{S}^{*2}} + \lambda \mathbf{I}}{2} & 0 \\
0 & 0
\end{pmatrix} 
\boldsymbol{U}^T \otimes \boldsymbol{X}^T \boldsymbol{X}
\end{aligned}
\end{equation}

\begin{equation}
\begin{aligned}
\text{NTK}(t) &= \mathbf{I}_{N_o} \otimes \boldsymbol{X}^T \boldsymbol{V}^* 
\begin{pmatrix}
\frac{\sqrt{\lambda^2 \mathbf{I} + 4\boldsymbol{S}^{*2}} - \lambda \mathbf{I}}{2} & 0 \\
0 & 0
\end{pmatrix} 
\boldsymbol{V}^{*T} \\
&\quad + \boldsymbol{U}^* 
\begin{pmatrix}
\frac{\sqrt{\lambda^2 \mathbf{I} + 4\boldsymbol{S}^{*2}} + \lambda \mathbf{I}}{2} & 0 \\
0 & 0
\end{pmatrix} 
\boldsymbol{U}^{*T} \otimes \boldsymbol{X}^T \boldsymbol{X}
\end{aligned}
\end{equation}

\item As \(\lambda \to \infty\):

\begin{equation}
\begin{aligned}
\text{NTK}(t) - \text{NTK}(0) &\to \frac{1}{\lambda} \left( \mathbf{I}_{N_o} \otimes \boldsymbol{X}^T \boldsymbol{V}^* \tilde{\boldsymbol{S}}^{*2} \boldsymbol{V}^{*T} \boldsymbol{X} - \mathbf{I}_{N_o} \otimes \boldsymbol{X}^T \boldsymbol{V} \tilde{\boldsymbol{S}}^2 \boldsymbol{V}^T \boldsymbol{X} \right) \to 0
\end{aligned}
\end{equation}

\item As \(\lambda \to -\infty\):

\begin{equation}
\begin{aligned}
\text{NTK}(t) - \text{NTK}(0) &\to \frac{1}{\lambda} \left( \boldsymbol{U} \tilde{\boldsymbol{S}}^2 \boldsymbol{U}^T \otimes \boldsymbol{X}^T \boldsymbol{X} - \boldsymbol{U}^* \tilde{\boldsymbol{S}}^{*2} \boldsymbol{U}^{*T} \otimes \boldsymbol{X}^T \boldsymbol{X} \right) \to 0
\end{aligned}
\end{equation}

\end{enumerate}
\end{theorem}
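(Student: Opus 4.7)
The strategy is to apply Theorem~\ref{theorem:singular_values_lambda} to decompose the weight outer products that appear in the NTK, substitute into the NTK definition, and then Taylor-expand in $1/|\lambda|$ to extract the leading-order behaviour.

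First, at any time $t \ge 0$, write $\boldsymbol{W}_2(t)\boldsymbol{W}_1(t) = \boldsymbol{U}^*(t)\boldsymbol{S}^*(t)\boldsymbol{V}^{*T}(t)$. Theorem~\ref{theorem:singular_values_lambda} then gives
\begin{equation*}
\boldsymbol{W}_1^T\boldsymbol{W}_1(t) = \boldsymbol{V}^*\begin{pmatrix} \boldsymbol{S}^*_\lambda - \tfrac{\lambda}{2}\mathbf{I} & 0 \\ 0 & 0 \end{pmatrix}\boldsymbol{V}^{*T}, \qquad
\boldsymbol{W}_2\boldsymbol{W}_2^T(t) = \boldsymbol{U}^*\begin{pmatrix} \boldsymbol{S}^*_\lambda + \tfrac{\lambda}{2}\mathbf{I} & 0 \\ 0 & 0 \end{pmatrix}\boldsymbol{U}^{*T},
\end{equation*}
where $\boldsymbol{S}^*_\lambda := \sqrt{\boldsymbol{S}^{*2} + \lambda^2\mathbf{I}/4}$, so that $2\boldsymbol{S}^*_\lambda \pm \lambda\mathbf{I} = \sqrt{4\boldsymbol{S}^{*2} + \lambda^2\mathbf{I}} \pm \lambda\mathbf{I}$. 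Substituting these into the NTK definition from Section~\ref{sec:preliminaries} immediately yields part~(1); setting $t=0$ and writing $\boldsymbol{W}_2(0)\boldsymbol{W}_1(0) = \boldsymbol{U}\boldsymbol{S}\boldsymbol{V}^T$ gives the companion formula for $\NTK(0)$.

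For parts~(2) and~(3) the plan is to Taylor-expand $\sqrt{4\boldsymbol{S}^{*2} + \lambda^2\mathbf{I}} = |\lambda|\mathbf{I} + 2\boldsymbol{S}^{*2}/|\lambda| + O(|\lambda|^{-3})$ as $|\lambda|\to\infty$. For $\lambda\to+\infty$ this gives $\boldsymbol{S}^*_\lambda - \tfrac{\lambda}{2}\mathbf{I} = \boldsymbol{S}^{*2}/\lambda + O(1/\lambda^3)$, so the input half of $\NTK(t) - \NTK(0)$ equals $\tfrac{1}{\lambda}\bigl[\mathbf{I}_{N_o}\otimes\boldsymbol{X}^T(\boldsymbol{V}^*\boldsymbol{S}^{*2}\boldsymbol{V}^{*T} - \boldsymbol{V}\boldsymbol{S}^2\boldsymbol{V}^T)\boldsymbol{X}\bigr] + O(1/\lambda^3)$, matching the claim. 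On the output side $\boldsymbol{S}^*_\lambda + \tfrac{\lambda}{2}\mathbf{I} = \lambda\mathbf{I} + \boldsymbol{S}^{*2}/\lambda + O(1/\lambda^3)$; the dominant $\lambda$ piece contributes $\lambda\,\boldsymbol{U}^*_{1:N_h}\boldsymbol{U}^{*T}_{1:N_h}\otimes\boldsymbol{X}^T\boldsymbol{X}$, and this cancels between times $t$ and $0$ in the lazy limit because the column space of $\boldsymbol{W}_2$ is frozen at initialization to leading order. The remaining $O(1/\lambda)$ residual then vanishes as $\lambda\to\infty$. The case $\lambda\to-\infty$ is handled symmetrically by swapping the roles of the two diagonal blocks.

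The main obstacle is justifying that the dominant $\lambda$ contribution in the output block genuinely cancels in $\NTK(t) - \NTK(0)$. In the square case ($N_h = N_o$) this is automatic because $\boldsymbol{U}^*(t)\boldsymbol{U}^{*T}(t) = \mathbf{I}_{N_o}$ for every $t$. For rectangular networks one must argue that $\mathrm{Im}(\boldsymbol{W}_2(t))$ is perturbed by at most $O(1/|\lambda|)$ in the lazy limit; a direct approach is to bound $\|\boldsymbol{W}_2(t) - \boldsymbol{W}_2(0)\|$ using the gradient flow $\tau\dot{\boldsymbol{W}}_2 = (\tilde{\boldsymbol{\Sigma}}^{yx} - \boldsymbol{W}_2\boldsymbol{W}_1)\boldsymbol{W}_1^T$, together with $\|\boldsymbol{W}_1\| = O(1/\sqrt{|\lambda|})$ from the balance condition and the $O(1/|\lambda|)$ convergence time scale extracted from Theorem~\ref{thm:singular-values}.
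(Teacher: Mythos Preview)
Your approach is exactly the paper's: plug the representation formulas from Theorem~\ref{theorem:singular_values_lambda} into the NTK definition for part~(1), then use the Taylor expansion $\sqrt{\lambda^2\mathbf{I} + 4\boldsymbol{S}^{2}} = |\lambda|\mathbf{I} + 2\boldsymbol{S}^{2}/|\lambda| + O(|\lambda|^{-3})$ (as in Theorem~\ref{thm:internal_representations_lambda}) together with bilinearity of the Kronecker product for parts~(2)--(3). The paper's own proof is a two-line sketch invoking precisely these ingredients; your concern about the leading-$\lambda$ cancellation in the rectangular case is legitimate and not explicitly handled there, so your proposed lazy-limit bound on $\|\boldsymbol{W}_2(t)-\boldsymbol{W}_2(0)\|$ is in fact more careful than the paper's argument.
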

\begin{proof}[\textbf{Proof}]
 Follows by substituting the expressions for the network representations in terms of $\lambda$ from (\cite{braun2023exact})'s expression for the NTK of a linear network.
Similarly, follows from substituting the limit expressions for the network representations and the fact that the Kronecker product is linear in both arguments.
\end{proof}
{
The theorem above demonstrates that as \( |\lambda| \to \infty \), the NTK of a \(\lambda\)-Balanced network remains constant. This indicates that the network operates in the \emph{lazy} regime throughout all training steps. 
The \(\lambda\)-balanced condition imposes a relationship between the singular values of the two weight matrices. Specifically, if \( \wb \) and \( \wa \) are \(\lambda\)-balanced and satisfy \( \wb \wa = {\bf \Sigma_{yx}} \), then for arbitrary singular values \( a_i, b_i, \) and \( s_i \), the following relations hold: \[a_i^2 - b_i^2 = \lambda, \quad a_i \cdot b_i = s_i.\]
As \(\lambda\) increases, the value of \( b_i \) must decrease. In the limit as \(\lambda \to \infty\), \( a_i^2 \to \lambda \) and \( b_i^2 \to 0 \). From the first equation, when \( b_i^2 \to 0 \), \( a_i^2 \to \lambda \). Since these equations apply to all singular values of the matrices, it follows that for all \( i \), \( a_i^2 \to \lambda \), leading to the conclusion that: \[\wb^T \wb= \lambda {\bf I},\]as expected. Consequently, the task representation becomes task-agnostic in \( \wa \).
The intuition here is that the weights are constrained by the need to fit the data, which bounds their overall norms. The \(\lambda\)-balanced condition further specifies a relationship between these norms, and as \(|\lambda|\) increases, this constraint tightens, driving \( \wb \) toward the identity matrix. In this regime, the network behaves similarly to a shallow network, with \(\lambda\) acting as a toggle between deep and shallow learning dynamics. This finding is significant as it highlights the impact of weight initialization on learning regimes. 
}
\subsection{Representation robustness and sensitivity to noise} \label{app:noise-sensitivity}
As derived in \citep{braun2025not}, the expected mean squared error under additive, independent and identically distributed input noise with mean $\mu=0$ and variance $\sigma_\bfx^2$ is
\begin{equation}
    \begin{aligned}
        \left\langle\frac{1}{2P}\sum_{i=1}^P||\wb\wa\left(\bfx_i + \xi_\bfx\right) - \bfy_i||_2^2\right\rangle_{\xi_\bfx} = \sigma_\bfx^2||\wb\wa||_F^2 + c,
    \end{aligned}
\end{equation}
where $c = \frac{1}{2}\Tr(\sigyy) - \frac{1}{2}\Tr(\sigyx\sigyxt)$ is a noise independent constant that only depends on the statistics of the training data. In Theorem~\ref{thm:limititing-behaviour} we show that the network function converges to $\bfut\bfst\bfvt^T$ and therefore
\begin{equation}
    \begin{aligned}
        \sigma_\bfx^2||\wb\wa||_F^2 &= \sigma_\bfx^2||\bfut\bfst\bfvt^T||_F^2\\
        &= \sigma_\bfx^2||\bfst||_F^2\\
        &= \sigma_\bfx^2\sum_{i=1}^{N_h}\bfst_i^2
    \end{aligned}
\end{equation}
As derived in \citep{braun2025not}, under the assumption of whitened inputs (Assumption~\ref{ass:whitened}), in the case of additive parameter noise with $\mu=0$ and variance $\sigma_\bfW^2$, the expected mean squared error is
\begin{equation}
    \begin{aligned}
        &\left\langle\frac{1}{2P}\sum_{i=1}^P||\left(\wb + \xi_{\wb}\right)\left(\wa + \xi_{\wa}\right)\bfx_i - \bfy_i||_2^2\right\rangle_{\xi_{\wa}, \xi_{\wb}}\\
        &=\frac{1}{2}N_i\sigma^2_\bfW||\wb||_F^2 + \frac{1}{2} N_o\sigma^2_\bfW||\wa||_F^2 + \frac{1}{2} N_iN_hN_o\sigma^4 + c.
    \end{aligned}
\end{equation}
Using Theorem~\ref{thm:limititing-behaviour}, we have
\begin{equation}
    \begin{aligned}
        ||\wa||_F^2 &= \Tr(\wa^T\wa)\\
        &= \Tr\left(\frac{\sqrt{\lambda^2 \bfi + 4 \bfst^2} + \lambda \bfi}{2}\right)\\
        &= \frac{1}{2}\left(\sum_{i=1}^{N_h}\sqrt{\lambda^2 + 4 \bfst_i^2} + \lambda\right)
    \end{aligned}
\end{equation}
and
\begin{equation}
    \begin{aligned}
        ||\wb||_F^2 &= \Tr(\wb\wb^T)\\
        &= \Tr\left(\frac{\sqrt{\lambda^2 \bfi + 4 \bfst^2} - \lambda \bfi}{2}\right)\\
        &= \frac{1}{2}\left(\sum_{i=1}^{N_h}\sqrt{\lambda^2 + 4 \bfst_i^2} - \lambda\right).
    \end{aligned}
\end{equation}
To find the $\lambda$ that minimises the expected loss, we substitute the equations for the norms, take the partial derivative with respect to $\lambda$ and set it to zero
\begin{equation}
    \begin{aligned}
        &\frac{\partial\left\langle\mathcal{L}\right\rangle_{\xi_{\wa}, \xi_{\wb}}}{\partial\lambda} \stackrel{!}{=} 0\\
        \Leftrightarrow &\frac{1}{4}N_i\sigma^2_\bfW \frac{\partial}{\partial\lambda}\Big(\sum_{i=1}^{N_h}\sqrt{\lambda^2 + 4 \bfst_i^2} - \lambda\Big) + \frac{1}{4}N_o\sigma^2_\bfW \frac{\partial}{\partial\lambda}\left(\sum_{i=1}^{N_h}\sqrt{\lambda^2 + 4 \bfst_i^2} + \lambda\right) = 0\\
        \Leftrightarrow & N_i\sum_{i=1}^{N_h}\frac{\lambda}{\sqrt{\lambda^2 + 4 \bfst_i^2}}  - N_iN_h + N_o \sum_{i=1}^{N_h}\frac{\lambda}{\sqrt{\lambda^2 + 4 \bfst_i^2}} + N_oN_h = 0\\
        \Leftrightarrow & \sum_{i=1}^{N_h}\frac{\lambda}{\sqrt{\lambda^2 + 4 \bfst_i^2}} = N_h\frac{N_i - N_o}{N_i + N_o}.
    \end{aligned}
\end{equation}
It follows, that under the assumption that $N_i = N_o$, the equation reduces to
\begin{equation}
    \sum_{i=1}^{N_h}\frac{\lambda}{\sqrt{\lambda^2 + 4 \bfst_i^2}} = 0.
\end{equation}
We note, that the denominator is always positive and therefore, that the left-hand side of the equation is always larger zero for any $\lambda > 0$, and smaller than zero for any $\lambda < 0$. The euqation is therefore only solved for $\lambda = 0$.

\subsection{ Effect of the architecture from the lazy to the Rich Regime}

\begin{theorem}\label{thm:rate-delayed-rich}
Under the conditions of Theorem \ref{theorem:diag_unequal_dim_b_inv}, when \(\lambda_{\perp} > 0\), the network enters a regime referred to as the delayed-rich phase. In this phase, the learning rate is determined by two competing exponential factors:
\[
e^{\lambda_{\perp} \frac{t}{\tau}} e^{-\sqrt{\tilde{\boldsymbol{S}}^2 + \frac{\lambda^2}{4} \mathbf{I}} \frac{t}{\tau}}
\]
and
\[
e^{-\sqrt{\tilde{\boldsymbol{S}}^2 + \frac{\lambda^2}{4} \mathbf{I}} \frac{t}{\tau}}.
\]

As \(\lambda\) increases, various parts of the network display different learning dynamics: some components adjust rapidly, converging exponentially with \(\lambda\), while others adapt more slowly, with their convergence rate inversely proportional to \(\lambda\), leading to a slow adaptation.
\end{theorem}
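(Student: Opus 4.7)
The plan is to read off the temporal behavior of the solution in Theorem \ref{theorem:diag_unequal_dim_b_inv} (equation \ref{eq:qqexact_binv}) by cataloguing every exponential factor that appears in $\boldsymbol{Z}(t)$ and in the inverted bracket, identify the one with the smallest positive decay rate, and then Taylor-expand that rate in $\lambda$ to exhibit the two regimes announced in the statement.

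First I would write $\qqtt$ schematically as $\boldsymbol{Z}(t)\,\boldsymbol{M}(t)^{-1}\,\boldsymbol{Z}(t)^T$, with each block of $\boldsymbol{Z}(t)$ and $\boldsymbol{M}(t)$ a linear combination of the three elementary time factors
\begin{equation*}
    E_1(t)=e^{-\tilde{\boldsymbol{S}_{\lambda}}\frac{t}{\tau}},\quad
    E_2(t)=e^{-2\tilde{\boldsymbol{S}_{\lambda}}\frac{t}{\tau}},\quad
    E_3(t)=e^{\lambda_{\perp}\frac{t}{\tau}}\,e^{-\tilde{\boldsymbol{S}_{\lambda}}\frac{t}{\tau}},
\end{equation*}
together with a constant (the identity and the $\mathbf{I}/(4\tilde{\boldsymbol{S}_{\lambda}})$ contribution) that produces the asymptotic value computed in Theorem \ref{thm:limititing-behaviour}. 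Since the $\bfuth,\bfvth$ columns contribute through $e^{\lambda_{\perp} t/\tau}\bfb^{-T}e^{-\tilde{\boldsymbol{S}_{\lambda}}t/\tau}$, every $D$-type block in Theorem \ref{thm:lamdba-ballanced-dynamics} feeds into $E_3$, while the $B,C$ blocks feed into $E_1,E_2$. This makes the claim that these are the only two relevant time scales into a bookkeeping exercise.

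Next I would subtract the limiting value from $\qqtt$ and show that the deviation is bounded above and below (up to constants in $\lambda$) by $\max\{E_1(t),E_3(t)\}$. The key point is that when $\lambda_{\perp}>0$, the rate $\tilde{\boldsymbol{S}_{\lambda}}-\lambda_{\perp}$ governing $E_3$ is strictly smaller than $\tilde{\boldsymbol{S}_{\lambda}}$ (the rate of $E_1$), so $E_3$ dictates the slow approach to convergence while $E_1$ governs the fast one. Concretely, using $\tilde{\boldsymbol{S}_{\lambda}}=\sqrt{\tilde{\boldsymbol{S}}^2+\tfrac{\lambda^2}{4}\mathbf{I}}$ and $\lambda_{\perp}=\tfrac{\lambda}{2}\mathbf{I}$ in the case $\lambda_{\perp}>0$, a one-line Taylor expansion in $1/\lambda$ yields
\begin{equation*}
    \tilde{\boldsymbol{S}_{\lambda}}-\lambda_{\perp}
    =\frac{\lambda}{2}\Bigl(\sqrt{\mathbf{I}+\tfrac{4\tilde{\boldsymbol{S}}^2}{\lambda^2}}-\mathbf{I}\Bigr)
    =\frac{\tilde{\boldsymbol{S}}^2}{\lambda}+\mathcal{O}(\lambda^{-3}),
\end{equation*}
while $\tilde{\boldsymbol{S}_{\lambda}}=\tfrac{\lambda}{2}+\tfrac{\tilde{\boldsymbol{S}}^2}{\lambda}+\mathcal{O}(\lambda^{-3})$. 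Thus $E_1$ decays at rate $\Theta(\lambda)$ (the exponentially fast, lazy-like component) and $E_3$ decays at rate $\Theta(\tilde{\boldsymbol{S}}^2/\lambda)$ (the slow, delayed-rich component); the ratio of the two time-scales diverges as $\lambda\to\infty$, which is precisely the delayed-rich phenomenon.

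The main obstacle I anticipate is not the asymptotic expansion but justifying rigorously that no accidental cancellation promotes a component from the $E_3$ family into a faster one. This boils down to a non-degeneracy check: the $\boldsymbol{D}$-block of the initialization (carrying $\bfuth,\bfvth$) must have a nonzero projection onto the surviving limit, which is generic for the initializations considered. Once this is handled, the statement follows by matching the two displayed time factors with the two decay rates above and reading off the limits.
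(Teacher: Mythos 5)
Your plan is correct and follows essentially the same route as the paper's proof: isolate the two time-dependent factors $e^{-\tilde{\boldsymbol{S}_{\lambda}}\frac{t}{\tau}}$ and $e^{\lambda_{\perp}\frac{t}{\tau}}e^{-\tilde{\boldsymbol{S}_{\lambda}}\frac{t}{\tau}}$ in the stable solution and Taylor-expand their decay rates, obtaining $\Theta(\lambda)$ for the fast component and $\tilde{\boldsymbol{S}}^2/\lambda$ for the slow one. Your added non-degeneracy check on the $\boldsymbol{D}$-block is a reasonable extra rigor point that the paper's (informal) argument simply omits.
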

\begin{proof}
The solution to Theorem \ref{theorem:diag_unequal_dim_b_inv} is governed by two time-dependent terms:
\[
e^{-\sqrt{\tilde{\boldsymbol{S}}^2 + \frac{\lambda^2 \mathbf{I}}{4}} \frac{t}{\tau}} \quad \text{and} \quad e^{\lambda_{\perp} \frac{t}{\tau}} e^{-\sqrt{\tilde{\boldsymbol{S}}^2 + \frac{\lambda^2}{4} \mathbf{I}} \frac{t}{\tau}}.
\]

The first term exhibits exponential decay approaching zero as time progresses:
\[
    \lim_{t \to \infty} e^{-\sqrt{\tilde{\boldsymbol{S}}^2 + \frac{\lambda^2 \mathbf{I}}{4}} \frac{t}{\tau}} = \mathbf{0}.
\]

In the limit as lambda gets large the rate of learning is given by

\begin{equation}
\lim_{\lambda \to \infty} \frac{\sqrt{\lambda^2 \mathbf{I} + 4 \tilde{\boldsymbol{S}}^2}}{2}  = \frac{\lambda \mathbf{I} }{2}, 
\end{equation}

The second term also decays over time
\[
\lim_{t \to \infty} e^{\lambda_{\perp} \frac{t}{\tau}} e^{-\sqrt{\tilde{\boldsymbol{S}}^2 + \frac{\lambda^2}{4} \mathbf{I}} \frac{t}{\tau}} = \mathbf{0}.
\]

but in the limit as lambda gets large the rate of learning is given by

\begin{equation}
\lim_{\lambda \to \infty} \frac{\sqrt{\lambda^2 \mathbf{I} + 4 \tilde{\boldsymbol{S}}^2} - \lambda \mathbf{I}}{2}  = \frac{\tilde{\boldsymbol{S}}^2}{\lambda}
\end{equation}

Thus, as \(\lambda\) increases, the convergence rate slows for certain parts of the network, while other components continue to learn more quickly. This explains the delay observed in the delayed-rich regime.
\end{proof}

\section{Appendix: Application}
\label{app:application}

\subsection{Appendix: Continual Learning}
\label{app:continual_learning}
\noindent 
We build upon the derivation presented in \cite{braun2023exact} to incorporate the dynamics of continual learning throughout the entire learning trajectory.
Utilizing the assumption of whitened inputs, the entire batch loss for the $i$th task is
$$
\begin{aligned}
\mathcal{L}_i\left(\mathcal{T}_j\right) & =\frac{1}{2 P}\left\|\mathbf{W}_2 \mathbf{W}_1 \mathbf{X}_i-\mathbf{Y}_i\right\|_F^2 \\
& =\frac{1}{2 P} \operatorname{Tr}\left(\left(\mathbf{W}_2 \mathbf{W}_1 \mathbf{X}_i-\mathbf{Y}_i \mid\right)\left(\mathbf{W}_2 \mathbf{W}_1 \mathbf{X}_i-\mathbf{Y}_i \mid\right)^T\right) \\
& =\frac{1}{2 P} \operatorname{Tr}\left(\mathbf{W}_2 \mathbf{W}_1 \mathbf{X}_i \mathbf{X}_i^T  (\mathbf{W}_2 \mathbf{W}_1)^T
\right)-\frac{1}{P} \operatorname{Tr}\left(\mathbf{W}_2 \mathbf{W}_1 \mathbf{X}_i \mathbf{Y}_i^T\right)+\frac{1}{2 P} \operatorname{Tr}\left(\mathbf{Y}_i \mathbf{Y}_i^T\right) \\
& =\frac{1}{2} \operatorname{Tr}\left(\mathbf{W}_2 \mathbf{W}_1 (\mathbf{W}_2 \mathbf{W}_1)^T \right)-\operatorname{Tr}\left(\mathbf{W}_2 \mathbf{W}_1 \tilde{\boldsymbol{\Sigma}}_i^{y x^T}\right)+\frac{1}{2} \operatorname{Tr}\left(\tilde{\boldsymbol{\Sigma}}_i^{y y}\right) \\
& =\frac{1}{2} \operatorname{Tr}\left(\left(\mathbf{W}_2 \mathbf{W}_1-\tilde{\boldsymbol{\Sigma}}_i^{y x}\right)\left(\mathbf{W}_2 \mathbf{W}_1-\tilde{\boldsymbol{\Sigma}}_i^{y x}\right)^T-\tilde{\boldsymbol{\Sigma}}_i^{y x} \tilde{\boldsymbol{\Sigma}}_i^{y x^T}\right)+\frac{1}{2}\left(\tilde{\boldsymbol{\Sigma}}_i^{y y}\right) \\
& =\frac{1}{2}\left\|\mathbf{W}_2 \mathbf{W}_1-\tilde{\boldsymbol{\Sigma}}_i^{y x}\right\|_F^2 \underbrace{-\frac{1}{2} \operatorname{Tr}\left(\tilde{\boldsymbol{\Sigma}}_i^{y x} \tilde{\boldsymbol{\Sigma}}_i^{y x^T}\right)+\frac{1}{2}\left(\tilde{\boldsymbol{\Sigma}}_i^{y y}\right)}_c .
\end{aligned}
$$

\noindent Hence, the extent of forgetting, denoted as $\mathcal{F}$ for task $\mathcal{T}_i$ during training on task $\mathcal{T}_k$ subsequent to training the network on task $\mathcal{T}_j$, specifically, the relative change in loss, is entirely dictated by the similarity structure among tasks.

$$
\begin{aligned}
\mathcal{F}_i\left(\mathcal{T}_j, \mathcal{T}_k\right) & =\mathcal{L}_i\left(\mathcal{T}_k\right)-\mathcal{L}_i\left(\mathcal{T}_j\right) \\
& =\frac{1}{2}\left\|\tilde{\boldsymbol{\Sigma}}_k^{y x}-\tilde{\boldsymbol{\Sigma}}_i^{y x}\right\|_F^2+c-\frac{1}{2}\left\|\mathbf{W}_2 \mathbf{W}_1-\tilde{\boldsymbol{\Sigma}}_i^{y x}\right\|_F^2-c \\
& =\frac{1}{2}\left(\left\|\tilde{\boldsymbol{\Sigma}}_k^{y x}-\tilde{\boldsymbol{\Sigma}}_i^{y x}\right\|_F^2-\left\|\mathbf{W}_2 \mathbf{W}_1-\tilde{\boldsymbol{\Sigma}}_i^{y x}\right\|_F^2\right) .
\end{aligned}
$$

\noindent It is important to note that the amount of forgetting is a function of the weight trajectories. Therefore, we have analytical solutions for trajectories of forgetting as well.

\begin{figure}[H]
\begin{center}
\includegraphics[width=0.7\textwidth]{./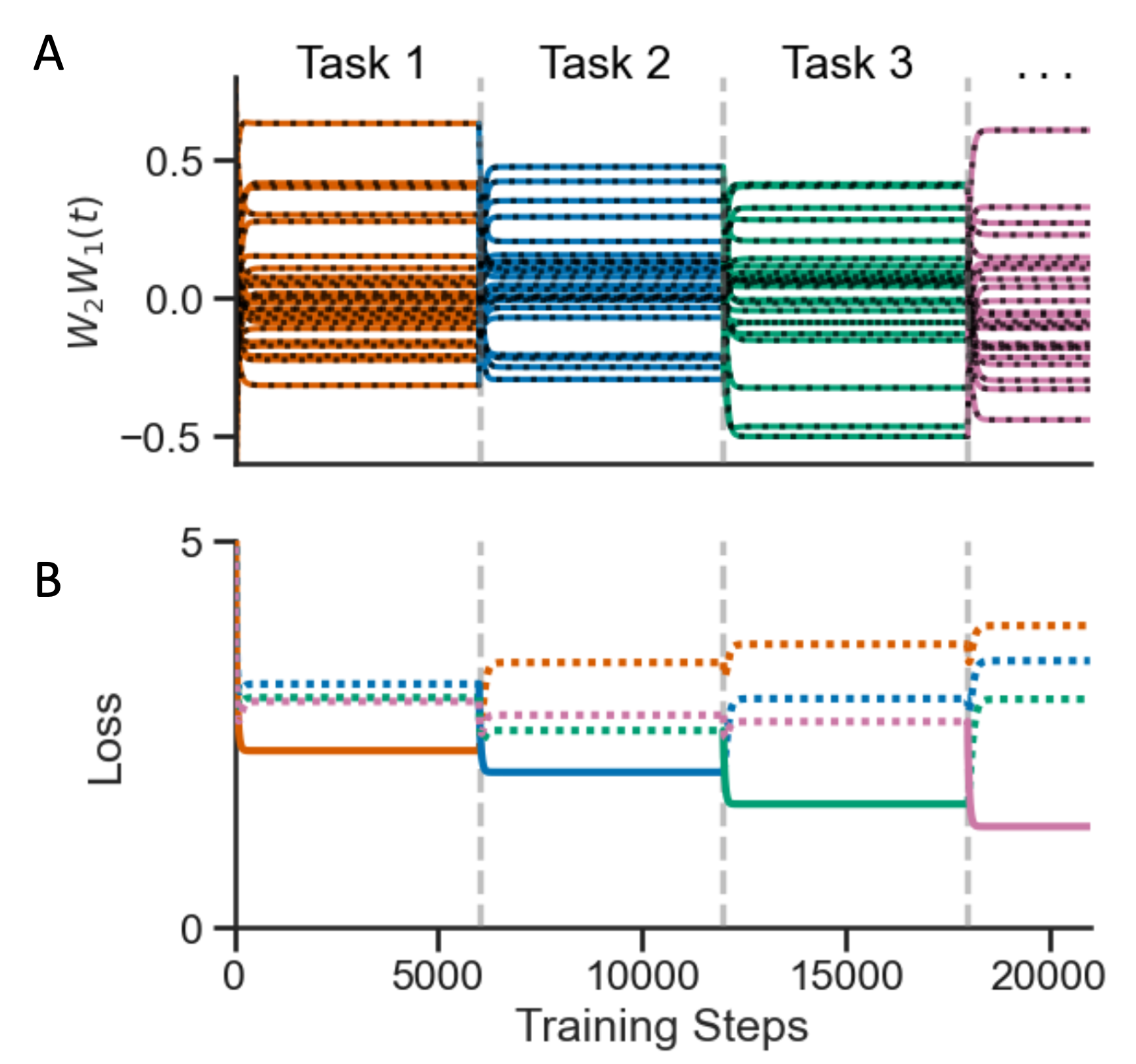}
\end{center}
\label{fig:continual_learning} 
\caption{Continual learning. \textbf{A} Top: Network training from small zero-balanced weights across a sequence of tasks (colored lines represent simulations, and black dotted lines represent analytical results). Bottom: Evaluation loss for the tasks in the sequence (dotted lines) while training on the current task (solid lines). As the network optimizes its function on the current task, the loss on previously learned tasks increases.}
\end{figure}


Figure.~\ref{fig:continual_learning} panel was generated by training a linear network with $N_i=5$, $N_h=10$, $N_o=6$ subsequently on four different random regression tasks with $N=25$. The learning rate was $\eta = 0.05$ and the initial weights were small ($\sigma=0.0001$).


\subsection{Appendix: Reversal Learning}
\label{app:reversal_learning}

As first introduced in \cite{braun2023exact}, in the following discussion, we assume that the input and output dimensions are equal. We denote the \(i\)-th columns of the left and right singular vectors as \(\bu_i\), \(\but_i\), and \(\bv_i\), \(\bvt_i\), respectively.

Reversal learning occurs when both the task and the initial network function share the same left and right singular vectors, i.e., \(\bfu = \bfut\) and \(\bfv = \bfvt\), with the exception of one or more columns of the left singular vectors, where the direction is reversed:  $ -\bu_i = \but_i\text{.}$

It is important to note that if a reversal occurs in the right singular vectors, such that \(-\bv_i = \bvt_i\), this can be equivalently represented as a reversal in the left singular vectors, as the signs of the right and left singular vectors are interchangeable.

In the reversal learning setting, both $
\boldsymbol{B} = \boldsymbol{S}_2 \tilde{\boldsymbol{U}}^T \tilde{\boldsymbol{U}} (\tilde{\boldsymbol{G}} + \tilde{\boldsymbol{H}} \tilde{\boldsymbol{G}}) + \boldsymbol{S}_1 \boldsymbol{V}^T \tilde{\boldsymbol{V}} (\tilde{\boldsymbol{G}} - \tilde{\boldsymbol{H}} \tilde{\boldsymbol{G}})$ and $\boldsymbol{C} = \boldsymbol{S}_2 \tilde{\boldsymbol{U}}^T \tilde{\boldsymbol{U}} (\tilde{\boldsymbol{G}} - \tilde{\boldsymbol{H}} \tilde{\boldsymbol{G}}) - \boldsymbol{S}_1 \boldsymbol{V}^T \tilde{\boldsymbol{V}} (\tilde{\boldsymbol{G}} + \tilde{\boldsymbol{H}} \tilde{\boldsymbol{G}})$ are diagonal matrices.\\

In the case where lambda is zero, the same argument given in \cite{braun2023exact} follows, the diagonal entries of $\bfc$ are zero if the singular vectors are aligned and non zero if they are reversed. Similarly, diagonal entries of $\bfb$ are non-zero if the singular vectors are aligned and zero if they are reversed. Therefore, in the case of reversal learning, $\bfb$ is a diagonal matrix with $0$ values and thus is not invertible. As a consequence, the learning dynamics cannot be described by Equation~\ref{eq:qqexact}. However, as $\bfb$ and $\bfc$ are diagonal matrices, the learning dynamics simplify. Let $\bb_i$, $\bc_i$, $\bs_i$ and $\bst_i$ denote the $i$-th diagonal entry of $\bfb$, $\bfc$, $\bfs$ and $\bfst$ respectively, then the network dynamics can be rewritten as

\begin{align}
    \wb\wa(t) =&\ \frac{1}{2}\bfut \left[ (\tilde{\boldsymbol{G}} + \tilde{\boldsymbol{H}} \tilde{\boldsymbol{G}}) e^{\tilde{\boldsymbol{S}}_{\lambda}\frac{t}{\tau}} \bfb^T + (\tilde{\boldsymbol{G}} - \tilde{\boldsymbol{H}} \tilde{\boldsymbol{G}})e^{-\tilde{\boldsymbol{S}}_{\lambda}\frac{t}{\tau}}  \bfc^T\right) \nonumber \\
     &\quad \Big[\boldsymbol{S}_{\lambda}^{-1} + \frac{1}{4} \bfb\left(e^{2\tilde{\boldsymbol{S}}_{\lambda}\frac{t}{\tau}} - \bfi\right )  \bfslt^{-1} \bfb^T - \frac{1}{4}\bfc\left(e^{-2\tilde{\boldsymbol{S}}_{\lambda}\frac{t}{\tau}}- \bfi\right)\bfslt^{-1}\bfc^T \Big]^{-1} \\
     &\quad \frac{1}{2}\left((\tilde{\boldsymbol{G}} - \tilde{\boldsymbol{H}} \tilde{\boldsymbol{G}}) e^{\tilde{\boldsymbol{S}}_{\lambda}\frac{t}{\tau}}  \bfb - (\tilde{\boldsymbol{G}} + \tilde{\boldsymbol{H}} \tilde{\boldsymbol{G}}) e^{-\tilde{\boldsymbol{S}_{\lambda}}\frac{t}{\tau}} \bfc\right)\bfvt^T \nonumber\\
 =&\ \sum_{i=1}^{N_i} \frac{\bb_i^2\etpsl - \bc_i^2\etnsl}{4\bsl_i^{-1} + \bb_i^2\etpsl\bstli^{-1} - \bb_i^2\bstli^{-1} - \bc_i^2\etnsl\bstli^{-1}+\bc_i^2\bstli^{-1}}\ \but_i\bvt_i^T\\
=&\ \sum_{i=1}^{N_i} \frac{\bsl_i\bb_i^2\bstli - \bsl_i\bc_i^2\bst_i\efns}{4\bstli\etns + \bsl_i\bb_i^2\left(1 - \etnsl\right) + \bsl_i\bc_i^2\left(\etnsl - \efnsl\right)} \but_i\bvt_i^T
\end{align}

It follows, that in the reversal learning case, i.e. $\bb = 0$, for each reversed singular vector, the dynamics vanish to zero
\begin{equation}
    \lim_{t\to\infty}\frac{- \bsl_i\bc_i^2\bst_i\efnsl}{4{\bf \tilde{s}_{\lambda,i}}\etnsl + \bs_i\bc_i^2\left(\etnsl - \efnsl\right)} \but_i\bvt_i^T = 0\text{.}
\end{equation}

Analytically, the learning dynamics are initialized on and remain along the separatrix of a saddle point until the corresponding singular value of the network function decreases to zero and stays there, indicating convergence to the saddle point. In numerical simulations, however, the learning dynamics can escape the saddle points due to the imprecision of floating-point arithmetic. Despite this, numerical optimization still experiences significant delays, as escaping the saddle point is time-consuming \cite{lee2022maslow}. In contrast, when the singular vectors are aligned ($\bc=0$), the equation governing temporal dynamics, as described in \cite{saxe2013exact}, is recovered. Under these conditions, training succeeds, with the singular value of the network function converging to its target value.

\begin{align}
    \lim_{t\to\infty} \sum_{i=1}^{N_i} \frac{\bsli \bb_i^2 \bstli}{4 \bstli \etnsl + \bsli \bb_i^2 \left(1 - \etnsl\right)} \but_i \bvt_i^T &= \frac{\bsli \bb_i^2 \bstli}{\bsli \bb_i^2} \but_i \bvt_i^T \\
    &= \bstli \but_i \bvt_i^T\text{.}
\end{align}

In summary, in the case of aligned singular vectors, the learning dynamics can be described by the convergence of singular values. However in the case of reversal learning, analytically, training does not succeed. In simulations, the learning dynamics escape the saddle point due to numerical imprecision, but the learning dynamics are catastrophically slowed in the vicinity of the saddle point as shown in figure \ref{fig:reversal} .

In the case where $\lambda$ is non-zero, the diagonal of $\bfc$ are also non-zero; this is true regardless of whether they are reversed or aligned. Similarly, the diagonal entries of $\bfb$ remain non-zero whether the singular vectors are aligned or reversed. Therefore, in the case of reversal learning, $\bfb$ is a diagonal matrix with elements that are zero.
In figure \ref{fig:reversal}

\begin{figure}[H]
\label{fig:reversal}
\begin{center}
\includegraphics[width=0.7\textwidth]{./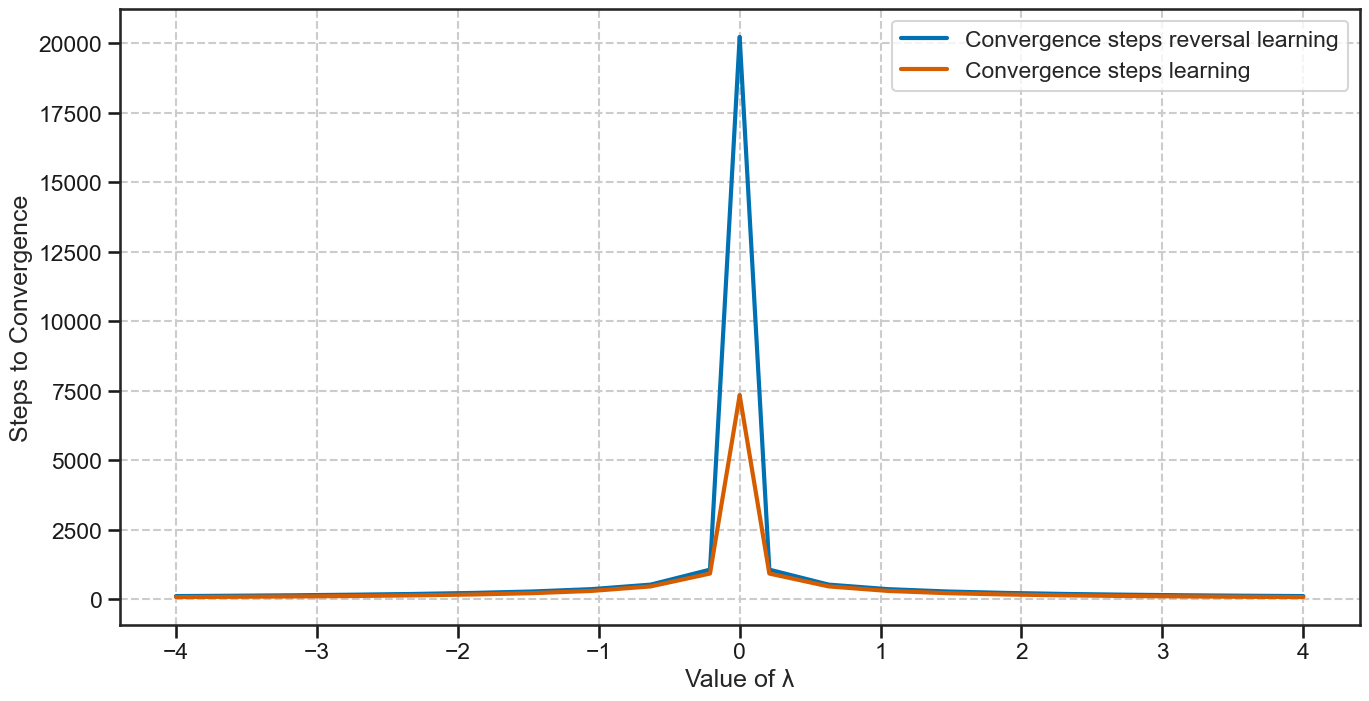}
\end{center}
\caption{Plot showing the steps to convergence for two tasks: (1) the reversal learning task and (2) a randomly sampled continual learning task across a range of $\lambda$ values. The reversal learning task exhibits catastrophic slowing at $\lambda = 0$.}
\end{figure}

\subsection{Appendix: Generalization and structured learning}
\label{app:generalisation_and_structured_learning}

We study how the representations learned for different $\lambda$ initializations impact generalization of properties of the data. To do this, we consider the case where a new feature is associated to a learned item in a dataset and how this new feature may then be related to other items based on prior knowledge. In particular, we first train each network (for different values of $-10 \leq \lambda \leq 10$) on the hierarchical semantic learning task in Section \ref{sec:representation} and then add a new feature (e.g., `eats worms') to a single item (e.g., the goldfish) (Fig.~\ref{fig:transfer}A), correspondingly increasing the output dimension to represent the novel feature. In order to learn the new feature without affecting prior knowledge, we append a randomly initialized row to $\wb$ and train it on the single item with the new feature, while keeping the rest of the network frozen. Thus, we only change the weights from the hidden layer to the new feature which may produce different behavior depending on how the hidden layer representations vary based on $\lambda$. After training on the new feature-item association, we query the network with the rest of the data to observe how the new feature is associated with the other items. We find that as $\lambda$ increases positively, the network better transfers the hierarchy such that it projects the feature onto items based on their distance to the trained item (Fig.~\ref{fig:transfer}B,C). For example, after learning that a goldfish eats worms, the network can extrapolate the hierarchy to infer that another fish, or birds, may also eat worms; instead, plants are not likely to eat worms. Alternatively, as $\lambda$ becomes more negative, the network ceases to infer any hierarchical structure and only learns to map the new feature to the single item trained on. In this case, after learning that a goldfish eats worms, the network does not infer that other fish, birds, or plants may also eat worms.

Interestingly, this setting highlights how asymmetries in the representations yielded by different $\lambda$ can actually benefit transfer and generalization. This can be shown by observing that the learning of a new feature association only depends on the first layer $\wa$. Let $\hat{\boldsymbol{y}}_f$ denote the vector of the representation of the new feature $f$ across items $i$ in the dataset. Additionally, let $\boldsymbol{w}_2^{(f) T}$ be the new row of weights appended to $\wb$ which map the hidden layer to the new feature. Following \cite{saxe2019mathematical}, if $\boldsymbol{w}_2^{(f) T}$ is initialized with small random weights and trained on item $\tilde{\boldsymbol{H}}_i$, it will converge to 
\begin{align}
    \boldsymbol{w}_2^{(f) T} & = \tilde{\boldsymbol{H}}_i^T \wa^T / \|\wa \tilde{\boldsymbol{H}}_i \|_2^2 \\
    \hat{\boldsymbol{y}}_f & = (\tilde{\boldsymbol{H}}_i^T \wa^T \wa \tilde{\boldsymbol{H}})/ \|\wa \tilde{\boldsymbol{H}}_i \|_2^2
\end{align}
From this we can see that differences in the representations of the new feature across items $\hat{\boldsymbol{y}}_f$ across $\lambda$ are only influenced by $\wa$. 

In the case of the rich learning regime where $\lambda=0$, the semantic relationship between features and items is distributed across both layers. Instead, when $\lambda > 0$, the second layer $\wb$ exhibits \textit{lazy} learning, yielding an output representation $\wb \wb^T$ of a weighted identity matrix. However, the first layer $\wa$ still learns a \textit{rich} representation of the hierarchy, albeit at a smaller scaling. Furthermore, rather than distributing this learning across both layers, in the $\lambda > 0$ case, all learning of the hierarchy occurs in the first layer, allowing it to more readily transfer this structure to the learning of a new feature (which only depends on the first layer). Thus, in this case, the `shallowing' of the network into the first layer is actually beneficial. Finally, we can also observe the opposite case when $\lambda < 0$. Here, \textit{rich} learning happens in the second layer, while the first layer is \textit{lazy} and learns to represent a weighted identity matrix. As such, these networks do not learn to transfer the hierarchy of different items to the new feature.

\begin{figure}[H]
\label{fig:transfer}
\begin{center}
\includegraphics[width=0.9\textwidth]{./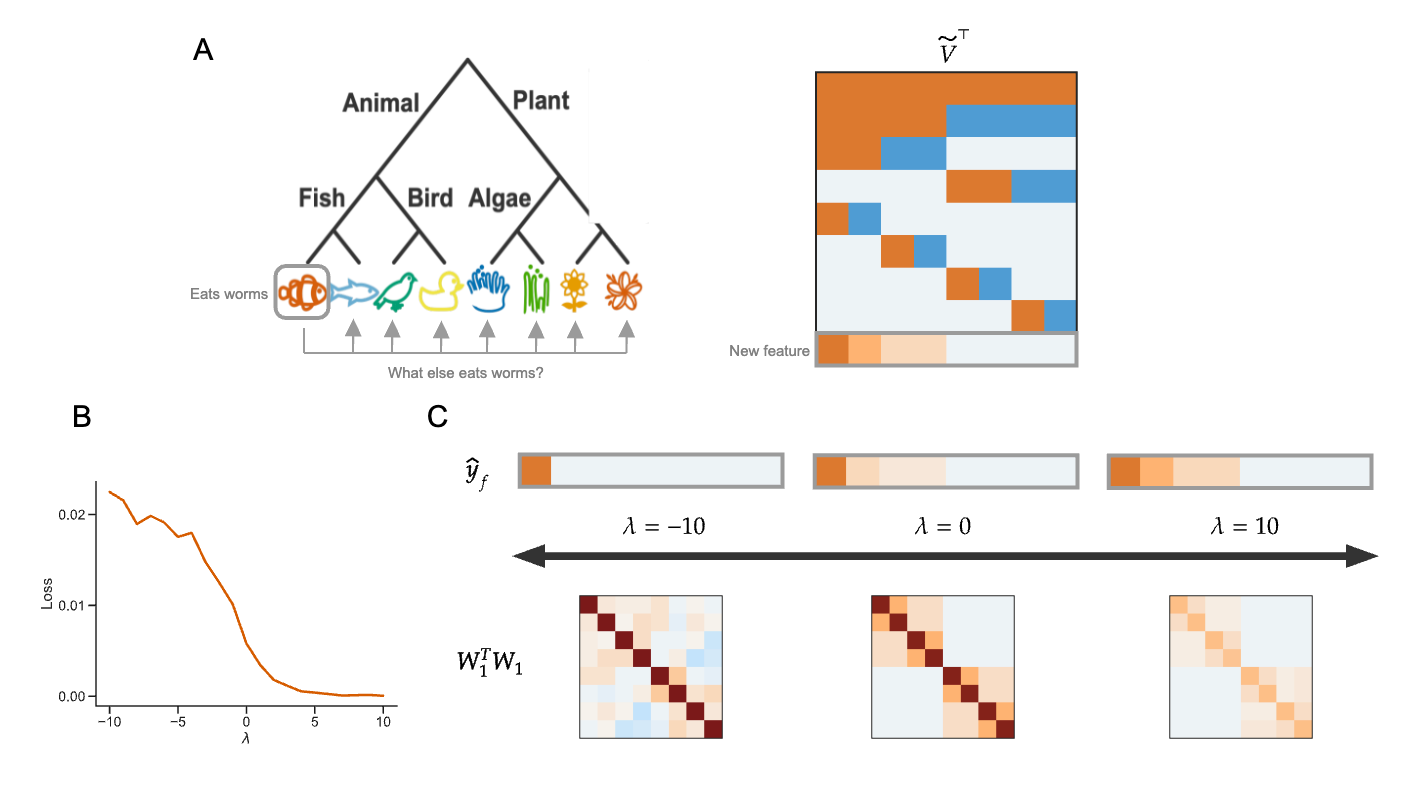}
\end{center}
\caption{Transfer learning for different $\lambda$. \textbf{A} A new feature (such as `eats worms') is introduced to the dataset after training on the hierarchical semantic learning task (Section \ref{sec:representation}). A randomly initialized row is added to $\wb$ and trained on a single item with the new feature (for example, the goldfish), with the rest of the network frozen. The network is then tested on the transfer of the new feature to other items, such that items closer to the goldfish in the hierarchy are more likely to have the same feature. \textbf{B} The generalization loss on the untrained items with the new feature decreases as $\lambda$ increases. \textbf{C} 
As $\lambda$ increases positively, networks better transfer the hierarchical structure of the data to the representation of the new feature.}
\end{figure}

{ 

\subsection{Appendix: Finetuning}
\label{app:finetuning}
It is a common practice to pretrain neural networks on a large auxiliary task before fine-tuning them on a downstream task with limited samples. Despite the widespread use of this approach, the dynamics and outcomes of this method remain poorly understood. In our study, we provide a theoretical foundation for the empirical success of fine-tuning, aiming to improve our understanding of how performance depends on the initialization. We're interested in understanding how changing the $\lambda$-balancedness after pre-training may impact fine-tuning on a new dataset. We use $\lambda_{PT}$ to denote how networks are first initialized prior to pretraining, and $\lambda_{FT}$ to how they are \emph{re-balanced} after pre-training and before fine-tuning on a new task. Similar to the previous section, we first train each network (for different values of $-10 \leq \lambda_{PT} \leq 10$) on the hierarchical semantic learning task. We then change the $\lambda$-balancedness of each network (for different values of $-10 \leq \lambda_{FT} \leq 10$) and retrain on a new dataset to observe how this impacts fine-tuning for different values and compare to networks that are not re-balanced to some $\lambda_{FT}$ ($\lambda_{FT}= \emptyset$) after initial pre-training.

In particular, to reset the $\lambda$-balancedness of a pretrained network to $\lambda_{FT}$, we rescale the singular values of each layer ($\boldsymbol{S}_1,\boldsymbol{S}_2$) using the singular values of the entire network function ($\boldsymbol{S} = \boldsymbol{U}^T \wb \wa \boldsymbol{V}$), while keeping the left and right singular vectors of the network unchanged. 

We consider three different tasks to fine-tune the networks on. In the first, we add an existing feature from one item to another item in the hierarchy in order to disrupt the structure of the left and right singular vectors. In the second task, we consider the same reversal learning task discussed in Appendix~\ref{app:reversal_learning}, where one column of the right singular vectors are reversed such that $-\bm{v}_i = 
\tilde{\bm{v}}_i$. Finally, we consider a scaled version of the hierarchy where each singular value is scaled by 2.

Across all the tasks we consider, we consistently find that fine-tuning performance improves as networks are re-balanced to larger values of $\lambda_{FT}$ and, conversely, decreases as $\lambda_{FT}$ approaches 0. Networks re-balanced to $\lambda_{FT}=0$ also learn more slowly compared to $|\lambda_{FT}| > 0$. Interestingly, when studying networks that are \emph{not} re-balanced prior to finetuning ($\lambda_{FT}= \emptyset$; but \emph{are} first initialized prior to pretraining to $\lambda_{PT}$), we see that they perform similarly on the new tasks to networks that are re-balanced to $\lambda_{FT} = \lambda_{PT}$.

\begin{figure}[H]
\label{fig:finetuning}
\begin{center}
\includegraphics[width=0.9\textwidth]{./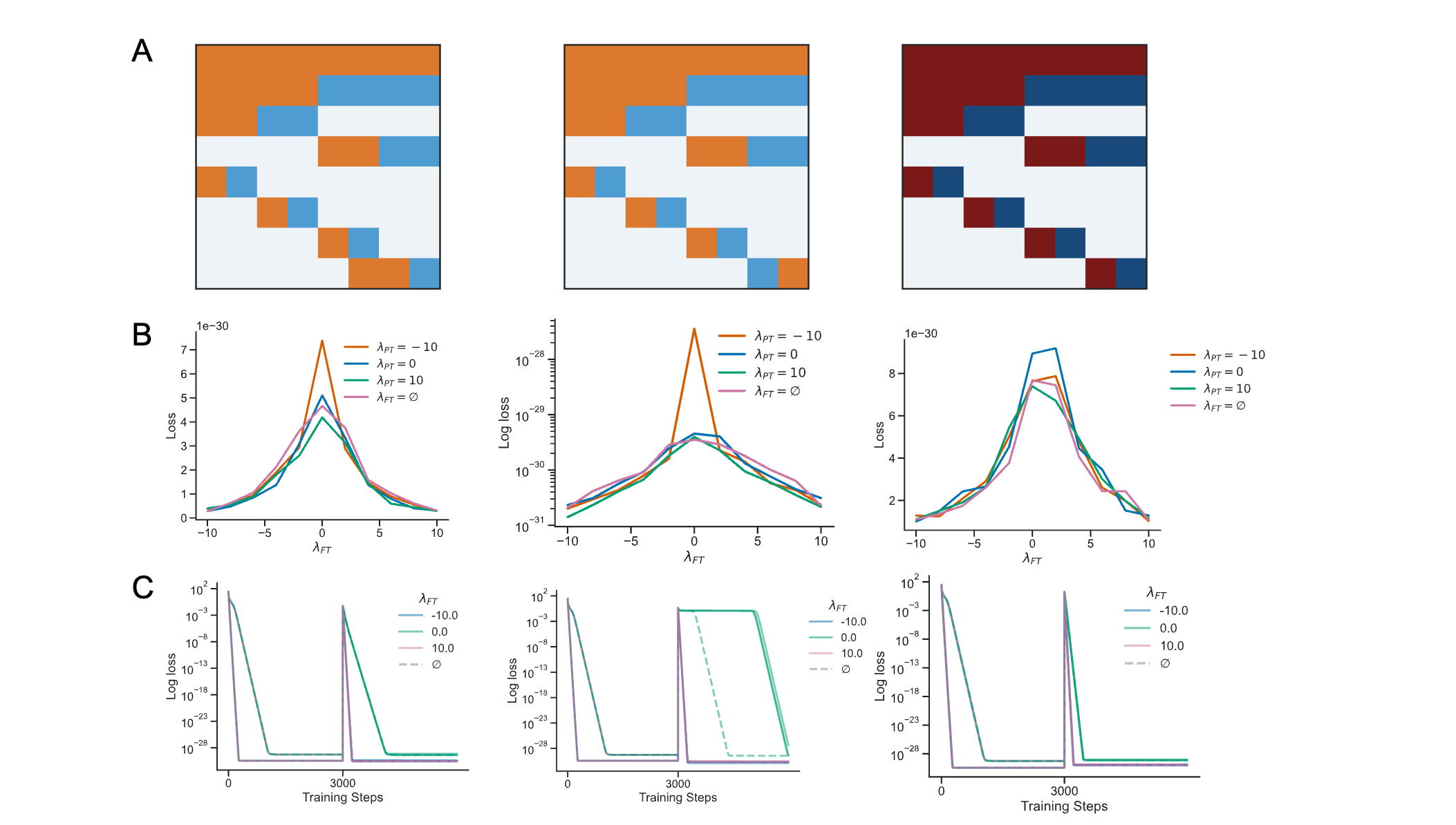}
\end{center}
\caption{{ Fine-tuning performance on three tasks for different re-balancing $\lambda_{FT}$. \textbf{A} After training on the hierarchical semantic learning task (Section \ref{sec:representation}), networks are re-balanced and trained on one of three tasks: adding an existing feature from one item to another item in the hierarchy (\emph{left}), the reversal learning task in Appendix~\ref{app:reversal_learning} (\emph{center}), or a scaled version of the hierarchy where each singular value is scaled by 2 (\emph{right}). \textbf{B} Change in loss on the new task across different $\lambda_{FT}$ for different $\lambda_{PT}$. As $\lambda_{FT}$ approaches 0, the loss on the new task increases across all $\lambda_{PT}$. Interestingly, networks that are not re-balanced prior to fine-tuning ($\lambda_{FT}=\emptyset$) perform similarly to networks that are re-balanced to the same values ($\lambda_{FT}=\lambda_{PT}$). \textbf{C} Dynamics of the loss across the first pre-training task and the new fine-tuning task. Networks re-balanced to $\lambda_{FT}=0$ consistently learn slower across all tasks compared to networks that are re-balanced to larger magnitude values ($|\lambda_{FT}|>0$} })
\end{figure}

In this work, we derive the precise dynamics of two-layer linear. While straightforward in design, these architectures are foundational in numerous machine learning applications, particularly in the implementation of Low Rank Adapters (LoRA)\cite{hu2022lora}. A key innovation in LoRA is to parameterize the update of a large weight matrix \( {\bf W} \in \mathbb{R}^{d \times d} \) within a language model as \( \Delta \textbf{W = AB} \), the product of two low-rank matrices \( {\bf A} \in \mathbb{R}^{d \times r} \) and \( {\bf B} \in \mathbb{R}^{r \times d} \), where only \( {\bf A }\) and \( {\bf B }\) are trained. To ensure \( \Delta {\bf W} = 0 \) at initialization, it is standard practice to initialize \( {\bf A} \sim \mathcal{N}(0, \sigma^2) \) and \(  {\bf B} = 0 \) ( \cite{hu2022lora,hayou2024impact}.  It is noteworthy that this parameterization, \( \Delta {\bf W = AB }\), effectively embeds a two-layer linear network within the language model. When \( r \ll d \), this initialization scheme approximately adheres to our $\lambda$-balanced condition, with \( \sigma^2 \) playing the role of the balance parameter \( \lambda \). Investigating how the initialization scale of \( {\bf A} \) and \( {\bf B} \) influences fine-tuning dynamics under LoRA, and connecting this to our work on $\lambda$-balanced two-layer linear networks and their role in feature learning, represents an intriguing avenue for future exploration.  This perspective aligns with recent studies suggesting that low-rank fine-tuning operates in a ``lazy" regime, as well as work examining how the initialization of \( {\bf A} \) or \( {\bf B} \) affects fine-tuning performance \cite{malladi2023kernel,hayou2024impact}. Our framework offers a potential bridge to understanding these phenomena more comprehensively. While a detailed exploration of fine-tuning performance lies beyond the scope of this work, it remains an important direction for future research.} 
\section{Implementation and Simulations}

\label{supp:simulation-details}
The details of the simulation studies are described as follows. Specifically, $N_i$, $N_h$, and $N_o$ represent the dimensions of the input, hidden layer, and output (target), respectively. The total number of training samples is denoted by $N$, and the learning rate is defined as $\eta = \frac{1}{\tau}$.

\subsection{Lambda-balanced weight initialization}
In practice, to initialize the network with lambda-balanced weights, we use Algorithm~\ref{alg:lambda-balanced}. In this algorithm, $\alpha$ serves as a scaling factor that controls the variance of the weights, allowing for adjustments between smaller and larger weight initializations.
\begin{algorithm}
\label{alg:lambda-balanced}
\caption{Get \emph{$\lambda$-balanced}}
\begin{algorithmic}[1]
\Function{get\_lambda\_balanced}{$\lambda$, $in\_dim$, $hidden\_dim$, $out\_dim$, $\sigma = 1$}
    \If{$out\_dim > in\_dim$ and $\lambda < 0$}
        \State \textbf{raise} Exception('Lambda must be positive if out\_dim > in\_dim')
    \EndIf
    \If{$in\_dim > out\_dim$ and $\lambda > 0$}
        \State \textbf{raise} Exception('Lambda must be positive if in\_dim > out\_dim')
    \EndIf
    \If{$hidden\_dim < \min(in\_dim, out\_dim)$}
        \State \textbf{raise} Exception('Network cannot be bottlenecked')
    \EndIf
    \If{$hidden\_dim > \max(in\_dim, out\_dim)$ and $\lambda \neq 0$}
        \State \textbf{raise} Exception('hidden\_dim cannot be the largest dimension if lambda is not 0')
    \EndIf
    
    \State $W_1 \gets \sigma \cdot \text{random normal matrix}(hidden\_dim, in\_dim)$
    \State $W_2 \gets \sigma \cdot \text{random normal matrix}(out\_dim, hidden\_dim)$
    \State $[U, S, Vt] \gets \text{SVD}(W_2 \cdot W_1)$
    \State $R \gets \text{random orthonormal matrix}(hidden\_dim)$
    \State $S2_{equal\_dim} \gets \sqrt{\left(\sqrt{\lambda^2 + 4 \cdot S^2} + \lambda\right) / 2}$
    \State $S1_{equal\_dim} \gets \sqrt{\left(\sqrt{\lambda^2 + 4 \cdot S^2} - \lambda\right) / 2}$
    
    \If{$out\_dim > in\_dim$}
        \State $S2 \gets \begin{bmatrix}
            S2_{equal\_dim} & 0 \\
            0 & 0_{hidden\_dim - in\_dim}
        \end{bmatrix}$
        \State $S1 \gets \begin{bmatrix}
            S1_{equal\_dim}  \\
            0 
        \end{bmatrix}$
    \ElsIf{$in\_dim > out\_dim$}
        \State $S1 \gets \begin{bmatrix} 
            S1_{equal\_dim} & 0 \\
            0 &  0_{hidden\_dim - out\_dim}
        \end{bmatrix}$
        \State $S2 \gets \begin{bmatrix}
            S2_{equal\_dim} & 0\\
        \end{bmatrix}$
    \EndIf
    
    \State $init\_W_2 \gets U \cdot S2 \cdot R^T$
    \State $init\_W_1 \gets R \cdot S1 \cdot Vt$
    
    \State \Return $(init\_W_1, init\_W_2)$
\EndFunction
\end{algorithmic}
\end{algorithm}
\subsection{Tasks}
In the following, we describe the different tasks that are used throughout the simulation studies. 

\subsubsection{Random regression task} \label{sec:random-regression-task}

In the random regression task, the inputs $\bfX \in \mathbb{R}^{N_i \times N}$ are generated from a standard normal distribution, $\bfX \sim \mathcal{N}(\mu=0, \sigma=1)$. The input data $\bfX$ is then whitened to satisfy $\frac{1}{N}\bfX\bfX^T = \bfi$. The target values $\bfY \in \mathbb{R}^{N_o \times N}$ are independently sampled from a normal distribution with variance scaled according to the number of output nodes, $\bfY \sim \mathcal{N}(\mu=0, \alpha=\frac{1}{\sqrt{N_o}})$. Consequently, the network inputs and target values are uncorrelated Gaussian noise, implying that a linear solution may not always exist.

\subsubsection{Semantic hierarchy}

We use the same task as in \cite{braun2023exact} and modify it to match the theoretical dynamics. The modification ensures that the inputs are whitened. In the semantic hierarchy task, input items are represented as one-hot vectors, i.e., $\bfX = \frac{\bfi}{8}$. The corresponding target vectors, $\bfy_i$, encode the item’s position within the hierarchical tree. Specifically, a value of $1$ indicates that the item is a left child of a node, $-1$ denotes a right child, and $0$ indicates that the item is not a child of that node. For example, consider the blue fish: it is a blue fish, a left child of the root node, a left child of the animal node, not part of the plant branch, a right child of the fish node, and not part of the bird, algae, or flower branches, resulting in the label $[1, 1, 1, 0, -1, 0, 0, 0]$. The labels for all objects in the semantic tree, as shown in Figure~\ref{fig:generalisation}
A, are given by:

\begin{align}
    \bfY = 8 * \begin{bmatrix}
        1 & 1 & 1 & 1 & 1 & 1 & 1 & 1 \\
        1 & 1 & 1 & 1 & -1 & -1 & -1 & -1 \\
        1 & 1 & -1 & -1 & 0 & 0 & 0 & 0\\
        0 & 0 & 0 & 0 & 1 & 1 & -1 & -1 \\
        1 & -1 & 0 & 0 & 0 & 0 & 0 & 0 \\
        0 & 0 & 1 & -1 & 0 & 0 & 0 & 0 \\
        0 & 0 & 0 & 0 & 1 & -1 & 0 & 0 \\
        0 & 0 & 0 & 0 &  0 & 0 & 1 & -1 \\
    \end{bmatrix}.
\end{align}

The singular value decomposition (SVD) of the corresponding correlation matrix, $\bfsigt^{yx}$, is not unique due to identical singular values: the first two, the third and fourth, and the last four values are the same. To align the numerical and analytical solutions, this permutation invariance is addressed by adding a small perturbation to each column $\bfy_i$, for $i \in {1, ..., N}$, of the labels:

\begin{equation} \bfy_i = \bfy_i \cdot \left(1 + \frac{0.1}{i}\right), \end{equation}

resulting in singular values that are nearly, but not exactly, identical.

\subsection{Figure 1}
Panels B illustrates three simulations conducted on the same task with varying initial \(\lambda\)-balanced weights respectively $\lambda=-2$, $\lambda=0$, $\lambda=2$. The regression task parameters were set with $(\sigma = \sqrt{10})$. The network architecture consisted of \(N_i = 3\), \(N_h = 2\), \(N_o = 2\),with a learning rate of \(\eta = 0.0002\). The batch size is \(N = 10\). The zero-balanced weights are initialized with variance $\sigma=0.00001$. The lambda-balanced network are initialized with $sigma xy= \sqrt{1}$ of a random regression task with same architecture.

On Panel C , we plot the ballancedness $ \wbz^T\wbz - \waz\waz^T$  for a two layer network initialised with Lecun initialization with dimension $N_i$ = 40 ,$N_h$= 120 ,$N_o$=250

\subsection{Figure 2}
Panel A, B, C illustrates three simulations conducted on the same task with varying initial \(\lambda\)-balanced weights respectively $\lambda=-2$, $\lambda=0$, $\lambda=2$  according to the initialization scheme described in \ref{app:simulation-details}. The regression task parameters were set with $(\sigma = \sqrt{10})$. The network architecture consisted of \(N_i = 3\), \(N_h = 2\), \(N_o = 2\) with a learning rate of \(\eta = 0.0002\). The batch size is \(N = 10\). The zero-balanced weights are initialized with variance $\sigma=0.00001$. The lambda-balanced network are initialized with $sigma xy= \sqrt{1}$ of a random regression task with same architecture.

\subsection{Figure 3}

Panel A, B, C illustrates three simulations conducted on the same task with varying initial \(\lambda\)-balanced weights respectively $\lambda=-2$, $\lambda=0$, $\lambda=2$  according to the initialization scheme described in \ref{app:simulation-details}. The regression task parameters were set with $(\sigma = \sqrt{12})$. The network architecture consisted of \(N_i = 3\), \(N_h = 3\), \(N_o = 3\) with a learning rate of \(\eta = 0.0002\). The batch size is \(N = 5\). The zero-balanced weights are initialized with variance $\sigma=0.0009$. The lambda-balanced network are initialized with $sigma xy= \sqrt{12}$ of a random regression task with same architecture.

\subsection{Figure 4}
In Panel A presents a semantic learning task with the SVD of the input-output correlation matrix of the task. $U$ and $V$ represent the singular vectors, and $S$ contains the singular values. This decomposition allows us to compute the respective RSMs as $USU^\top$ for the input and $VSV^\top$ for the output task. The rows and columns in the SVD and RSMs are ordered identically to the items in the hierarchical tree.

The results in Panel B display simulation outcomes, while Panel C presents theoretical input and output representation matrices at convergence for a network trained on the semantic task described in \cite{braun2023exact,saxe_2014_exact},. These matrices are generated using varying initial \(\lambda\)-balanced weights set at \(\lambda = -2\), \(\lambda = 0\), and \(\lambda = 2\), following the initialization scheme outlined in \ref{app:simulation-details}. The network architecture includes \(N_i = 8\), \(N_h = 8\), and \(N_o = 8\) with a learning rate of \(\eta = 0.001\) and a batch size of \(N = 8\). Zero-balanced weights are initialized with a variance of \(\sigma = 0.00001\), while \(\lambda\)-balanced networks are initialized with \(\sigma_{xy} = \sqrt{1}\) based on a random regression task with the same architecture.

Panel D illustrates results from running the same task and network configuration but initialized with randomly large weights having a variance of \(\sigma = 1\).

In panel E, we trained a two-layer linear network with $N_i = N_h = N_o = 4$ on a random regression task for $\lambda \in [-5, -4, -3, -2, -1,  0,  1,  2,  3,  4,  5]$ to convergence. Subsequently, we added Gaussian noise with $\mu=0, \sigma \in [0, 0.5, 1]$ to the inputs (top panel) or synaptic weights (bottom panel) and calculated the expected mean squared error.

\subsection{Figure 5}
Panel A illustrates schematic representations of the network architectures considered: from left to right, a funnel network (\(N_i = 4\), \(N_h = 2\), \(N_o = 2\)), a square network (\(N_i = 4\), \(N_h = 4\), \(N_o = 4\)), and an inverted-funnel network (\(N_i = 2\), \(N_h = 2\), \(N_o = 4\)).

Panel B shows the Neural Tangent Kernel (NTK) distance from initialization, as defined in \cite{fort2020deep}, across the three architectures shown schematically. The kernel distance is calculated as:
\[
S(t) = 1 - \frac{\langle K_{0}, K_{t} \rangle}{\|K_{0}\|_F \|K_{t}\|_F}.
\]
The simulations conducted on the same task with eleven varying initial \(\lambda\)-balanced weights in $[-9,9]$. The regression task parameters were set with $(\sigma = \sqrt{3})$. The task has batch size \(N = 10\). The network has with a learning rate of \(\eta = 0.01\).  The lambda-balanced network are initialized with $\sigma xy= \sqrt{1}$ of a random regression task.

Panel C shows the Neural Tangent Kernel (NTK) distance from initialization for the funnel architectures shown  schematically with dimensions \(N_i = 3\), \(N_h = 2\), and \(N_o = 2\).
The simulations conducted on the same task with twenty one varying initial \(\lambda\)-balanced weights in $[-9,9]$. The regression task parameters were set with $(\sigma = \sqrt{3})$. The task has batch size \(N = 30\). The network has with a learning rate of \(\eta = 0.002\).  The lambda-balanced network are initialized with $\sigma xy= \sqrt{1}$ of a random regression task.

\label{app:simulation-details}

\end{document}